%% file: main.tex
\documentclass[11pt, a4paper, logo, copyright]{pluralisresearchiclr}
\pdftrailerid{redacted}

\makeatletter
\renewcommand\bibentry[1]{\nocite{#1}{\frenchspacing\@nameuse{BR@r@#1\@extra@b@citeb}}}
\makeatother

\usepackage{kantlipsum, lipsum}
\usepackage{dsfont}
\usepackage{algorithm}
\usepackage{algorithmic}
\usepackage{adjustbox}
\usepackage{multirow}
\usepackage{nicefrac}
\usepackage{float}
\usepackage{microtype}
\usepackage{graphicx}
\usepackage{subcaption}
\usepackage{booktabs} 
\usepackage{xcolor}
\usepackage{enumitem}

\definecolor{ForestGreen}{RGB}{34,139,34}

\usepackage{hyperref}

\newcommand{\INLINECOMMENT}[1]{\hfill$\triangleright$ #1}

\usepackage{enumitem}

\newcolumntype{R}[2]{%
    >{\adjustbox{angle=#1,lap=\width-(#2)}\bgroup}%
    l%
    <{\egroup}%
}

\usepackage{amsmath}
\usepackage{amssymb}
\usepackage{mathtools}
\usepackage{amsthm}
\usepackage{thm-restate}
\usepackage{multirow}

\DeclareMathOperator{\Ortho}{Ortho}
\DeclareMathOperator*{\argmax}{arg\,max}
\DeclareMathOperator*{\argmin}{arg\,min}
\DeclareMathOperator{\tr}{tr}
\newcommand{\diag}{\operatorname{diag}}

\newcommand{\R}{\mathbb{R}}
\newcommand{\E}{\mathbb{E}}
\newcommand{\F}{\mathrm{F}}

\usepackage{rotating}
\usepackage{textcomp}
\usepackage{multibib}
\renewcommand{\textuparrow}{$\uparrow$}
\renewcommand{\textdownarrow}{$\downarrow$}

\usepackage[capitalize,noabbrev]{cleveref}
\usepackage{etoolbox}

\crefname{appendix}{appendix}{appendices}
\Crefname{appendix}{Appendix}{Appendices}

\pretocmd{\appendix}{%
  \crefalias{section}{appendix}%
  \crefalias{subsection}{appendix}%
  \crefalias{subsubsection}{appendix}%
}{}{}

\crefname{lemma}{lemma}{lemmas}
\crefname{corollary}{corollary}{corollaries}
\crefname{proposition}{proposition}{propositions}
\crefname{definition}{definition}{definitions}
\crefname{assumption}{assumption}{assumptions}

\Crefname{lemma}{Lemma}{Lemmas}
\Crefname{corollary}{Corollary}{Corollaries}
\Crefname{proposition}{Proposition}{Propositions}
\Crefname{definition}{Definition}{Definitions}
\Crefname{assumption}{Assumption}{Assumptions}

\usepackage[most]{tcolorbox}

\definecolor{magmaPurple}{HTML}{3B0F70}
\definecolor{magmaMagenta}{HTML}{B5367A}
\definecolor{magmaOrange}{HTML}{F1605D}
\definecolor{magmaGold}{HTML}{FECF92}

\newtcolorbox{blockquote}{
  enhanced,
  breakable,
  colback=magmaOrange!12,      
  leftrule=1pt,
  rightrule=0pt,
  arc=2pt,
  boxrule=0pt,
  left=1pt, right=1pt, top=1pt, bottom=1pt,
}

\newtcolorbox{assumptionblockquote}{
  enhanced,
  breakable,
  colback=magmaMagenta!12,      
  leftrule=1pt,
  rightrule=0pt,
  arc=2pt,
  boxrule=0pt,
  left=3pt, right=3pt, top=3pt, bottom=3pt,
}

\newtcolorbox{theoremblockquote}{
  enhanced,
  breakable,
  colback=magmaPurple!12,      
  leftrule=1pt,
  rightrule=0pt,
  arc=2pt,
  boxrule=0pt,
  left=3pt, right=3pt, top=3pt, bottom=3pt,
}

\newtcolorbox{lemmablockquote}{
  enhanced,
  breakable,
  colback=magmaGold!12,      
  leftrule=1pt,
  rightrule=0pt,
  arc=2pt,
  boxrule=0pt,
  left=3pt, right=3pt, top=3pt, bottom=3pt,
}

\theoremstyle{plain}
\newtheorem{theorem}{Theorem}[section]

\newtheorem{lemma}[theorem]{Lemma}

\theoremstyle{definition}
\newtheorem{definition}[theorem]{Definition}

\theoremstyle{remark}

\usepackage{textcomp}
\renewcommand{\textuparrow}{$\uparrow$}
\renewcommand{\textdownarrow}{$\downarrow$}

\usepackage[textsize=tiny]{todonotes}

\usepackage{pifont}

\usepackage{subcaption}
\usepackage{url}

\input{figures.tex}
\input{tables.tex}

\title{NuMuon: Nuclear-Norm-Constrained Muon for Compressible LLM Training}

\correspondingauthor{hadi@pluralis.ai}

\keywords{Optimization, Deep Learning, Large Language Models, Model Compression} 

\reportnumber{} 

\renewcommand{\today}{} 

\author{Hadi Mohaghegh Dolatabadi, Thalaiyasingam Ajanthan, Sameera Ramasinghe, Chamin P Hewa Koneputugodage, Shamane Siriwardhana, Violetta Shevchenko, Karol Pajak, James Snewin, Gil Avraham, and Alexander Long \\
Pluralis Research
}

\begin{abstract}
The rapid progress of large language models (LLMs) is increasingly constrained by memory and deployment costs, motivating compression methods for practical deployment. Many state-of-the-art compression pipelines leverage the low-rank structure of trained weight matrices, a phenomenon often associated with the properties of popular optimizers such as Adam. In this context, Muon is a recently proposed optimizer that improves LLM pretraining via full-rank update steps, but its induced weight-space structure has not been characterized yet. In this work, we report a surprising empirical finding: despite imposing full-rank updates, Muon-trained models exhibit pronounced low-rank structure in their weight matrices and are readily compressible under standard pipelines. Motivated by this insight, we propose \emph{NuMuon}, which augments Muon with a nuclear-norm constraint on the update direction, further constraining the learned weights toward low-rank structure. Across billion-parameter-scale models, we show that NuMuon increases weight compressibility and improves post-compression model quality under state-of-the-art LLM compression pipelines while retaining Muon's favorable convergence behavior.
\end{abstract}

\begin{document}
\maketitle

\section{Introduction}
\label{sec:introduction}
Large language models~(LLMs)~\citep{radford2018improving, openai2023gpt4, yang2024qwen2, guo2025deepseek} have driven rapid progress in natural language processing and enabled a wide range of applications, including coding~\citep{bairi2024codeplan}, mathematics~\citep{romera2024math}, and agentic systems~\citep{wang2024agents}. Much of this progress has been fueled by scaling model and data size~\citep{kaplan2020chinchilla, hoffmann2022compute}, which often leads to qualitative shifts in capability as model size grows~\citep{wei2022emergent, schaeffer2023mirage}. However, deploying LLMs with billions of parameters incurs substantial memory, storage, and accelerator costs~\citep{zhuo2024survey}. This has motivated a large body of work on LLM compression~\citep{zhu2024compression} to reduce deployment-time footprint. Such compression methods often exploit structure in the weight matrices (e.g., low-rank structure~\citep{wang2025svdllm} or sparsity~\citep{frantar2023sparse}). Consequently, training choices directly impact how well a model can be compressed for deployment.

Recently, Muon~\citep{jordan2024muon,bernstein2025modular} has been proposed as an alternative optimizer for pretraining LLMs. Unlike elementwise adaptive optimizers such as Adam~\citep{kingma2015adam} and AdamW~\citep{loshchilov2019adamw}, Muon orthogonalizes the matrix-valued momentum update (e.g., via Newton–Schulz), effectively adapting updates to spectral geometry. Empirically, Muon has shown strong performance in large-scale language model training~\citep{moonlight2025,kimik2025}. In contrast to common optimizers such as AdamW that are known to exhibit an implicit low-rank bias~\citep{zhao2022implicit,huh2023simplicity}, Muon's training dynamics and the induced structure of the learned weight space remain underexplored. Understanding such implicit biases are crucial for the downstream deployability of Muon-trained models through existing LLM compression pipelines.

In this paper, we shed light on these dynamics and report a surprising phenomenon: despite using full-rank, orthogonalized update directions and imposing no explicit rank control, \emph{Muon-trained models already exhibit pronounced low-rank structure in their weight matrices} (\Cref{fig:stable_rank_comparison_qwen3}). As a result, we empirically observe that Muon-trained models are readily compressible under common low-rank compression pipelines. However, Muon's emergent low-rank structure is not sufficiently robust against aggressive compression, deteriorating performance rapidly as the compression rate increases. Motivated by this limitation, we seek a principled mechanism to strengthen this low-rank structure during training to improve high-rate compressibility while preserving Muon's favorable optimization behavior.

\figAdamwVsMuonStableRank

To this end, we view Muon's orthogonalization step through a projection-free lens where it implements a linear minimization oracle~(LMO)~\citep{pethick2025training,riabinin2025gluon} and returns the steepest descent direction by minimizing a linearized objective over a spectral-norm–bounded set. Building on this interpretation, we propose \emph{NuMuon}: a variant of Muon that augments its spectral-norm LMO with an additional nuclear-norm budget on the update direction, which we use as a convex proxy to control update rank. Using an LMO formulation, we show that the NuMuon step reduces to a linear program over singular values with a closed-form solution using the top-$k$ singular vectors. NuMuon yields a rank-controlled alternative to Muon's full orthogonalization and directly shapes the singular-value profile of the update. By constraining the update direction, we demonstrate that NuMuon consistently achieves lower stable rank in the learned weights than Muon.

We test NuMuon on models in the 0.6–1.8B parameter range, and show that it converges similarly to Muon while producing weight matrices with more concentrated spectra (lower stable rank), which in turn improves downstream compressibility. Under state-of-the-art~(SoTA) LLM compression pipelines, NuMuon-trained models achieve up to 55.9\% better compression–quality tradeoffs (i.e., lower perplexity at a fixed compression setting), making NuMuon attractive for large-scale or cost-sensitive deployment scenarios with strict memory requirements.

In summary, we make the following contributions:
\begin{itemize}
    \item We show that Muon-trained models exhibit a surprisingly pronounced low-rank structure in their weight matrices despite the explicit full-rank updates in the absence of rank constraints.
    \item Inspired by this observation, we propose NuMuon which augments Muon with a per-layer nuclear-norm budget on the LMO update direction. We show that the resulting LMO reduces to a linear program over singular values and admits a closed-form top-$k$ singular-vector solution, yielding rank-controlled updates.
    \item We provide convergence guarantees for NuMuon under non-convex assumptions and describe practical techniques for efficient top-$k$ computation and rank scheduling strategies for large-scale training using our approach.
    \item We empirically show that NuMuon achieves comparable performance to Muon while yielding substantially improved compressibility under SoTA LLM compression pipelines, improving deployability.
\end{itemize}

Our results clarify aspects of Muon's implicit bias and provide a practical optimizer variant for settings where controlling low-rank structure is important.

\section{Background}
\label{sec:background}
In this section, we introduce notation and briefly review background related to our work.

\paragraph{Muon Optimizer.}
Muon~\citep{jordan2024muon} is an optimizer that produces \textit{geometry-aware} updates for matrix-valued parameters by orthogonalizing momentum updates.
In contrast to elementwise adaptive methods such as AdamW~\citep{loshchilov2019adamw}, which rescale coordinates independently, Muon post-processes the momentum update so that its singular directions are treated uniformly.

Concretely, consider a single matrix parameter ${\boldsymbol{W} \in \mathbb{R}^{d_{\rm out} \times d_{\rm in}}}$ (e.g., a linear layer weight) from a neural network.\footnote{For clarity, we present the update for one matrix block; in practice, Muon is applied block-wise/layer-wise.} Through training, we aim to minimize an objective with respect to the weights $\min_{\boldsymbol{W}} \; f(\boldsymbol{W})$. Let $\boldsymbol{G}_t = \nabla_{\boldsymbol{W}} f(\boldsymbol{W}_t)$ be the gradient, and let $\boldsymbol{M}_t$ be a momentum buffer: 
\begin{equation}\label{eq:momentum}
    \boldsymbol{M}_t = \beta \boldsymbol{M}_{t-1} + (1 - \beta)\,\boldsymbol{G}_t,
\end{equation}
where $\beta\in[0,1)$.
Assuming a truncated SVD ${\boldsymbol{M}_t=\boldsymbol{U}_t \boldsymbol{S}_t \boldsymbol{V}_t^\top}$, Muon replaces $\boldsymbol{M}_t$ by its \textit{orthogonal (polar) factor}:
\begin{equation}\label{eq:ortho_def}
    \Ortho(\boldsymbol{M}_t) := \boldsymbol{U}_t \boldsymbol{V}_t^\top,
\end{equation}
which has singular values equal to $1$. The parameter update is then written as:
\begin{equation}\label{eq:muon_update_rule}
    \boldsymbol{W}_{t+1} = \boldsymbol{W}_t - \gamma \, \Ortho(\boldsymbol{M}_t),
\end{equation}
where $\gamma$ is the learning rate.
Intuitively, dropping $\boldsymbol{S}_t$ removes anisotropic scaling and yields an update that treats singular directions uniformly~\citep{jordan2024muon,bernstein2025modular}. In practice, the $\Ortho(\cdot)$ operation is typically approximated efficiently, e.g., via a few rounds of Newton--Schulz approximation (see~\Cref{alg:newton_schulz}).

\paragraph{Linear Minimization Oracles.}
Recent work by \citet{pethick2025training,riabinin2025gluon} has highlighted that Muon can be interpreted within a broader family of methods based on \textit{linear minimization oracles} (LMOs) over norm balls. In particular, consider the norm-constrained problem:
\begin{equation}\label{eq:nn_constrained}
    \min_{\boldsymbol{W}} \; f(\boldsymbol{W})
    \quad \text{s.t.}\quad
    \boldsymbol{W}\in\mathcal{W}
    \;:=\;
    \bigl\{\boldsymbol{W} : \|\boldsymbol{W}\|\le \rho\bigr\},
\end{equation}
for some radius $\rho>0$ and a chosen norm $\|\cdot\|$. As shown in Scion~\citep{pethick2025training}, LMO-based training rules can be written in an \textit{unconstrained} (additive) form:
\begin{equation}\label{eq:lmo_unconstrained}
    \boldsymbol{W}_{t+1} = \boldsymbol{W}_t + \gamma_t \, \mathrm{lmo}_{\mathcal{W}}(\boldsymbol{M}_t),
\end{equation}
or in a \textit{constrained} Frank--Wolfe~(FW)~\citep{frank1956algorithm} / conditional-gradient~(CG)~\citep{jaggi2013frank} form:
\begin{equation}\label{eq:lmo_fw}
    \boldsymbol{W}_{t+1} = (1-\gamma_t)\boldsymbol{W}_{t} + \gamma_t\, \mathrm{lmo}_{\mathcal{W}}(\boldsymbol{M}_t),
\end{equation}
where $\gamma_t\in(0,1]$ is a stepsize. Here, an LMO for $\mathcal{W}$ is the mapping:
\begin{equation}\label{eq:lmo_def}
    \mathrm{lmo}_{\mathcal{W}}(\boldsymbol{M})
    \in
    \argmin_{\boldsymbol{\Delta W}\in\mathcal{W}}
    \langle \boldsymbol{M}, \boldsymbol{\Delta W}\rangle.
\end{equation}
When $\mathcal{W}$ is a \textit{spectral-norm ball}, i.e., ${\mathcal{W}=\{\boldsymbol{\Delta W}:\|\boldsymbol{\Delta W}\|_2\le \rho\}}$, the LMO has a closed form. Specifically, if $\boldsymbol{M}=\boldsymbol{U}\boldsymbol{S}\boldsymbol{V}^\top$ is a thin SVD, then:
\begin{equation}\label{eq:lmo_spectral}
    \mathrm{lmo}_{\mathcal{W}}(\boldsymbol{M})
    \in
    \argmin_{\|\boldsymbol{\Delta W}\|_2\le \rho}\langle \boldsymbol{M}, \boldsymbol{\Delta W}\rangle
    =
    -\rho\, \boldsymbol{U}\boldsymbol{V}^\top.
\end{equation}
Thus, up to stepsize conventions (absorbing $\rho$ into the learning rate), the additive LMO update in \Cref{eq:lmo_unconstrained} recovers Muon's orthogonalized step in \Cref{eq:ortho_def}.

\paragraph{LLM Compression via Low-Rank Structure.}
LLM compression seeks to lower storage and deployment-time memory costs and speed without sacrificing accuracy~\citep{zhuo2024survey}. To this end, prior work typically exploits low-rank structure~\citep{wang2025svdllm}, reduced-precision representations via quantization~\citep{dettmers2022int8,huang2024billm,yuan2024pbllm}, and sparsification or pruning~\citep{frantar2023sparse,kim2024squeezellm}.

A prominent line of work leverages approximate low-rank structure in weight matrices by factorizing a full-rank weight matrix into low-rank factors~\citep{hsu2022fswd,yuan2023asvd,wang2025svdllm,wang2025dobisvd}. In particular, for a weight matrix $\boldsymbol{W}\in\mathbb{R}^{d_{\rm out}\times d_{\rm in}}$, a rank-$k$ factorization in such methods takes the form $\boldsymbol{W} \approx \boldsymbol{W}_u \boldsymbol{W}_v^\top$, where $\boldsymbol{W}_u \in \mathbb{R}^{d_{\rm out} \times k}$ and $\boldsymbol{W}_v \in \mathbb{R}^{d_{\rm in} \times k}$ are low-rank factors. The factors can be obtained in several ways, most commonly via truncated SVD or SVD variants that account for activation statistics~\citep{yuan2023asvd} and layer sensitivity~\citep{wang2025svdllm}. These methods are often paired with lightweight post-processing and/or extensive fine-tuning to recover accuracy~\citep{wang2025dobisvd,wang2025svdllm}. Such approaches are typically evaluated by measuring perplexity and downstream task performance at a fixed compression budget, aiming to minimize the gap to the original full-rank model~\citep{wang2025svdllm}.

This perspective also clarifies why the optimizer matters for compressibility: training influences the singular-value profile and effective rank of learned weights~\citep{le2022lowrank,timor2023implicit}, which directly affects how well low-rank factorization-based compression performs. In the next section, we use this lens to study Muon's learned weight structure and to motivate rank-controlled updates that better align training with downstream low-rank compression.

\section{Our Method}
\label{sec:method}
In this section, we present NuMuon. We begin by probing how Muon shapes the rank structure of transformer weight matrices during pretraining. Although Muon applies full-rank, orthogonalized update directions, we surprisingly find that the resulting weight matrices nonetheless exhibit pronounced low-rank structure throughout training. This observation suggests that low-rank structure can emerge naturally under Muon's training dynamics. As such, Muon-trained models are readily compressible with SVD-based LLM compression methods. However, as we see in our experiments, the performance of these models deteriorate rapidly as the compression rate increases. To address this limitation, we propose to \textit{explicitly control the rank of Muon's update directions} by augmenting its spectral-norm LMO with an additional nuclear-norm budget on the \textit{LMO step}. We present theoretical guarantees for NuMuon's convergence and examine the validity of our assumptions. Finally, we describe practical considerations for using NuMuon.

\subsection{Motivation}
\label{sec:method:motivation}
As discussed in~\Cref{sec:background}, many LLM compression pipelines exploit structure in weight matrices, and low-rank factorization is a particularly common mechanism for reducing memory footprint while preserving model quality~\citep{yuan2023asvd,wang2025svdllm}. A natural question, then, is how the rank structure of transformer weights evolves during training with Muon, and whether Muon produces weights that are amenable to low-rank compression.

To study this, we track the \textit{stable rank} of transformer weight matrices during pretraining. Stable rank is a robust proxy for effective dimensionality that is less sensitive to small singular values than the exact rank, and is defined as $\mathrm{sr}(\boldsymbol{W}) = {\|\boldsymbol{W}\|_F^2}/{\|\boldsymbol{W}\|_2^2}$. To compare across layers and matrix shapes, we normalize by the maximum attainable rank (i.e., $\min(d_{\rm out}, d_{\rm in})$), yielding a \textit{normalized stable rank} between $1/\min(d_{\rm in}, d_{\rm out})$ and 1.

We measure this quantity throughout training for all the weight matrices of a Qwen-3-0.6B decoder-only model with 28 layers~\citep{qwen3} trained for 16.8B tokens, equivalent to 1.4$\times$ the Chinchilla compute-optimal token budget~\citep{kaplan2020chinchilla}. \Cref{fig:stable_rank_comparison_qwen3} summarizes the mean normalized stable rank across layers over training for the feedforward projection weights, with shaded regions denoting layer-to-layer variability.

As seen, \textit{Muon-trained weights remain strongly low-rank throughout training} despite Muon using full-rank orthogonalized update directions. In other words, the learned weight matrices occupy a relatively low-dimensional subspace as training progresses, and this is reflected by the consistently low normalized stable rank. Compared to AdamW, Muon tends to maintain a somewhat higher (but still clearly sub-maximal) stable rank in these feed-forward projections. This indicates that Muon does not eliminate low-rank structure; instead, low-rank structure appears to be an emergent property of training that persists under both optimizers~\citep{feng2022rank,huh2023simplicity}. Also note that the early training dynamics differ: under Muon, stable rank stays elevated during the first few hundred iterations before gradually settling, whereas under AdamW it drops sharply at the start of training and then stabilizes.

We observe similar qualitative behavior across other weight matrices (e.g., attention and output projections); additional results are provided in~\Cref{fig:stable_rank_comparison_qwen3_all}. To validate that this phenomenon extends beyond our experiments, \Cref{fig:stable_rank_large_scale} shows the normalized stable rank for Moonlight-16B-A3B~\citep{moonlight2025} and Kimi-K2~\citep{kimik2025}, two large-scale models that pioneered the use of Muon for pretraining. Across both models, the stable rank of various weight matrices remains a small fraction of the maximum attainable rank, corroborating our finding that Muon-trained models exhibit pronounced low-rank structure. Taken together, these measurements indicate that Muon-trained weights admit accurate low-rank approximations, and thus should be amenable to standard low-rank compression pipelines. We confirm this empirically using SoTA LLM compression methods in~\Cref{fig:efficiency,tab:llama_comp_sum}. However, this compressibility is brittle: while moderate compression rates preserve performance, pushing to higher compression ratios leads to a rapid degradation relative to the base model. This motivates shaping the training dynamics so that the learned weights remain reliably compressible at high rates, while retaining Muon's favorable optimization behavior. 

\figEffLlama

\subsection{NuMuon: Nuclear-norm-constrained Muon}
\label{sec:method:numuon}

Motivated by our observations in the previous section, we propose to \textit{control the rank of Muon's update directions} so that training dynamics better align with downstream low-rank compression. Concretely, we adopt a conditional gradient view in which Muon's orthogonalization step can be interpreted as implementing an LMO over a spectral-norm–bounded set of update directions. We then modify this update selection by adding a nuclear-norm budget. The nuclear norm is defined as the sum of singular values ${\|\boldsymbol{A}\|_*=\sum_i \sigma_i(\boldsymbol{A})}$ and is a standard convex proxy to encourage low-rank structure~\citep{recht2010guaranteed}.

In particular, let $\boldsymbol{M}_t$ denote Muon's matrix-valued momentum. NuMuon replaces Muon's spectral-norm LMO with $\mathcal{W}_*$ over the set of admissible update directions:
\begin{equation}\label{eq:numuon_domain}
    \mathcal{W}_* \;:=\; \bigl\{\, \boldsymbol{\Delta W} \ \big| \ \|\boldsymbol{\Delta W}\|_2 \le \rho,\ \|\boldsymbol{\Delta W}\|_* \le \tau \,\bigr\}.
\end{equation}
Note that $\mathcal{W}_*$ is the intersection of a spectral-norm ball and a nuclear-norm ball, hence closed and convex. We define
\begin{equation}\label{eq:lmo_def_nn}
    \boldsymbol{\Delta W}_t \in \mathrm{lmo}_{\mathcal{W}_*}(\boldsymbol{M}_t)
    \;:=\;
    \argmin_{\boldsymbol{\Delta W} \in \mathcal{W}_*} \langle \boldsymbol{M}_t, \boldsymbol{\Delta W} \rangle,
\end{equation}
and use $\boldsymbol{\Delta W}_t$ in either an additive or a convex-combination (CG/FW-style) update following~\citet{pethick2025training}. Note that by this definition, NuMuon constrains the \textit{update direction} and places no constraint on the weight matrices. However, under CG/FW-style updates, the iterates are convex combinations of low-rank directions from $\mathcal{W}_*$ and are thus progressively attracted toward the feasible set (see \Cref{lem:feasibility_attraction}), which provides intuition for why low-rank structure reliably emerges in practice~(see~\Cref{fig:stable_rank_comparison_qwen3_all}).

To apply NuMuon in practice, we need to reliably and efficiently solve the LMO outlined in~\Cref{eq:lmo_def_nn}. A key appeal of Muon is that its spectral norm LMO has a closed-form solution. In the following propositions, we show this property is also present in NuMuon; we first demonstrate that NuMuon's LMO reduces to a linear program~(LP) and then find a closed-form solution for this LP.

\begin{blockquote}
\begin{restatable}{proposition}{numuonlp}
\label{prop:numuon_lp}
Let $\boldsymbol{M}\in\mathbb{R}^{d_{\rm out}\times d_{\rm in}}$ with thin SVD ${\boldsymbol{M}=\boldsymbol{U}\,\mathrm{diag}(\boldsymbol{\sigma})\,\boldsymbol{V}^\top}$, where $\sigma_1\ge \sigma_2\ge \cdots \ge 0$ and $q=\min(d_{\rm out},d_{\rm in})$. Consider the LMO
\begin{align}\label{eq:numuon_lmo_primal}\nonumber
    \min_{\boldsymbol{\Delta W}}\ \langle \boldsymbol{M},\boldsymbol{\Delta W}\rangle~\text{s.t.}~
    \|\boldsymbol{\Delta W}\|_2\le \rho,~\|\boldsymbol{\Delta W}\|_* \le \tau.
\end{align}
There exists an optimal solution of the form ${\boldsymbol{\Delta W}^\star=-\boldsymbol{U}\,\mathrm{diag}(\boldsymbol{s}^\star)\,\boldsymbol{V}^\top}$ for some $\boldsymbol{s}^\star\in\mathbb{R}^q_{\ge 0}$ where
\begin{equation}\label{eq:numuon_lp}
    \max_{\boldsymbol{s}\in\mathbb{R}^q}\ \sum_{i=1}^q \sigma_i\, s_i
    \quad \text{s.t.}\quad
    \boldsymbol{s}\in\mathcal{P}(\rho,\tau)
\end{equation}
is a linear program over the polytope ${\mathcal{P}(\rho,\tau):=\Bigl\{\boldsymbol{s}:\ 0\le s_i\le \rho\ \ \forall i,\ \ \sum_{i=1}^q s_i\le \tau\Bigr\}}$.
\end{restatable}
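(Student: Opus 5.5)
The plan is to reduce the matrix problem to a problem over singular values using von Neumann's trace inequality, and then to check that the reduction loses nothing. Write the objective as $\langle \boldsymbol{M},\boldsymbol{\Delta W}\rangle = \tr(\boldsymbol{M}^\top \boldsymbol{\Delta W})$. For any feasible $\boldsymbol{\Delta W}$ with singular values $\boldsymbol{s}(\boldsymbol{\Delta W})$, sorted in nonincreasing order and padded to length $q$, von Neumann's inequality gives
\begin{equation*}
\langle \boldsymbol{M},\boldsymbol{\Delta W}\rangle \;\ge\; -\sum_{i=1}^q \sigma_i\, s_i(\boldsymbol{\Delta W}).
\end{equation*}
The constraints $\|\boldsymbol{\Delta W}\|_2\le\rho$ and $\|\boldsymbol{\Delta W}\|_*\le\tau$ are exactly $\max_i s_i\le\rho$ and $\sum_i s_i\le\tau$, and singular values are nonnegative. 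Hence $\boldsymbol{s}(\boldsymbol{\Delta W})\in\mathcal{P}(\rho,\tau)$, and the primal optimal value is bounded below by $-\max_{\boldsymbol{s}\in\mathcal{P}(\rho,\tau)}\sum_i\sigma_i s_i$.

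Next I show the bound is attained. The polytope $\mathcal{P}(\rho,\tau)$ is nonempty (it contains $\boldsymbol{0}$) and compact, so the LP in \Cref{eq:numuon_lp} has a maximizer $\boldsymbol{s}^\star\ge 0$. I set $\boldsymbol{\Delta W}^\star=-\boldsymbol{U}\diag(\boldsymbol{s}^\star)\boldsymbol{V}^\top$. Because $\boldsymbol{U}$ and $\boldsymbol{V}$ have orthonormal columns, the singular values of $\boldsymbol{\Delta W}^\star$ are the entries of $\boldsymbol{s}^\star$ in some order. Its spectral norm is therefore $\max_i s^\star_i\le\rho$ and its nuclear norm is $\sum_i s^\star_i\le\tau$, so it is feasible. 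A direct computation gives
\begin{equation*}
\langle \boldsymbol{M},\boldsymbol{\Delta W}^\star\rangle=-\tr\bigl(\boldsymbol{V}\diag(\boldsymbol{\sigma})\boldsymbol{U}^\top\boldsymbol{U}\diag(\boldsymbol{s}^\star)\boldsymbol{V}^\top\bigr)=-\sum_i\sigma_i s^\star_i,
\end{equation*}
which matches the lower bound. So $\boldsymbol{\Delta W}^\star$ is optimal, and the primal and LP optimal values agree up to sign.

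The main obstacle is an ordering subtlety. Von Neumann's inequality pairs the sorted $\boldsymbol{\sigma}$ with the sorted singular values of $\boldsymbol{\Delta W}$, whereas an LP maximizer $\boldsymbol{s}^\star$ need not be sorted. This is harmless for two reasons. First, $\mathcal{P}(\rho,\tau)$ is permutation invariant, so the sorted version of any feasible vector is also feasible, and the lower bound still goes through. Second, the attainment step never uses sortedness. One may also sort $\boldsymbol{s}^\star$ without loss, since by the rearrangement inequality this can only increase $\sum_i\sigma_i s_i$.

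The remaining care points are bookkeeping. If one uses the rank-$r$ compact SVD, the zero singular values of $\boldsymbol{M}$ should be padded so that $\boldsymbol{U}$ and $\boldsymbol{V}$ have $q$ orthonormal columns. Von Neumann's inequality should be applied to $-\boldsymbol{\Delta W}$, or equivalently in its absolute-value form $|\tr(\boldsymbol{A}^\top\boldsymbol{B})|\le\sum_i\sigma_i(\boldsymbol{A})\sigma_i(\boldsymbol{B})$.
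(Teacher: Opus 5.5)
Your proof is correct and follows the same route as the paper. In both, von Neumann's trace inequality bounds $\langle\boldsymbol{M},\boldsymbol{\Delta W}\rangle$ below by minus the optimal value of the singular-value LP over $\mathcal{P}(\rho,\tau)$, and $-\boldsymbol{U}\,\mathrm{diag}(\boldsymbol{s}^\star)\boldsymbol{V}^\top$ attains that bound. Your lower-bound-plus-attainment framing is, if anything, cleaner than the paper's change of variables $\boldsymbol{Y}=\bar{\boldsymbol{U}}^\top\boldsymbol{\Delta W}\boldsymbol{V}$. It needs neither the equality case of von Neumann's inequality nor the claim that this rotation preserves the norms, which holds only as an inequality when the thin-SVD factors are not square.
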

\end{blockquote}

\begin{proof}[Proof sketch]
By unitary invariance, we may rotate $\boldsymbol{\Delta W}$ into the singular basis of $\boldsymbol{M}$ without changing $\|\boldsymbol{\Delta W}\|_2$ or $\|\boldsymbol{\Delta W}\|_*$. Von Neumann's trace inequality~\citep{vonNeumann1937} implies the optimum aligns singular vectors with $\boldsymbol{M}$, so the objective depends only on the singular values of $\boldsymbol{\Delta W}$, reducing to the LP in \Cref{eq:numuon_lp}. A full proof is provided in~\Cref{app:proofs}.
\end{proof}

\begin{blockquote}
\begin{restatable}{proposition}{numuoncf}
\label{prop:numuon_closed_form}
Under the assumptions of \Cref{prop:numuon_lp}, the optimal solution of the LP in~\Cref{eq:numuon_lp} is $\boldsymbol{s}^\star$ where, for $i=1,\dots,q$ and $k = \lfloor \tau/\rho\rfloor$, we have
\begin{equation}\label{eq:numuon_s_star}
s_i^\star \;=\;
\begin{cases}
    \rho, & 1 \le i \le \min(k,q),\\[2pt]
    r, & i = k+1 \ \text{and}\ k+1 \le q,\\[2pt]
    0, & k+2 \le i \le q,
\end{cases}
\end{equation}
and $r = \tau-k\rho$ is the residual singular value. Consequently, $\mathrm{rank}(\boldsymbol{\Delta W}^\star)\le \min\big(q,\lceil \tau/\rho\rceil\big)$.
\end{restatable}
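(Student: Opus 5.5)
Since \Cref{prop:numuon_lp} has already reduced the LMO to the linear program in \Cref{eq:numuon_lp}, it remains to show that the vector $\boldsymbol{s}^\star$ in \Cref{eq:numuon_s_star} maximizes $\sum_i \sigma_i s_i$ over $\mathcal{P}(\rho,\tau)$. This is a fractional knapsack problem with box constraints. I will prove optimality with a greedy/exchange argument and then confirm it with an explicit LP duality certificate.

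\textbf{Feasibility.} By construction $0\le s_i^\star\le\rho$, because $r=\tau-k\rho\in[0,\rho)$ when $k=\lfloor\tau/\rho\rfloor$. For the budget, $\sum_i s_i^\star=\min(k,q)\rho+r\,\mathbb{1}[k+1\le q]$. This equals $\tau$ when $k<q$ and equals $q\rho\le\tau$ when $k\ge q$, in which case the budget is inactive and $s^\star=\rho\mathbf{1}$.

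\textbf{Optimality.} Take any feasible $\boldsymbol{s}$ and compare it with $\boldsymbol{s}^\star$ through partial sums. Let $S_j=\sum_{i\le j}s_i$ and $S_j^\star=\sum_{i\le j}s_i^\star$. The box and budget constraints give $S_j\le\min(j\rho,\tau)$. By construction $S_j^\star=\min(j\rho,\tau)$ for every $j$ when $k<q$, and $S_j^\star=j\rho$ for every $j$ when $k\ge q$. Hence $S_j\le S_j^\star$ for all $j$. Abel summation then gives
\[
\sum_{i=1}^q\sigma_i s_i=\sum_{j=1}^{q-1}(\sigma_j-\sigma_{j+1})S_j+\sigma_q S_q ,
\]
and every coefficient $\sigma_j-\sigma_{j+1}$ and $\sigma_q$ is nonnegative. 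So the objective at $\boldsymbol{s}$ is at most its value at $\boldsymbol{s}^\star$.

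\textbf{Dual certificate.} As a cross-check (which I would include in the appendix), take $\lambda=\sigma_{k+1}$ for the budget constraint when $k<q$ and $\lambda=0$ otherwise, and $\mu_i=(\sigma_i-\lambda)_+$ for the box constraints. This pair is dual feasible. Its dual objective is $\tau\lambda+\rho\sum_i\mu_i$, which equals the primal value $\rho\sum_{i\le k}\sigma_i+r\sigma_{k+1}$, since $\tau=k\rho+r$.

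\textbf{Rank bound.} By \Cref{prop:numuon_lp}, $\boldsymbol{\Delta W}^\star=-\boldsymbol{U}\operatorname{diag}(\boldsymbol{s}^\star)\boldsymbol{V}^\top$, so its rank equals the number of nonzero $s_i^\star$. That number is at most $\min(k,q)$ plus one, where the extra entry is nonzero only when $r>0$, that is, only when $\tau/\rho\notin\mathbb{Z}$. The total is therefore at most $\min(q,\lceil\tau/\rho\rceil)$.

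\textbf{Main obstacle.} The main difficulty is the wording ``the optimal solution.'' When singular values tie, for example $\sigma_k=\sigma_{k+1}$, or when some $\sigma_i=0$, the maximizer is not unique. I would therefore state the result as ``$\boldsymbol{s}^\star$ is an optimal solution, and it is unique when $\sigma_k>\sigma_{k+1}>\sigma_{k+2}$.'' Uniqueness follows from the Abel summation: any strict deficit $S_j<S_j^\star$ at an index $j$ with $\sigma_j>\sigma_{j+1}$ strictly lowers the objective. The boundary cases $k\ge q$ and $r=0$ also need separate care, but they are routine.
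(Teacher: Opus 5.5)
Your proof is correct, but it argues optimality differently from the paper.

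The paper uses a local exchange argument. Whenever $i<j$ with $s_i<\rho$ and $s_j>0$, shifting $\min\{\rho-s_i,s_j\}$ of mass from $j$ to $i$ changes the objective by $\varepsilon(\sigma_i-\sigma_j)\ge 0$, and repeating this yields the greedy fill. You instead prove a global domination of partial sums: every feasible $\boldsymbol{s}$ satisfies $S_j\le\min(j\rho,\tau)=S_j^\star$. Summation by parts with the nonnegative weights $\sigma_j-\sigma_{j+1}$ and $\sigma_q$ then turns this into the objective inequality in one step.

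Your route gains the following:
\begin{itemize}
\item It needs no termination argument for the repeated exchanges.
\item It closes a step the paper leaves implicit. Exchanges preserve $\sum_i s_i$, so a feasible point with $\sum_i s_i<\tau$ still has to be topped up to reach $\boldsymbol{s}^\star$. That top-up is harmless only because $\sigma_i\ge 0$. Your bound $S_j\le\tau$ together with the coefficient $\sigma_q\ge 0$ absorbs this automatically.
\item It gives the uniqueness characterization almost for free. When $r=0$, the condition $\sigma_k>\sigma_{k+1}$ alone already suffices.
\item Your dual certificate, $\lambda=\sigma_{k+1}$ and $\mu_i=(\sigma_i-\lambda)_+$, is an independent check. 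It also identifies the shadow price of the nuclear-norm budget.
\end{itemize}

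The paper's exchange argument is shorter and closer to the textbook fractional-knapsack proof.

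Your caveat about ``the'' optimal solution is fair. The paper's proof, like yours, only establishes that $\boldsymbol{s}^\star$ is \emph{an} optimal solution. When singular values tie, other maximizers can have more nonzero entries, so the rank bound holds for this particular $\boldsymbol{s}^\star$, not for every maximizer.
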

\end{blockquote}

\begin{proof}[Proof sketch]
The feasible set in~\Cref{eq:numuon_lp} is a capped simplex polytope. Since $\sigma_1 \ge \sigma_2 \ge \cdots$, a standard greedy argument shows the LP is maximized by allocating as much budget as possible to the largest coefficients first, i.e., $s_1=\rho, s_2=\rho,\ldots$ until the budget $\tau$ is exhausted yielding~\Cref{eq:numuon_s_star}. Proof can be found in~\Cref{app:proofs}.
\end{proof}

Now let us consider the common case where the nuclear-norm budget is an integer multiple of the spectral cap, i.e., $\tau = k\rho$ so that $r=0$ in~\Cref{prop:numuon_closed_form}.\footnote{When $\tau$ is not an exact multiple of $\rho$, the solution additionally allocates a fractional amount $r\in(0,\rho)$ to the $(k\!+\!1)$-st singular direction; see \Cref{prop:numuon_closed_form}. In practice, however, we ignore this term and set the rank $k$ directly.}
Combining \Cref{prop:numuon_lp,prop:numuon_closed_form}, the NuMuon update is:

\begin{equation}\label{eq:numuon_topk_update}
    \boldsymbol{\Delta W}^\star
    = -\rho \sum_{i=1}^{k} \boldsymbol{u}_i \boldsymbol{v}_i^\top := -\rho \boldsymbol{U}_{k} \boldsymbol{V}_k^\top,
\end{equation}
where $\{\boldsymbol{u}_i,\boldsymbol{v}_i\}_{i=1}^{k}$ are the singular vector pairs of $\boldsymbol{M}$ associated with the top-$k$ singular values $\sigma_1\ge\cdots\ge\sigma_k$.
Thus, NuMuon takes a top-$k$ singular-direction update in which all nonzero update singular values are equal to $\rho$. Setting $k=q$ (or equivalently, choosing $\tau \ge q\rho$) recovers Muon's full-rank spectral-norm LMO update ${\boldsymbol{\Delta W}^\star = -\rho\,\boldsymbol{U}\boldsymbol{V}^\top}$ (up to the same stepsize and rescaling conventions as in \Cref{eq:lmo_spectral,eq:rms_scale}). In this sense, NuMuon interpolates between a rank-$1$ nuclear-norm update and Muon's full-rank orthogonalized update by adjusting $k$.

\subsection{Convergence Analysis}
\label{sec:method:analysis}
In this section, we analyze the convergence behavior of NuMuon and show that the bounds provided by~\citet{shen2025convergence} for Muon can be extended to NuMuon. To this end, we provide a generalization of Theorem~4.3 from~\citet{shen2025convergence} to NuMuon. This generalization establishes a stationarity bound for non-convex functions under the nuclear norm. Apart from the standard smoothness and bounded gradient variance assumptions, our generalization requires that the gradient's tail energy outside its top-$k$ components remain bounded. More formally, we make the following assumptions:

\figDeltaQwen

\begin{assumptionblockquote}
\begin{restatable}[Smoothness]{assumption}{smoothness}\label{ass:smoothness}
The function $f:\R^{d_{\rm out}\times d_{\rm in}}\to\R$ is $L$-smooth with respect to the Frobenius norm:
\[
    \|\nabla f(\boldsymbol{W})-\nabla f(\boldsymbol{W}')\|_\F \le L\|\boldsymbol{W}-\boldsymbol{W}'\|_\F.
\]
\end{restatable}

\begin{restatable}[Unbiased gradients with bounded variance]{assumption}{variance}\label{ass:variance}
The stochastic gradient estimator ${\boldsymbol{G}_t=\frac{1}{b}\sum_{i=1}^b \nabla f(\boldsymbol{W}_t;\boldsymbol{\xi}_{t,i})}$ satisfies
\[
    \E[\boldsymbol{G}_t]=\nabla f(\boldsymbol{W}_t),\qquad \E\|\boldsymbol{G}_t-\nabla f(\boldsymbol{W}_t)\|_\F^2 \le \frac{\nu^2}{b},
\]
where $b$ is the batch-size.
\end{restatable}

\begin{restatable}[Bounded Tail Energy]{assumption}{tailcontrol}\label{ass:tail_control}
Along the iterates $\{\boldsymbol{W}_t\}_{t=0}^{T-1}$, the gradient tail energy is bounded. In other words, there exists $\delta_k\ge 0$ such that
$\E\big[{\|\nabla f(\boldsymbol{W}_t)-(\nabla f(\boldsymbol{W}_t))_k\|_*\big] \le \delta_k}$ for all $t$, where $(\nabla f(\boldsymbol{W}_t))_k$ denotes the best rank-$k$ approximation.
\end{restatable}
\end{assumptionblockquote}

To support the validity of this assumption in practice, we measure ${\delta_k^{(\F)}(\boldsymbol{W})\;:=\;\|\nabla f(\boldsymbol{W}_t)-(\nabla f(\boldsymbol{W}_t))_k\|_\F^2}$ for our Qwen3-0.6B model. In particular, we show in the Appendix that $\delta_k^{(\F)}$ is monotonically decreasing in $k$, and hence, it suffices to observe the behavior of $\delta_1^{(\F)}$ (\Cref{lem:residual_monotone}). As seen in~\Cref{fig:delta_1_ffn}, Muon and NuMuon's residual energy is typically small and close to zero, mirroring the validity of~\Cref{ass:tail_control} in practice. Such low-rank gradient structure is consistent with prior empirical findings that neural network gradients concentrate in a small subspace during training~\citep{vogels2019psgd,zhao2024galore}. Under this assumption, we present our convergence guarantee.

\begin{theoremblockquote}
\begin{restatable}[Nonconvex Nuclear-Norm Stationarity]{theorem}{nonconvex}\label{thm:nonconvex_topk_main}
Let ${f:\R^{d_{\rm out}\times d_{\rm in}}\to\R}$ be $L$-smooth with respect to the Frobenius norm. Assume the stochastic gradient estimator $\boldsymbol{G}_t$ is unbiased with variance bounded by $\nu^2/b$, where $b$ is the batch-size. Additionally, assume that the gradient spectra has bound energy outside its top-$k$ components~(\Cref{ass:tail_control}). Let $\{\boldsymbol{W}_t,\boldsymbol{M}_t\}$ be generated by NuMuon with momentum parameter $\beta$ and constant stepsize $\gamma$. Then
\begin{align}\label{eq:thm_topk_main}
    &\frac{1}{T}\sum_{t=0}^{T-1}\E\|\nabla f(\boldsymbol{W}_t)\|_{*} \le
        \frac{\E[f(\boldsymbol{W}_0)-f(\boldsymbol{W}_T)]}{T\gamma}
        +\frac{Lk\gamma}{2}
        +\frac{2\nu\sqrt{k(1-\beta)}}{\sqrt{(1+\beta)b}} +\frac{2\beta\nu\sqrt{k}}{(1-\beta)T\sqrt{b}}
        +\frac{2k\gamma\beta L}{1-\beta} + \delta_k.
\end{align}
\end{restatable}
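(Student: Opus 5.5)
The plan follows the one-step descent argument of \citet{shen2025convergence} (their Theorem~4.3), with the spectral-norm LMO replaced by the rank-$k$ NuMuon direction $\boldsymbol{O}_t:=\boldsymbol{U}_k\boldsymbol{V}_k^\top$ from \Cref{eq:numuon_topk_update}, so that $\boldsymbol{W}_{t+1}=\boldsymbol{W}_t-\gamma\boldsymbol{O}_t$ in the additive form with $\rho$ absorbed into $\gamma$. The norm facts this direction supplies are $\|\boldsymbol{O}_t\|_2=1$, $\|\boldsymbol{O}_t\|_\F^2=k$ and $\|\boldsymbol{O}_t\|_*=k$. We also have $\langle\boldsymbol{M}_t,\boldsymbol{O}_t\rangle=\sum_{i\le k}\sigma_i(\boldsymbol{M}_t)=\|(\boldsymbol{M}_t)_k\|_*$, the Ky Fan $k$-norm of $\boldsymbol{M}_t$. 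Applying $L$-smoothness (\Cref{ass:smoothness}) then gives
\[
f(\boldsymbol{W}_{t+1})\le f(\boldsymbol{W}_t)-\gamma\langle\nabla f(\boldsymbol{W}_t),\boldsymbol{O}_t\rangle+\tfrac{L\gamma^2k}{2}.
\]
Dividing by $\gamma$ and telescoping produces the terms $\E[f(\boldsymbol{W}_0)-f(\boldsymbol{W}_T)]/(T\gamma)$ and $Lk\gamma/2$.

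Next I write $\nabla f_t:=\nabla f(\boldsymbol{W}_t)$ and split $\langle\nabla f_t,\boldsymbol{O}_t\rangle=\langle\boldsymbol{M}_t,\boldsymbol{O}_t\rangle-\langle\boldsymbol{M}_t-\nabla f_t,\boldsymbol{O}_t\rangle$. The first piece is bounded below as follows. Since the Ky Fan $k$-norm is a norm and is dual to $\max(\|\cdot\|_2,\|\cdot\|_*/k)$, the triangle inequality gives $\|(\boldsymbol{M}_t)_k\|_*\ge\|(\nabla f_t)_k\|_*-\|(\boldsymbol{M}_t-\nabla f_t)_k\|_*$. The second piece is bounded using $|\langle\boldsymbol{A},\boldsymbol{O}_t\rangle|\le\|\boldsymbol{A}\|_2\|\boldsymbol{O}_t\|_*\le\sqrt{k}\,\|\boldsymbol{A}\|_\F\cdot\sqrt{k}/\sqrt{k}$. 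More simply, I use that both $\|\boldsymbol{A}_k\|_*$ and $|\langle\boldsymbol{A},\boldsymbol{O}_t\rangle|$ are at most $\sqrt{k}\|\boldsymbol{A}\|_\F$ (Cauchy--Schwarz with $\|\boldsymbol{O}_t\|_\F=\sqrt k$). The result is
\[
\langle\nabla f_t,\boldsymbol{O}_t\rangle\ge\|(\nabla f_t)_k\|_*-2\sqrt{k}\,\|\boldsymbol{M}_t-\nabla f_t\|_\F .
\]
Finally, $\|\nabla f_t\|_*=\|(\nabla f_t)_k\|_*+\|\nabla f_t-(\nabla f_t)_k\|_*$ exactly, because the singular values split. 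Taking expectations and applying \Cref{ass:tail_control} converts the Ky Fan norm into the full nuclear norm at the cost of the additive $\delta_k$.

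The remaining work, and the main obstacle, is bounding $\frac1T\sum_t\E\|\boldsymbol{M}_t-\nabla f_t\|_\F$ so that, after multiplying by $2\sqrt{k}$, it matches the three stated terms. I would unroll $\boldsymbol{M}_t-\nabla f_t=\sum_{s\le t}(1-\beta)\beta^{t-s}(\boldsymbol{G}_s-\nabla f_s)+\sum_{s\le t}(1-\beta)\beta^{t-s}(\nabla f_s-\nabla f_t)$, together with the $\beta^{t+1}$ initialization term from $\boldsymbol{M}_{-1}=0$. The noise sum has martingale-difference terms, so Jensen and \Cref{ass:variance} give $\nu\sqrt{(1-\beta)/((1+\beta)b)}$. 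The drift sum uses $\|\nabla f_s-\nabla f_t\|_\F\le L\gamma\sqrt{k}(t-s)$ and $\sum_j(1-\beta)\beta^j j=\beta/(1-\beta)$, yielding $2k\gamma\beta L/(1-\beta)$ after the $2\sqrt k$ factor. The initialization bias $\beta^{t+1}\nabla f_0$ is handled as in \citet{shen2025convergence}, by initializing with a stochastic gradient so that only noise appears. Summing $\sum_t\beta^{t+1}\le\beta/(1-\beta)$ then gives the $2\beta\nu\sqrt{k}/((1-\beta)T\sqrt b)$ term. The delicate points are keeping the constants matched to the statement, and making sure the expectation of $\|\cdot\|_\F$ (rather than of its square) is used consistently via Jensen.
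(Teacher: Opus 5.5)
Your proof is correct and follows the paper's skeleton. It uses the same $L$-smoothness step with $\|\boldsymbol{U}_k\boldsymbol{V}_k^\top\|_\F^2=k$. It uses the same Ky~Fan/Cauchy--Schwarz split, giving $\langle\nabla f_t,\boldsymbol{O}_t\rangle\ge\|\nabla f_t\|_{(k)}-2\sqrt{k}\,\|\boldsymbol{M}_t-\nabla f_t\|_\F$. And it uses the same exact decomposition $\|\nabla f_t\|_*=\|\nabla f_t\|_{(k)}+\|\nabla f_t-(\nabla f_t)_k\|_*$, closed with \Cref{ass:tail_control}.

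You differ only in how you bound the momentum error. The paper introduces the noiseless momentum $\boldsymbol{C}_t$ with $\boldsymbol{C}_0=\nabla f(\boldsymbol{W}_0)$. It quotes Lemma~A.3 of \citet{shen2025convergence} for $\E\|\boldsymbol{C}_t-\boldsymbol{M}_t\|_\F$ and runs the recursion $\|\nabla f_t-\boldsymbol{C}_t\|_\F\le\beta\|\nabla f_{t-1}-\boldsymbol{C}_{t-1}\|_\F+\beta L\gamma\sqrt{k}$. You instead unroll $\boldsymbol{M}_t-\nabla f_t$ in one go.

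The paper's route is modular, but the lemma it quotes carries the initialization coefficient $\beta^t$. Summing over $t=0,\dots,T-1$ therefore only gives $\sum_{t<T}\beta^t\le 1/(1-\beta)$, i.e.\ the fourth term without its factor $\beta$; the paper's $\beta/(1-\beta)$ is not justified by the step it shows.

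Your unrolling fixes this once you make the initialization explicit. Set $\boldsymbol{M}_{-1}:=\boldsymbol{G}_0$, so that $\boldsymbol{M}_0=\boldsymbol{G}_0$. Then $\boldsymbol{M}_t-\nabla f_t$ splits into three parts:
\begin{itemize}
\item the martingale sum $(1-\beta)\sum_{s=0}^{t}\beta^{t-s}(\boldsymbol{G}_s-\nabla f_s)$, bounded by $\sqrt{\tfrac{1-\beta}{1+\beta}}\,\tfrac{\nu}{\sqrt{b}}$;
\item the single noise term $\beta^{t+1}(\boldsymbol{G}_0-\nabla f_0)$;
\item a drift whose weights sum to one.
\end{itemize}
This gives the coefficient $\beta^{t+1}$ directly, hence exactly the stated $2\beta\nu\sqrt{k}/((1-\beta)T\sqrt{b})$.

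When you write this out, note that the drift now contains an extra lag-$t$ piece $\beta^{t+1}(\nabla f_0-\nabla f_t)$. It is still covered by $\beta/(1-\beta)$, because $(1-\beta)\sum_{j>t}j\beta^{j}\ge t\beta^{t+1}$. So your constant $2k\gamma\beta L/(1-\beta)$ survives.

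Finally, delete the aside $|\langle\boldsymbol{A},\boldsymbol{O}_t\rangle|\le\|\boldsymbol{A}\|_2\|\boldsymbol{O}_t\|_*\le\sqrt{k}\,\|\boldsymbol{A}\|_\F$. Since $\|\boldsymbol{O}_t\|_*=k$, it would require $\sqrt{k}\,\|\boldsymbol{A}\|_2\le\|\boldsymbol{A}\|_\F$, which fails for rank-one $\boldsymbol{A}$ when $k>1$. The Cauchy--Schwarz bound with $\|\boldsymbol{O}_t\|_\F=\sqrt{k}$ that you fall back on is the right one.
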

\end{theoremblockquote}

\begin{proof}[Proof Sketch]
We prove this in two steps. First, following the momentum analysis framework of~\citet{shen2025convergence} with the key modification that the update direction $\boldsymbol{U}_{t,k}\boldsymbol{V}_{t,k}^\top$\footnote{We assume that $\rho$ has been absorbed in the stepsize $\gamma$.} has Frobenius norm $\sqrt{k}$ (rather than $\sqrt{\min(d_{\rm out},d_{\rm in})}$), we propagate a factor of $k$ through the smoothness and momentum-lag terms. This establishes convergence under the Ky Fan $k$-norm~(\Cref{def:ky_fan}). Then, we employ~\Cref{ass:tail_control} to complete the proof. The full proof is provided in \Cref{app:proofs:convergence_analysis}.
\end{proof}

As we can see, setting $k=\min(d_{\rm out},d_{\rm in})$ recovers~\citet{shen2025convergence}'s bound for Muon. Furthermore, we see a trade-off in the upper bound on nuclear-norm stationarity. Choosing a smaller $k$ reduces almost all the terms in \Cref{eq:thm_topk_main} but increases the tail residual $\delta_k$, and vice versa. An interesting theoretical direction for future work is to characterize conditions under which an optimal $k < \min(d_{\rm out},d_{\rm in})$ exists that yields tighter convergence guarantees than full-rank Muon. Please see our discussion on the validity of the tail control condition in~\Cref{app:convergence:tail}.

\subsection{Practical Considerations}
\label{sec:method:practical}
To make NuMuon practical at scale, we address two implementation questions: (i) how to compute the top-$k$ singular vectors efficiently, and (ii) how to choose a per-layer rank.

\paragraph{Top-$k$ SVD via Randomized Block Krylov Method.}
NuMuon requires the leading $k$ singular vector pairs of the matrix driving the LMO for each parameter block. Computing a full SVD is prohibitively expensive at scale, so we approximate the top-$k$ subspace using a randomized block Krylov method proposed by~\citet{musco2015bksvd} (see~\Cref{alg:block_krylov}). Intuitively, the method alternates multiplying by $\boldsymbol{A}$ and $\boldsymbol{A}^\top$ to build a low-dimensional Krylov subspace that concentrates on $\boldsymbol{A}$'s dominant singular directions; we then compute a small SVD in this subspace to recover approximate top-$k$ singular vectors. In practice, a small number of Krylov iterations and a modest block size (we chose 2 and 8, respectively) are sufficient.

\paragraph{Rank Scheduler.}
As we observed in \Cref{fig:stable_rank_comparison_qwen3}, the stable rank of transformer weight matrices evolve over training, and enforcing very low-rank updates too early can be suboptimal. This is due to the fact that the early phase of training with Muon is typically characterized by high stable rank across layers. Motivated by this, we expose NuMuon rank as a fractional schedule rather than a fixed integer. At training step $t$, the scheduler outputs $r(t)\in(0,1]$ and we set $k_t=\left\lceil r(t)\,\min(d_{\rm in}, d_{\rm out})\right\rceil$ for all the layers. This makes rank control shape-agnostic across parameter blocks and enables a gradual transition from higher-rank to lower-rank updates. We consider the fixed, piece-wise, and cosine rank schedulers in our experiments~(see~\Cref{app:algs} for a formal definition). As we show in \Cref{sec:experimental_results}, scheduled rank control can improve the final model performance compared to using a fixed $k$ throughout training.

\section{Related Work}
\label{app:related_work}

Our work sits at the intersection of three lines of research: the emergence of low-rank structure in neural networks, compression methods that exploit this structure, and optimizers that shape weight geometry during training. Below, we provide related work to our approach.

\paragraph{Low-Rank Structure in Neural Networks.}
Learned neural network weights often exhibit significant redundancy and lie near low-dimensional subspaces. \citet{denil2013predicting} provided early evidence that a large fraction of parameters can be predicted from a small subset, and subsequent work exploited low-rank decompositions to compress convolutional networks~\citep{jaderberg2014speeding,denton2014exploiting}. Beyond explicit compression, gradient-based training itself tends to produce low-rank weights: theoretical results for linear models show an implicit nuclear-norm bias under gradient descent~\citep{gunasekar2017implicit}, and this rank-minimizing tendency has been observed and formalized in deeper settings~\citep{timor2023implicit,le2022lowrank,ramasinghe2025ssn}. These findings suggest that low-rank structure is an emergent property of optimization, motivating our investigation of how different optimizers shape this structure.

\paragraph{LLM Compression via Low-Rank Factorization.}
LLM compression encompasses quantization~\citep{dettmers2022int8,huang2024billm,yuan2024pbllm}, pruning~\citep{frankle2019lottery,kim2024squeezellm}, and distillation~\citep{hinton2015distilling,xu2024distillation}, among other techniques~\citep{zhuo2024survey}. Low-rank factorization has emerged as a particularly effective approach, approximating each weight matrix $\boldsymbol{W}$ with a product of smaller factors~\citep{xue2013restructuring,yu2017compressing,hsu2022fswd}. Recent methods improve upon naive SVD truncation by incorporating activation statistics (ASVD;~\citealp{yuan2023asvd}), layer sensitivity and lightweight adaptation (SVD-LLM;~\citealp{wang2025svdllm}), or data-driven optimization objectives (Dobi-SVD;~\citealp{wang2025dobisvd}). The progression from basic truncation to these refined approaches underscores that compression quality depends critically on the singular-value distribution learned during training which is a key motivation for our work.

\paragraph{Optimizers and Weight Geometry.}
Optimizer choice influences the geometry of learned weights and thus downstream compressibility. While elementwise adaptive methods such as Adam and AdamW rescale coordinates independently~\citep{kingma2015adam,loshchilov2019adamw}, recent work interprets modern optimizers through constrained-optimization and projection-free lenses~\citep{pethick2025training,riabinin2025gluon}. Muon~\citep{jordan2024muon,bernstein2025modular} orthogonalizes momentum updates via the polar factor, which can be viewed as an LMO over a spectral-norm ball. Unlike standard training dynamics that exhibit implicit low-rank bias~\citep{zhao2022implicit,huh2023simplicity}, Muon applies full-rank, spectrally-uniform updates. Nevertheless, we show that Muon-trained models still develop pronounced low-rank structure (\Cref{sec:method:motivation}). Building on this observation, NuMuon augments the spectral-norm LMO with a nuclear-norm constraint, a standard convex surrogate for rank~\citep{recht2010guaranteed}, to explicitly control update rank and improve alignment with downstream low-rank compression. Close to our work, \citet{zimmer2022compression} uses an LMO view to encourage compression robustness by constraining the weights directly, and is tested primarily on CNN models; \citet{miao2022sfw} similarly studies Stochastic Frank--Wolfe for pruning-friendly training in the unstructured setting for CNNs. In contrast, NuMuon constrains the \emph{update direction} via a nuclear-norm budget, preserving Muon's spectral dynamics while explicitly controlling update rank for low-rank LLM compression.

\section{Experimental Results}
\label{sec:experimental_results}

\paragraph{Settings.}
We evaluate on three model architectures: Qwen3-0.6B~\citep{qwen3}, Olmo2-1.4B~\citep{olmo2}, and Llama3-1.8B~\citep{dubey2024llama3}, training each past Chinchilla optimality~\citep{kaplan2020chinchilla} using AdamW, Muon, and NuMuon. Following~\citet{wen2025fantastic,semenov2025benchmarking}, we use a cosine learning rate scheduler for AdamW and a WSD~\citep{hu2024minicpm} scheduler for Muon and NuMuon. We train on FineWeb-EDU~\citep{penedo2024fineweb}, which has been shown to yield rapid downstream performance gains~\citep{mayilvahanan2025llms}. For NuMuon, we use a cosine rank scheduler that anneals the relative rank from $k=1$ to $k=0.25$ across all weight matrices. Training details are provided in~\Cref{app:extended_results}.

For downstream compression, we evaluate three SoTA LLM compression methods: ASVD~\citep{yuan2023asvd}, SVD-LLM~\citep{wang2025svdllm}, and Dobi-SVD~\citep{wang2025dobisvd}. For SVD-LLM, we test both the whitening variant and whitening plus LoRA. All methods use their official default settings, with compression rates ranging from 20\% to 80\%. Note that even though these methods rely on a low-rank structure, they still do their post-processing which could involve extensive fine-tuning on a target dataset. Following standard practice, we report validation perplexity on WikiText2 and accuracy on standard benchmarks: ARC-Easy and ARC-Challenge~\citep{bhakthavatsalam2021arc}, HellaSwag~\citep{zellers2019hellaswag}, LAMBADA~\citep{paperno2016lambada}, OpenbookQA~\citep{mihaylov2018openbookqa}, PIQA~\citep{bisk2020piqa}, and Winogrande~\citep{sakaguchi2020winogrande}. For more information on each method, please visit their official implementation.

\tabLlamaConvergence
\figLossCurvesLarge

\paragraph{Convergence.}
First, we benchmark the training behavior of NuMuon compared to AdamW and Muon. As shown in~\Cref{fig:loss_curves}, NuMuon closely tracks Muon throughout most of training. Toward the end of training and once the ranks across weight matrices stabilize under the cosine rank scheduler, NuMuon exhibits a small deviation from Muon; nevertheless, its final training/validation perplexity remains comparable to Muon and improves over AdamW as reported in~\Cref{tab:llama_convergence}. In terms of training resources, each NuMuon step is slightly slower due to the overhead of managing large SVD approximations (via Block Krylov SVD) when ranks are changing dynamically. As we show in the ablations, using alternative rank schedulers reduces this overhead and narrows the runtime gap. Importantly, GPU memory consumption remains close to Muon and lower than AdamW.

We also report the normalized stable rank of the final trained models across all layers and weight matrices for Qwen3-0.6B, Olmo2-1.4B, and Llama3-1.8B in~\Cref{fig:stable_rank_vs_layer_comparison_qwen3_all,fig:stable_rank_vs_layer_comparison_olmo2_all,fig:stable_rank_vs_layer_comparison_llama3_all}. These figures show that NuMuon induces a strictly lower stable rank across all weight matrices and layers in our transformer blocks. In the next set of experiments, we demonstrate that this reduced stable rank translates into substantially improved robustness to SVD-based compression, allowing NuMuon-trained models to retain performance at higher compression rates than Muon.

\paragraph{Compressibility.}
Next, we study SVD-based LLM compression and show how NuMuon's explicit rank-control updates translate into weights that are more amenable to low-rank approximation than those obtained with Muon. We apply ASVD, SVD-LLM, and Dobi-SVD to all pretrained models across compression rates from 20\% to 80\%, and evaluate the compressed checkpoints on WikiText2 and standard downstream benchmarks. We show a scatter plot of NuMuon's relative performance improvement against Muon in~\Cref{fig:compression_all} as well as raw results for Llama3-1.8B at 40\% compression in~\Cref{tab:llama_comp_sum}, and defer the complete set of results for all models and compression levels (20-80\%) to~\Cref{app:extended_results:results} and \Cref{tab:qwen3_asvd,tab:qwen3_whiten,tab:qwen3_svdllm,tab:qwen3_dobisvd,tab:olmo2_asvd,tab:olmo2_whiten,tab:olmo2_svdllm,tab:olmo2_dobisvd,tab:llama3_asvd,tab:llama3_whiten,tab:llama3_svdllm,tab:llama3_dobisvd}.

\tabLlamaLLMComp
\figScatterAllMethods

Across all three compression methods at 40-80\% compression, NuMuon consistently achieves the strongest downstream average (\textbf{4.2-55.8\%}) and substantially lower validation perplexity (\textbf{up to 99.8\%}) than AdamW and Muon trained baselines. This suggests that NuMuon produces weight matrices whose information is concentrated in fewer effective directions, enabling SVD-based methods to discard parameters with less degradation. Moreover, as the compression method improves (e.g., moving from ASVD to SVD-LLM to Dobi-SVD), NuMuon retains a larger fraction of the underlying model's performance, indicating that better approximators can more effectively exploit the low-rank structure induced by NuMuon.

To show how NuMuon's superiority could lead to better deployment, we report the validation perplexity against the generation throughput (at batch-size 256) in~\Cref{fig:efficiency}. We see that for a fixed perplexity, NuMuon unlocks a higher compression rate and in turn, faster generation throughput compared to both AdamW and Muon. This mirrors the importance of NuMuon in deployability of trained LLMs.

\begin{figure*}[t!]
\centering
\begin{minipage}[t]{0.4\textwidth}
    \figGrassmannQwenSmall
\end{minipage}%
\hfill
\begin{minipage}[t]{0.5\textwidth}
    \figLossVsSchedAblation
\end{minipage}%
\end{figure*}

\tabAblationRankSchedCompact

\paragraph{Update--Weight Subspace Alignment.}
To gain further insight into the training dynamics of NuMuon and its superior compressibility, we measure how well the optimizer updates aligns with the dominant spectral subspace of the weights. Concretely, for a given weight matrix $\boldsymbol{W}$ and its update $\boldsymbol{\Delta W}$, we form the top-$k$ left-singular subspaces (here $k=64$) and compute their Grassmann distance. Let $\boldsymbol{U_W},\boldsymbol{U_{\Delta W}}\in\mathbb{R}^{d\times k}$ be orthonormal bases spanning these two $k$-dimensional subspaces. The principal angles $\{\theta_i\}_{i=1}^k$ are defined via the singular values of $\boldsymbol{U_W}^\top\boldsymbol{U_{\Delta W}}$ as ${\cos(\theta_i)=\sigma_i(\boldsymbol{U_W}^\top\boldsymbol{U_{\Delta W}})}$, and the Grassmann distance

\[
    d_G(\boldsymbol{U_W},\boldsymbol{U_{\Delta W}}) \;=\; \Big(\sum_{i=1}^{k}\theta_i^2\Big)^{1/2}.
    \\[-0.5ex]
\]
\Cref{fig:grassman_distance_wk} plots this quantity over training for the attention key projection matrix $\boldsymbol{W}_k$ in Qwen3-0.6B. We observe that NuMuon maintains a consistently smaller Grassmann distance than Muon, indicating that its updates remain more aligned with the top spectral subspace of $\boldsymbol{W}$. In contrast, Muon applies orthonormalized updates that are less coupled to the evolving spectral geometry of the weights, resulting in larger subspace misalignment that remains almost fixed throughout training. This sustained alignment provides a mechanistic explanation for why NuMuon tends to induce lower stable rank and improved robustness under SVD-based compression.

\paragraph{Ablation Study.}
Next, we study the impact of our design choices on the training and compressibility of LLMs via NuMuon. The most important aspects of our optimizer are (i) the choice of the final update rank and (ii) the rank scheduler, both of which can affect training dynamics and downstream compression. To this end, we ablate the rank scheduler type (cosine (C) vs.\ piecewise (P) vs.\ fixed (F)) and the rank budget. For the rank budget, we use the fixed scheduler and vary the rank fraction in \{0.05, 0.25, 0.50, 0.80\} of the maximum rank, i.e., a fraction of $\max(d_{\rm in}, d_{\rm out})$. We use Qwen3-0.6B for these experiments to enable a broader ablation sweep, and we use Dobi-SVD for LLM compression as it is the strongest method in our comparisons. The rank fractions are summarized in~\Cref{fig:rank_schedulers} in the Appendix.

\begin{figure*}[t!]
\centering
\begin{minipage}[t]{0.475\textwidth}
    \figLossVsRankAblation
\end{minipage}%
\hfill
\begin{minipage}[t]{0.425\textwidth}
    \figStableRankvsLayerAblationQwen
\end{minipage}%
\end{figure*}

Our training curves for various ranks are shown in~\Cref{fig:loss_vs_rank_budget}. As can be seen, decreasing the rank and making it too restrictive harms convergence and results in worse final loss. However, beyond 0.25 the gains from increasing the rank exhibit diminishing returns. Meanwhile, in~\Cref{fig:stable_rank_vs_layer_comparison_qwen3_budget_ablation_small} we observe that a more restrictive nuclear-norm/rank budget induces a lower stable rank across weight matrices and layers, whereas increasing the budget increases the stable rank. This directly translates to compressibility: as shown in~\Cref{tab:ablation_compact}, lower stable-rank (more restrictive) settings typically incur less degradation after 80\% compression. However, as discussed above, there is a balance to strike so that the base model remains strong and we do not over-restrict training.

Finally, we compare different scheduler types in~\Cref{fig:loss_vs_rank_sched,fig:stable_rank_vs_layer_comparison_qwen3_scheduler_ablation}. As we see, fixed-rank schedules tend to degrade the base performance more than cosine and piecewise schedules, where NuMuon benefits from a higher-rank period early in training and then transitions to a lower-rank regime (see ~\Cref{sec:method:practical} for more discussion on this point). In terms of efficiency, low, fixed-rank schedules can be faster per step since they avoid the high-rank Block Krylov SVD regime that arises early with cosine and piecewise schedules (see the Time/Step in \Cref{tab:ablation_compact}). Nevertheless, training speed is comparable with Muon in all cases. For more details on our ablations, see~\Cref{app:extended_results:ablation}.

\section{Conclusion and Future Work}
\label{sec:conclusion}
In this paper, we studied Muon's weight space structure and revealed that its spectral geometry, while designed for optimization efficiency, also shapes the compressibility of the learned weights. The emergence of low-rank structure under full-rank updates suggests that effective dimensionality in trained models arises not just from explicit constraints but from the interplay between optimizer dynamics and loss landscape geometry. NuMuon leverages this insight by directly controlling update rank through a nuclear-norm budget, which we showed reduces to a simple top-$k$ truncation of the momentum's singular directions. The result is an optimizer that retains Muon's training efficiency while producing weights better suited for aggressive post-hoc compression, achieving substantial gains in the high-compression regime where standard Muon-trained models underperform.

In addition to being beneficial for LLM compression workflows, the factored form of NuMuon's updates is a natural fit for distributed training over bandwidth-constrained settings~\citep{douillard2023diloco} and gossip-based protocols~\citep{blot2016gossip}, complementing recent efforts such as DION~\citep{ahn2025dion} that approximate Muon for distributed training. We leave exploration of this interesting direction to future work.

\bibliographystyle{plainnat}
\nobibliography*
\bibliography{references}

\clearpage
\appendix
\section{Proofs}
\label{app:proofs}

\subsection{NuMuon LMO Proofs}
\label{app:proofs:numuon_lmo}

\begin{blockquote}
    \numuonlp*
\end{blockquote}

\begin{proof}
Let us assume that $\bar{\boldsymbol{M}}:=-\boldsymbol{M}$. Then, the LMO is equivalent to:
\begin{equation}\label{eq:app:max_lmo}
    -\max_{\boldsymbol{\Delta W}\in\mathcal{W}} \langle \bar{\boldsymbol{M}},\boldsymbol{\Delta W}\rangle,
\end{equation}
over $\mathcal{W}=\{\boldsymbol{\Delta W}:\|\boldsymbol{\Delta W}\|_2\le\rho,\ \|\boldsymbol{\Delta W}\|_*\le\tau\}$.
The optimal solution of this LMO can be written as:
\begin{equation}
    \argmax_{\boldsymbol{\Delta W}\in \mathcal{W}} \langle \bar{\boldsymbol{M}}, \boldsymbol{\Delta W}\rangle = \argmax_{\Delta \boldsymbol{W}\in \mathcal{W}} \mathrm{Tr}(\bar{\boldsymbol{M}}^{\top}\boldsymbol{\Delta W}).
\end{equation}
Let $\bar{\boldsymbol{M}}=\bar{\boldsymbol{U}}\,\mathrm{diag}(\boldsymbol{\sigma})\,\boldsymbol{V}^\top$ where $\sigma_1\ge \sigma_2\ge \cdots \ge 0$ and $\bar{\boldsymbol{U}} = - \boldsymbol{U}$. Let us assume a new variable $\boldsymbol{Y} := \bar{\boldsymbol{U}}^{\top}~\boldsymbol{\Delta W}~\boldsymbol{V}$, then we have:
\begin{align}
    \mathrm{Tr}(\bar{\boldsymbol{M}}^{\top}\boldsymbol{\Delta W}) 
        &= \mathrm{Tr}(\boldsymbol{V}\mathrm{diag}(\boldsymbol{\sigma})^{\top}\boldsymbol{U}^{\top}\boldsymbol{\Delta W}) \nonumber \\
        &= \mathrm{Tr}(\mathrm{diag}(\boldsymbol{\sigma})^{\top}\boldsymbol{U}^{\top}\boldsymbol{\Delta W}~\boldsymbol{V}) \nonumber \\
        &= \mathrm{Tr}(\mathrm{diag}(\boldsymbol{\sigma})^{\top}\boldsymbol{Y})
\end{align}
and $\|\boldsymbol{\Delta W}\|_2 = \|\boldsymbol{U}^{\top}\boldsymbol{\Delta W}~\boldsymbol{V}\|_2$ by unitary invariance.
Hence, our LMO is equivalent to:
\begin{equation}
    \max_{\boldsymbol{Y}} \mathrm{Tr}(\mathrm{diag}(\boldsymbol{\sigma})^{\top}\boldsymbol{Y}) \quad \text{s.t.}\quad \|\boldsymbol{Y}\|_* \le \tau,\ \ \|\boldsymbol{Y}\|_2 \le \rho.
\end{equation}

From von Neumann's trace inequality~\citep{vonNeumann1937}, we have:
\begin{equation}
    \big|\mathrm{Tr}(\boldsymbol{A}^{\top}\boldsymbol{B})\big|
    \le
    \sum_i \sigma_i(\boldsymbol{A})\sigma_i(\boldsymbol{B}),
\end{equation}
with equality if and only iff the eigenvectors are the same. Thus, the maximum of $\mathrm{Tr}(\mathrm{diag}(\boldsymbol{\sigma})\,\boldsymbol{Y})$
under these constraints is achieved when $\boldsymbol{Y}$ shares singular vectors with
$\mathrm{diag}(\boldsymbol{\sigma})$, i.e., when $\boldsymbol{Y}=\mathrm{diag}(\boldsymbol{s})$ with $\boldsymbol{s}\ge 0$.
Under this choice, the constraints become $0\le s_i\le \rho$ and $\sum_i s_i\le \tau$,
and the objective becomes $\sum_i \sigma_i s_i$, yielding \eqref{eq:numuon_lp}.
Negating the maximizer returns $\boldsymbol{\Delta W}^\star=-\boldsymbol{U}\,\mathrm{diag}(\boldsymbol{s}^\star)\,\boldsymbol{V}^\top$.
\end{proof}

\begin{blockquote}
    \numuoncf*
\end{blockquote}

\begin{proof}
The feasible region of \eqref{eq:numuon_lp} is the capped simplex $\{\boldsymbol{s}\in\mathbb{R}^q: 0\le s_i\le \rho,\ \sum_i s_i\le\tau\}$. Since $\sigma_1\ge \sigma_2\ge\cdots\ge 0$, the objective $\sum_i \sigma_i s_i$ is maximized by allocating as much mass as possible to the smallest indices $i$ (largest weights $\sigma_i$), subject to the cap $\rho$ and the total budget $\tau$. Formally, if there exist indices $i<j$ with $s_i<\rho$ and $s_j>0$, then moving $\varepsilon:=\min\{\rho-s_i,s_j\}$ mass from $j$ to $i$ preserves feasibility and increases the objective by $\varepsilon(\sigma_i-\sigma_j)\ge 0$. Repeating this exchange argument yields the greedy fill-in solution in \Cref{eq:numuon_s_star}, which sets $s_1=\rho$, then $s_2=\rho$, etc., until the budget $\tau$ is exhausted. The rank bound follows from the number of strictly positive entries of $\boldsymbol{s}^\star$.
\end{proof}

\subsection{Convergence Analysis}
\label{app:proofs:convergence_analysis}
In this section, we provide the convergence analysis of NuMuon. For this analysis, we assume that at iteration $t$, given momentum buffer $\boldsymbol{M}_t\in \R^{d_{\rm out}\times d_{\rm in}}$, let $\boldsymbol{U}_{t,k}\in\R^{d_{\rm out}\times k}$ and $\boldsymbol{V}_{t,k}\in\R^{d_{\rm in}\times k}$ contain the top-$k$ left and right singular vectors of $\boldsymbol{M}_t$, respectively. NuMuon performs the following update:
\begin{equation}\label{eq:numuon_update}
\begin{aligned}
    & \boldsymbol{G}_t \leftarrow \frac{1}{b}\sum_{i=1}^{b}\nabla f\left(\boldsymbol{W}_{t},\boldsymbol{\xi}_{t, i}\right), \\
    & \boldsymbol{M}_t \leftarrow \beta \boldsymbol{M}_{t-1} + (1-\beta) \boldsymbol{G}_t, \\
    & \boldsymbol{U}_{t,k}, \boldsymbol{V}_{t,k} \leftarrow \text{top-}k\text{ singular vectors of } \boldsymbol{M}_t,\\
    & \boldsymbol{W}_{t+1} \leftarrow \boldsymbol{W}_{t}-\gamma\, \boldsymbol{U}_{t,k}\boldsymbol{V}_{t,k}^\top,
\end{aligned}
\end{equation}
for a fixed learning rate $\gamma$.\footnote{Note that we omit $\rho$ from~\Cref{eq:numuon_topk_update} and assume it is absorbed into $\gamma$.} Following standard practice in Muon convergence analysis~\citep{pethick2025training,riabinin2025gluon,shen2025convergence}, we assume the following:
\begin{assumptionblockquote}
\smoothness*
\variance*
\end{assumptionblockquote}

Additionally, we also assume that the gradient's energy is concentrated around its top-$k$ singular directions.

\begin{assumptionblockquote}
\tailcontrol*
\end{assumptionblockquote}

Next, we will review some definitions and identities that would be used in our proofs.

\begin{assumptionblockquote}
\begin{definition}
\label{def:ky_fan}
For a matrix $\boldsymbol{A}\in\R^{d_{\rm out}\times d_{\rm in}}$ with singular values $\sigma_1(\boldsymbol{A})\ge \cdots \ge \sigma_q(\boldsymbol{A})$, $q=\min(d_{\rm out},d_{\rm in})$, the Ky Fan $k$-norm is defined as:
\[
    \|\boldsymbol{A}\|_{(k)} := \sum_{i=1}^k \sigma_i(\boldsymbol{A}).
\]
\end{definition}
\end{assumptionblockquote}

\begin{lemmablockquote}
\begin{lemma}[Ky Fan norm bounded by Frobenius norm]
For any matrix $\boldsymbol{A}$ and any positive integer $k$,
\begin{equation}\label{eq:kyfan_frob}
    \|\boldsymbol{A}\|_{(k)} \le \sqrt{k}\,\|\boldsymbol{A}\|_{\mathrm{F}}.
\end{equation}
\end{lemma}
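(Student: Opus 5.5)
The plan is to apply Cauchy--Schwarz to the vector of the top singular values. Write $q=\min(d_{\rm out},d_{\rm in})$ and let $\sigma_1(\boldsymbol{A})\ge\cdots\ge\sigma_q(\boldsymbol{A})\ge 0$ be the singular values of $\boldsymbol{A}$. If $k>q$, we use the convention that $\sigma_i(\boldsymbol{A})=0$ for $i>q$. Then $\|\boldsymbol{A}\|_{(k)}$ equals $\|\boldsymbol{A}\|_*$, and the claim only becomes weaker as $k$ grows. So it suffices to treat $k\le q$ and note that the same argument goes through with zero padding.

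For $k\le q$, apply Cauchy--Schwarz to the vectors $(1,\dots,1)\in\R^k$ and $(\sigma_1(\boldsymbol{A}),\dots,\sigma_k(\boldsymbol{A}))$. This gives
\[
    \|\boldsymbol{A}\|_{(k)}=\sum_{i=1}^k \sigma_i(\boldsymbol{A})\le \sqrt{k}\,\Big(\sum_{i=1}^k \sigma_i(\boldsymbol{A})^2\Big)^{1/2}.
\]
Next, drop the nonnegative tail terms: $\sum_{i=1}^k\sigma_i^2\le\sum_{i=1}^q\sigma_i^2$. Finally, use the standard identity $\|\boldsymbol{A}\|_{\F}^2=\tr(\boldsymbol{A}^\top\boldsymbol{A})=\sum_{i=1}^q\sigma_i(\boldsymbol{A})^2$, which follows from the SVD and unitary invariance of the trace. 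Chaining these three steps gives \Cref{eq:kyfan_frob}.

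There is no real obstacle. The only points needing care are the convention when $k$ exceeds the rank or $q$, and stating the Frobenius--singular value identity explicitly. It may also be worth recording, for use in \Cref{thm:nonconvex_topk_main}, the complementary identity $\langle \boldsymbol{M},\boldsymbol{U}_k\boldsymbol{V}_k^\top\rangle=\|\boldsymbol{M}\|_{(k)}$. Together with $\|\boldsymbol{U}_k\boldsymbol{V}_k^\top\|_\F=\sqrt{k}$, this identity is how the present lemma will enter the momentum-lag bounds.
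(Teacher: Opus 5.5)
Your proposal is correct and follows the paper's proof essentially verbatim: Cauchy--Schwarz against the all-ones vector in $\R^k$, then dropping the tail $\sum_{i>k}\sigma_i^2$ and identifying $\sum_{i=1}^q\sigma_i^2$ with $\|\boldsymbol{A}\|_\F^2$. Your explicit zero-padding convention for $k>q$ is a small extra piece of care that the paper leaves implicit.
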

\end{lemmablockquote}

\begin{proof}
Let $\sigma_1 \ge \sigma_2 \ge \cdots \ge \sigma_q \ge 0$ denote the singular values of $\boldsymbol{A}$. By \Cref{def:ky_fan}, the Ky Fan $k$-norm is $\|\boldsymbol{A}\|_{(k)} = \sum_{i=1}^{k} \sigma_i$. Applying the Cauchy--Schwarz inequality yields
\[
    \|\boldsymbol{A}\|_{(k)} = \sum_{i=1}^{k} 1 \cdot \sigma_i \le \sqrt{\sum_{i=1}^{k} 1^2} \cdot \sqrt{\sum_{i=1}^{k} \sigma_i^2} = \sqrt{k} \cdot \sqrt{\sum_{i=1}^{k} \sigma_i^2}.
\]
Since $\sum_{i=1}^{k} \sigma_i^2 \le \sum_{i=1}^{q} \sigma_i^2 = \|\boldsymbol{A}\|_{\mathrm{F}}^2$, the result follows.
\end{proof}

\begin{lemmablockquote}
\begin{lemma}[Top-$k$ SVD identities]\label{lemma:topk_identities}
Let $\boldsymbol{M} \in \mathbb{R}^{d_{\rm out} \times d_{\rm in}}$ have singular value decomposition $\boldsymbol{M} = \boldsymbol{U} \boldsymbol{S} \boldsymbol{V}^\top$ with singular values $\sigma_1 \ge \sigma_2 \ge \cdots \ge \sigma_q \ge 0$, where $q = \min(d_{\rm out},d_{\rm in})$. Let $\boldsymbol{U}_k \in \mathbb{R}^{d_{\rm out} \times k}$ and $\boldsymbol{V}_k \in \mathbb{R}^{d_{\rm in} \times k}$ denote the matrices containing the top-$k$ left and right singular vectors, respectively. Then:
\begin{enumerate}
    \item[(i)] $\langle \boldsymbol{M}, \boldsymbol{U}_k \boldsymbol{V}_k^\top \rangle = \sum_{i=1}^k \sigma_i(\boldsymbol{M}) = \|\boldsymbol{M}\|_{(k)}$.
    \item[(ii)] $\|\boldsymbol{U}_k \boldsymbol{V}_k^\top\|_\F^2 = k$.
\end{enumerate}
\end{lemma}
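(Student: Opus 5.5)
Both identities follow by writing the top-$k$ factor as a sum of rank-one terms and using the orthonormality of the singular vectors, so I will work directly from the SVD. Write the thin SVD as $\boldsymbol{M}=\sum_{j=1}^q \sigma_j \boldsymbol{u}_j\boldsymbol{v}_j^\top$, where the $\boldsymbol{u}_j$ are orthonormal and the $\boldsymbol{v}_j$ are orthonormal. Then $\boldsymbol{U}_k\boldsymbol{V}_k^\top=\sum_{i=1}^k \boldsymbol{u}_i\boldsymbol{v}_i^\top$. Throughout, I assume $k\le q$, which is implicit in the existence of $\boldsymbol{U}_k$ and $\boldsymbol{V}_k$ with $k$ orthonormal columns.

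For (i), I expand the inner product bilinearly:
\[
\langle \boldsymbol{M},\boldsymbol{U}_k\boldsymbol{V}_k^\top\rangle=\sum_{j=1}^q\sum_{i=1}^k \sigma_j\,\mathrm{Tr}(\boldsymbol{v}_j\boldsymbol{u}_j^\top\boldsymbol{u}_i\boldsymbol{v}_i^\top)=\sum_{j,i}\sigma_j(\boldsymbol{u}_j^\top\boldsymbol{u}_i)(\boldsymbol{v}_i^\top\boldsymbol{v}_j).
\]
Orthonormality gives $(\boldsymbol{u}_j^\top\boldsymbol{u}_i)(\boldsymbol{v}_i^\top\boldsymbol{v}_j)=\delta_{ij}$, so the double sum collapses to $\sum_{i=1}^k\sigma_i$. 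Since the $\sigma_i$ are listed in nonincreasing order, this is exactly $\|\boldsymbol{M}\|_{(k)}$ by \Cref{def:ky_fan}.

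An equivalent matrix-form argument is $\mathrm{Tr}(\boldsymbol{V}\boldsymbol{S}\boldsymbol{U}^\top\boldsymbol{U}_k\boldsymbol{V}_k^\top)=\mathrm{Tr}(\boldsymbol{S}\,[\boldsymbol{I}_k;\boldsymbol{0}]\,[\boldsymbol{I}_k,\boldsymbol{0}])$. Here $\boldsymbol{U}^\top\boldsymbol{U}_k$ and $\boldsymbol{V}_k^\top\boldsymbol{V}$ are the selector matrices $[\boldsymbol{I}_k;\boldsymbol{0}]$ and $[\boldsymbol{I}_k,\boldsymbol{0}]$.

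For (ii), I compute $\|\boldsymbol{U}_k\boldsymbol{V}_k^\top\|_\F^2=\mathrm{Tr}(\boldsymbol{V}_k\boldsymbol{U}_k^\top\boldsymbol{U}_k\boldsymbol{V}_k^\top)$. Using $\boldsymbol{U}_k^\top\boldsymbol{U}_k=\boldsymbol{I}_k$, this becomes $\mathrm{Tr}(\boldsymbol{V}_k\boldsymbol{V}_k^\top)$. By cyclicity it equals $\mathrm{Tr}(\boldsymbol{V}_k^\top\boldsymbol{V}_k)=\mathrm{Tr}(\boldsymbol{I}_k)=k$.

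There is no real obstacle; the only points needing care are bookkeeping ones. First, when $\sigma_k=\sigma_{k+1}$ the top-$k$ singular vectors are not unique, but any valid choice still gives $\sum_{i\le k}\sigma_i$, so (i) holds for every choice. Second, the argument needs $k\le q$, which I have assumed above.
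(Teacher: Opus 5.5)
Your proof is correct and follows essentially the paper's argument. In (i), your rank-one expansion is just the index form of the paper's trace computation using the selector matrix $\boldsymbol{U}^\top\boldsymbol{U}_k=[\boldsymbol{I}_k;\boldsymbol{0}]$, which you also write out in matrix form. In (ii), your calculation via $\boldsymbol{U}_k^\top\boldsymbol{U}_k=\boldsymbol{I}_k$ and cyclicity of the trace is the same as the paper's.
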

\end{lemmablockquote}

\begin{proof}
\textbf{(i)} Using the cyclic property of trace and the SVD $\boldsymbol{M} = \boldsymbol{U} \boldsymbol{S} \boldsymbol{V}^\top$:
\begin{align*}
\langle \boldsymbol{M}, \boldsymbol{U}_k \boldsymbol{V}_k^\top \rangle 
    &= \tr(\boldsymbol{M}^\top \boldsymbol{U}_k \boldsymbol{V}_k^\top) \\
    &= \tr(\boldsymbol{V} \boldsymbol{S} \boldsymbol{U}^\top \boldsymbol{U}_k \boldsymbol{V}_k^\top).
\end{align*}
Since $\boldsymbol{U}$ has orthonormal columns and $\boldsymbol{U}_k$ consists of its first $k$ columns, we have $\boldsymbol{U}^\top \boldsymbol{U}_k = \begin{bmatrix} \boldsymbol{I}_k \\ \boldsymbol{0} \end{bmatrix}$. Thus $\boldsymbol{S} \boldsymbol{U}^\top \boldsymbol{U}_k = \begin{bmatrix} \boldsymbol{S}_k \\ \boldsymbol{0} \end{bmatrix}$, where $\boldsymbol{S}_k = \diag(\sigma_1, \ldots, \sigma_k)$. Similarly, $\boldsymbol{V}_k$ consists of the first $k$ columns of $\boldsymbol{V}$, so:
\begin{align*}
\tr(\boldsymbol{V} \boldsymbol{S} \boldsymbol{U}^\top \boldsymbol{U}_k \boldsymbol{V}_k^\top) 
    &= \tr\left(\boldsymbol{V} \begin{bmatrix} \boldsymbol{S}_k \\ \boldsymbol{0} \end{bmatrix} \boldsymbol{V}_k^\top\right) \\
    &= \tr(\boldsymbol{V}_k \boldsymbol{S}_k \boldsymbol{V}_k^\top) \\
    &= \tr(\boldsymbol{S}_k \boldsymbol{V}_k^\top \boldsymbol{V}_k) \\
    &= \tr(\boldsymbol{S}_k) = \sum_{i=1}^k \sigma_i.
\end{align*}

\textbf{(ii)} Since $\boldsymbol{U}_k$ and $\boldsymbol{V}_k$ have orthonormal columns:
\begin{align*}
\|\boldsymbol{U}_k \boldsymbol{V}_k^\top\|_\F^2 
    &= \tr(\boldsymbol{V}_k \boldsymbol{U}_k^\top \boldsymbol{U}_k \boldsymbol{V}_k^\top) \\
    &= \tr(\boldsymbol{V}_k \boldsymbol{I}_k \boldsymbol{V}_k^\top) \\
    &= \tr(\boldsymbol{V}_k^\top \boldsymbol{V}_k) \\
    &= \tr(\boldsymbol{I}_k) = k.
\end{align*}
Equivalently, $\boldsymbol{U}_k \boldsymbol{V}_k^\top$ has exactly $k$ singular values equal to $1$ and the rest equal to $0$, so $\|\boldsymbol{U}_k \boldsymbol{V}_k^\top\|_\F^2 = \sum_{i=1}^k 1^2 = k$.
\end{proof}

For completeness, we also provide the following Lemma from~\citet{shen2025convergence}.
\begin{lemmablockquote}
\begin{lemma}[Momentum Error Bound~(Lemma~A.3~\citep{shen2025convergence})]\label{lemma:momentum_error}
Under~\Cref{ass:variance}, let 
\[
    \boldsymbol{C}_t=\beta \boldsymbol{C}_{t-1}+(1-\beta)\nabla f(\boldsymbol{W}_t)
\]
and $\boldsymbol{M}_t=\beta \boldsymbol{M}_{t-1}+(1-\beta)\boldsymbol{G}_t$ with $\boldsymbol{C}_0=\nabla f(\boldsymbol{W}_0)$ and $\boldsymbol{M}_0=\boldsymbol{G}_0$. Then
\[
    \E\big[\|\boldsymbol{C}_t-\boldsymbol{M}_t\|_\F\big]
    \le \sqrt{\frac{1-\beta}{1+\beta}}\frac{\nu}{\sqrt{b}}+\beta^t\frac{\nu}{\sqrt{b}}.
\]
\end{lemma}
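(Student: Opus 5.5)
Set $\boldsymbol{E}_t := \boldsymbol{C}_t - \boldsymbol{M}_t$ and unroll the two recursions. Both $\boldsymbol{C}_t$ and $\boldsymbol{M}_t$ satisfy the same linear recursion, so the difference has an explicit form. With the initializations $\boldsymbol{C}_0=\nabla f(\boldsymbol{W}_0)$ and $\boldsymbol{M}_0=\boldsymbol{G}_0$, and writing $\boldsymbol{\epsilon}_s := \nabla f(\boldsymbol{W}_s) - \boldsymbol{G}_s$, we get
\begin{equation*}
\boldsymbol{E}_t = \beta^t \boldsymbol{\epsilon}_0 + (1-\beta)\sum_{s=1}^{t} \beta^{t-s}\boldsymbol{\epsilon}_s .
\end{equation*}
By the triangle inequality, $\E\|\boldsymbol{E}_t\|_\F \le \beta^t\E\|\boldsymbol{\epsilon}_0\|_\F + \E\big\|(1-\beta)\sum_{s=1}^t\beta^{t-s}\boldsymbol{\epsilon}_s\big\|_\F$. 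Jensen's inequality and \Cref{ass:variance} bound the first term by $\beta^t\nu/\sqrt{b}$, which is exactly the transient term in the claim.

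For the second term, apply Jensen to move to second moments: $\E\|\cdot\|_\F \le (\E\|\cdot\|_\F^2)^{1/2}$. The key step is that the noise terms $\boldsymbol{\epsilon}_s$ form a martingale difference sequence with respect to the filtration $\mathcal{F}_s$ generated by the samples up to step $s-1$. Indeed, $\boldsymbol{W}_s$ is $\mathcal{F}_s$-measurable and $\E[\boldsymbol{\epsilon}_s\mid\mathcal{F}_s]=0$ by unbiasedness. Hence for $s<s'$ the cross terms vanish: $\E\langle\boldsymbol{\epsilon}_s,\boldsymbol{\epsilon}_{s'}\rangle = \E\langle\boldsymbol{\epsilon}_s,\E[\boldsymbol{\epsilon}_{s'}\mid\mathcal{F}_{s'}]\rangle=0$. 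This gives
\begin{equation*}
\E\Big\|(1-\beta)\sum_{s=1}^t\beta^{t-s}\boldsymbol{\epsilon}_s\Big\|_\F^2 = (1-\beta)^2\sum_{s=1}^t \beta^{2(t-s)}\E\|\boldsymbol{\epsilon}_s\|_\F^2 \le \frac{(1-\beta)^2}{1-\beta^2}\frac{\nu^2}{b}=\frac{1-\beta}{1+\beta}\frac{\nu^2}{b}.
\end{equation*}
Taking square roots yields $\sqrt{(1-\beta)/(1+\beta)}\,\nu/\sqrt{b}$, and adding the transient term completes the bound.

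The main point needing care is the martingale orthogonality. The iterate $\boldsymbol{W}_{s'}$ depends on all earlier noise through the NuMuon update, so the $\boldsymbol{\epsilon}_s$ are not independent. One must therefore state the conditional unbiasedness of \Cref{ass:variance} with respect to past randomness, and the conditional variance bound, rather than assume independence. Everything else is a geometric series computation. The specific top-$k$ update direction plays no role here, which is why the lemma transfers verbatim from \citet{shen2025convergence}.
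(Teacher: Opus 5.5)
Your argument is correct and is the standard one: unrolling $\boldsymbol{E}_t=\beta\boldsymbol{E}_{t-1}+(1-\beta)\boldsymbol{\epsilon}_t$, splitting off $\beta^t\boldsymbol{\epsilon}_0$ by the triangle inequality, and handling the rest with Jensen, martingale orthogonality and $\sum_{j\ge 0}\beta^{2j}=1/(1-\beta^2)$ is the usual proof of Lemma~A.3 in \citet{shen2025convergence}, which the paper only cites and does not reprove. You are right that the unbiasedness in \Cref{ass:variance} must hold conditionally on the past, since that is what makes the cross terms vanish; the variance bound can stay unconditional by the tower property, and this usefully tightens the paper's literally unconditional wording.
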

\end{lemmablockquote}

Now, we are ready to prove our main result.

\begin{theoremblockquote}
    \nonconvex*
\end{theoremblockquote}

\begin{proof}
Our proof follows the proof of Theorem~4.3 in~\citet{shen2025convergence} with modifying the fully orthogonalized update with the top-$k$ NuMuon updates. In particular, by $L$-smoothness of $f$ we have
\begin{align*}
    &\E[f(\boldsymbol{W}_t)-f(\boldsymbol{W}_{t+1})]\\
    &\qquad\ge
    \E\Big[\gamma\langle \nabla f(\boldsymbol{W}_t),\boldsymbol{U}_{t,k}\boldsymbol{V}_{t,k}^\top\rangle
    -\frac{L}{2}\gamma^2\|\boldsymbol{U}_{t,k}\boldsymbol{V}_{t,k}^\top\|_\F^2\Big]\\
    &\qquad=
    \E\Big[\gamma\langle \boldsymbol{M}_t,\boldsymbol{U}_{t,k}\boldsymbol{V}_{t,k}^\top\rangle
    -\frac{L}{2}\gamma^2\|\boldsymbol{U}_{t,k}\boldsymbol{V}_{t,k}^\top\|_\F^2 + \gamma\langle \nabla f(\boldsymbol{W}_t)-\boldsymbol{M}_t,\boldsymbol{U}_{t,k}\boldsymbol{V}_{t,k}^\top\rangle\Big]\\
    &\qquad\ge
    \E\Big[\gamma\langle \boldsymbol{M}_t,\boldsymbol{U}_{t,k}\boldsymbol{V}_{t,k}^\top\rangle
    -\frac{L}{2}\gamma^2\|\boldsymbol{U}_{t,k}\boldsymbol{V}_{t,k}^\top\|_\F^2 -\gamma\|\nabla f(\boldsymbol{W}_t)-\boldsymbol{M}_t\|_\F\|\boldsymbol{U}_{t,k}\boldsymbol{V}_{t,k}^\top\|_\F\Big].
\end{align*}
Using \Cref{lemma:topk_identities}, the above becomes
\begin{align*}
    &\E[f(\boldsymbol{W}_t)-f(\boldsymbol{W}_{t+1})]\\
    &\qquad\ge\E\Big[\gamma\|\boldsymbol{M}_t\|_{(k)}-\frac{L}{2}\gamma^2 k-\gamma\sqrt{k}\,\|\nabla f(\boldsymbol{W}_t)-\boldsymbol{M}_t\|_\F\Big].
\end{align*}
By the reverse triangle inequality for $\|\cdot\|_{(k)}$ and \Cref{eq:kyfan_frob},
\[
\|\boldsymbol{M}_t\|_{(k)}\ge \|\nabla f(\boldsymbol{W}_t)\|_{(k)}-\sqrt{k}\|\boldsymbol{M}_t-\nabla f(\boldsymbol{W}_t)\|_\F.
\]
Substituting yields
\begin{align}\label{eq:descent_topk}
    &\E[f(\boldsymbol{W}_t)-f(\boldsymbol{W}_{t+1})]\notag\\
    &\qquad\ge\E\Big[\gamma\|\nabla f(\boldsymbol{W}_t)\|_{(k)}-\frac{L}{2}\gamma^2 k -2\gamma\sqrt{k}\,\|\nabla f(\boldsymbol{W}_t)-\boldsymbol{M}_t\|_\F\Big].
    \end{align}

\textbf{Bounding $\E\Big[\|\nabla f(\boldsymbol{W}_t)-\boldsymbol{M}_t\|_\F\Big]$.}
Define $\boldsymbol{C}_0=\nabla f(\boldsymbol{W}_0)$ and for $t>0$ let
\[
    \boldsymbol{C}_t=\beta \boldsymbol{C}_{t-1}+(1-\beta)\nabla f(\boldsymbol{W}_t).
\]
Then, we can write
\[
    \E \Big[\|\nabla f(\boldsymbol{W}_t)-\boldsymbol{M}_t\|_\F \Big] \le \E \Big[\|\boldsymbol{C}_t-\boldsymbol{M}_t\|_\F\Big] + \E\Big[\|\nabla f(\boldsymbol{W}_t)-\boldsymbol{C}_t\|_\F\Big].
\]

By~\Cref{lemma:momentum_error} from~\citet{shen2025convergence}, for the first term we have
\begin{equation}\label{eq:ct_mt_bound_topk}
    \E\Big[\|\boldsymbol{C}_t-\boldsymbol{M}_t\|_\F]
    \le \sqrt{\frac{1-\beta}{1+\beta}}\frac{\nu}{\sqrt{b}}+\beta^t\frac{\nu}{\sqrt{b}}.
\end{equation}

For the second term at $t>0$, we can write
\begin{align*}
    &\E\big[\|\nabla f(\boldsymbol{W}_t)-\boldsymbol{C}_t\|_\F\big]\\
    &\qquad= \E\big[\|\nabla f(\boldsymbol{W}_t)-\big(\beta \boldsymbol{C}_{t-1}+(1-\beta)\nabla f(\boldsymbol{W}_t)\big)\|_\F\big]\\
    &\qquad= \E\big[\beta\|\nabla f(\boldsymbol{W}_t)-\boldsymbol{C}_{t-1}\|_\F\big]\\
    &\qquad\overset{(a)}{\le} \E\big[\beta\|\nabla f(\boldsymbol{W}_{t-1})-\boldsymbol{C}_{t-1}\|_\F+\beta\|\nabla f(\boldsymbol{W}_{t-1})-\nabla f(\boldsymbol{W}_t)\|_\F\big]\\
    &\qquad\overset{(b)}{\le} \E\big[\beta\|\nabla f(\boldsymbol{W}_{t-1})-\boldsymbol{C}_{t-1}\|_\F+\beta L\|\boldsymbol{W}_{t-1}-\boldsymbol{W}_t\|_\F\big]\\
    &\qquad= \E\big[\beta\|\nabla f(\boldsymbol{W}_{t-1})-\boldsymbol{C}_{t-1}\|_\F+\beta L\gamma\|\boldsymbol{U}_{t-1,k}\boldsymbol{V}_{t-1,k}^\top\|_\F\big]\\
    &\qquad= \E\big[\beta\|\nabla f(\boldsymbol{W}_{t-1})-\boldsymbol{C}_{t-1}\|_\F+\beta L\gamma\sqrt{k}\big]\\
    &\qquad\le \sum_{i=1}^t \beta^i L\gamma\sqrt{k}\le \frac{\sqrt{k}\beta L\gamma}{1-\beta}.
\end{align*}
where $(a)$ is by triangle inequality and $(b)$ is using~\Cref{ass:smoothness}. Combining with~\Cref{eq:ct_mt_bound_topk} gives
\begin{align}\label{eq:grad_momentum_gap_topk}
\E\big[\|\nabla f(\boldsymbol{W}_t)-\boldsymbol{M}_t\|_\F\big] \le
    \sqrt{\frac{1-\beta}{1+\beta}}\frac{\nu}{\sqrt{b}}
    +\beta^t\frac{\nu}{\sqrt{b}}
    +\frac{\sqrt{k}\beta L\gamma}{1-\beta}.
\end{align}

\textbf{Deriving the Ky Fan Stationarity.}
Plugging~\Cref{eq:grad_momentum_gap_topk} into \Cref{eq:descent_topk}:
\begin{align*}
    &\E[f(\boldsymbol{W}_t)-f(\boldsymbol{W}_{t+1})]\\
    &\qquad\ge
        \E\Big[\gamma\|\nabla f(\boldsymbol{W}_t)\|_{(k)}-\frac{L}{2}\gamma^2 k\Big]-2\gamma\sqrt{k}\Big(
        \sqrt{\frac{1-\beta}{1+\beta}}\frac{\nu}{\sqrt{b}}
        +\beta^t\frac{\nu}{\sqrt{b}}
        +\frac{\sqrt{k}\beta L\gamma}{1-\beta}
        \Big)\\
    &\qquad=
        \E\Big[\gamma\|\nabla f(\boldsymbol{W}_t)\|_{(k)}-\frac{L}{2}\gamma^2 k
        -2\gamma\sqrt{\frac{1-\beta}{1+\beta}}\frac{\nu\sqrt{k}}{\sqrt{b}}-2\gamma\beta^t\frac{\nu\sqrt{k}}{\sqrt{b}}
        -\frac{2k\gamma^2\beta L}{1-\beta}
        \Big].
\end{align*}
Summing over $t=0,1,\dots,T-1$ and dividing by $T\gamma$ yields:
\begin{align}\label{eq:ky_fan_stationarity}
    &\frac{1}{T}\sum_{t=0}^{T-1}\E\|\nabla f(\boldsymbol{W}_t)\|_{(k)} \le
        \frac{\E[f(\boldsymbol{W}_0)-f(\boldsymbol{W}_T)]}{T\gamma}
        +\frac{Lk\gamma}{2}
        +\frac{2\nu\sqrt{k(1-\beta)}}{\sqrt{(1+\beta)b}} +\frac{2\beta\nu\sqrt{k}}{(1-\beta)T\sqrt{b}}
        +\frac{2k\gamma\beta L}{1-\beta}.
\end{align}

\paragraph{Completing the Proof.} Now, we need to apply~\Cref{ass:tail_control} to the above bound. Recall that for any matrix $\boldsymbol{G}$, by the singular value decomposition we have:
\[
    \|\boldsymbol{G}\|_* = \sum_{i=1}^k \sigma_i(\boldsymbol{G})+\sum_{i>k}\sigma_i(\boldsymbol{G})=\|\boldsymbol{G}\|_{(k)}+\|\boldsymbol{G}-\boldsymbol{G}_k\|_*.
\]
Applying this to $\boldsymbol{G}=\nabla f(\boldsymbol{W}_t)$ gives
\[
    \|\nabla f(\boldsymbol{W}_t)\|_* = \|\nabla f(\boldsymbol{W}_t)\|_{(k)}+\|\nabla f(\boldsymbol{W}_t)-(\nabla f(\boldsymbol{W}_t))_k\|_*.
\]
Taking expectation from both sides, using~\Cref{ass:tail_control}, and averaging over $t$, we would have:
\begin{equation}\label{eq:cor_nuclear_main}
    \frac{1}{T}\sum_{t=0}^{T-1}\E\|\nabla f(\boldsymbol{W}_t)\|_*
    \le
    \frac{1}{T}\sum_{t=0}^{T-1}\E\|\nabla f(\boldsymbol{W}_t)\|_{(k)}+\delta_k.
\end{equation}
Putting \Cref{eq:cor_nuclear_main} in \Cref{eq:ky_fan_stationarity} completes the proof.
\end{proof}

\subsubsection{A Note on the Validity of Tail \Cref{ass:tail_control}}
\label{app:convergence:tail}

Recall that \Cref{ass:tail_control} requires a \textit{tail control} condition of the form
\begin{equation}\label{eq:tail_assumption_repeat}
    \E\|\nabla f(\boldsymbol{W}_t)-(\nabla f(\boldsymbol{W}_t))_k\|_* \le \delta_k \qquad\text{for all iterates }t,
\end{equation}
i.e., the nuclear-norm mass outside the top-$k$ singular directions of the gradient is uniformly small along the optimization trajectory.
A convenient way to interpret and validate~\Cref{eq:tail_assumption_repeat} is via the \textit{residual spectral energy} of the gradient after top-$k$ truncation.
Let $\boldsymbol{G} := \nabla f(\boldsymbol{W})$ and write its thin SVD as
\[
    \boldsymbol{G} = \boldsymbol{U} \boldsymbol{S} \boldsymbol{V}^\top, \qquad \boldsymbol{S}=\diag(\sigma_1,\dots,\sigma_q),\quad q=\min(d_{\rm out},d_{\rm in}),
\]
where $\sigma_1\ge \cdots \ge \sigma_q\ge 0$, and let $\boldsymbol{G}_k = \boldsymbol{U}_k \boldsymbol{S}_k \boldsymbol{V}_k^\top$ be the best rank-$k$ truncation (top-$k$ SVD reconstruction).
Define the (squared) Frobenius residual energy
\begin{equation}\label{eq:delta_k_F_def}
    \delta_k^{(\F)}(\boldsymbol{W})\;:=\;\|\boldsymbol{G}-\boldsymbol{G}_k\|_\F^2.
\end{equation}
Then
\begin{align}
    \delta_k^{(\F)}(\boldsymbol{W})
    &= \|\boldsymbol{U} \boldsymbol{S} \boldsymbol{V}^\top - \boldsymbol{U}_k \boldsymbol{S}_k \boldsymbol{V}_k^\top\|_\F^2 \nonumber\\
    &= \|\boldsymbol{U} \big(\boldsymbol{S} - \begin{bmatrix}\boldsymbol{S}_k\\\boldsymbol{0}\end{bmatrix}\big) \boldsymbol{V}^\top\|_\F^2 \nonumber\\
    &= \|\begin{bmatrix}\boldsymbol{S}_k\\\boldsymbol{S}_{\bar{k}}\end{bmatrix} - \begin{bmatrix}\boldsymbol{S}_k\\\boldsymbol{0}\end{bmatrix}\|_\F^2 \nonumber\\
    &= \|\boldsymbol{S}_{\bar{k}}\|_\F^2 \nonumber \\
    &= \sum_{i>k} \sigma_i^2.\label{eq:delta_k_F_tail}
\end{align}
Thus, $\delta_k^{(\F)}$ is exactly the residual spectral energy beyond rank $k$. An immediate consequence is that increasing $k$ can only reduce the residual energy.
\begin{lemmablockquote}
\begin{lemma}[Monotonicity of Residual Energy]\label{lem:residual_monotone}
Let $\delta_k^{(\F)}(\boldsymbol{W})$ be defined as in~\Cref{eq:delta_k_F_def}. Then $\delta_k^{(\F)}(\boldsymbol{W})$ is monotonically decreasing in $k$:
\[
    \delta_{k+1}^{(\F)}(\boldsymbol{W}) \le \delta_k^{(\F)}(\boldsymbol{W}) \qquad \text{for all } k \ge 1.
\]
\end{lemma}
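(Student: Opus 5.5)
The plan is to read the monotonicity off the closed-form expression for the residual energy already derived in \Cref{eq:delta_k_F_tail}. There we showed that, for the thin SVD $\boldsymbol{G}=\nabla f(\boldsymbol{W})=\boldsymbol{U}\boldsymbol{S}\boldsymbol{V}^\top$ with $\sigma_1\ge\cdots\ge\sigma_q\ge 0$,
\[
    \delta_k^{(\F)}(\boldsymbol{W}) = \sum_{i>k}\sigma_i^2 = \sum_{i=k+1}^{q}\sigma_i^2 .
\]
The first step is to note one convention. For $k\ge q$ the truncation $\boldsymbol{G}_k$ equals $\boldsymbol{G}$, so the sum is empty and $\delta_k^{(\F)}=0$. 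This makes the formula valid for every integer $k\ge 1$.

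The second step is a direct comparison of consecutive indices:
\[
    \delta_k^{(\F)}(\boldsymbol{W})-\delta_{k+1}^{(\F)}(\boldsymbol{W}) = \sum_{i=k+1}^{q}\sigma_i^2-\sum_{i=k+2}^{q}\sigma_i^2 = \sigma_{k+1}^2 \ge 0
\]
when $k+1\le q$. When $k\ge q$ both sides are zero, so the difference is $0$. Either way we get $\delta_{k+1}^{(\F)}\le\delta_k^{(\F)}$, which is the claim. The same argument shows the decrease is strict exactly when $\sigma_{k+1}>0$.

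The only point needing care is the well-definedness of $\boldsymbol{G}_k$ when singular values repeat, since the top-$k$ singular vectors are then not unique. I would handle this by observing that \Cref{eq:delta_k_F_tail} expresses $\delta_k^{(\F)}$ purely in terms of the singular values. Any valid choice of best rank-$k$ approximation therefore gives the same residual energy, and this can also be confirmed by Eckart--Young. I do not expect a genuine obstacle; the lemma is essentially a one-line corollary of \Cref{eq:delta_k_F_tail}.

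As a remark, I would also note that the analogous statement holds for the nuclear-norm tail $\sum_{i>k}\sigma_i$ appearing in \Cref{ass:tail_control}, with the same telescoping argument. This justifies monitoring $\delta_1^{(\F)}$ as an upper envelope for all $k$.
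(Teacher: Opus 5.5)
Your proof is correct and is the paper's own argument: both read monotonicity off the closed form $\delta_k^{(\F)}(\boldsymbol{W})=\sum_{i>k}\sigma_i^2$ and note that consecutive values differ by $\sigma_{k+1}^2\ge 0$; your handling of $k\ge q$ and of repeated singular values is harmless extra care. One caution on your closing remark: $\delta_1^{(\F)}$ itself does not bound the nuclear tails $\sum_{i>k}\sigma_i$, since a squared Frobenius tail can be much smaller than a nuclear tail when the $\sigma_i$ are below $1$, so using it to control them needs a conversion such as $\sum_{i>k}\sigma_i\le\sqrt{q-k}\,\bigl(\delta_k^{(\F)}(\boldsymbol{W})\bigr)^{1/2}$.
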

\end{lemmablockquote}

\begin{proof}
From \Cref{eq:delta_k_F_tail},
\begin{equation}\label{eq:delta_monotone}
    \delta_{k+1}^{(\F)}(\boldsymbol{W}) 
    = \sum_{i>k+1}\sigma_i^2
    = \sum_{i>k}\sigma_i^2 - \sigma_{k+1}^2
    = \delta_k^{(\F)}(\boldsymbol{W})-\sigma_{k+1}^2
    \;\le\; \delta_k^{(\F)}(\boldsymbol{W}),
\end{equation}
where the inequality follows from $\sigma_{k+1}^2 \ge 0$.
\end{proof}

By \Cref{lem:residual_monotone}, $\delta_1^{(\F)}(\boldsymbol{W}) \ge \delta_2^{(\F)}(\boldsymbol{W}) \ge \cdots \ge \delta_q^{(\F)}(\boldsymbol{W}) = 0$. Hence, as long as we show that $\delta_{1}^{(\F)}(\boldsymbol{W})$ is bounded and close to zero, the tail assumption of~\Cref{ass:tail_control} holds for all $k > 1$.

\figDeltaQwenLarge

\paragraph{Empirical validation.} To empirically validate this assumption, we measure $\delta_{1}^{(\F)}(\boldsymbol{W})$ via
\begin{equation}\label{eq:delta1F_sr}
    \delta_1^{(\F)}(\boldsymbol{W})=\sigma_1^2\big(\mathrm{sr}(\boldsymbol{G})-1\big),
\end{equation}
since the stable rank is defined as $\mathrm{sr}(\boldsymbol{G}):=\|\boldsymbol{G}\|_\F^2/\sigma_1^2\ge 1$. \Cref{fig:delta_1} displays this quantity across all transformer block weight matrices for the Qwen3-0.6B models. As shown, this residual typically decreases over the course of training for NuMuon, indicating that the gradient spectrum becomes increasingly concentrated in its leading directions. By \Cref{eq:tail_assumption_repeat}, this concentration implies a small nuclear tail for modest values of $k$, supporting the validity of the tail assumption in~\Cref{eq:tail_assumption_repeat}. Consequently, the Ky Fan $k$-norm stationarity guarantees of~\Cref{thm:nonconvex_topk_main} can be meaningfully converted into approximate nuclear norm stationarity bounds that we derive for NuMuon.

\subsection{Feasibility Attraction}

In~\Cref{sec:method:numuon}, we discussed how NuMuon modifies the spectral-norm LMO by incorporating an additional nuclear-norm constraint. While this constraint ensures that each update $\boldsymbol{\Delta W}_t$ is low-rank, it is not immediately clear whether the iterates $\boldsymbol{W}_t$ themselves converge toward the feasible set. The following lemma establishes that under FW/CG updates, the distance to NuMuon's feasible set contracts at each iteration, guaranteeing that the iterates are progressively attracted toward feasibility.

\begin{lemmablockquote}
\begin{lemma}[Feasibility Attraction under FW/CG Updates]\label{lem:feasibility_attraction}
Let $\mathcal{W}_*$ be the NuMuon feasible set in \Cref{eq:numuon_domain}, and assume $\mathcal{W}_*$ is nonempty, closed, and convex.
Consider a FW/CG-style update of the form
\begin{equation}\label{eq:fw_update_generic}
    \boldsymbol{W}_{t+1} \;=\; (1-\gamma_t)\,\boldsymbol{W}_t \;+\; \gamma_t\,\boldsymbol{\Delta W}_t,
    \qquad \boldsymbol{\Delta W}_t \in \mathcal{W}_*,\quad \gamma_t\in(0,1].
\end{equation}
Then the distance to $\mathcal{W}_*$ contracts:
\begin{equation}\label{eq:dist_contract}
    \mathrm{dist}\!\left(\boldsymbol{W}_{t+1},\mathcal{W}_*\right)
    \;\le\;
    (1-\gamma_t)\,\mathrm{dist}\!\left(\boldsymbol{W}_{t},\mathcal{W}_*\right),
\end{equation}
where $\mathrm{dist}(\boldsymbol{X},\mathcal{W}_*):=\inf_{\boldsymbol{Y}\in\mathcal{W}_*}\|\boldsymbol{X}-\boldsymbol{Y}\|_{\F}$.
Consequently,
\begin{equation}\label{eq:dist_product}
    \mathrm{dist}\!\left(\boldsymbol{W}_{T},\mathcal{W}_*\right)
    \;\le\;
    \Bigl(\prod_{t=0}^{T-1}(1-\gamma_t)\Bigr)\,
    \mathrm{dist}\!\left(\boldsymbol{W}_{0},\mathcal{W}_*\right),
\end{equation}
and in particular $\mathrm{dist}(\boldsymbol{W}_{T},\mathcal{W}_*)\to 0$ whenever $\sum_{t=0}^{\infty}\gamma_t=\infty$
(e.g., for constant $\gamma_t=\gamma\in(0,1]$, the decay is geometric: $\mathrm{dist}(\boldsymbol{W}_T,\mathcal{W}_*)\le (1-\gamma)^T\,\mathrm{dist}(\boldsymbol{W}_0,\mathcal{W}_*)$).
\end{lemma}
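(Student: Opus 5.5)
The plan is to compare $\boldsymbol{W}_{t+1}$ with a carefully chosen point of $\mathcal{W}_*$ rather than with its own projection. By \Cref{eq:numuon_domain}, $\mathcal{W}_*$ is an intersection of a spectral-norm ball and a nuclear-norm ball, so it is nonempty (it contains $\boldsymbol{0}$), closed, and convex. Hence the Euclidean (Frobenius) projection $\Pi(\boldsymbol{W}_t)\in\mathcal{W}_*$ exists and attains the distance, $\|\boldsymbol{W}_t-\Pi(\boldsymbol{W}_t)\|_\F=\mathrm{dist}(\boldsymbol{W}_t,\mathcal{W}_*)$. Existence is in fact all we need; uniqueness is not used, and any minimizer, or an $\varepsilon$-approximate one, would do.

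We then form the candidate $\boldsymbol{Y}_{t+1}:=(1-\gamma_t)\,\Pi(\boldsymbol{W}_t)+\gamma_t\,\boldsymbol{\Delta W}_t$. Since $\Pi(\boldsymbol{W}_t)\in\mathcal{W}_*$, $\boldsymbol{\Delta W}_t\in\mathcal{W}_*$ by assumption, and $\gamma_t\in(0,1]$, convexity gives $\boldsymbol{Y}_{t+1}\in\mathcal{W}_*$. Subtracting it from \Cref{eq:fw_update_generic}, the $\boldsymbol{\Delta W}_t$ terms cancel exactly:
\[
\boldsymbol{W}_{t+1}-\boldsymbol{Y}_{t+1}=(1-\gamma_t)\bigl(\boldsymbol{W}_t-\Pi(\boldsymbol{W}_t)\bigr).
\]
This cancellation is the crux of the argument. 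From it,
\[
\mathrm{dist}(\boldsymbol{W}_{t+1},\mathcal{W}_*)\le\|\boldsymbol{W}_{t+1}-\boldsymbol{Y}_{t+1}\|_\F=(1-\gamma_t)\,\mathrm{dist}(\boldsymbol{W}_t,\mathcal{W}_*),
\]
using $1-\gamma_t\ge 0$. This is \Cref{eq:dist_contract}.

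Unrolling this inequality by induction over $t=0,\dots,T-1$ yields \Cref{eq:dist_product}. For the limit, we use $1-x\le e^{-x}$ to get $\prod_t(1-\gamma_t)\le\exp(-\sum_t\gamma_t)\to 0$ whenever $\sum_t\gamma_t=\infty$. The constant-step case is immediate: the product equals $(1-\gamma)^T$.

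No step is a serious obstacle. The only points needing care are that the infimum defining $\mathrm{dist}$ is attained, which follows from closedness and finite dimension, and that the conclusion relies on the FW/CG convex-combination form with $\gamma_t\le 1$. It would fail for the additive update \Cref{eq:lmo_unconstrained}, and we would flag this explicitly in the write-up.
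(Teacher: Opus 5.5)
Your proposal is correct and follows the paper's proof essentially verbatim: project $\boldsymbol{W}_t$ onto $\mathcal{W}_*$, note by convexity that $(1-\gamma_t)\boldsymbol{P}_t+\gamma_t\boldsymbol{\Delta W}_t\in\mathcal{W}_*$, use the cancellation of the $\boldsymbol{\Delta W}_t$ terms to get the contraction, and then iterate. Your use of $1-x\le e^{-x}$ to show $\prod_t(1-\gamma_t)\to 0$ when $\sum_t\gamma_t=\infty$ fills in a step the paper only asserts.
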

\end{lemmablockquote}

\begin{proof}
Indeed, $\mathcal{W}_*=\{\Delta W:\ \|\Delta W\|_2\le \rho\}\cap\{\Delta W:\ \|\Delta W\|_*\le \tau\}$ is an intersection of two closed convex sets, and is nonempty since $\boldsymbol{0} \in \mathcal{W}_*$ so the conditions of the lemma holds. Now let $\boldsymbol{P}_t\in\argmin_{\boldsymbol{Y}\in\mathcal{W}_*}\|\boldsymbol{W}_t-\boldsymbol{Y}\|_{\F}$ be the Frobenius projection of $\boldsymbol{W}_t$ onto $\mathcal{W}_*$.
Since $\mathcal{W}_*$ is convex and $\boldsymbol{\Delta W}_t\in\mathcal{W}_*$, the convex combination
\[
    \boldsymbol{Z}_t := (1-\gamma_t)\boldsymbol{P}_t + \gamma_t \boldsymbol{\Delta W}_t
\]
also lies in $\mathcal{W}_*$.
Therefore, by the definition of distance to a set,
\[
    \mathrm{dist}(\boldsymbol{W}_{t+1},\mathcal{W}_*)
    \le \|\boldsymbol{W}_{t+1}-\boldsymbol{Z}_t\|_{\F}
    = \|(1-\gamma_t)(\boldsymbol{W}_t-\boldsymbol{P}_t)\|_{\F}
    = (1-\gamma_t)\,\mathrm{dist}(\boldsymbol{W}_{t},\mathcal{W}_*),
\]
which proves \Cref{eq:dist_contract}. Iterating this inequality yields \Cref{eq:dist_product}. Finally, if $\sum_{t=0}^\infty \gamma_t=\infty$, then $\prod_{t=0}^{T-1}(1-\gamma_t)\to 0$, implying $\mathrm{dist}(\boldsymbol{W}_{T},\mathcal{W}_*)\to 0$.
\end{proof}

\section{Algorithms}
\label{app:algs}

We summarize the differences between Muon and NuMuon in~\Cref{alg:muon_numuon_v2}. Overall, the two methods are similar; however, Muon relies on a Newton--Schulz iteration (see~\Cref{alg:newton_schulz}) to compute an orthonormalized update, whereas NuMuon uses Block Krylov methods to approximate the top-$k$ singular subspace and construct a low-rank update. NuMuon additionally employs a rank scheduler, as discussed in~\Cref{sec:method:practical}. These changes differentiate NuMuon from Muon and allow it to obtain a final low-rank solution which would greatly benefit downstream weight compression.

\paragraph{RMS-to-RMS Scaling in Practice.}
In practical large-scale implementations, it is common to apply a shape-dependent rescaling to the orthogonalized LMO direction so that the \textit{per-entry RMS} of the update is comparable across matrices with different aspect ratios and can be aligned to typical AdamW update magnitudes~\citep{bernstein2025modular,pethick2025training,moonlight2025}. Concretely, one often uses
\begin{align}\label{eq:rms_scale}
    \mathrm{lmo}_{\mathcal{W}}(\boldsymbol{M}) &= -\rho\, s(d_{\rm out}, d_{\rm in})\, \boldsymbol{U}\boldsymbol{V}^\top,
\end{align}
where $s(d_{\rm out}, d_{\rm in}) = \sqrt{{d_{\rm out}}/{d_{\rm in}}}$ (or closely related variants), and absorbs $\rho\,s(\cdot)$ into the effective stepsize.

\paragraph{NuMuon's Rank Schedulers.} NuMuon uses rank schedulers to determine the rank fraction $k_t$ to compute its truncated updates. We use three family of schedulers in our experiments, which are:
\begin{itemize}
    \item \textbf{Fixed:} $r(t)=r_0$ (constant rank fraction);
    \item \textbf{Piecewise:} $r(t)=r_i$ for $t\in[t_i,t_{i+1})$;
    \item \textbf{Cosine decay:}
\end{itemize}
\begin{small}
$$
r(t)=
\begin{cases}
    r_{\mathrm{start}}, \qquad\qquad\qquad\qquad\qquad\qquad\qquad 0\le t<T_h,\\[4pt]
    r_{\mathrm{end}} + \bigl(r_{\mathrm{start}} - r_{\mathrm{end}}\bigr)\Bigl(1+\cos\,\!\bigl(\pi\,\tfrac{t-T_h}{T_d}\bigr)\Bigr)/2, \quad \text{o.w.}
\end{cases}
$$
\end{small}
where $T$ denotes the total number of steps, $T_h$ the warm-start period, and $T_d=\max\bigl(1,\,T-T_h\bigr)$ (see~\Cref{fig:rank_schedulers}). 

\begin{algorithm}[t]
\caption{\textcolor{red}{Muon} / \textcolor{ForestGreen}{NuMuon} Optimizer}\label{alg:muon_numuon_v2}
\begin{algorithmic}[1]
\STATE \textbf{Input:} Initial parameters $\boldsymbol{W}_0$, learning rate $\gamma$, momentum $\beta$\textcolor{ForestGreen}{, rank $k$}.
\STATE \textbf{Initialize:} Momentum buffer $\boldsymbol{M}_0 = \boldsymbol{0}$.
\FOR{$t=0, 1, 2, \dots$}
    \STATE Compute gradient $\boldsymbol{G}_t = \nabla_W f(\boldsymbol{W}_t)$.
    \STATE Update momentum $\boldsymbol{M}_t = \beta \boldsymbol{M}_{t-1} + (1-\beta)\boldsymbol{G}_t$.
    \IF{\textcolor{red}{Muon}}
        \STATE Compute SVD: $\boldsymbol{M}_t = \boldsymbol{U} \boldsymbol{S} \boldsymbol{V}^\top$.
        \STATE \textcolor{red}{$\bar{\boldsymbol{M}}_t = \boldsymbol{U}\boldsymbol{V}^\top$} \INLINECOMMENT{\textcolor{red}{\textit{full rank (approx. via Newton-Schulz)}}}
    \ELSIF{\textcolor{ForestGreen}{NuMuon}}
        \STATE Compute SVD: $\boldsymbol{M}_t = \boldsymbol{U} \boldsymbol{S} \boldsymbol{V}^\top$.
        \STATE \textcolor{ForestGreen}{$\bar{\boldsymbol{M}}_t = \boldsymbol{U}_{:,1:k}\boldsymbol{V}_{:,1:k}^\top$} \INLINECOMMENT{\textit{\textcolor{ForestGreen}{top-$k$ (approx via Block Krylov SVD)}}}
    \ENDIF
    \STATE Update weights $\boldsymbol{W}_{t+1} = \boldsymbol{W}_t - \gamma \bar{\boldsymbol{M}}_t$.
\ENDFOR
\end{algorithmic}
\end{algorithm}

\begin{algorithm}[t]
\caption{Newton-Schulz Iterative Algorithm for Matrix Orthogonalization}\label{alg:newton_schulz}
\begin{algorithmic}[1]
    \STATE \textbf{Input:} Matrix $\boldsymbol{A} \in \mathbb{R}^{n \times m}$, iterations $K$, hyperparameters $a, b, c \in \mathbb{R}$.
    \STATE \textbf{Initialize:} $\boldsymbol{A}^{(0)} = \boldsymbol{A} / \|\boldsymbol{A}\|_F$.
    \FOR{$k=0, 1, \dots, K-1$}
        \STATE $\boldsymbol{A}^{(k+1)} = a\boldsymbol{A}^{(k)} + b(\boldsymbol{A}^{(k)}{\boldsymbol{A}^{(k)}}^\top)\boldsymbol{A}^{(k)} + c(\boldsymbol{A}^{(k)}{\boldsymbol{A}^{(k)}}^\top)^2 \boldsymbol{A}^{(k)}$.
    \ENDFOR
    \STATE \textbf{Return:} $\boldsymbol{A}^{(K)}$.
\end{algorithmic}
\end{algorithm}

\begin{algorithm}[t]
\caption{Randomized Block Krylov Method for Approximate Top-$k$ SVD~\citep{musco2015bksvd}}
\label{alg:block_krylov}
\begin{algorithmic}[1]
\STATE \textbf{Input:} Matrix $\boldsymbol{A}\in\mathbb{R}^{m\times n}$; target rank $k$; block size $b\ge k$; Krylov iters $L$; (optional) warm-start block $\boldsymbol{B}_0\in\mathbb{R}^{n\times b}$
\STATE \textbf{Initialize:}
\IF{$\boldsymbol{B}_0$ not provided}
    \STATE Sample $\boldsymbol{B}_0 \sim \mathcal{N}(0,1)^{n\times b}$
\ENDIF
\STATE $\boldsymbol{B}_0 \gets \mathrm{qr}(\boldsymbol{B}_0)$

\STATE $\boldsymbol{K} \gets [\ ]$ \INLINECOMMENT{Build orthonormal basis for Krylov subspace}
\FOR{$i=1$ to $L$}
    \STATE $\boldsymbol{T}_i \gets \boldsymbol{A}\boldsymbol{B}_{i-1}$
    \STATE $\boldsymbol{B}_i \gets \boldsymbol{A}^\top \boldsymbol{T}_i$
    \STATE $\boldsymbol{B}_i \gets \mathrm{qr}(\boldsymbol{B}_i)$
    \STATE $\boldsymbol{K} \gets [\boldsymbol{K}\ \ \boldsymbol{B}_i]$
\ENDFOR
\STATE $\boldsymbol{Q} \gets \mathrm{qr}(\boldsymbol{K})$

\STATE $\boldsymbol{T} \gets \boldsymbol{A}\boldsymbol{Q}$ \INLINECOMMENT{Project and compute small SVD}
\STATE Compute thin SVD: $\boldsymbol{T}=\boldsymbol{U}_T \boldsymbol{S}_T \boldsymbol{V}_T^\top$

\STATE $\boldsymbol{U}_k \gets \boldsymbol{U}_T[:,1\!:\!k]$ \INLINECOMMENT{Extract top-$k$ components and lift back}
\STATE $\boldsymbol{S}_k \gets \boldsymbol{S}_T[1\!:\!k,1\!:\!k]$
\STATE $\boldsymbol{V}_k \gets \boldsymbol{Q}\,\boldsymbol{V}_T[:,1\!:\!k]$

\STATE \textbf{Return:} $(\boldsymbol{U}_k,\boldsymbol{S}_k,\boldsymbol{V}_k)$
\end{algorithmic}
\end{algorithm}

\section{Extended Experimental Results}
\label{app:extended_results}
In this section, we present the full set of experimental results omitted from the main paper due to space constraints.

\subsection{Experiment Settings and Hyperparameters}
\label{app:extended_results:details}

\paragraph{Pretraining.} We pretrain three LLMs of different sizes on FineWeb-EDU~\citep{penedo2024fineweb}: Qwen3-0.6B~\citep{qwen3}, Olmo2-1.4B~\citep{olmo2}, and Llama3-1.8B~\citep{dubey2024llama3}. We use the \texttt{torchtitan} library for pretraining on a cluster with 8 \texttt{NVIDIA A100-SXM4-40GB} GPUs,\footnote{Note that for benchmarking the generation throughput in~\Cref{fig:svdllm_efficiency}, we use a single \texttt{NVIDIA A100-SXM4-40GB} GPU.} and train each model with AdamW~\citep{loshchilov2019adamw}, Muon~\citep{jordan2024muon}, and our approach, NuMuon. All models start with a linear warmup. For AdamW, we use a cosine learning rate decay while for Muon and NuMuon we use WSD~\citep{hu2024minicpm}, following the benchmarking study of~\citet{semenov2025benchmarking} which recommends these schedules for best performance. Please see~\Cref{fig:lr_scheduler} for the details of the rank scheduler. For Muon and NuMuon, we also apply weight decay of 0.1. Finally, for NuMuon we typically use a cosine rank scheduler with a 10\% constant period at the beginning of training. We set this scheduler to reach the final rank before the WSD cooldown stage. We visualize the rank schedule for Qwen3-0.6B in~\Cref{fig:rank_schedulers} and provide full pretraining hyperparameters in~\Cref{tab:model_config,tab:optimizer_config}. 

\paragraph{LLM Compression.}
After pretraining, we compress the resulting checkpoints using three SVD-based compression methods: ASVD~\citep{yuan2023asvd}, SVD-LLM~\citep{wang2025svdllm}, and Dobi-SVD~\citep{wang2025dobisvd}. We use the official default settings provided in each repository\footnote{ASVD: \texttt{https://github.com/hahnyuan/ASVD4LLM}.\\SVD-LLM: \texttt{https://github.com/AIoT-MLSys-Lab/SVD-LLM}.\\Dobi-SVD: \texttt{https://github.com/wangqinsi1/Dobi-SVD}.} and evaluate compressed models on WikiText2 validation perplexity and standard downstream benchmarks (ARC-Easy/Challenge, HellaSwag, LAMBADA (OpenAI), OpenbookQA, PIQA, and Winogrande). For SVD-LLM, we evaluate both the whitening and whitening + LoRA variants; in cases where the LoRA stage was unstable, we report the whitening results for both settings for completeness and note this in the corresponding tables.

\tabModelConfig
\tabOptimizerConfig

\figRankFraction

\subsection{Extended Results}
\label{app:extended_results:results}

\subsubsection{Training Convergence}

\paragraph{Loss Curves.} We report training loss trajectories for AdamW, Muon, and NuMuon on all models in~\Cref{fig:loss_curves}, illustrating that NuMuon largely tracks Muon with a small late-stage deviation.

\paragraph{Stable-rank Dynamics.} We also compare stable-rank evolution under each optimizer for the Qwen3-0.6B model in~\Cref{fig:stable_rank_comparison_qwen3_all}, highlighting how rank controled updates induce lower effective rank during training. Furthermore, we report the stable rank of each weight matrix in the transformer blocks against the layer depth in~\Cref{fig:stable_rank_vs_layer_comparison_qwen3_all,fig:stable_rank_vs_layer_comparison_olmo2_all,fig:stable_rank_vs_layer_comparison_llama3_all}. As shown, NuMuon produces weight matrices with consistently lower rank than Muon, making it compression friendly.

\paragraph{Subspace Alignment.} We report the Grassmann distance between the top-$k$ right-singular subspaces of $\mathbf{W}$ and $\mathbf{\Delta W}$ (here $k=64$) in~\Cref{fig:grassman_distance}, demonstrating improved update--weight subspace alignment for NuMuon. For more information, please see~\Cref{sec:experimental_results}.

\figLRSchedulesLarge

\figAdamwVsMuonStableRankLarge

\figStableRankvsLayerQwenLarge
\figStableRankvsLayerOlmoLarge
\figStableRankvsLayerLlamaLarge

\figStableRankWellKnown

\figGrassmannQwenLarge

\subsubsection{LLM Compression}
We summarize WikiText2 validation perplexity versus compression rate across all methods and models in~\Cref{fig:compression_ppl_comparison}. We see that NuMuon-trained weights degrade more gracefully under aggressive compression rates, illustrating that their lower rank weights are more compressible than Muon. We also report the validation perplexity and detailed downstream benchmark results in~\Cref{tab:qwen3_asvd,tab:qwen3_whiten,tab:qwen3_svdllm,tab:qwen3_dobisvd,tab:olmo2_asvd,tab:olmo2_whiten,tab:olmo2_svdllm,tab:olmo2_dobisvd,tab:llama3_asvd,tab:llama3_whiten,tab:llama3_svdllm,tab:llama3_dobisvd}.

\tabQwenASVDFull
\tabOlmoASVDFull
\tabLlamaASVDFull

\tabQwenWhitenFull
\tabOlmoWhitenFull
\tabLlamaWhitenFull

\tabQwenSVDLLMFull
\tabOlmoSVDLLMFull
\tabLlamaSVDLLMFull

\tabQwenDobiSVDFull
\tabOlmoDobiSVDFull
\tabLlamaDobiSVDFull

\figPPLCompAllMethods

\figSVDLLMEfficiencyGains

\subsection{Extended Ablation Studies}
\label{app:extended_results:ablation}

\paragraph{Rank-scheduler Ablations.}
\Cref{tab:ablation} reports the effect of rank scheduler choice (cosine, piecewise, and fixed) on training efficiency and robustness to aggressive compression (80\%) using Dobi-SVD. Consistent with our discussion in the main paper, fixed-rank schedules are typically fastest per step since they avoid an early high-rank Block Krylov SVD regime~(see~\Cref{fig:rank_schedulers}), but can trade off base performance depending on how restrictive the rank budget is. In contrast, cosine and piecewise schedules benefit from a higher-rank phase early in training and then anneal to a lower-rank regime, which tends to yield better base performance while still improving compressibility relative to full-rank Muon. Across schedulers, NuMuon consistently mitigates the severe degradation observed for Muon at 80\% compression, indicating that explicit rank control induces weight structure that is easier to approximate with SVD-based compressors.

\tabAblationRankSched

\paragraph{Layerwise Stable-rank under Rank-budget Ablations.}
\Cref{fig:stable_rank_vs_layer_comparison_qwen3_budget_ablation} visualizes how changing the rank budget affects the layerwise stable rank of the final model. As expected, tighter budgets produce uniformly lower stable rank across layers and weight matrices, while relaxing the budget increases stable rank and approaches full-rank behavior.

\figStableRankvsLayerAblationQwenLarge

\paragraph{Layerwise Stable-rank under Scheduler Ablations.}
\Cref{fig:stable_rank_vs_layer_comparison_qwen3_scheduler_ablation} compares scheduler-induced differences in layerwise stable rank. Cosine and piecewise schedules typically yield lower stable rank than full-rank Muon while maintaining stronger base performance than the fixed-rank settings, reflecting the benefit of a high-rank warm-start followed by annealing. Taking these results and the loss curves in~\Cref{fig:loss_vs_rank_sched} together, these ablations support the central trade-off in NuMuon: selecting a rank budget and schedule that is low enough to improve compressibility, yet not so restrictive that it impairs optimization.

\figStableRankvsLayerAblationSchedulerQwenLarge

\end{document}

%% file: figures.tex
\graphicspath{{./figures/}}

\newcommand{\figAdamwVsMuonStableRank}{
\begin{figure}[t!]
    \centering
    \begin{subfigure}[b]{0.33\textwidth}
        \centering
        \includegraphics[width=\textwidth]{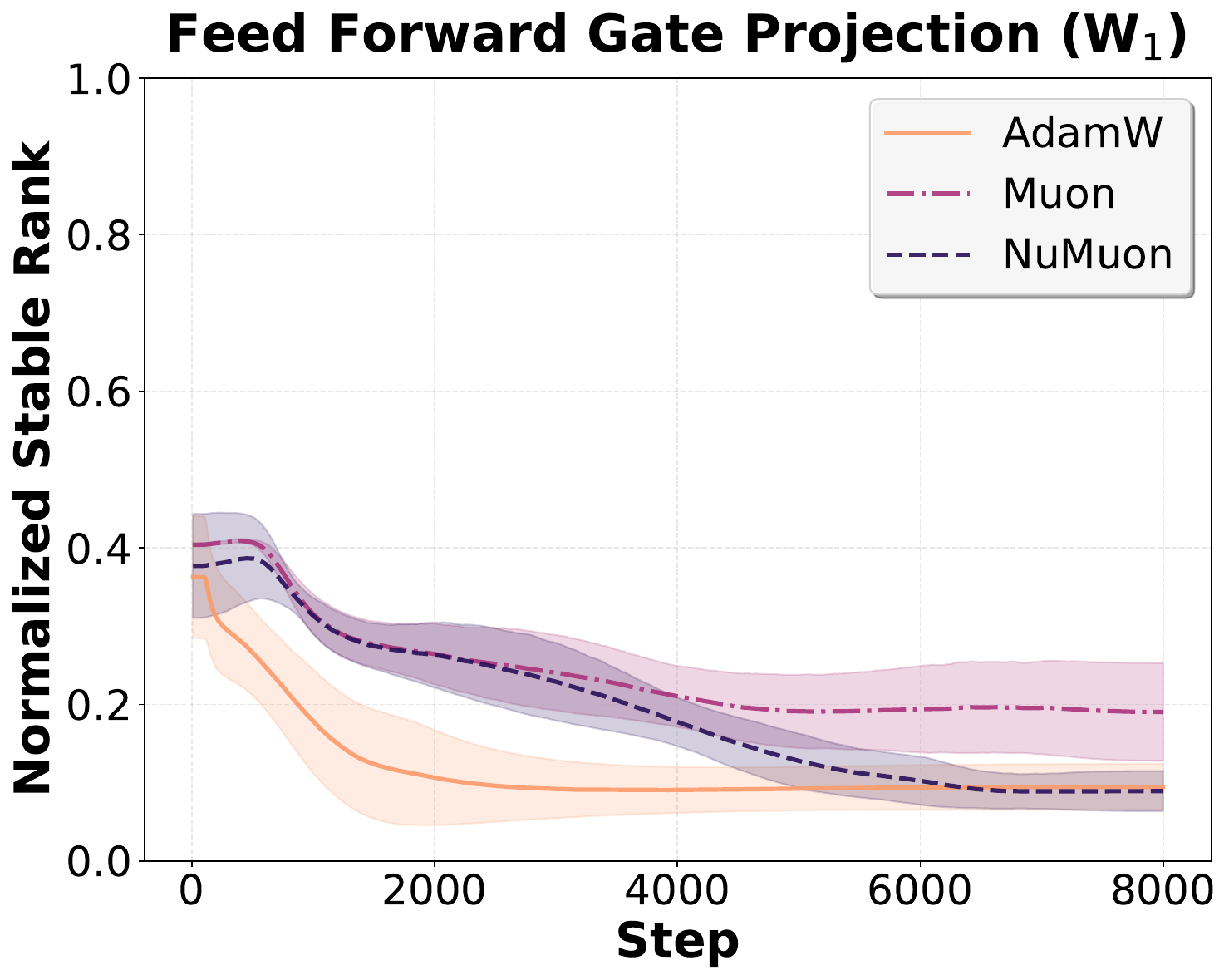}
        \caption{FFN Gate Projection}
        \label{fig:stable_rank_comparison_qwen3:stable_rank_w1}
    \end{subfigure}
    \hspace{-0.5em}
    \begin{subfigure}[b]{0.33\textwidth}
        \centering
        \includegraphics[width=\textwidth]{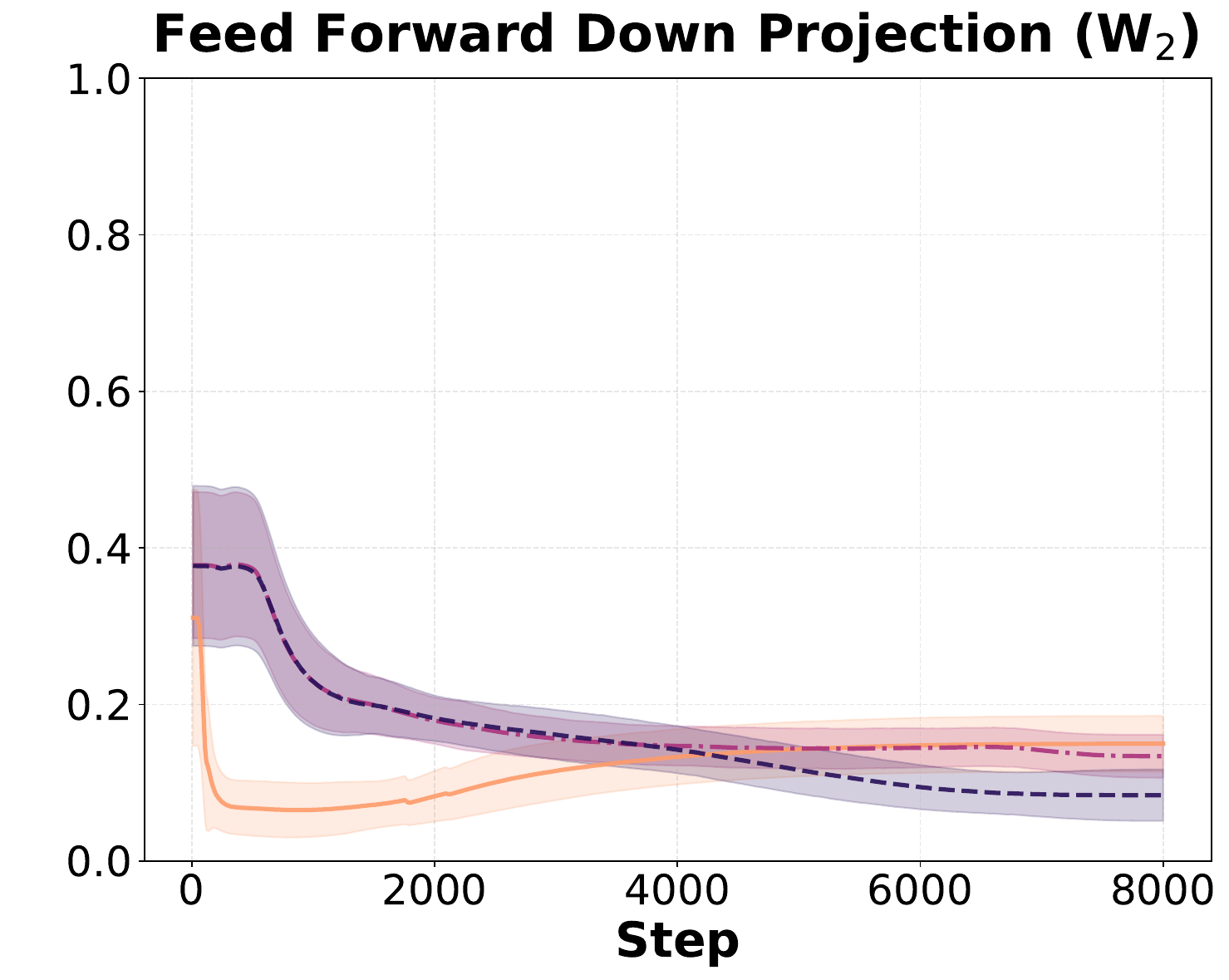}
        \caption{FFN Down Projection}
        \label{fig:stable_rank_comparison_qwen3:stable_rank_w2}
    \end{subfigure}
    \hspace{-0.5em}
    \begin{subfigure}[b]{0.33\textwidth}
        \centering
        \includegraphics[width=\textwidth]{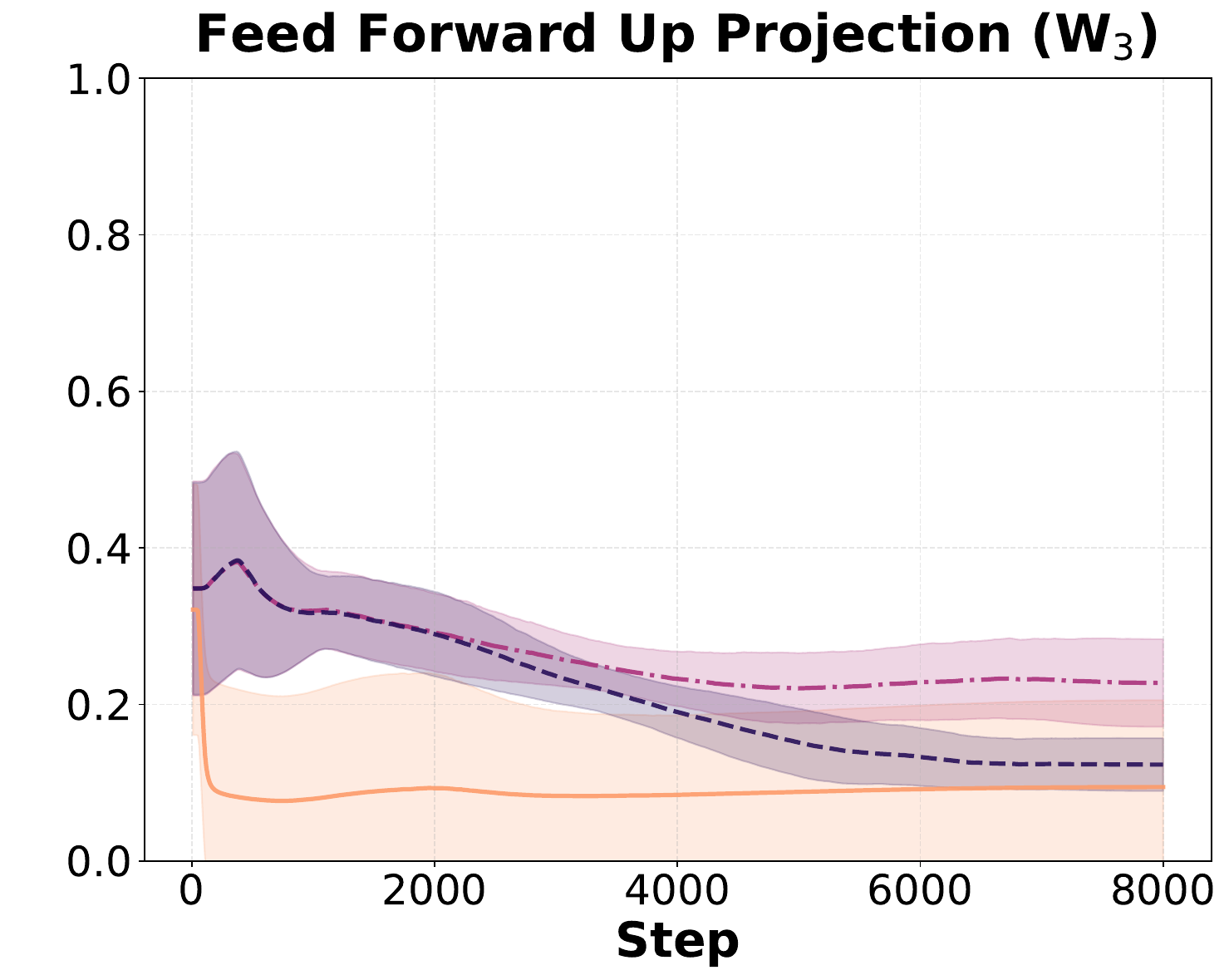}
        \caption{FFN Up Projection}
        \label{fig:stable_rank_comparison_qwen3:stable_rank_w3}
    \end{subfigure}
    \caption{Normalized stable rank evolution for Qwen3-0.6B across training steps for feedforward projection matrices. Each subplot shows the mean stable rank (normalized by the maximum rank), with shaded regions indicating standard deviation across all layers. All other weight matrices exhibit a similar low-rank behavior throughout training~(see~\Cref{fig:stable_rank_comparison_qwen3_all} in the Appendix).}
    \label{fig:stable_rank_comparison_qwen3}
    
\end{figure}
}

\newcommand{\figAdamwVsMuonStableRankLarge}{
\begin{figure*}[t!]
    \centering
    \begin{subfigure}[b]{0.32\textwidth}
        \centering
        \includegraphics[width=\textwidth]{alignment_metrics/adamw_vs_muon_vs_rmuon_qwen3_0.6B_small/stable_rank_W/feed_forward_w1.pdf}
        \caption{FFN Gate Projection}
        \label{fig:stable_rank_comparison_qwen3_all:stable_rank_w1}
    \end{subfigure}
    \hspace{-0.5em}
    \begin{subfigure}[b]{0.32\textwidth}
        \centering
        \includegraphics[width=\textwidth]{alignment_metrics/adamw_vs_muon_vs_rmuon_qwen3_0.6B_small/stable_rank_W/feed_forward_w2.pdf}
        \caption{FFN Down Projection}
        \label{fig:stable_rank_comparison_qwen3_all:stable_rank_w2}
    \end{subfigure}
    \hspace{-0.5em}
    \begin{subfigure}[b]{0.32\textwidth}
        \centering
        \includegraphics[width=\textwidth]{alignment_metrics/adamw_vs_muon_vs_rmuon_qwen3_0.6B_small/stable_rank_W/feed_forward_w3.pdf}
        \caption{FFN Up Projection}
        \label{fig:stable_rank_comparison_qwen3_all:stable_rank_w3}
    \end{subfigure} 
    \\\vspace{1em}
    \begin{subfigure}[b]{0.32\textwidth}
        \centering
        \includegraphics[width=\textwidth]{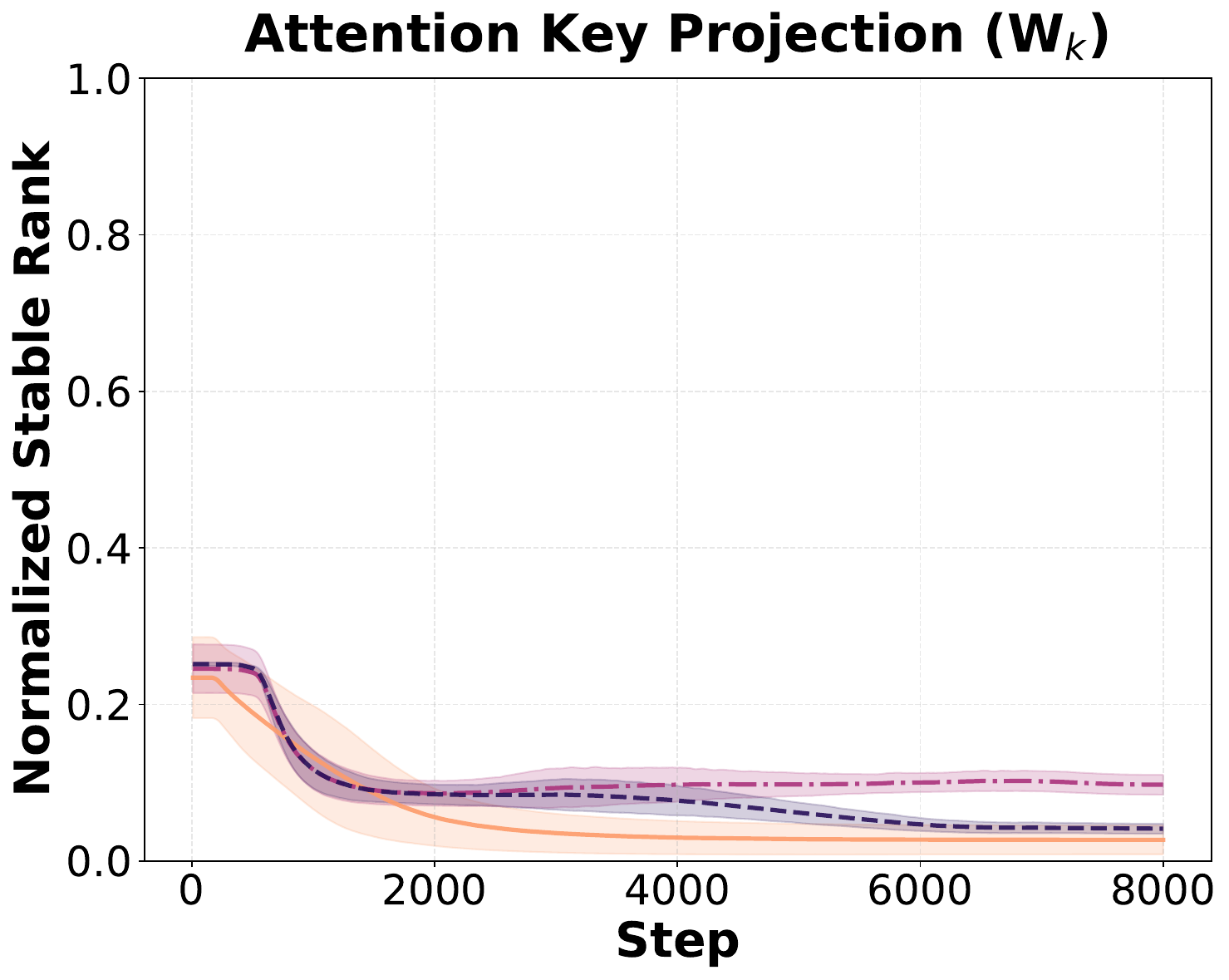}
        \caption{Attn Key Projection}
        \label{fig:stable_rank_comparison_qwen3_all:stable_rank_wk}
    \end{subfigure}
    \hspace{-0.5em}
    \begin{subfigure}[b]{0.32\textwidth}
        \centering
        \includegraphics[width=\textwidth]{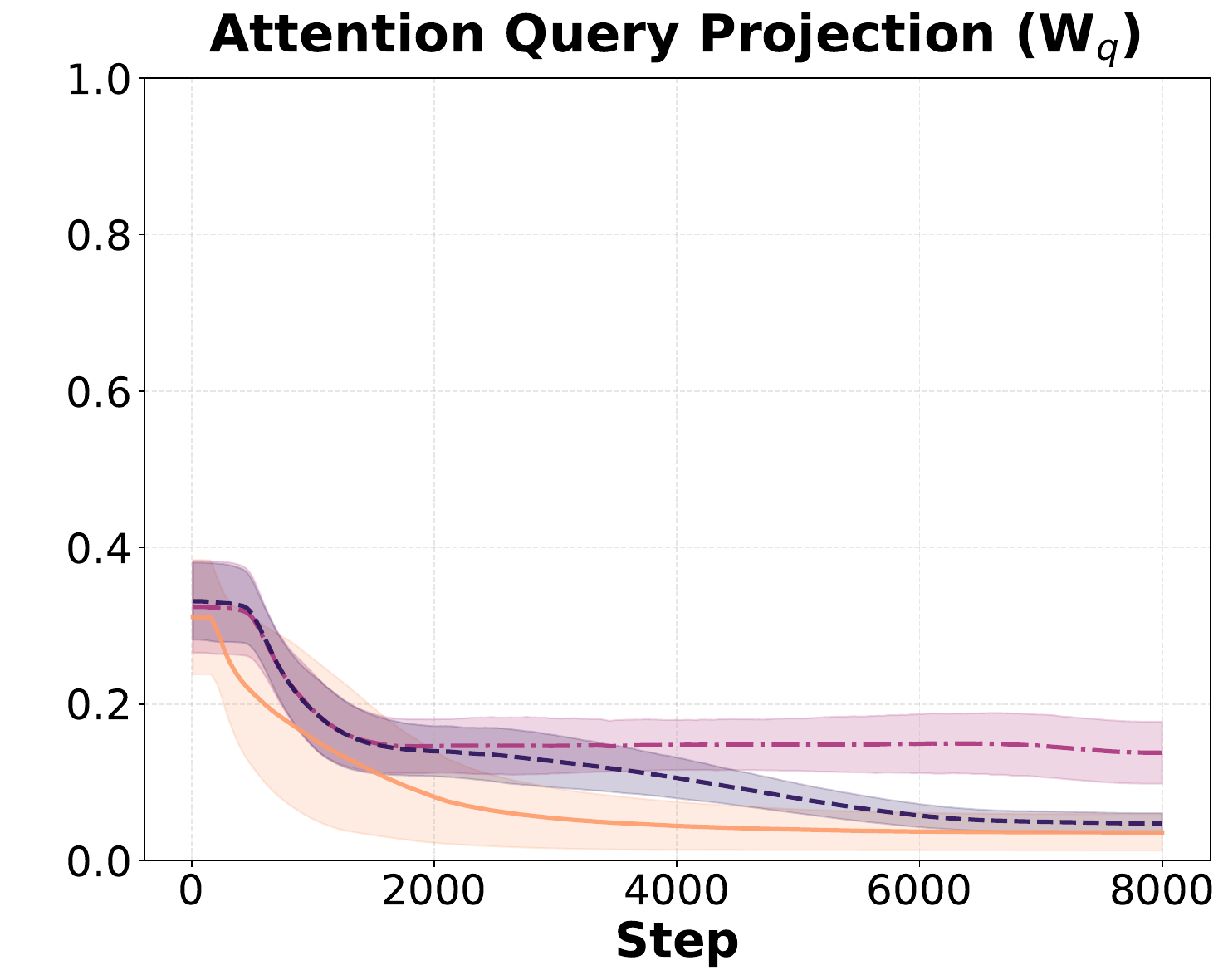}
        \caption{Attn Query Projection}
        \label{fig:stable_rank_comparison_qwen3_all:stable_rank_wq}
    \end{subfigure}
    \hspace{-0.5em}
    \begin{subfigure}[b]{0.32\textwidth}
        \centering
        \includegraphics[width=\textwidth]{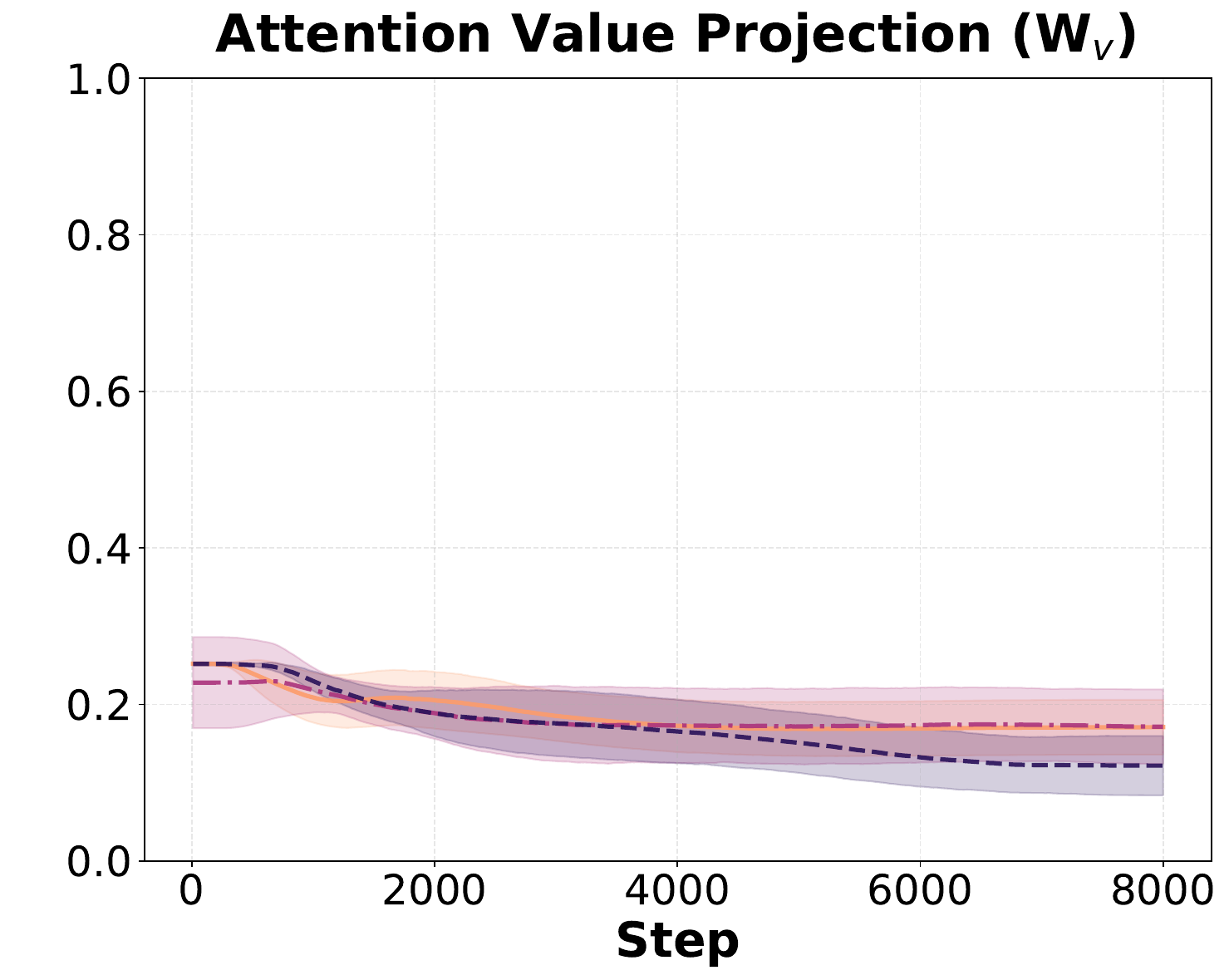}
        \caption{Attn Value Projection}
        \label{fig:stable_rank_comparison_qwen3_all:stable_rank_wv}
    \end{subfigure}
    \\\vspace{1em}
    \begin{subfigure}[b]{0.32\textwidth}
    \end{subfigure}
    \begin{subfigure}[b]{0.32\textwidth}
        \centering
        \includegraphics[width=\textwidth]{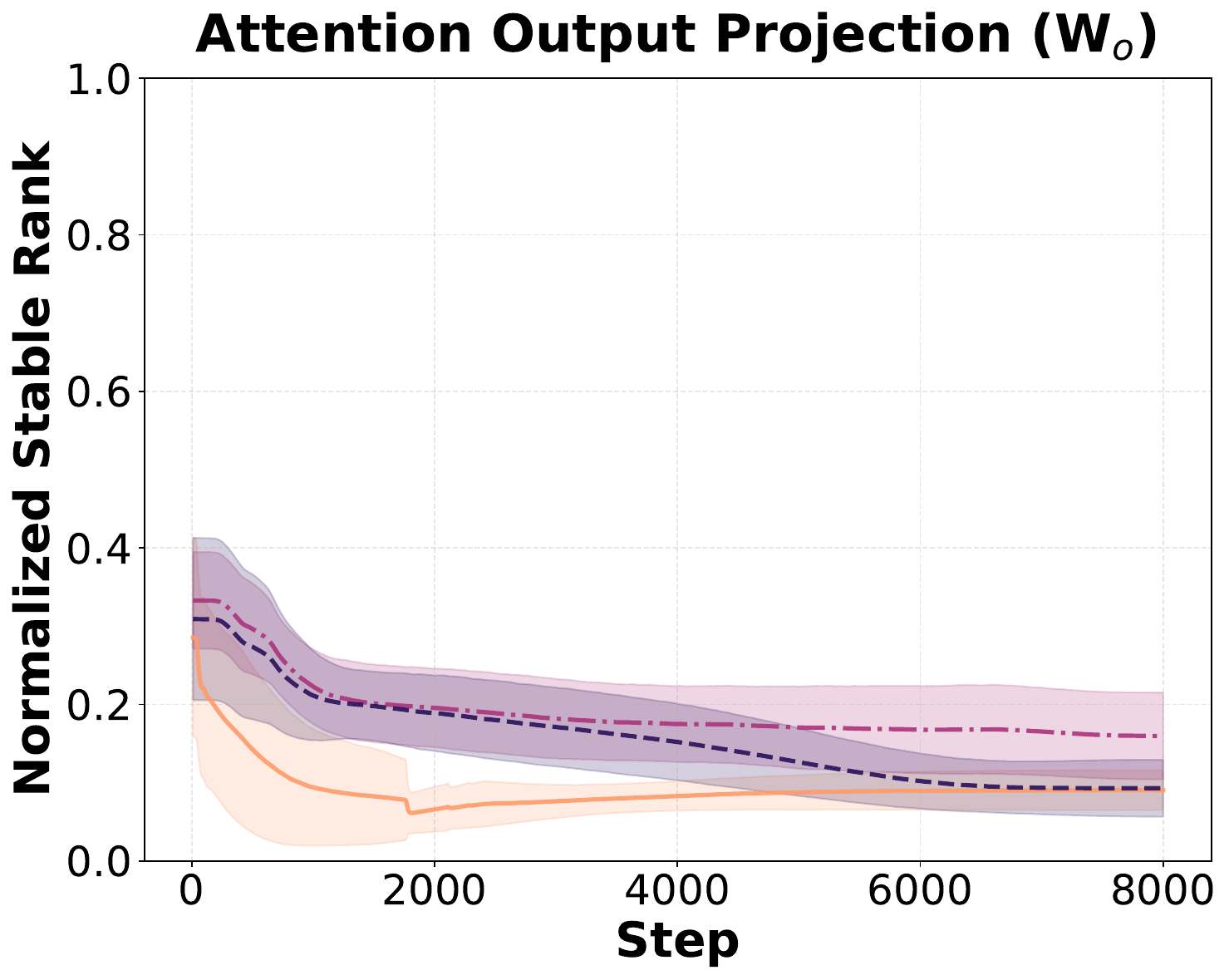}
        \caption{Attn Output Projection}
        \label{fig:stable_rank_comparison_qwen3_all:stable_rank_wo}
    \end{subfigure}
    \begin{subfigure}[b]{0.32\textwidth}
    \end{subfigure}
    \caption{Normalized stable rank evolution for Qwen3-0.6B across training steps for different weight matrices. Each subplot shows the mean stable rank (normalized by the maximum rank), with shaded regions indicating standard deviation across all layers.}
    \label{fig:stable_rank_comparison_qwen3_all}
\end{figure*}
}

\newcommand{\figStableRankvsLayerQwenLarge}{
\begin{figure*}[t!]
    \centering
    \begin{subfigure}[b]{0.32\textwidth}
        \centering
        \includegraphics[width=\textwidth]{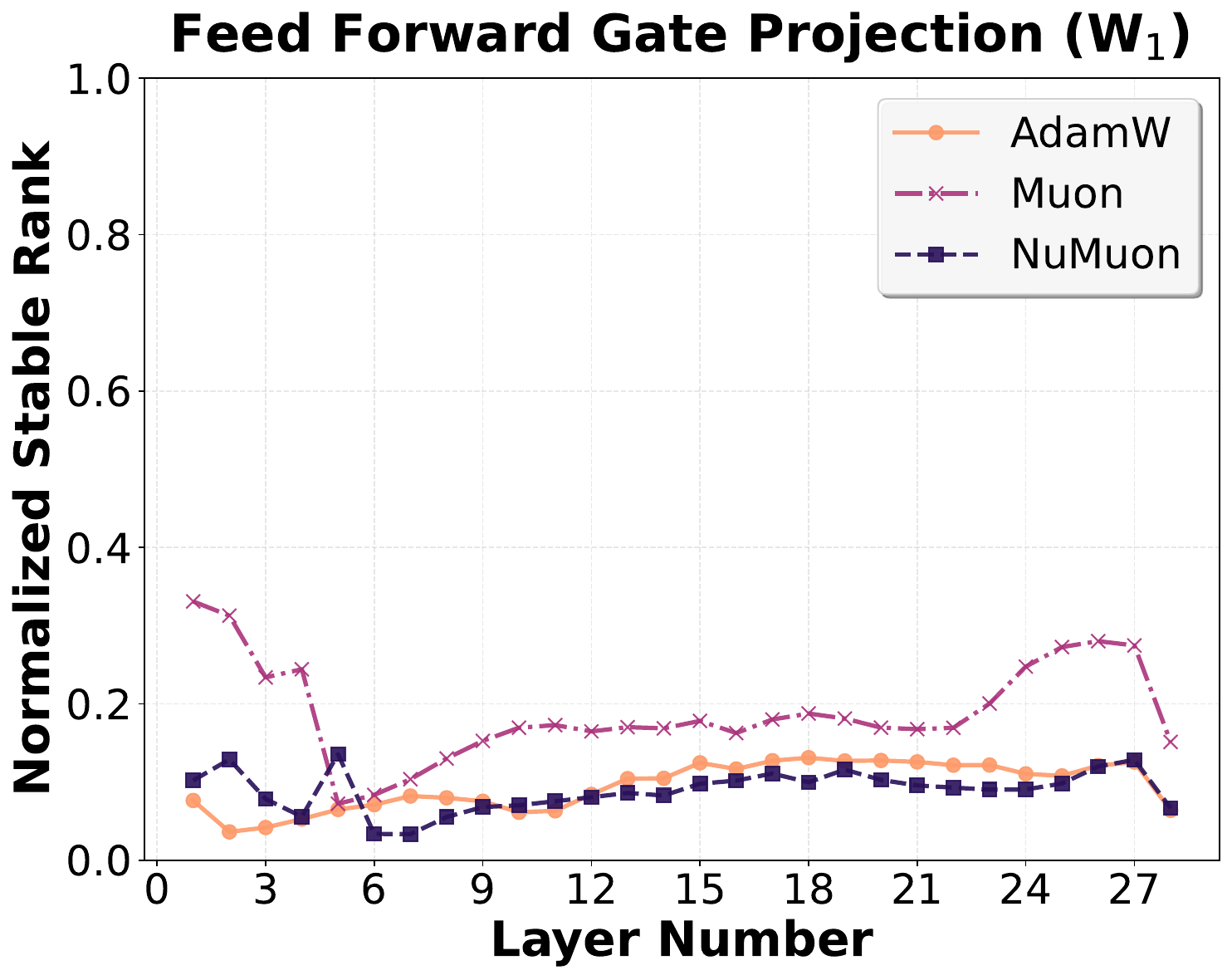}
        \caption{FFN Gate Projection}
        \label{fig:stable_rank_vs_layer_comparison_qwen3_all:stable_rank_w1}
    \end{subfigure}
    \hspace{-0.5em}
    \begin{subfigure}[b]{0.32\textwidth}
        \centering
        \includegraphics[width=\textwidth]{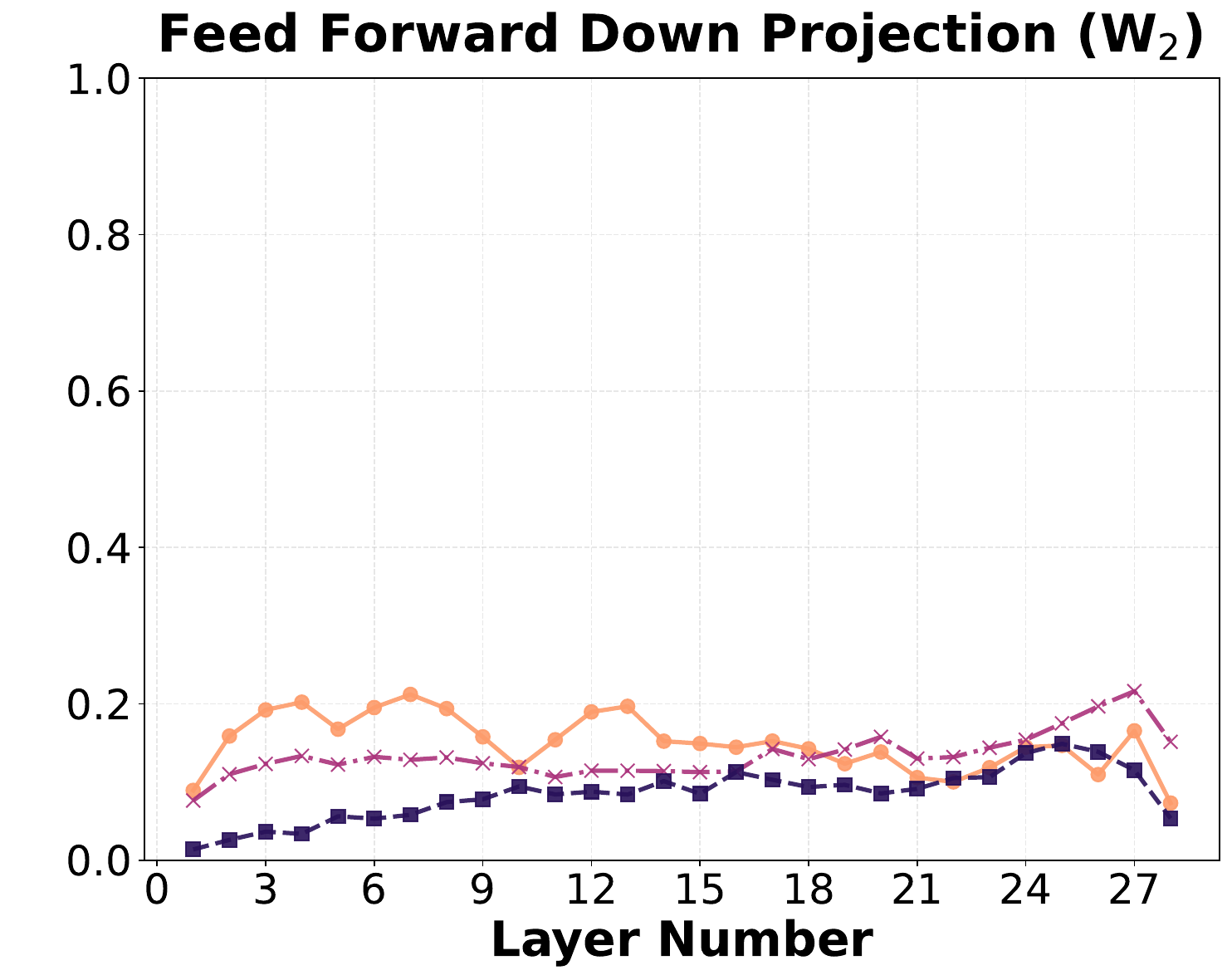}
        \caption{FFN Down Projection}
        \label{fig:stable_rank_vs_layer_comparison_qwen3_all:stable_rank_w2}
    \end{subfigure}
    \hspace{-0.5em}
    \begin{subfigure}[b]{0.32\textwidth}
        \centering
        \includegraphics[width=\textwidth]{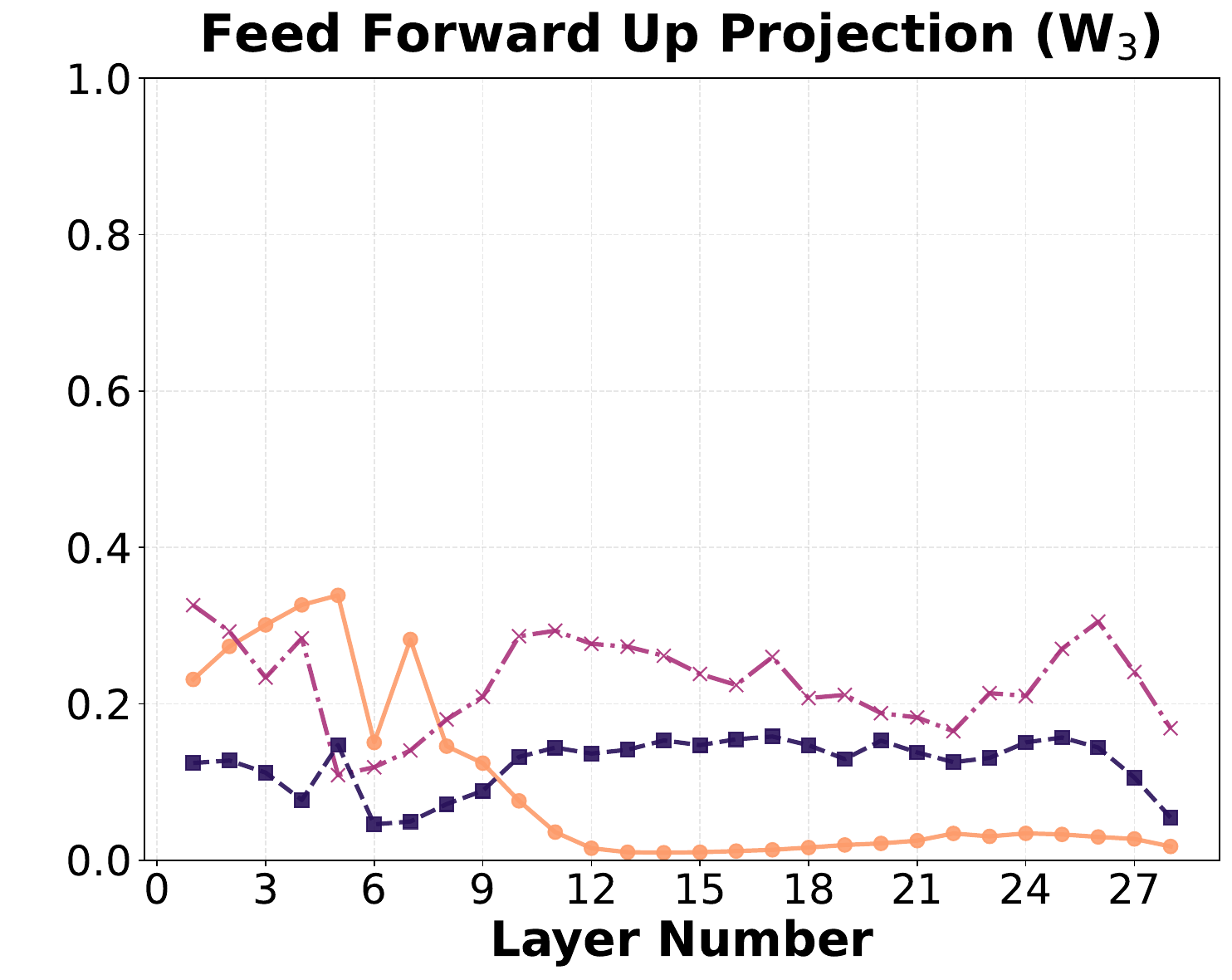}
        \caption{FFN Up Projection}
        \label{fig:stable_rank_vs_layer_comparison_qwen3_all:stable_rank_w3}
    \end{subfigure} 
    \\\vspace{1em}
    \begin{subfigure}[b]{0.32\textwidth}
        \centering
        \includegraphics[width=\textwidth]{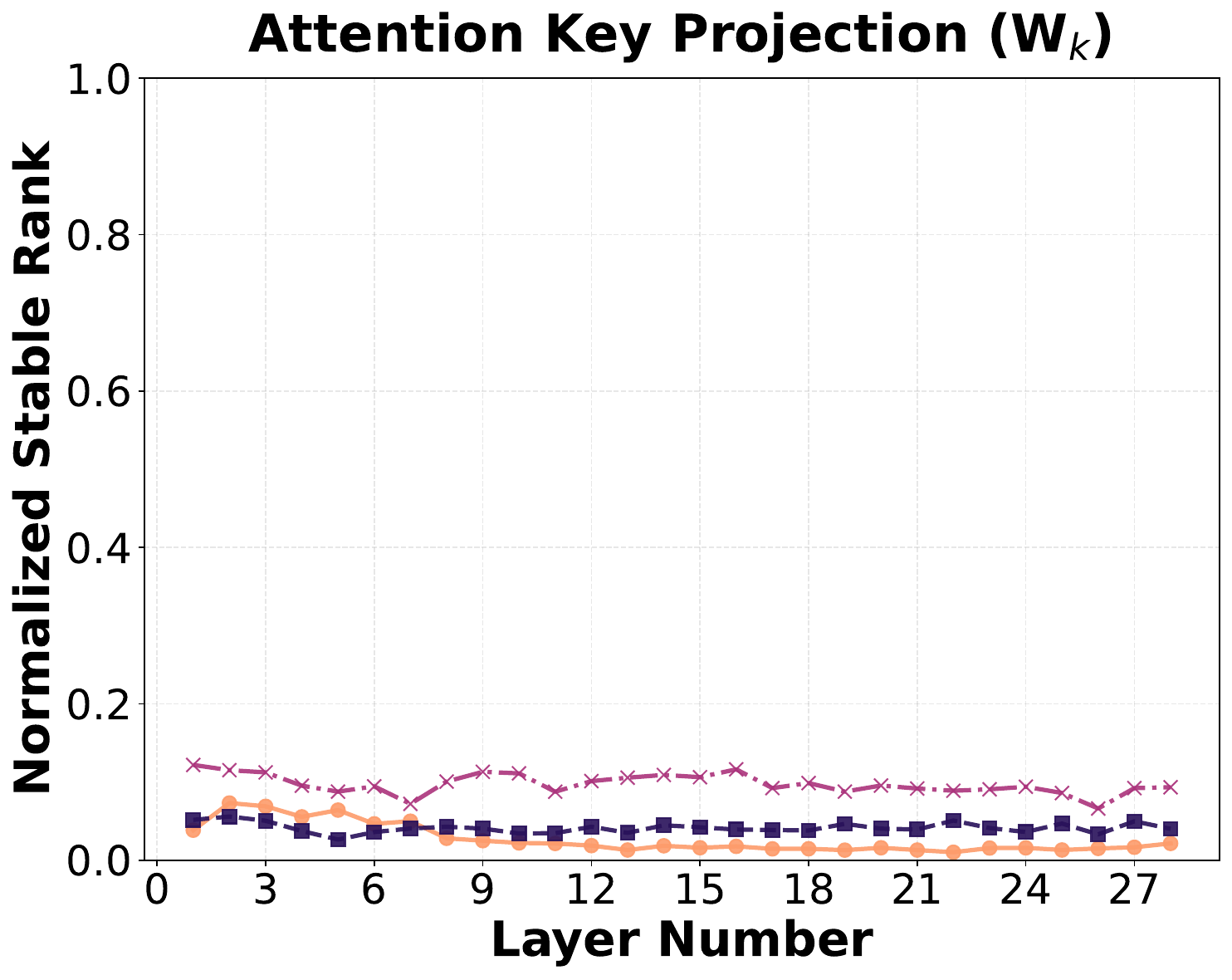}
        \caption{Attn Key Projection}
        \label{fig:stable_rank_vs_layer_comparison_qwen3_all:stable_rank_wk}
    \end{subfigure}
    \hspace{-0.5em}
    \begin{subfigure}[b]{0.32\textwidth}
        \centering
        \includegraphics[width=\textwidth]{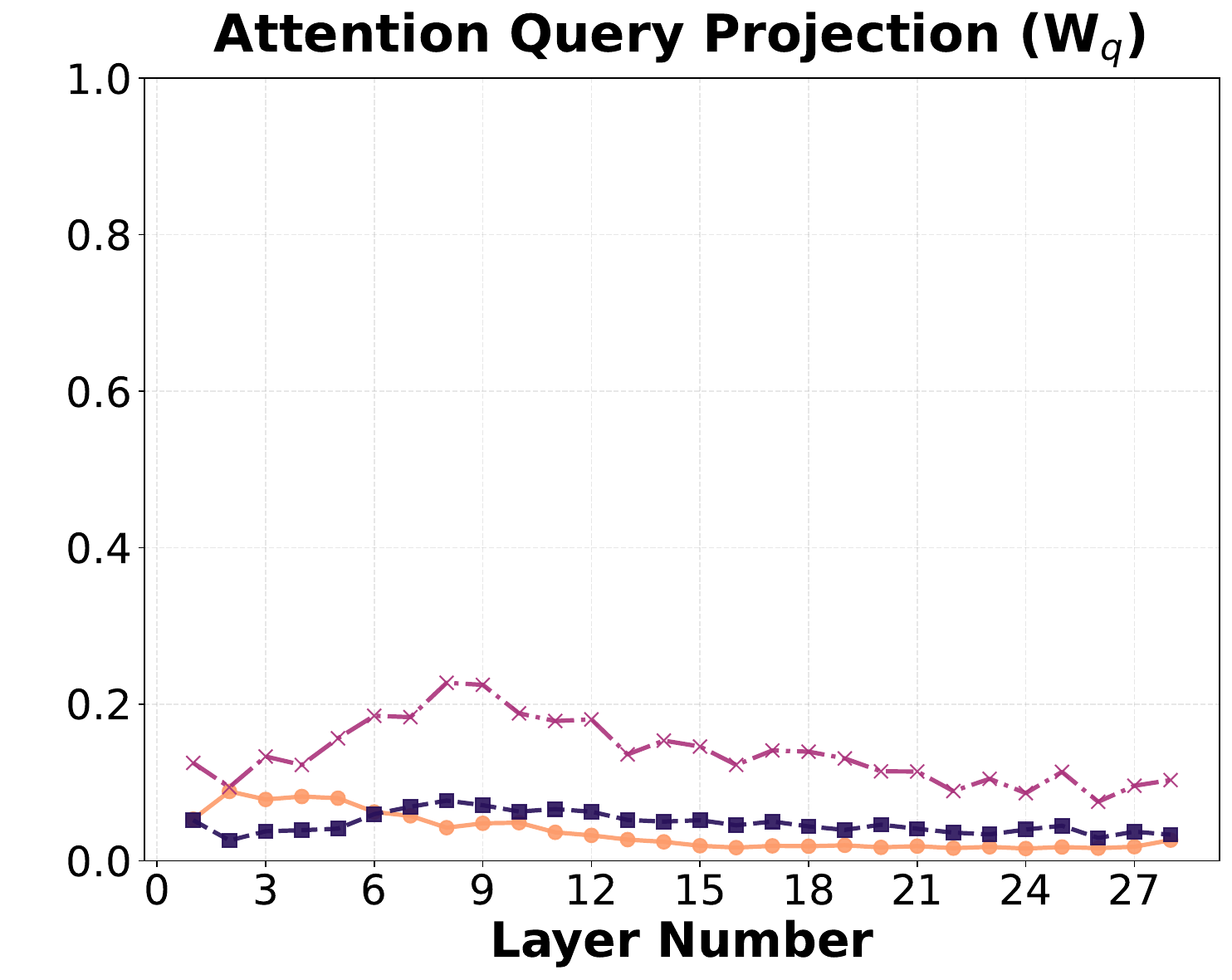}
        \caption{Attn Query Projection}
        \label{fig:stable_rank_vs_layer_comparison_qwen3_all:stable_rank_wq}
    \end{subfigure}
    \hspace{-0.5em}
    \begin{subfigure}[b]{0.32\textwidth}
        \centering
        \includegraphics[width=\textwidth]{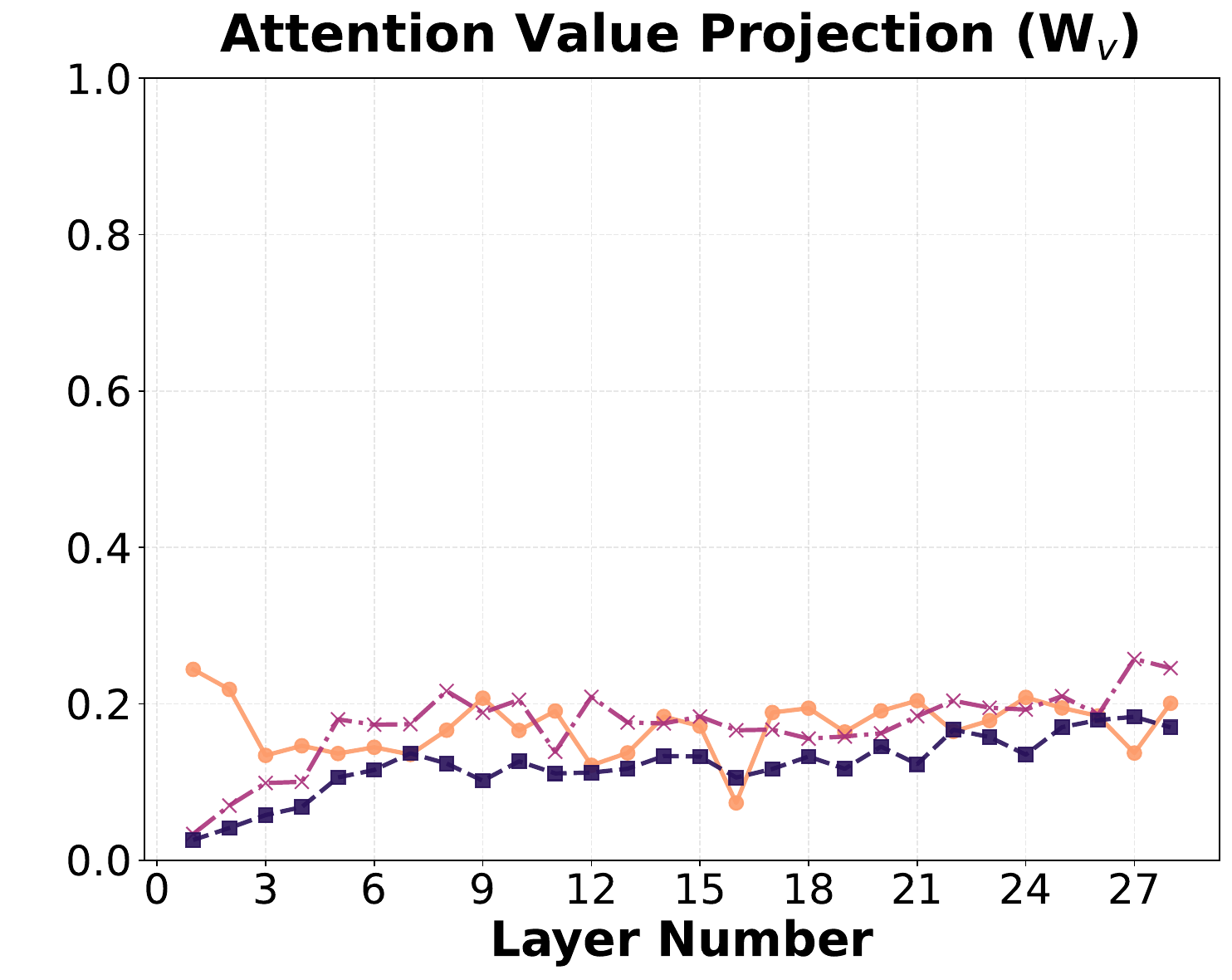}
        \caption{Attn Value Projection}
        \label{fig:stable_rank_vs_layer_comparison_qwen3_all:stable_rank_wv}
    \end{subfigure}
    \\\vspace{1em}
    \begin{subfigure}[b]{0.32\textwidth}
    \end{subfigure}
    \begin{subfigure}[b]{0.32\textwidth}
        \centering
        \includegraphics[width=\textwidth]{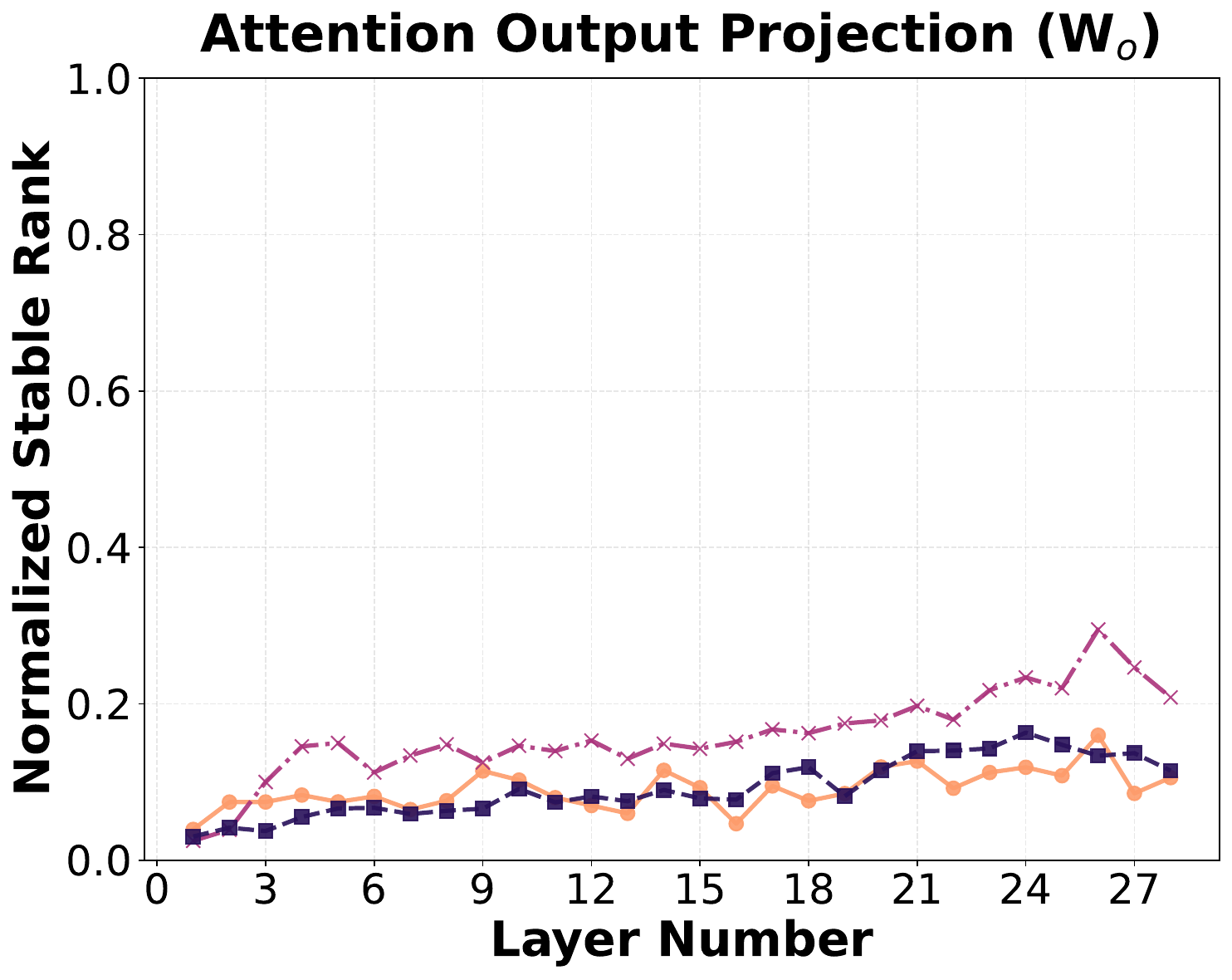}
        \caption{Attn Output Projection}
        \label{fig:stable_rank_vs_layer_comparison_qwen3_all:stable_rank_wo}
    \end{subfigure}
    \begin{subfigure}[b]{0.32\textwidth}
    \end{subfigure}
    \caption{Normalized stable rank for weight matrices of Qwen3-0.6B models. Each subplot shows the stable rank (normalized by the maximum rank) of the converged model across all layers.}
    \label{fig:stable_rank_vs_layer_comparison_qwen3_all}
    \vspace{-1em}
\end{figure*}
}

\newcommand{\figStableRankvsLayerOlmoLarge}{
\begin{figure*}[t!]
    \centering
    \begin{subfigure}[b]{0.32\textwidth}
        \centering
        \includegraphics[width=\textwidth]{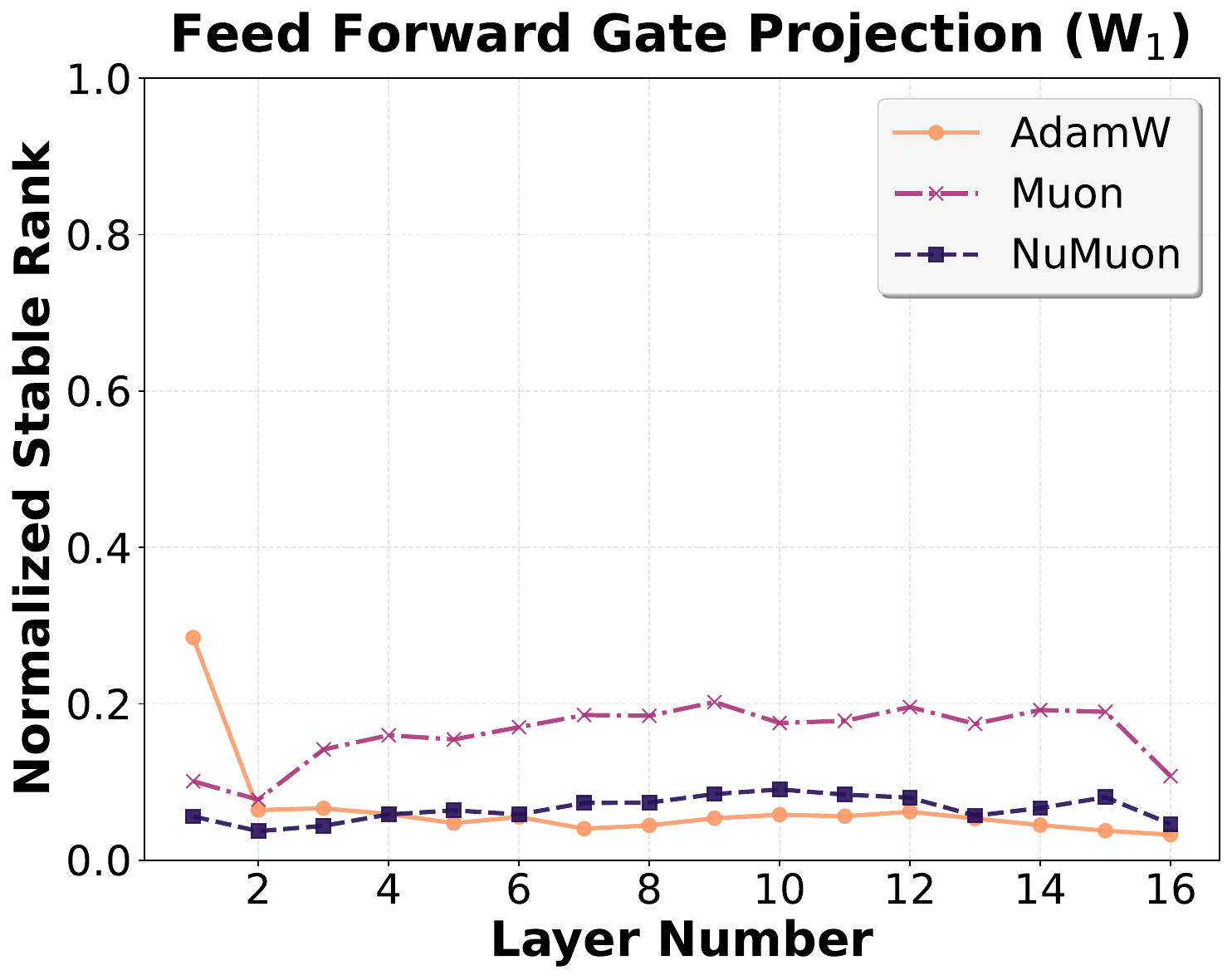}
        \caption{FFN Gate Projection}
        \label{fig:stable_rank_vs_layer_comparison_olmo2_all:stable_rank_w1}
    \end{subfigure}
    \hspace{-0.5em}
    \begin{subfigure}[b]{0.32\textwidth}
        \centering
        \includegraphics[width=\textwidth]{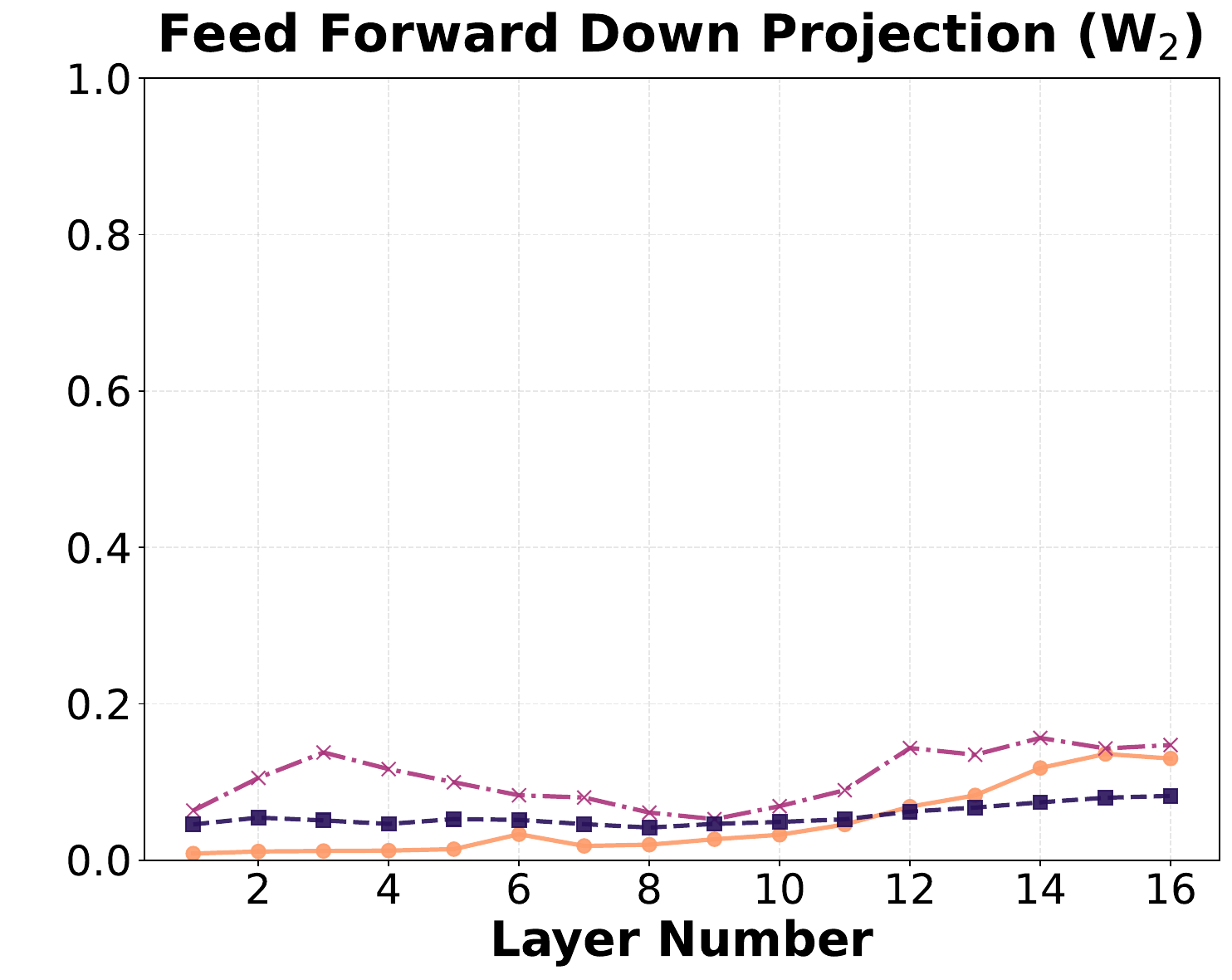}
        \caption{FFN Down Projection}
        \label{fig:stable_rank_vs_layer_comparison_olmo2_all:stable_rank_w2}
    \end{subfigure}
    \hspace{-0.5em}
    \begin{subfigure}[b]{0.32\textwidth}
        \centering
        \includegraphics[width=\textwidth]{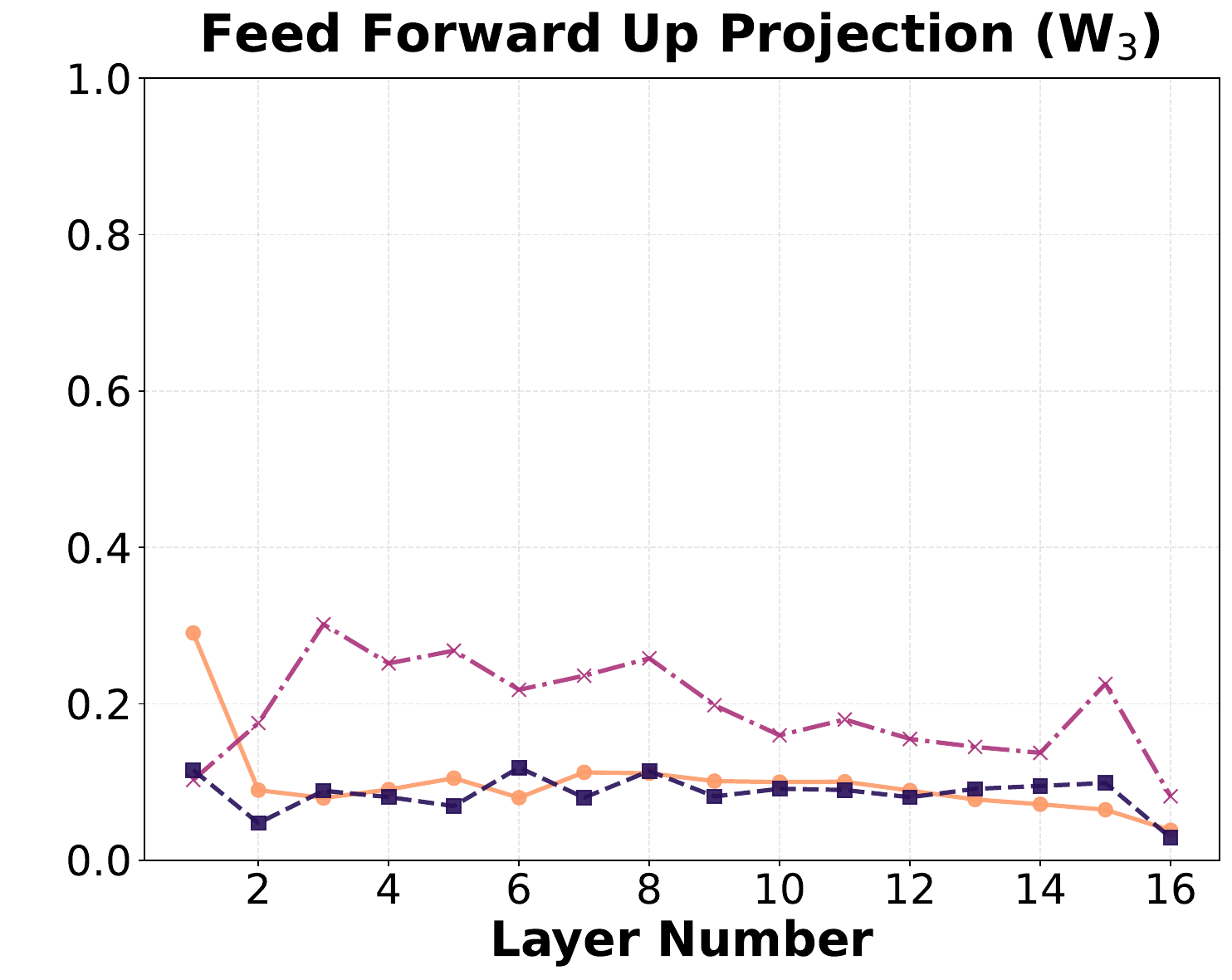}
        \caption{FFN Up Projection}
        \label{fig:stable_rank_vs_layer_comparison_olmo2_all:stable_rank_w3}
    \end{subfigure} 
    \\\vspace{1em}
    \begin{subfigure}[b]{0.32\textwidth}
        \centering
        \includegraphics[width=\textwidth]{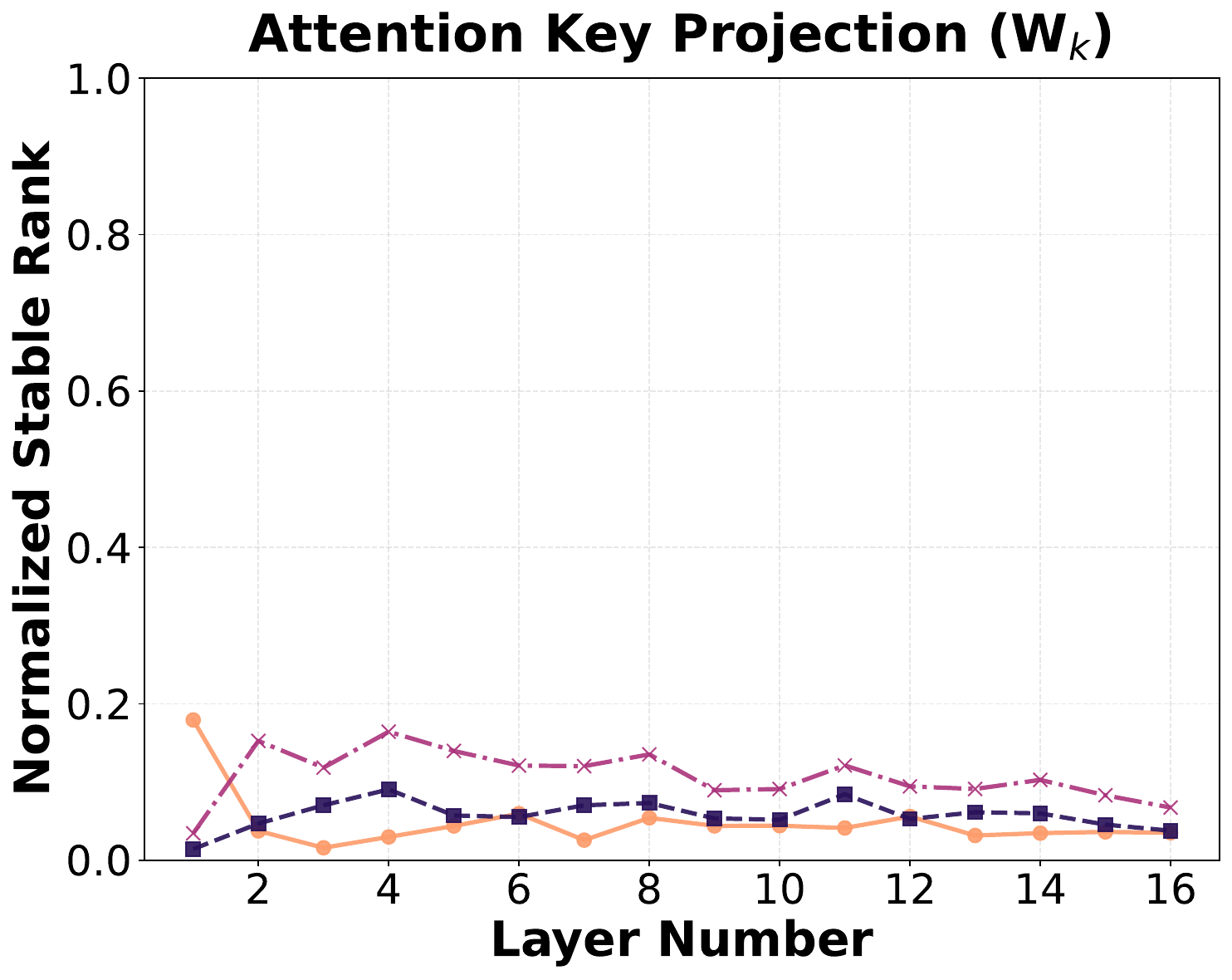}
        \caption{Attn Key Projection}
        \label{fig:stable_rank_vs_layer_comparison_olmo2_all:stable_rank_wk}
    \end{subfigure}
    \hspace{-0.5em}
    \begin{subfigure}[b]{0.32\textwidth}
        \centering
        \includegraphics[width=\textwidth]{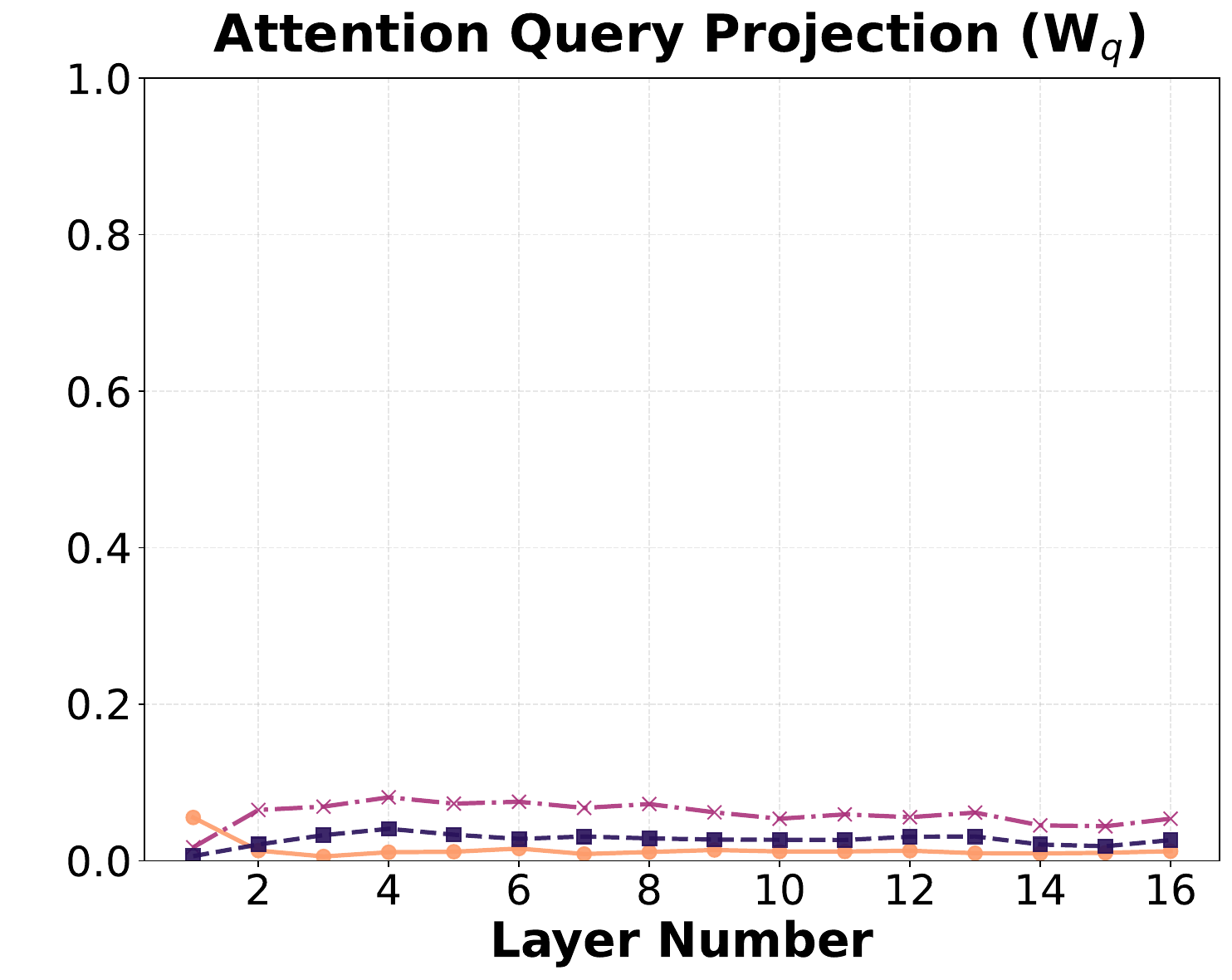}
        \caption{Attn Query Projection}
        \label{fig:stable_rank_vs_layer_comparison_olmo2_all:stable_rank_wq}
    \end{subfigure}
    \hspace{-0.5em}
    \begin{subfigure}[b]{0.32\textwidth}
        \centering
        \includegraphics[width=\textwidth]{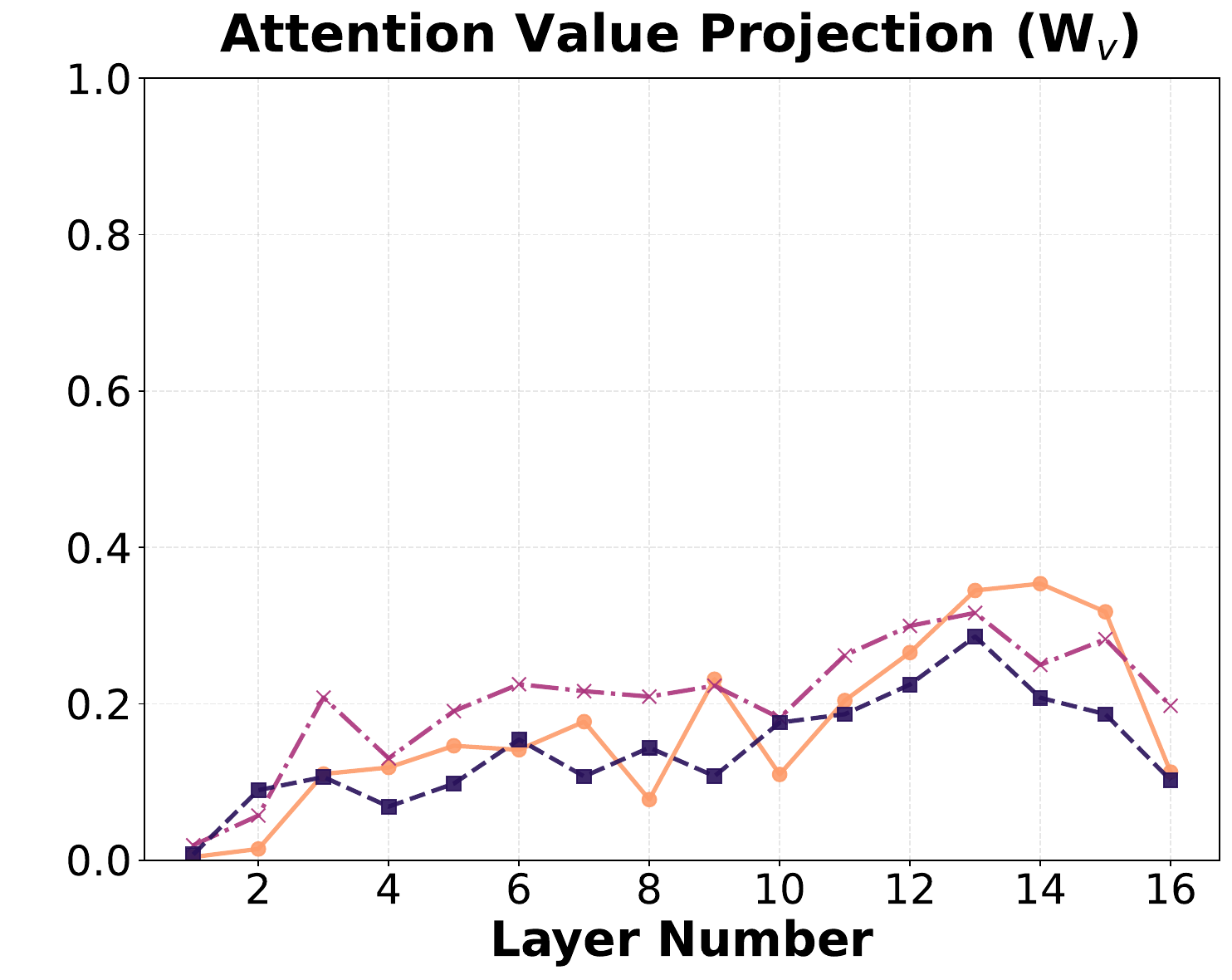}
        \caption{Attn Value Projection}
        \label{fig:stable_rank_vs_layer_comparison_olmo2_all:stable_rank_wv}
    \end{subfigure}
    \\\vspace{1em}
    \begin{subfigure}[b]{0.32\textwidth}
    \end{subfigure}
    \begin{subfigure}[b]{0.32\textwidth}
        \centering
        \includegraphics[width=\textwidth]{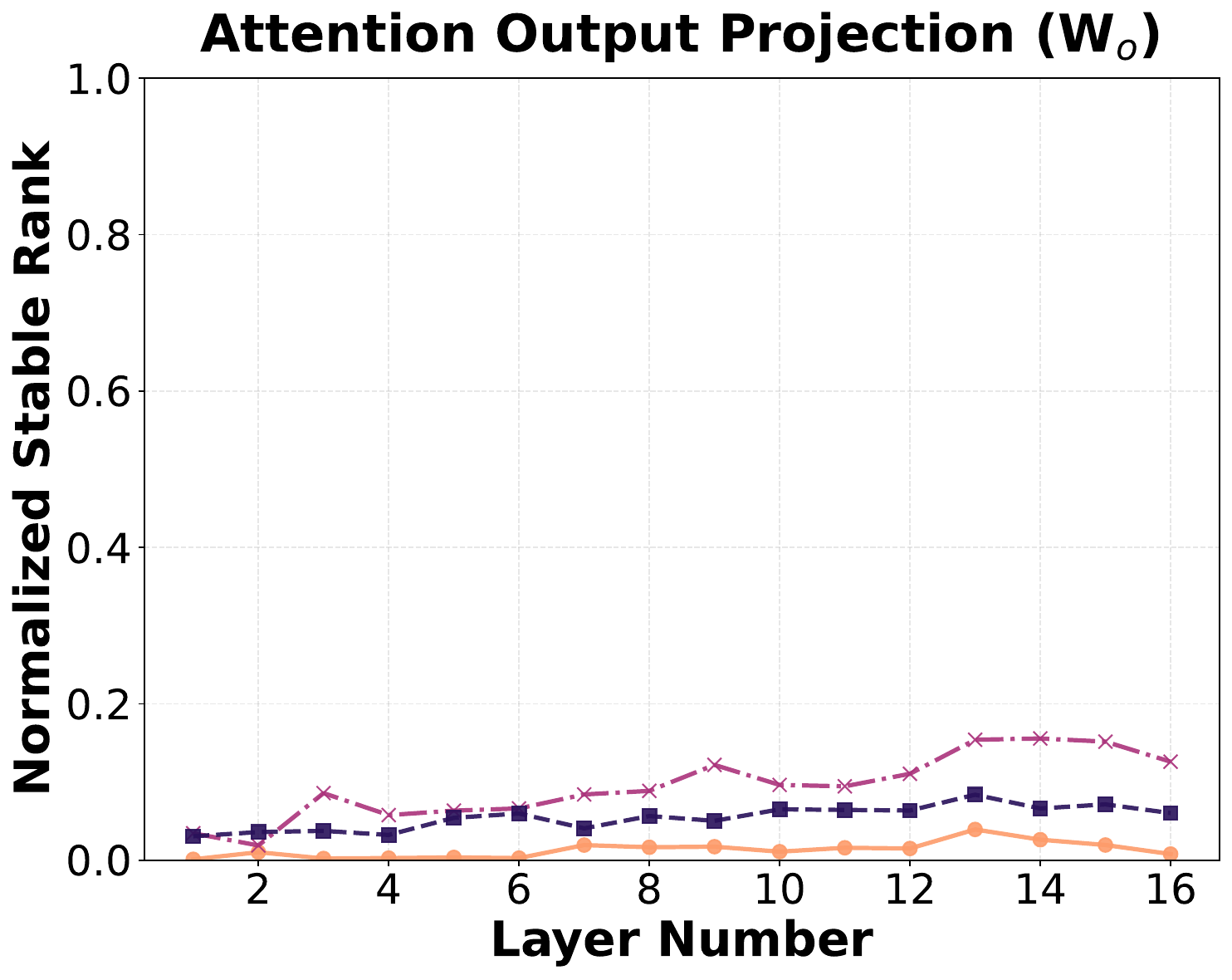}
        \caption{Attn Output Projection}
        \label{fig:stable_rank_vs_layer_comparison_olmo2_all:stable_rank_wo}
    \end{subfigure}
    \begin{subfigure}[b]{0.32\textwidth}
    \end{subfigure}
    \caption{Normalized stable rank for weight matrices of Olmo2-1.4B models. Each subplot shows the stable rank (normalized by the maximum rank) of the converged model across all layers.}
    \label{fig:stable_rank_vs_layer_comparison_olmo2_all}
\end{figure*}
}

\newcommand{\figStableRankvsLayerLlamaLarge}{
\begin{figure*}[t!]
    \centering
    \begin{subfigure}[b]{0.32\textwidth}
        \centering
        \includegraphics[width=\textwidth]{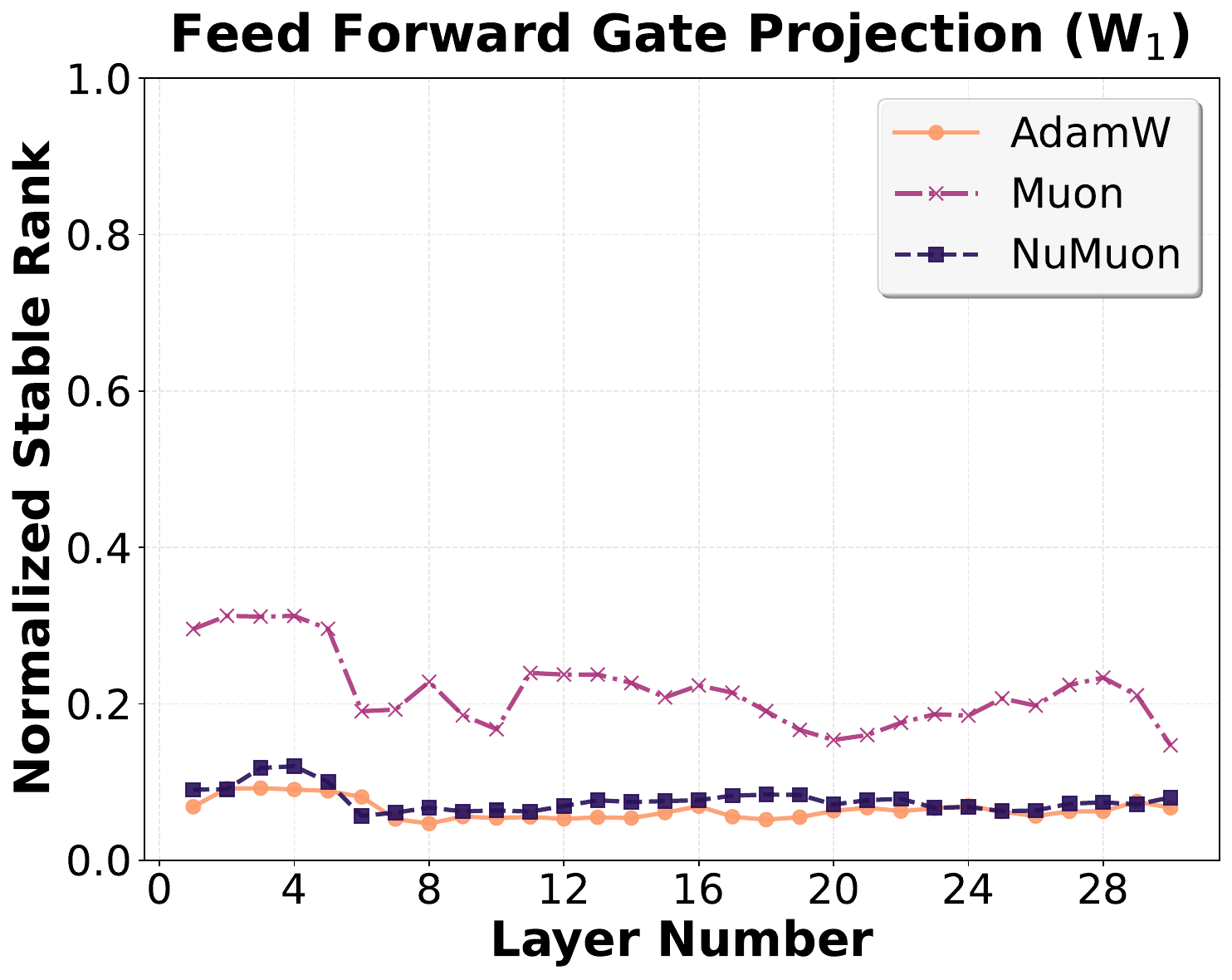}
        \caption{FFN Gate Projection}
        \label{fig:stable_rank_vs_layer_comparison_llama3_all:stable_rank_w1}
    \end{subfigure}
    \hspace{-0.5em}
    \begin{subfigure}[b]{0.32\textwidth}
        \centering
        \includegraphics[width=\textwidth]{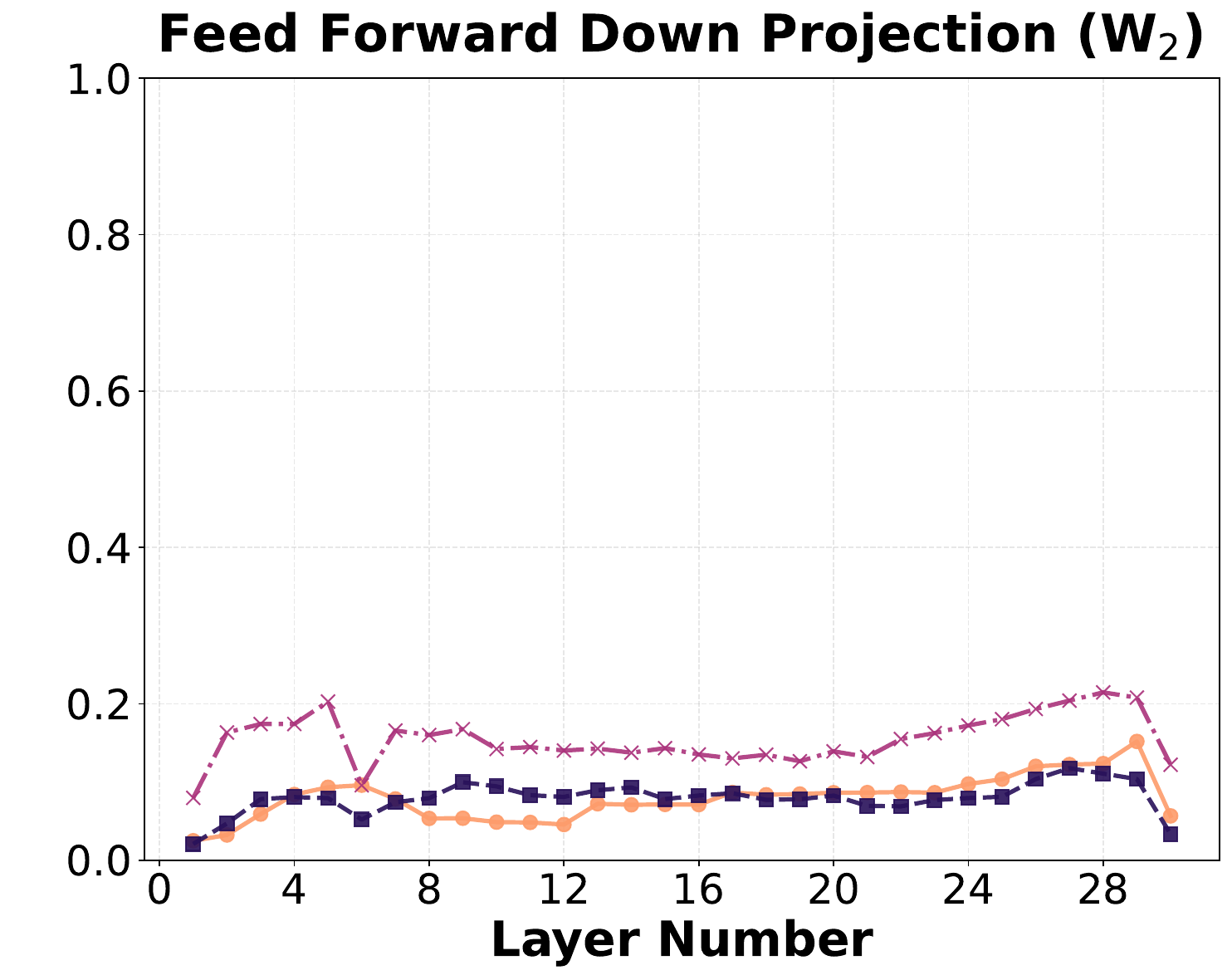}
        \caption{FFN Down Projection}
        \label{fig:stable_rank_vs_layer_comparison_llama3_all:stable_rank_w2}
    \end{subfigure}
    \hspace{-0.5em}
    \begin{subfigure}[b]{0.32\textwidth}
        \centering
        \includegraphics[width=\textwidth]{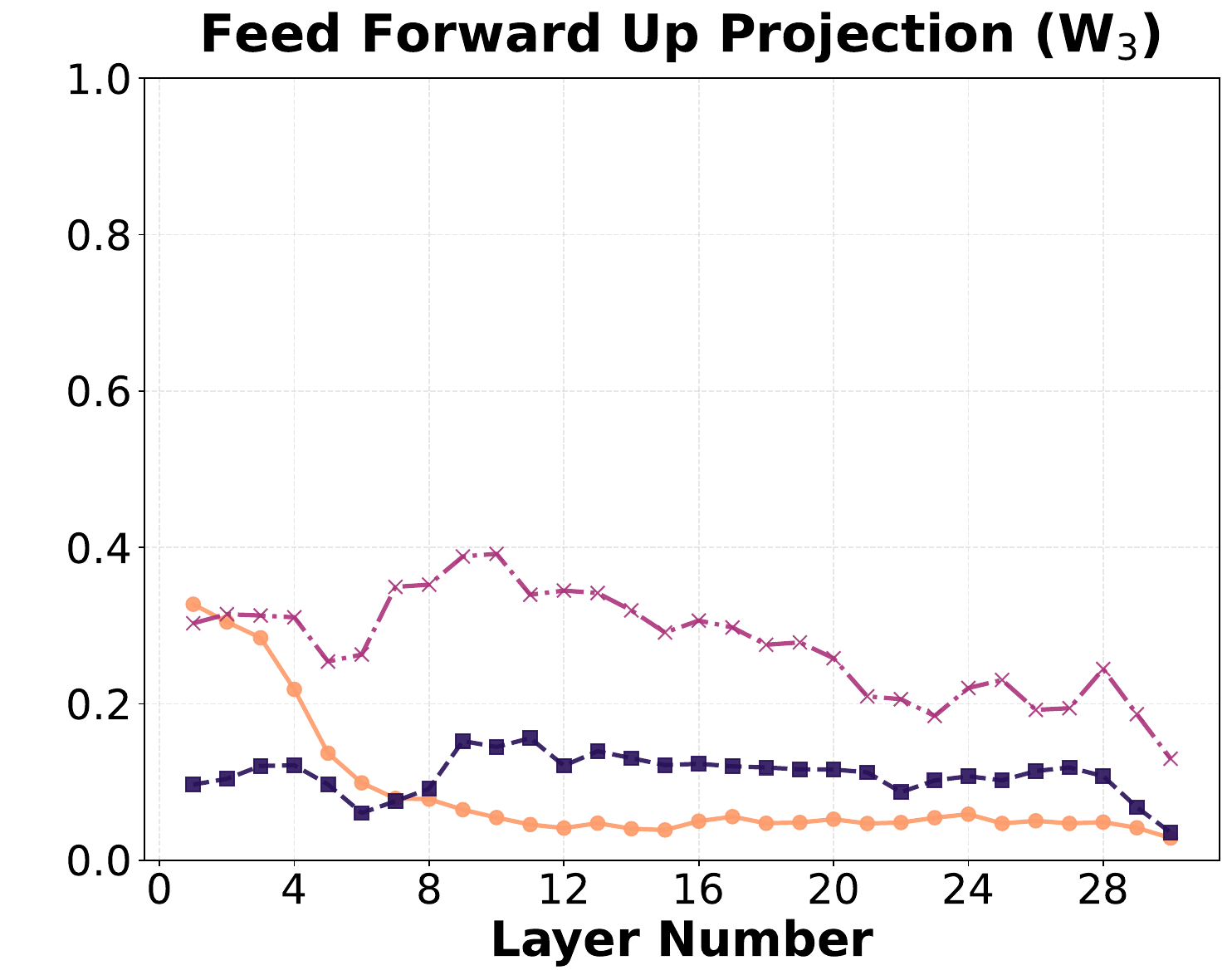}
        \caption{FFN Up Projection}
        \label{fig:stable_rank_vs_layer_comparison_llama3_all:stable_rank_w3}
    \end{subfigure} 
    \\\vspace{1em}
    \begin{subfigure}[b]{0.32\textwidth}
        \centering
        \includegraphics[width=\textwidth]{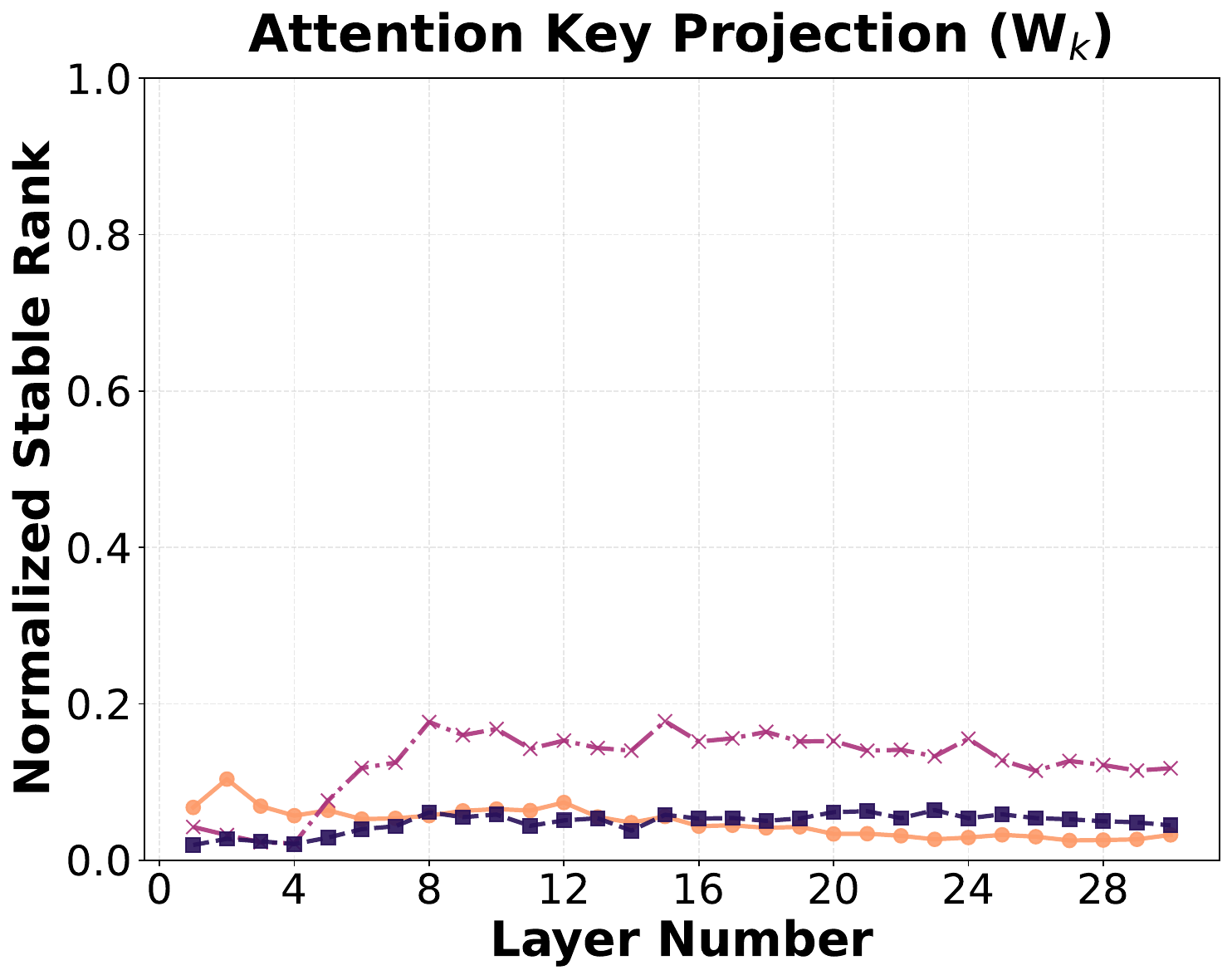}
        \caption{Attn Key Projection}
        \label{fig:stable_rank_vs_layer_comparison_llama3_all:stable_rank_wk}
    \end{subfigure}
    \hspace{-0.5em}
    \begin{subfigure}[b]{0.32\textwidth}
        \centering
        \includegraphics[width=\textwidth]{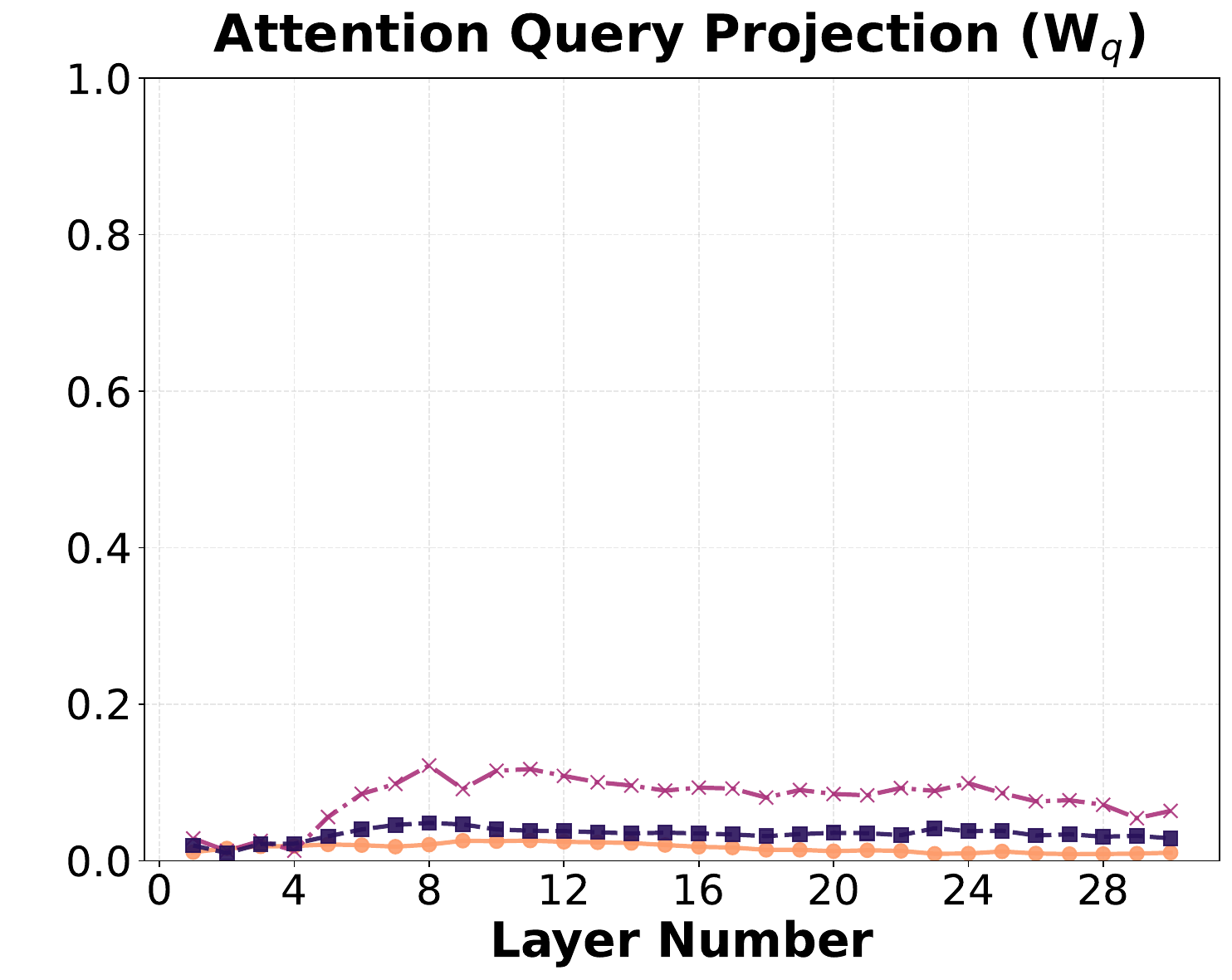}
        \caption{Attn Query Projection}
        \label{fig:stable_rank_vs_layer_comparison_llama3_all:stable_rank_wq}
    \end{subfigure}
    \hspace{-0.5em}
    \begin{subfigure}[b]{0.32\textwidth}
        \centering
        \includegraphics[width=\textwidth]{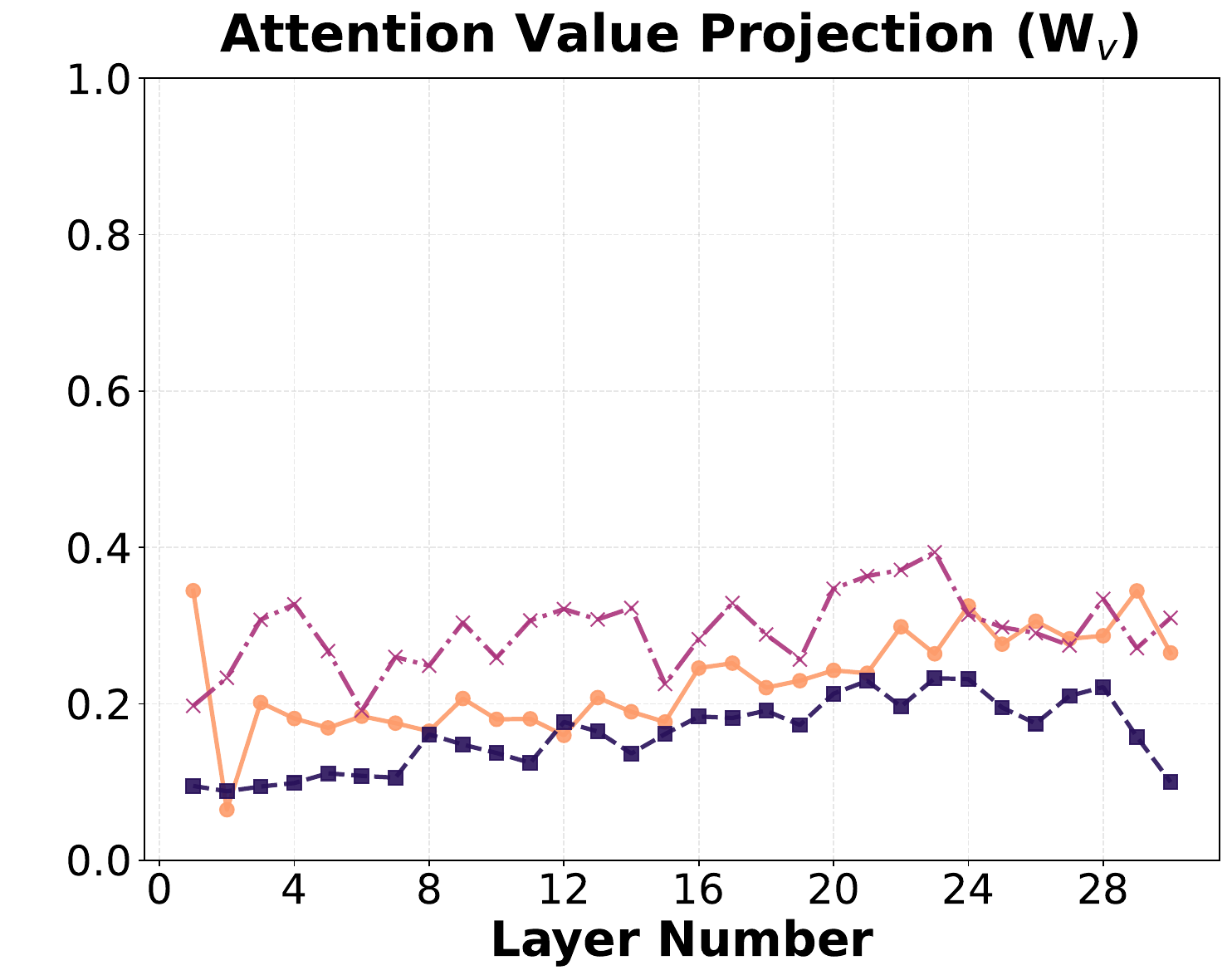}
        \caption{Attn Value Projection}
        \label{fig:stable_rank_vs_layer_comparison_llama3_all:stable_rank_wv}
    \end{subfigure}
    \\\vspace{1em}
    \begin{subfigure}[b]{0.32\textwidth}
    \end{subfigure}
    \begin{subfigure}[b]{0.32\textwidth}
        \centering
        \includegraphics[width=\textwidth]{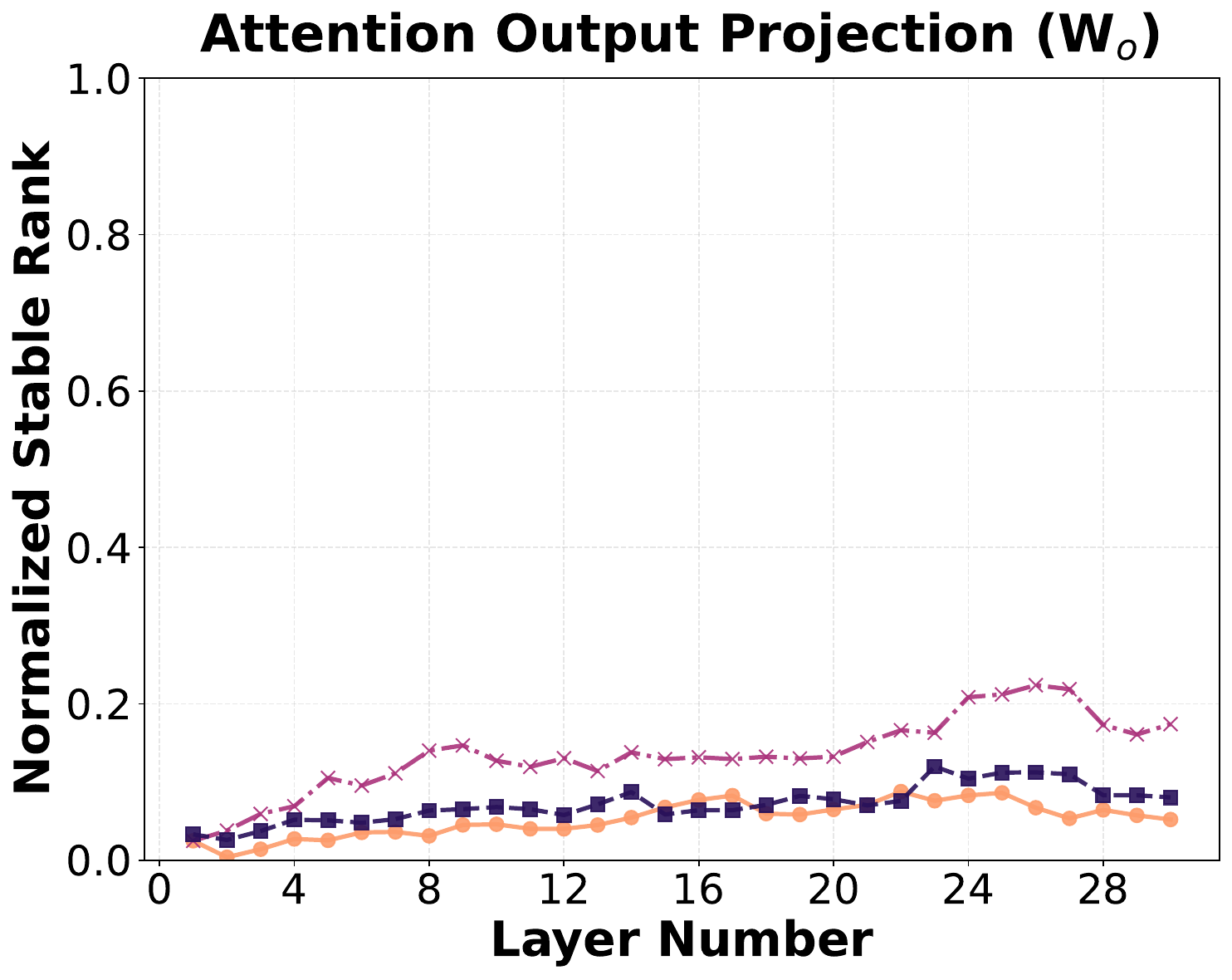}
        \caption{Attn Output Projection}
        \label{fig:stable_rank_vs_layer_comparison_llama3_all:stable_rank_wo}
    \end{subfigure}
    \begin{subfigure}[b]{0.32\textwidth}
    \end{subfigure}
    \caption{Normalized stable rank for weight matrices of Llama3-1.8B models. Each subplot shows the stable rank (normalized by the maximum rank) of the converged model across all layers.}
    \label{fig:stable_rank_vs_layer_comparison_llama3_all}
\end{figure*}
}

\newcommand{\figStableRankvsLayerAblationQwen}{
    \centering
    \includegraphics[width=\linewidth]{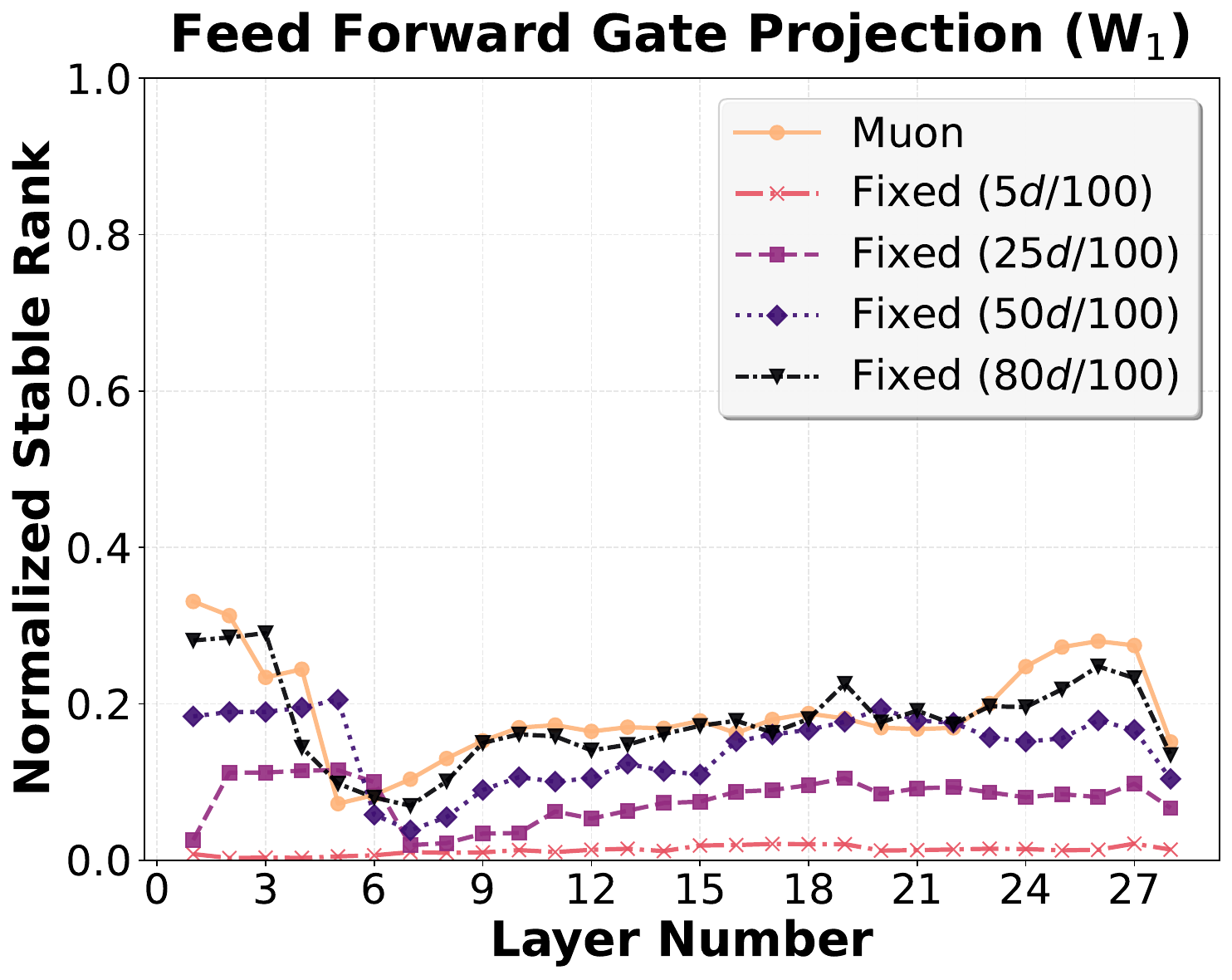}
    \captionof{figure}{Normalized stable rank for feedforward gate projection weight matrix of Qwen3-0.6B models trained with NuMuon under different ranks/nuclear norm budget. Each subplot shows the stable rank (normalized by the maximum rank) of the converged model across all layers.}
    \label{fig:stable_rank_vs_layer_comparison_qwen3_budget_ablation_small}
}

\newcommand{\figStableRankvsLayerAblationQwenLarge}{
\begin{figure*}[t!]
    \centering
    \begin{subfigure}[b]{0.32\textwidth}
        \centering
        \includegraphics[width=\textwidth]{stable_ranks/stable_ranks_budget/feed_forward_w1.pdf}
        \caption{FFN Gate Projection}
        \label{fig:stable_rank_vs_layer_comparison_qwen3_budget_ablation:stable_rank_w1}
    \end{subfigure}
    \hspace{-0.5em}
    \begin{subfigure}[b]{0.32\textwidth}
        \centering
        \includegraphics[width=\textwidth]{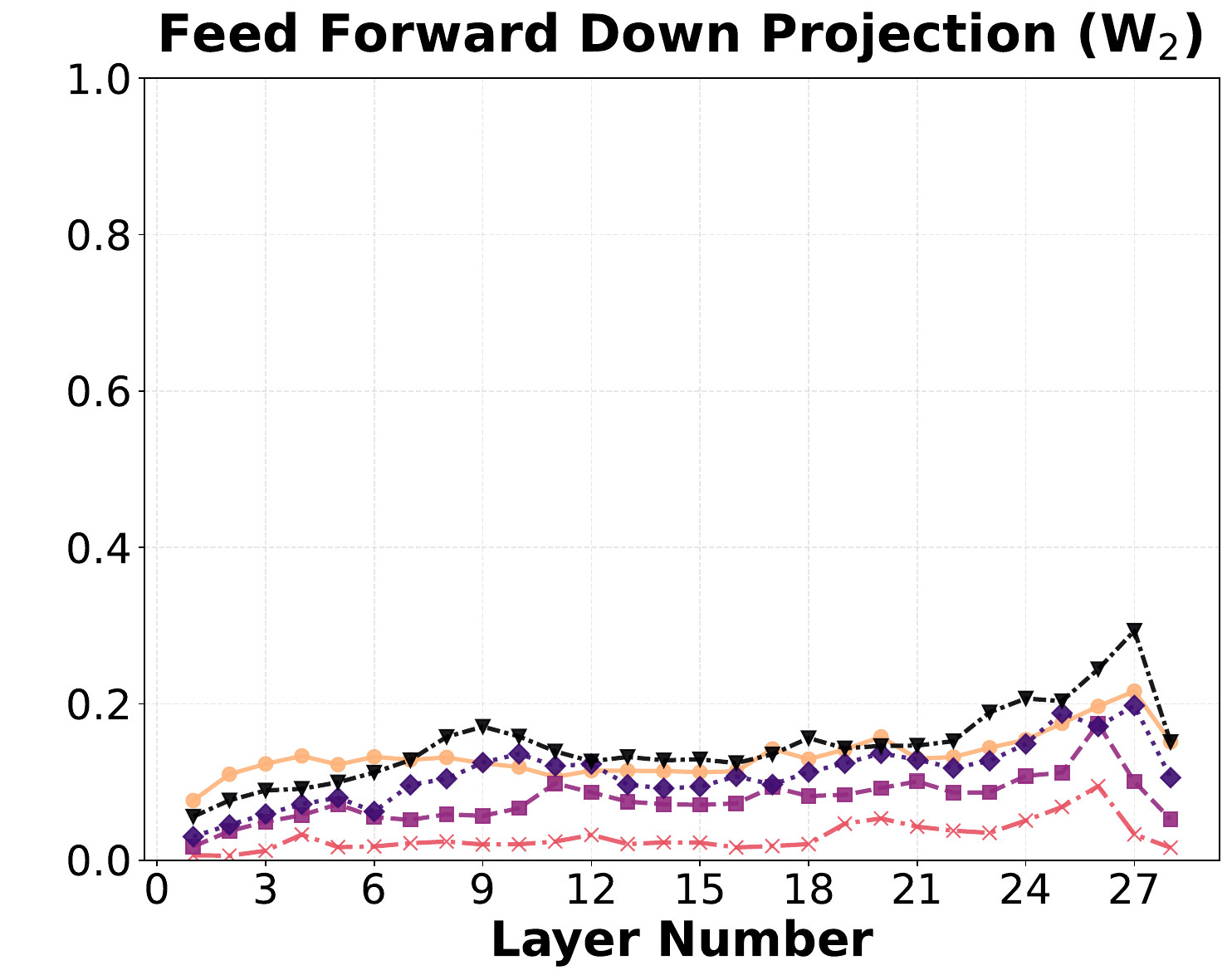}
        \caption{FFN Down Projection}
        \label{fig:stable_rank_vs_layer_comparison_qwen3_budget_ablation:stable_rank_w2}
    \end{subfigure}
    \hspace{-0.5em}
    \begin{subfigure}[b]{0.32\textwidth}
        \centering
        \includegraphics[width=\textwidth]{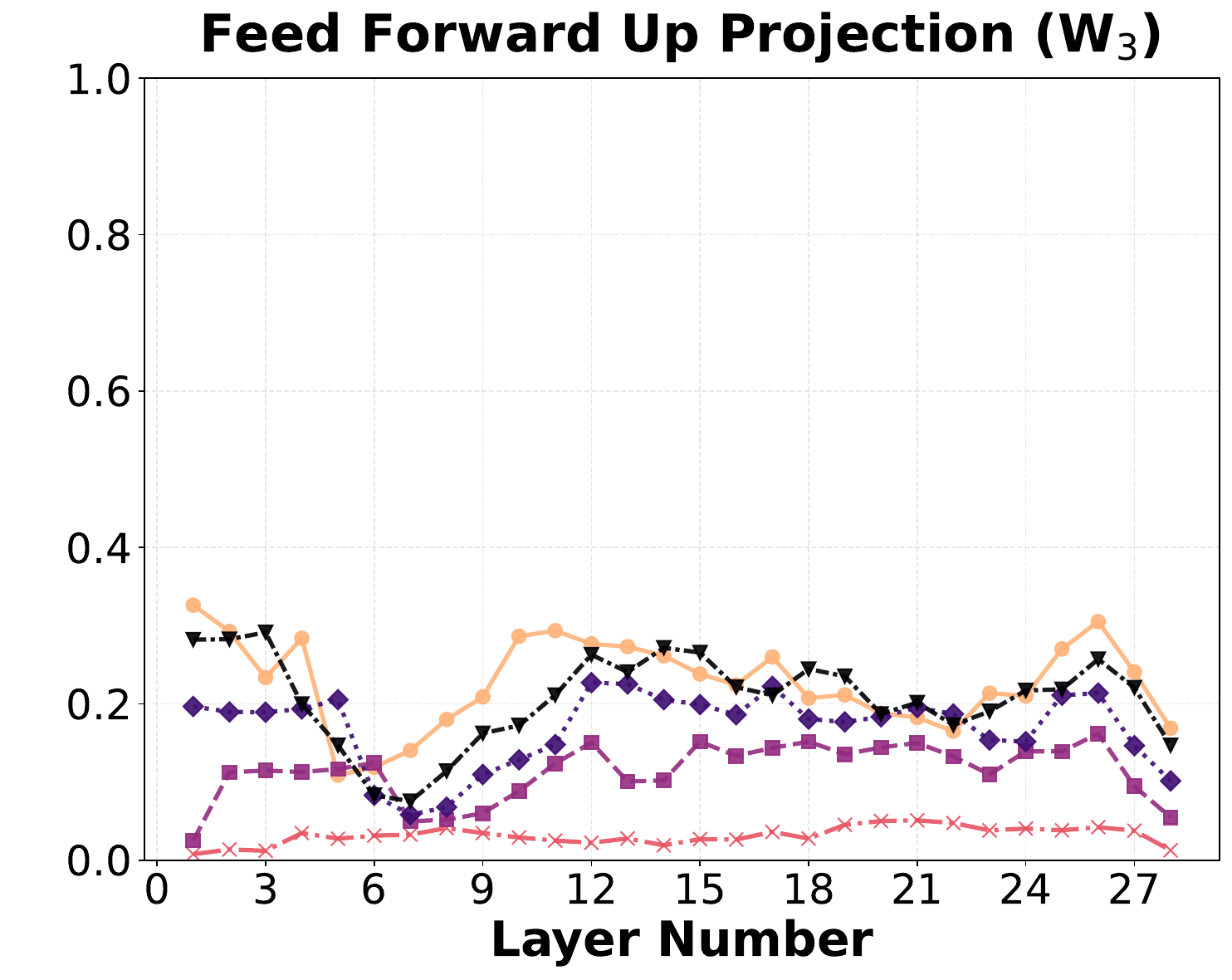}
        \caption{FFN Up Projection}
        \label{fig:stable_rank_vs_layer_comparison_qwen3_budget_ablation:stable_rank_w3}
    \end{subfigure} 
    \\\vspace{1em}
    \begin{subfigure}[b]{0.32\textwidth}
        \centering
        \includegraphics[width=\textwidth]{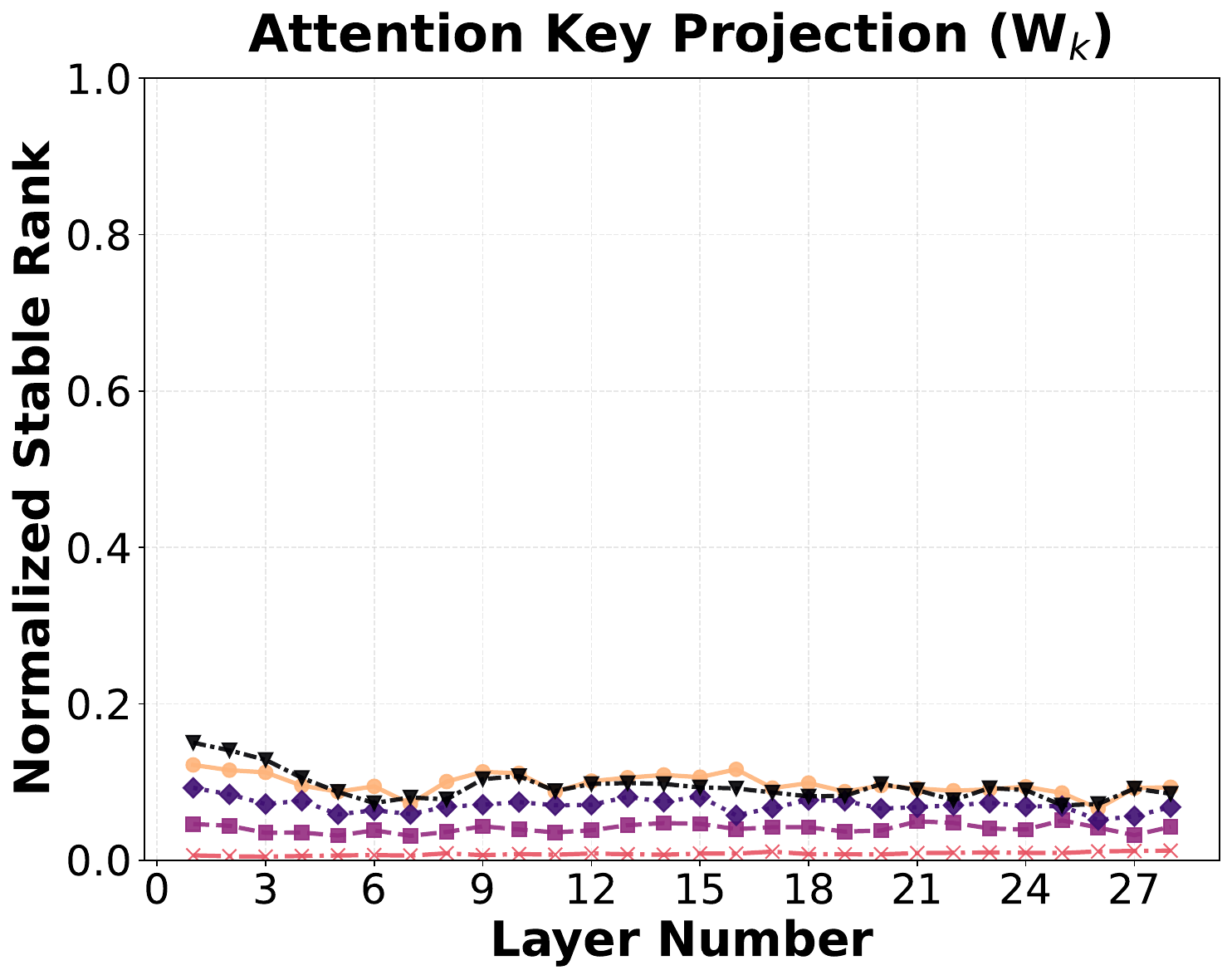}
        \caption{Attn Key Projection}
        \label{fig:stable_rank_vs_layer_comparison_qwen3_budget_ablation:stable_rank_wk}
    \end{subfigure}
    \hspace{-0.5em}
    \begin{subfigure}[b]{0.32\textwidth}
        \centering
        \includegraphics[width=\textwidth]{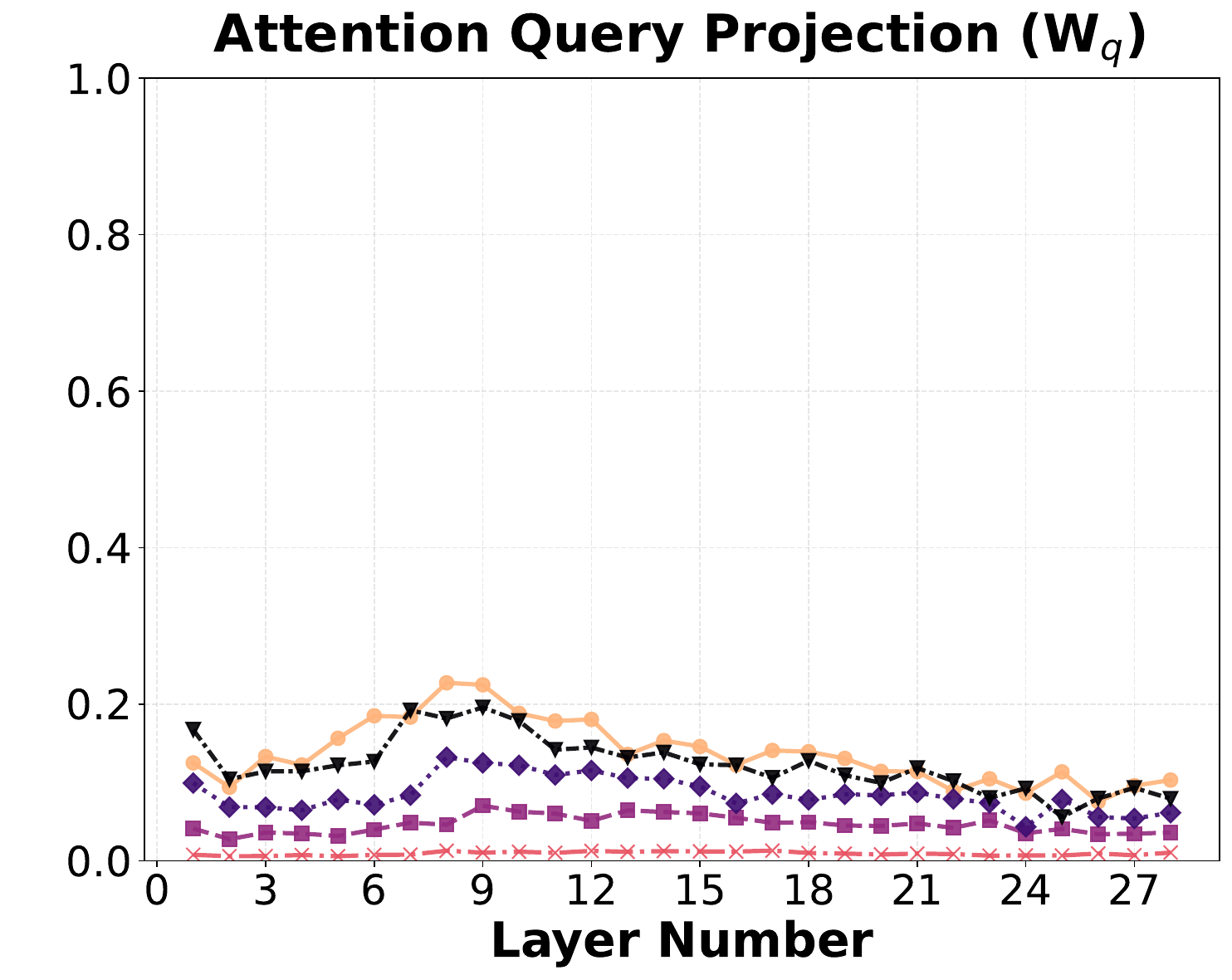}
        \caption{Attn Query Projection}
        \label{fig:stable_rank_vs_layer_comparison_qwen3_budget_ablation:stable_rank_wq}
    \end{subfigure}
    \hspace{-0.5em}
    \begin{subfigure}[b]{0.32\textwidth}
        \centering
        \includegraphics[width=\textwidth]{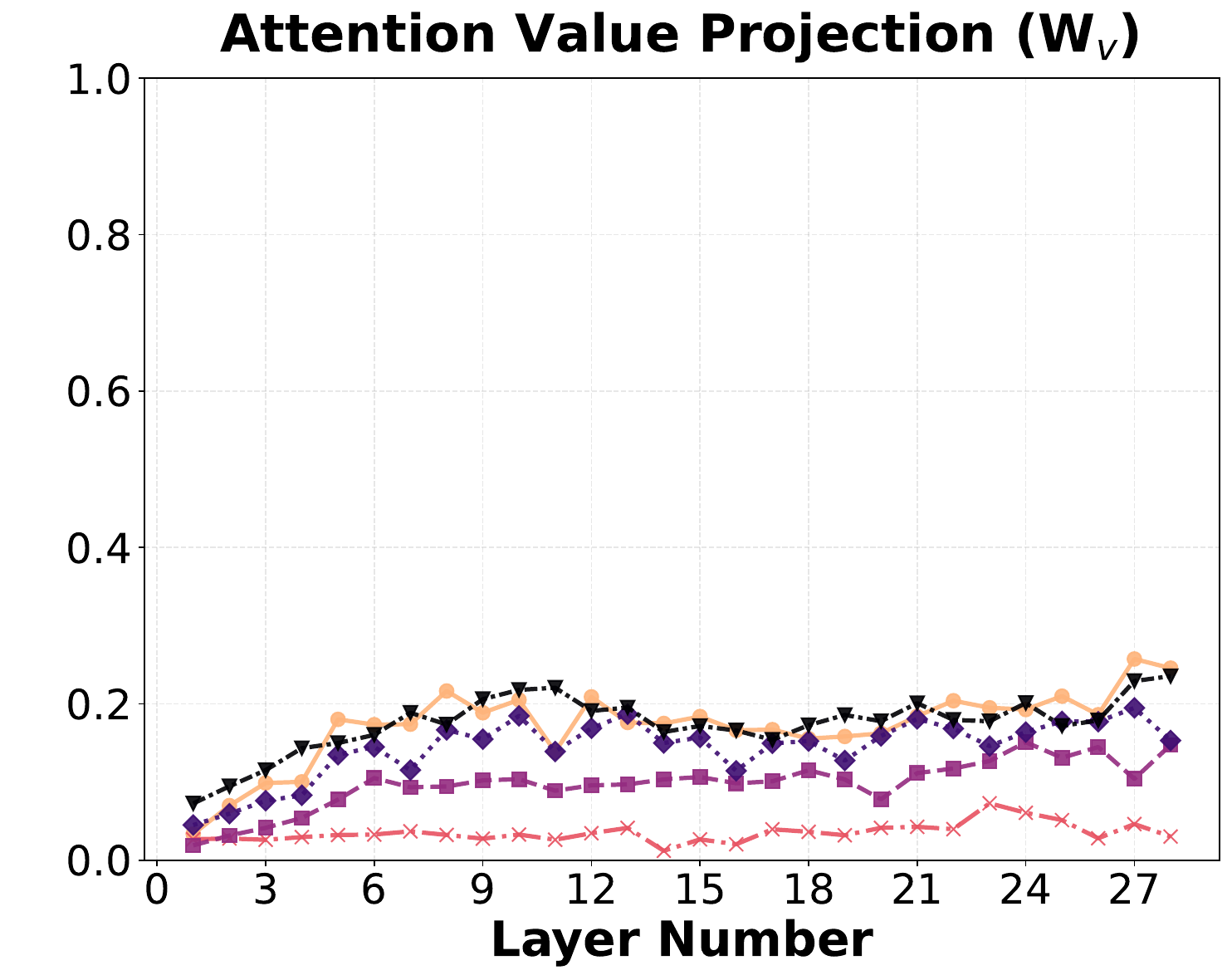}
        \caption{Attn Value Projection}
        \label{fig:stable_rank_vs_layer_comparison_qwen3_budget_ablation:stable_rank_wv}
    \end{subfigure}
    \\\vspace{1em}
    \begin{subfigure}[b]{0.32\textwidth}
    \end{subfigure}
    \begin{subfigure}[b]{0.32\textwidth}
        \centering
        \includegraphics[width=\textwidth]{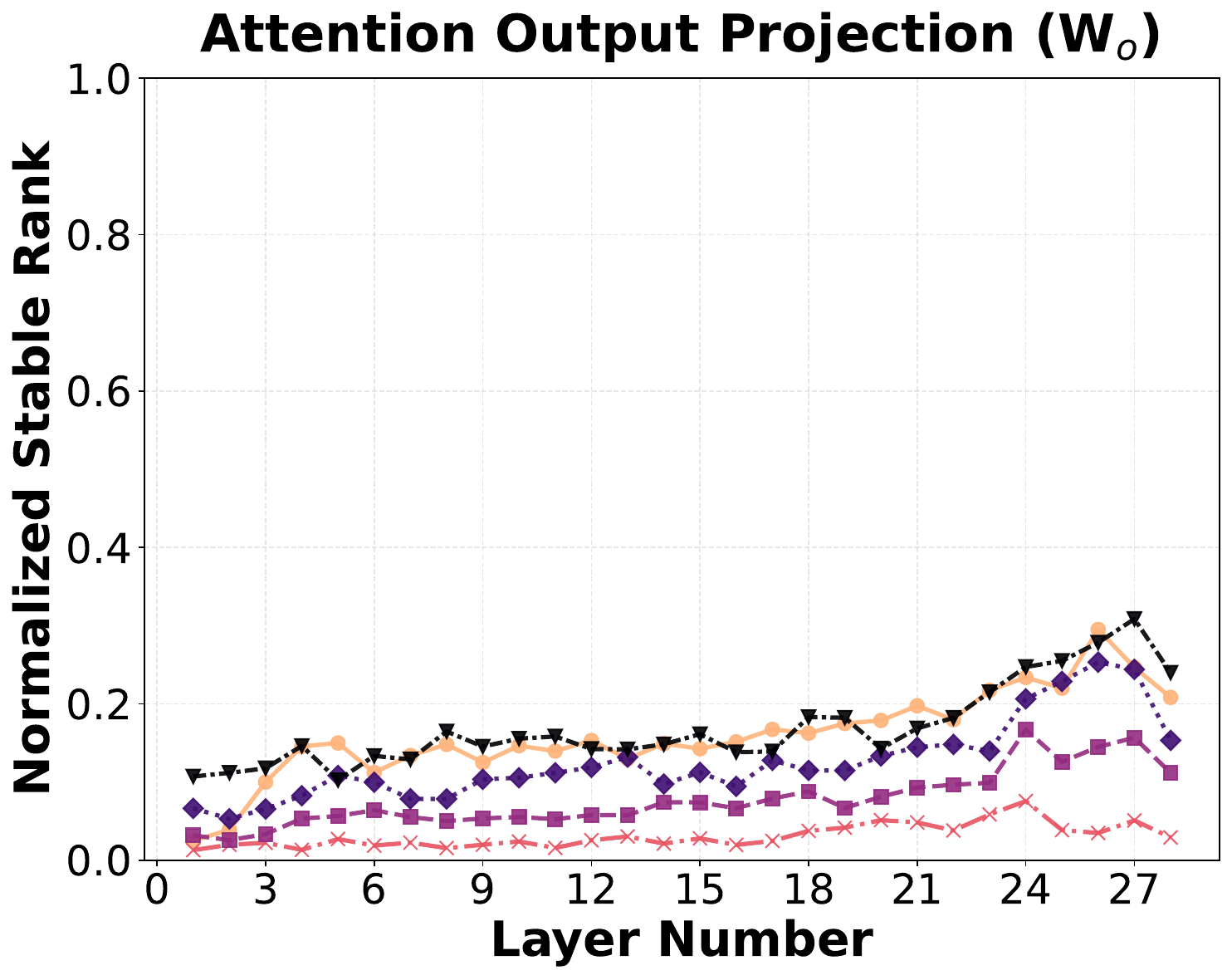}
        \caption{Attn Output Projection}
        \label{fig:stable_rank_vs_layer_comparison_qwen3_budget_ablation:stable_rank_wo}
    \end{subfigure}
    \begin{subfigure}[b]{0.32\textwidth}
    \end{subfigure}
    \caption{Normalized stable rank for weight matrices of Qwen3-0.6B models trained with NuMuon under different ranks/nuclear norm budget. Each subplot shows the stable rank (normalized by the maximum rank) of the converged model across all layers.}
    \label{fig:stable_rank_vs_layer_comparison_qwen3_budget_ablation}
\end{figure*}
}

\newcommand{\figStableRankvsLayerAblationSchedulerQwenLarge}{
\begin{figure*}[t!]
    \centering
    \begin{subfigure}[b]{0.32\textwidth}
        \centering
        \includegraphics[width=\textwidth]{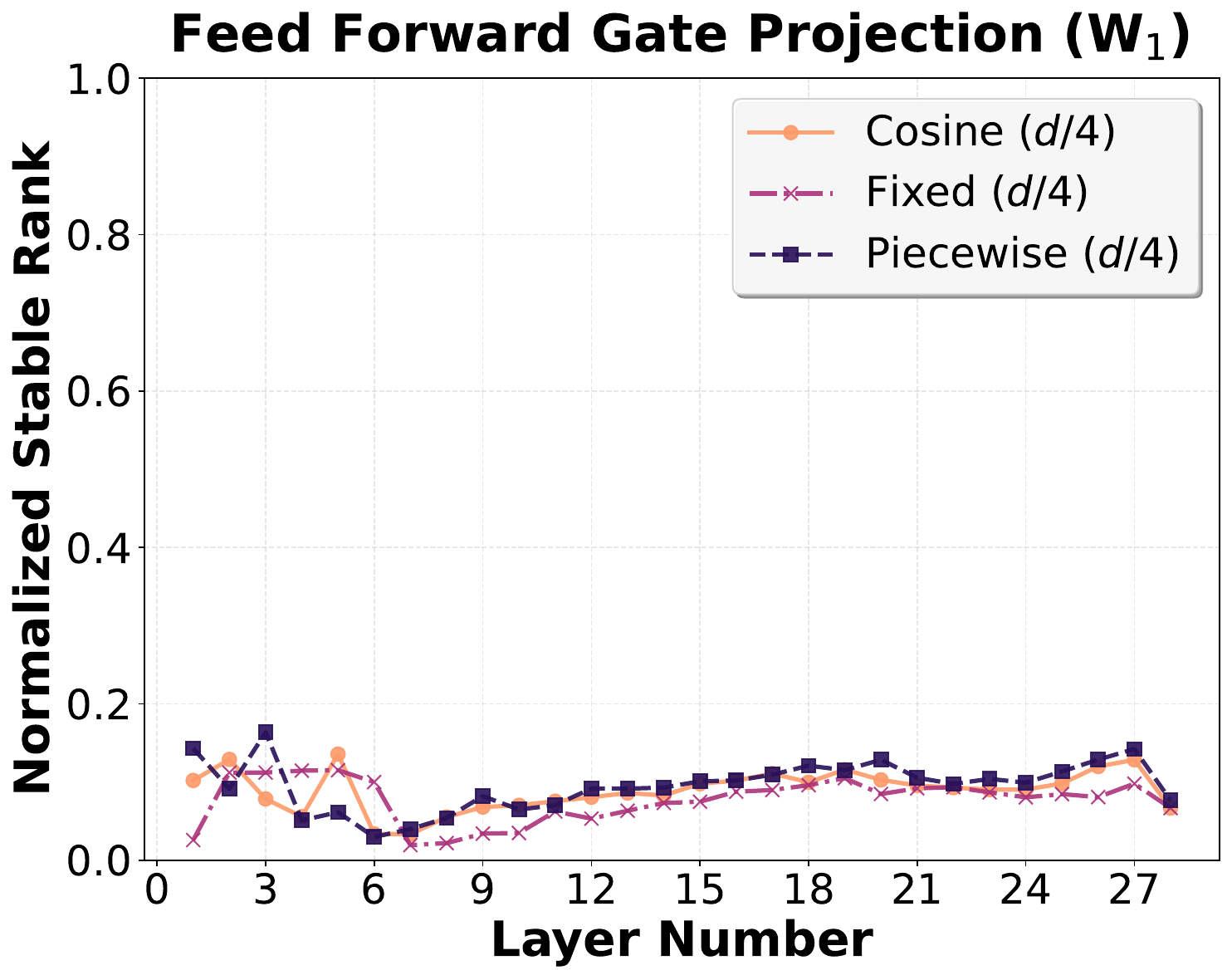}
        \caption{FFN Gate Projection}
        \label{fig:stable_rank_vs_layer_comparison_qwen3_scheduler_ablation:stable_rank_w1}
    \end{subfigure}
    \hspace{-0.5em}
    \begin{subfigure}[b]{0.32\textwidth}
        \centering
        \includegraphics[width=\textwidth]{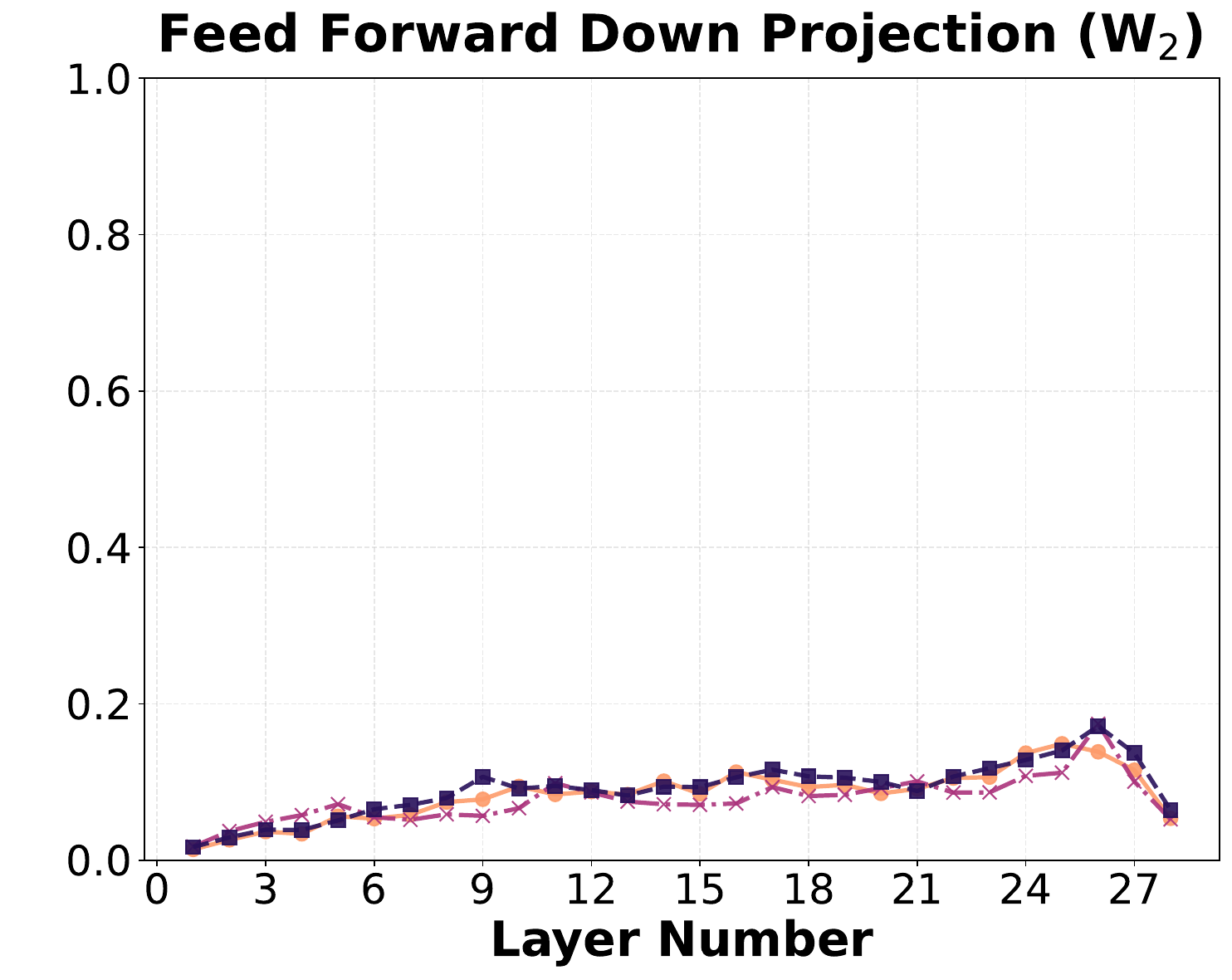}
        \caption{FFN Down Projection}
        \label{fig:stable_rank_vs_layer_comparison_qwen3_scheduler_ablation:stable_rank_w2}
    \end{subfigure}
    \hspace{-0.5em}
    \begin{subfigure}[b]{0.32\textwidth}
        \centering
        \includegraphics[width=\textwidth]{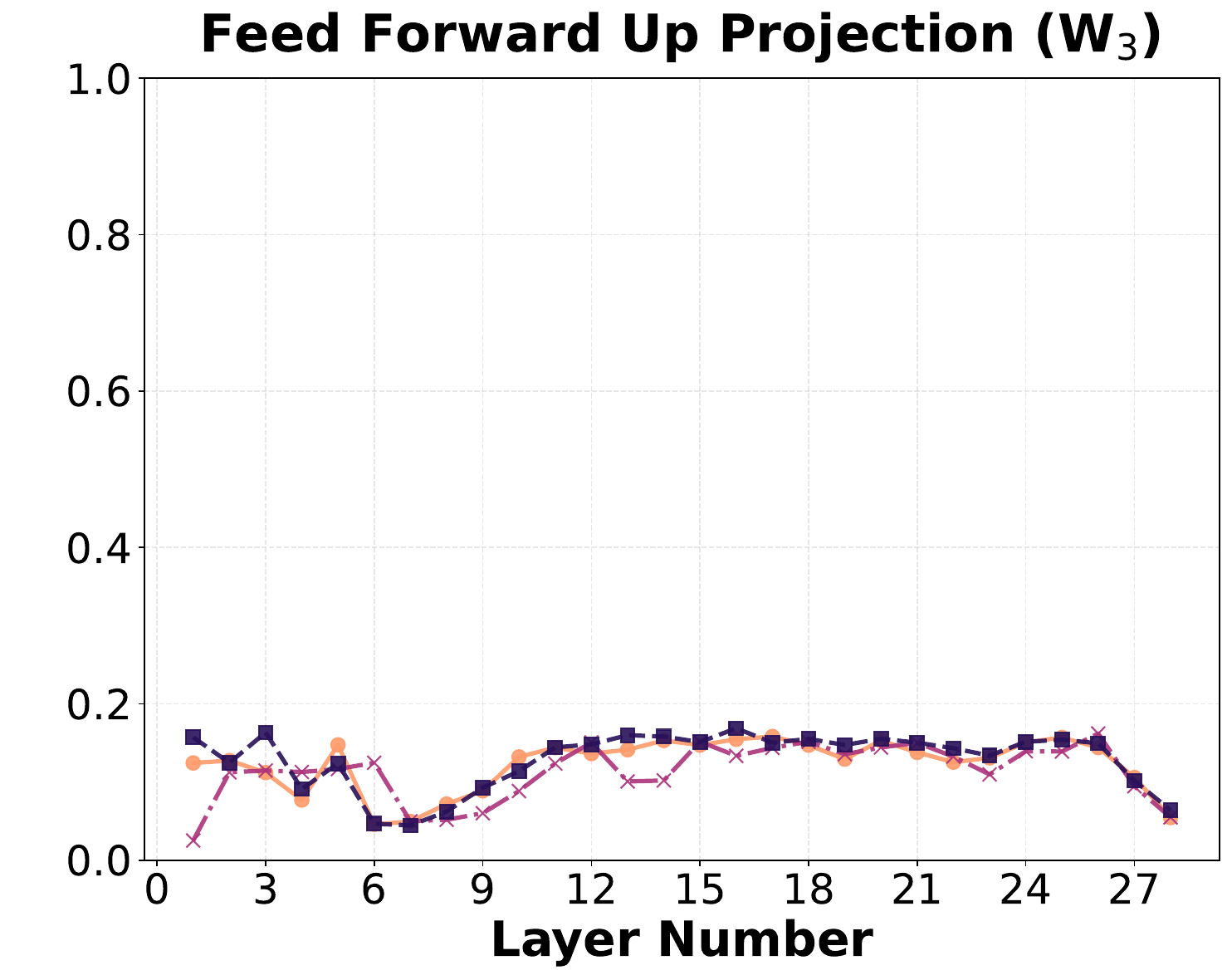}
        \caption{FFN Up Projection}
        \label{fig:stable_rank_vs_layer_comparison_qwen3_scheduler_ablation:stable_rank_w3}
    \end{subfigure} 
    \\\vspace{1em}
    \begin{subfigure}[b]{0.32\textwidth}
        \centering
        \includegraphics[width=\textwidth]{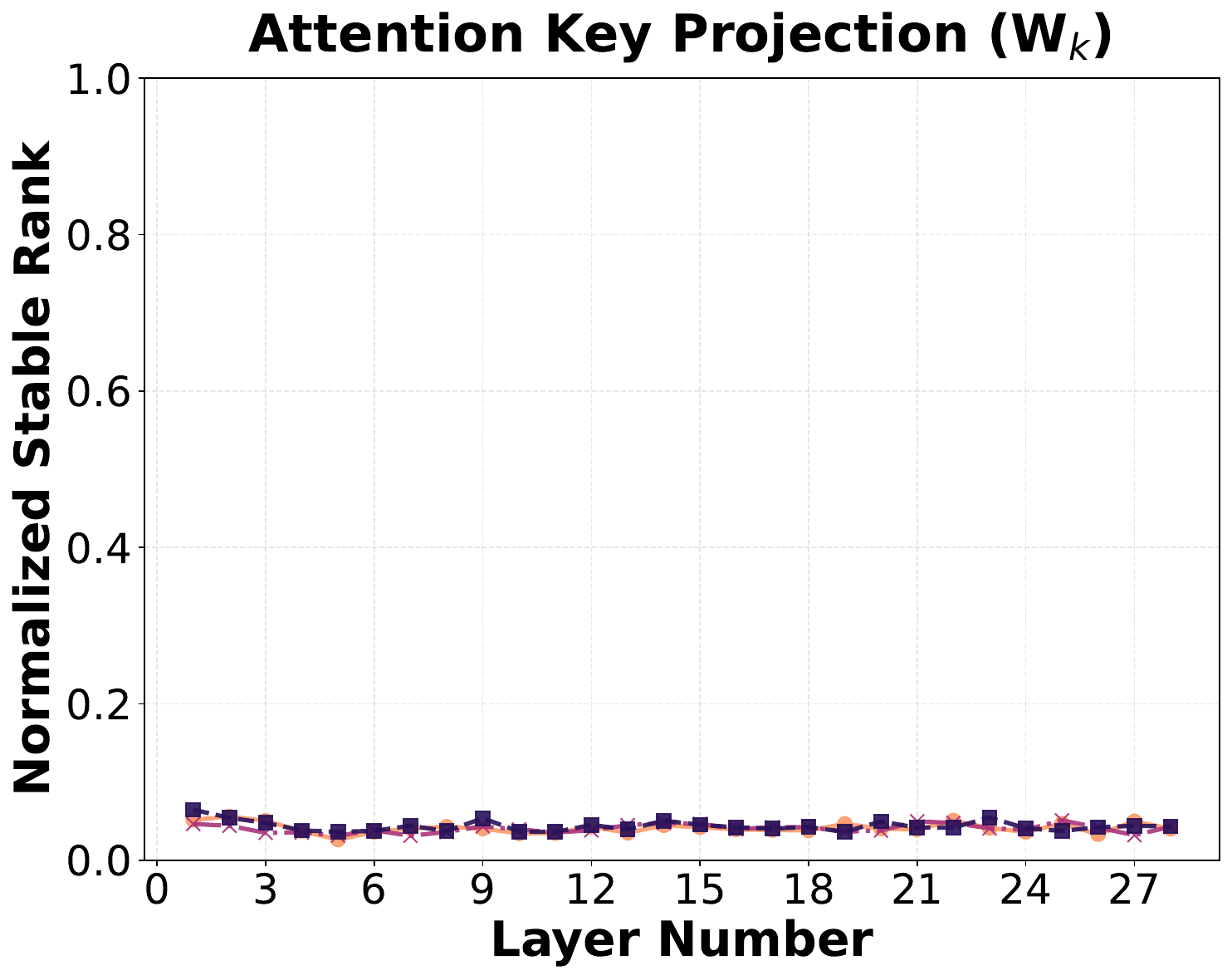}
        \caption{Attn Key Projection}
        \label{fig:stable_rank_vs_layer_comparison_qwen3_scheduler_ablation:stable_rank_wk}
    \end{subfigure}
    \hspace{-0.5em}
    \begin{subfigure}[b]{0.32\textwidth}
        \centering
        \includegraphics[width=\textwidth]{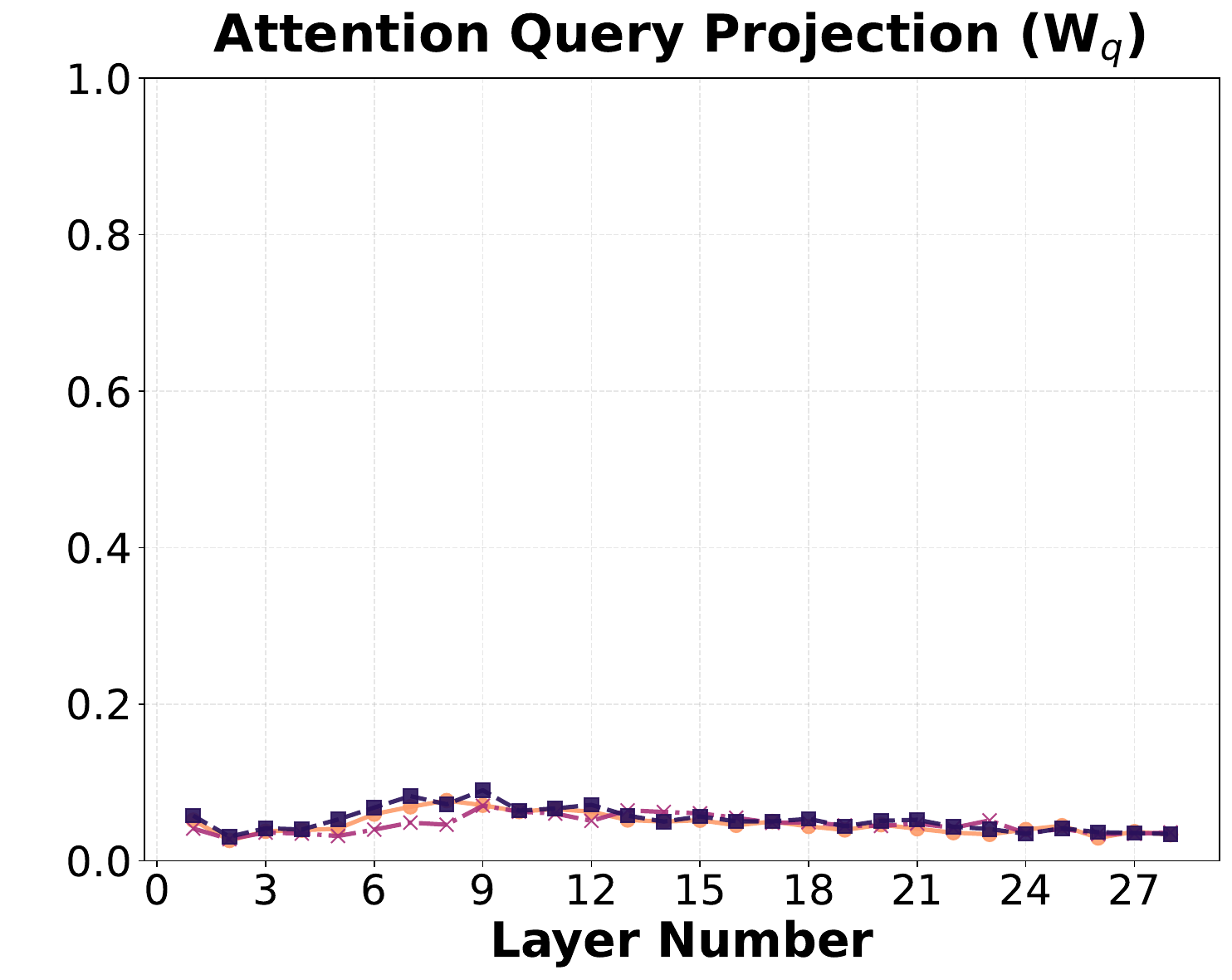}
        \caption{Attn Query Projection}
        \label{fig:stable_rank_vs_layer_comparison_qwen3_scheduler_ablation:stable_rank_wq}
    \end{subfigure}
    \hspace{-0.5em}
    \begin{subfigure}[b]{0.32\textwidth}
        \centering
        \includegraphics[width=\textwidth]{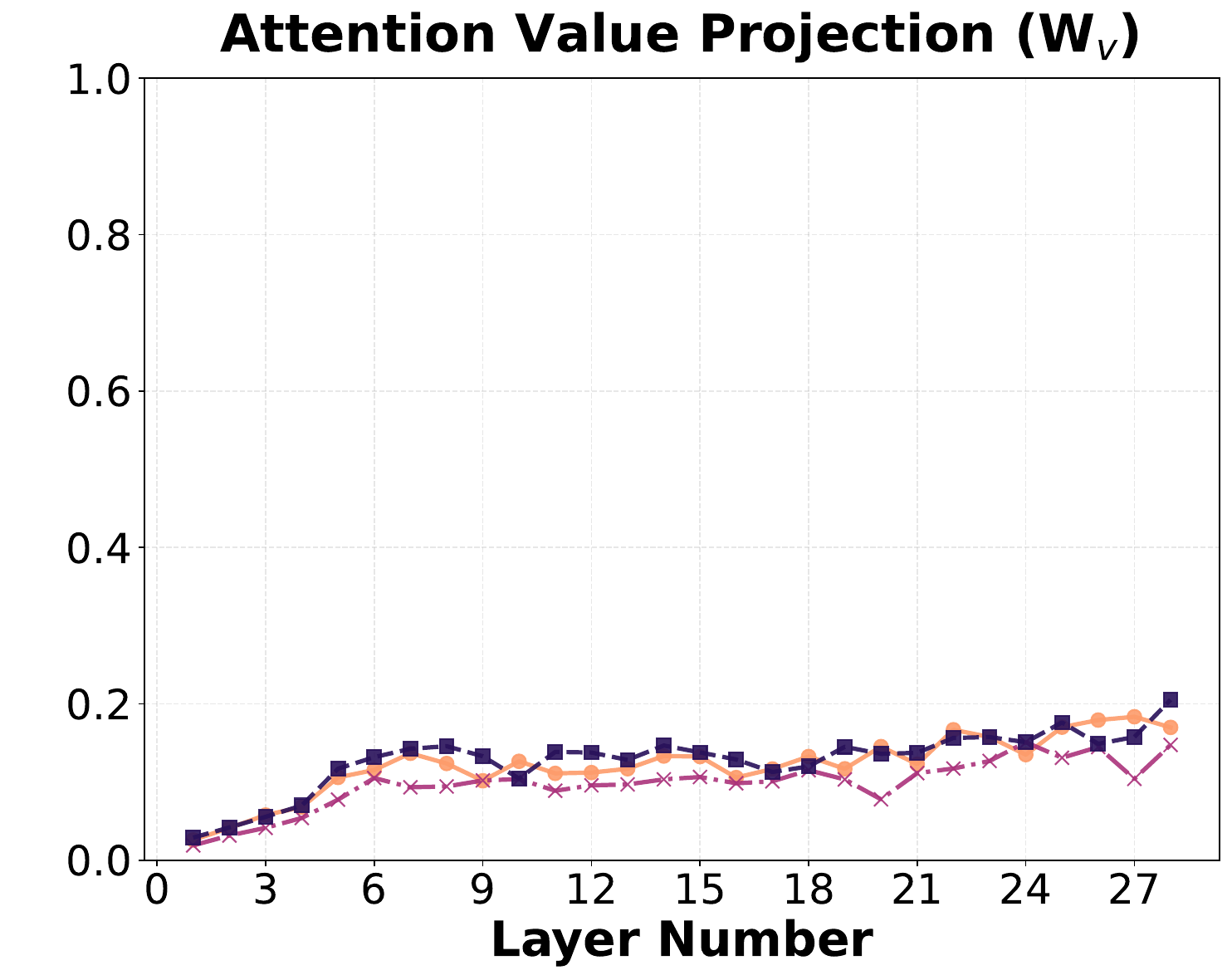}
        \caption{Attn Value Projection}
        \label{fig:stable_rank_vs_layer_comparison_qwen3_scheduler_ablation:stable_rank_wv}
    \end{subfigure}
    \\\vspace{1em}
    \begin{subfigure}[b]{0.32\textwidth}
    \end{subfigure}
    \begin{subfigure}[b]{0.32\textwidth}
        \centering
        \includegraphics[width=\textwidth]{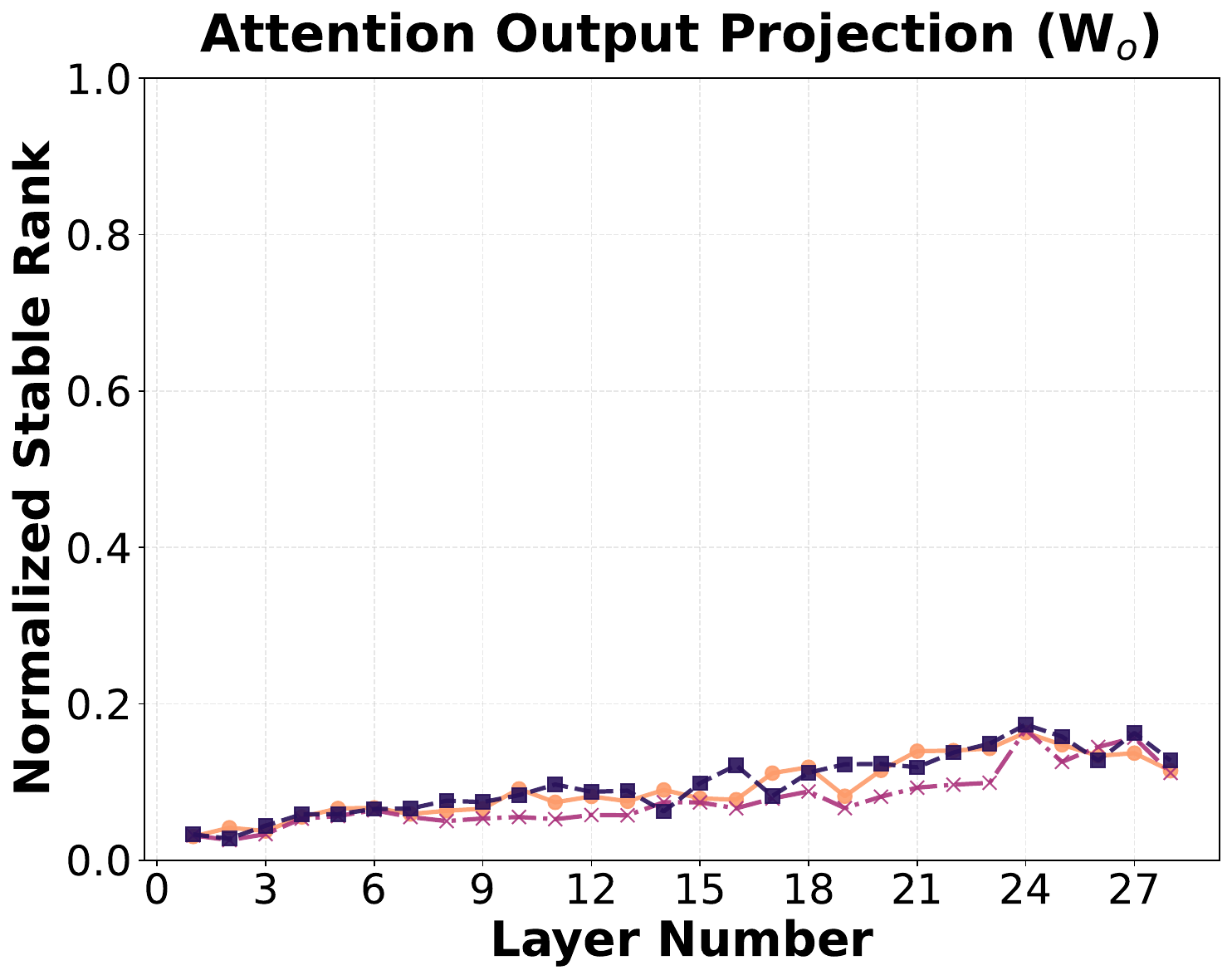}
        \caption{Attn Output Projection}
        \label{fig:stable_rank_vs_layer_comparison_qwen3_scheduler_ablation:stable_rank_wo}
    \end{subfigure}
    \begin{subfigure}[b]{0.32\textwidth}
    \end{subfigure}
    \caption{Normalized stable rank for weight matrices of Qwen3-0.6B models trained with NuMuon under different rank schedulers. Each subplot shows the stable rank (normalized by the maximum rank) of the converged model across all layers.}
    \label{fig:stable_rank_vs_layer_comparison_qwen3_scheduler_ablation}
\end{figure*}
}

\newcommand{\figRankFraction}{
\begin{figure*}[t!]
    \centering
    \includegraphics[width=0.5\textwidth]{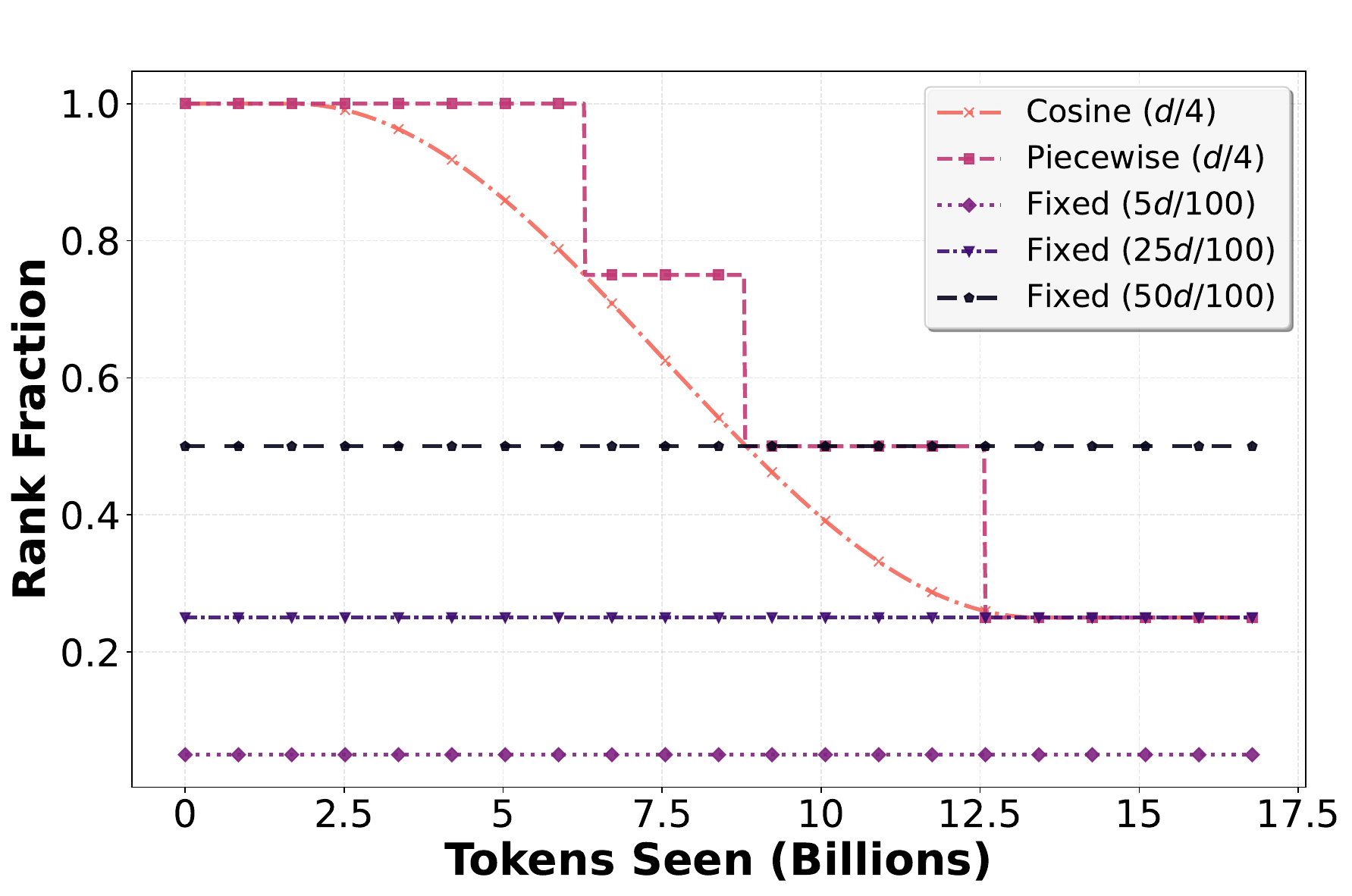}
    \caption{Different rank schedulers used for training Qwen3-0.6B models with NuMuon. Please see~\Cref{sec:method:practical} for more details.}
    \label{fig:rank_schedulers}
\end{figure*}
}

\newcommand{\figScatterAllMethods}{
\begin{figure}[t!]
    \centering
    \includegraphics[width=0.55\textwidth]{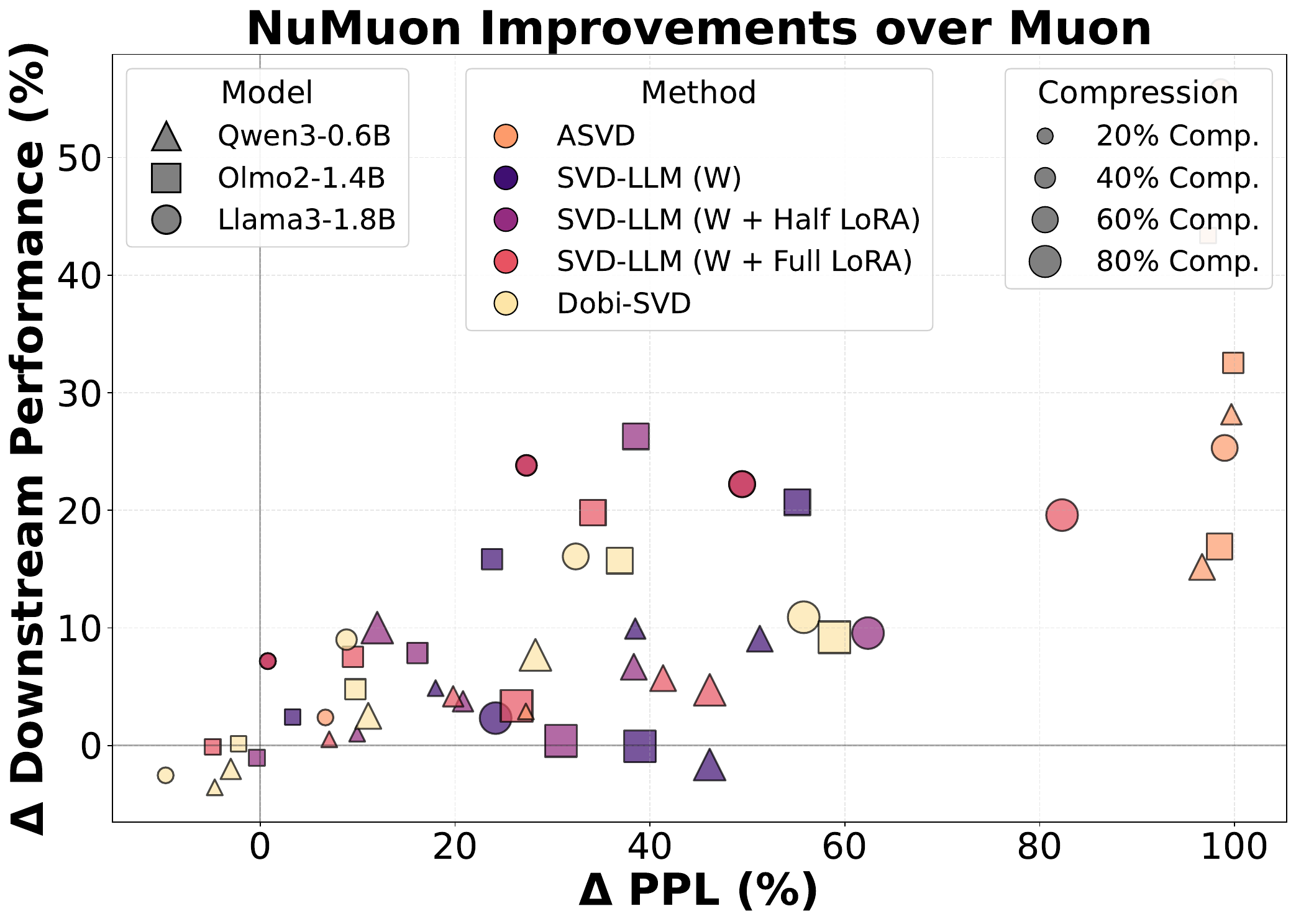}
    \caption{NuMuon's relative performance against Muon under SoTA LLM compression methods. The scatter plot displays the performance improvement on downstream tasks as well as validation perplexity improvements for all compression methods at all rates. The positive quadrant shows performance improvement.}
    \label{fig:compression_all}
\end{figure}
}

\newcommand{\figLossCurvesLarge}{
\begin{figure*}[t!]
    \centering
    \begin{subfigure}[b]{0.4\textwidth}
        \centering
        \includegraphics[width=\textwidth]{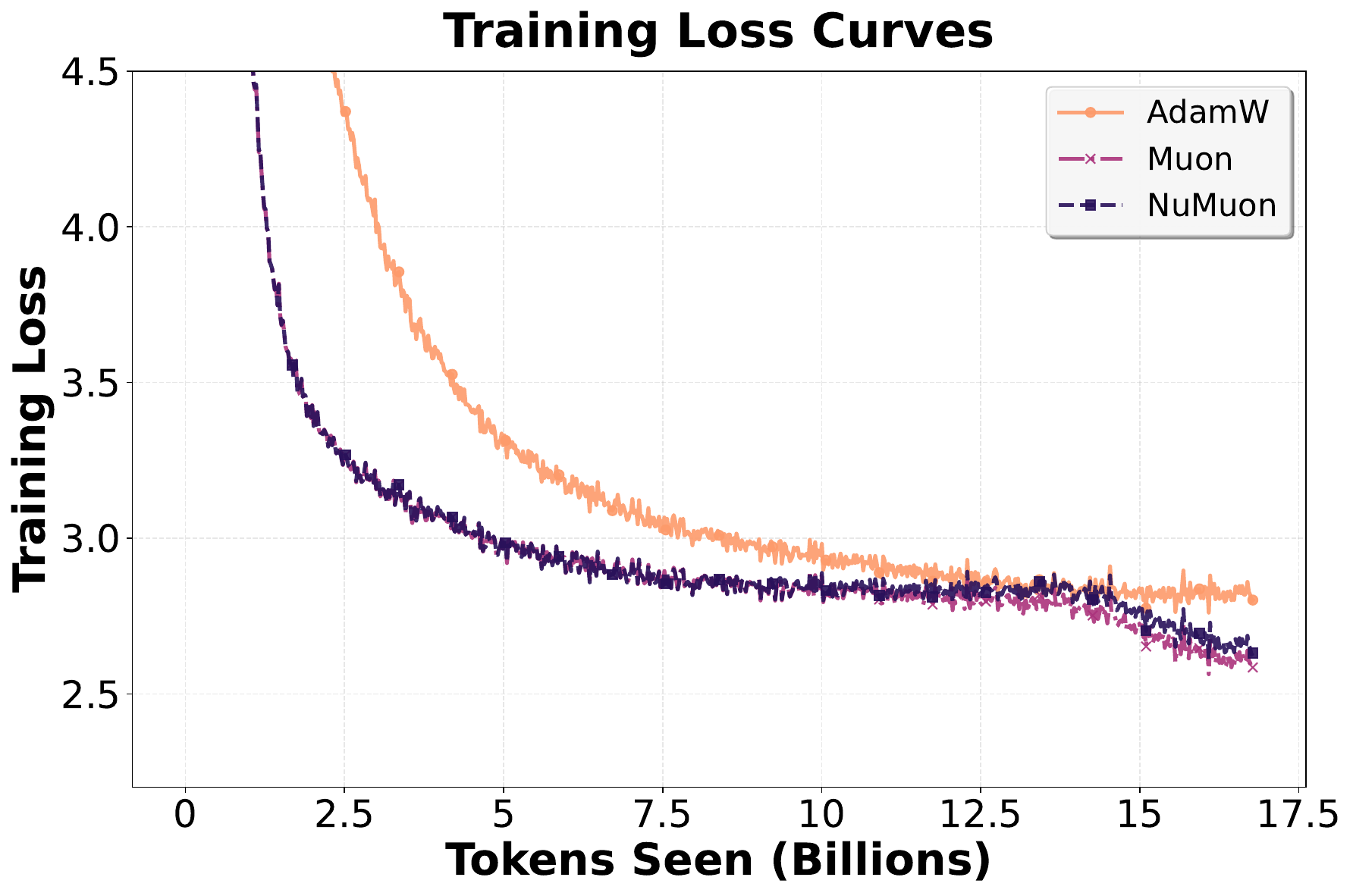}
        \caption{Qwen3-0.6B}
        \label{fig:loss_curves:qwen3}
    \end{subfigure}
    \hspace{-0.5em}
    \begin{subfigure}[b]{0.4\textwidth}
        \centering
        \includegraphics[width=\textwidth]{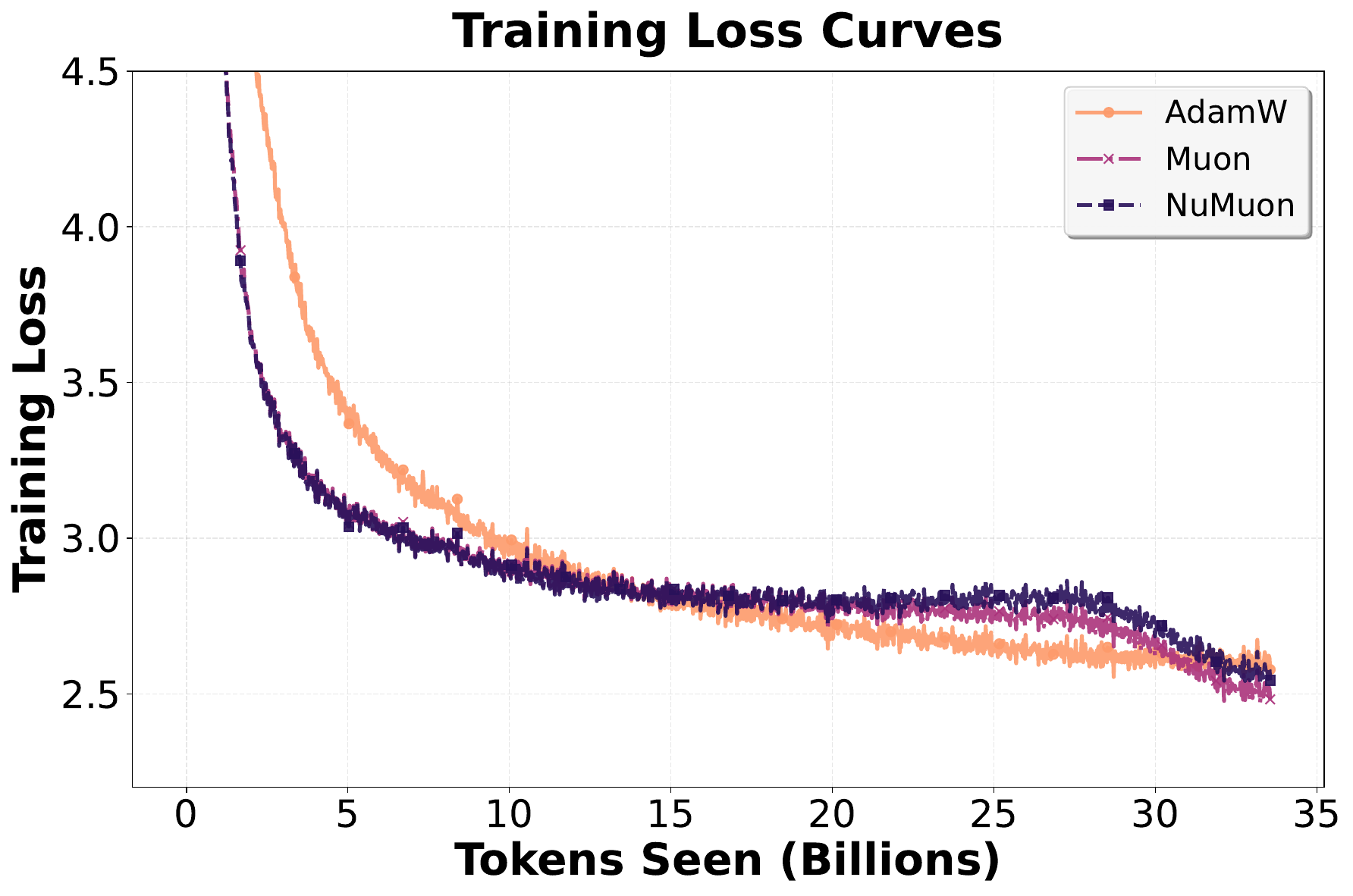}
        \caption{Olmo2-1.4B}
        \label{fig:loss_curves:olmo2}
    \end{subfigure}
    \hspace{-0.5em}
    \begin{center}
        \begin{subfigure}[b]{0.4\textwidth}
            \centering
            \includegraphics[width=\textwidth]{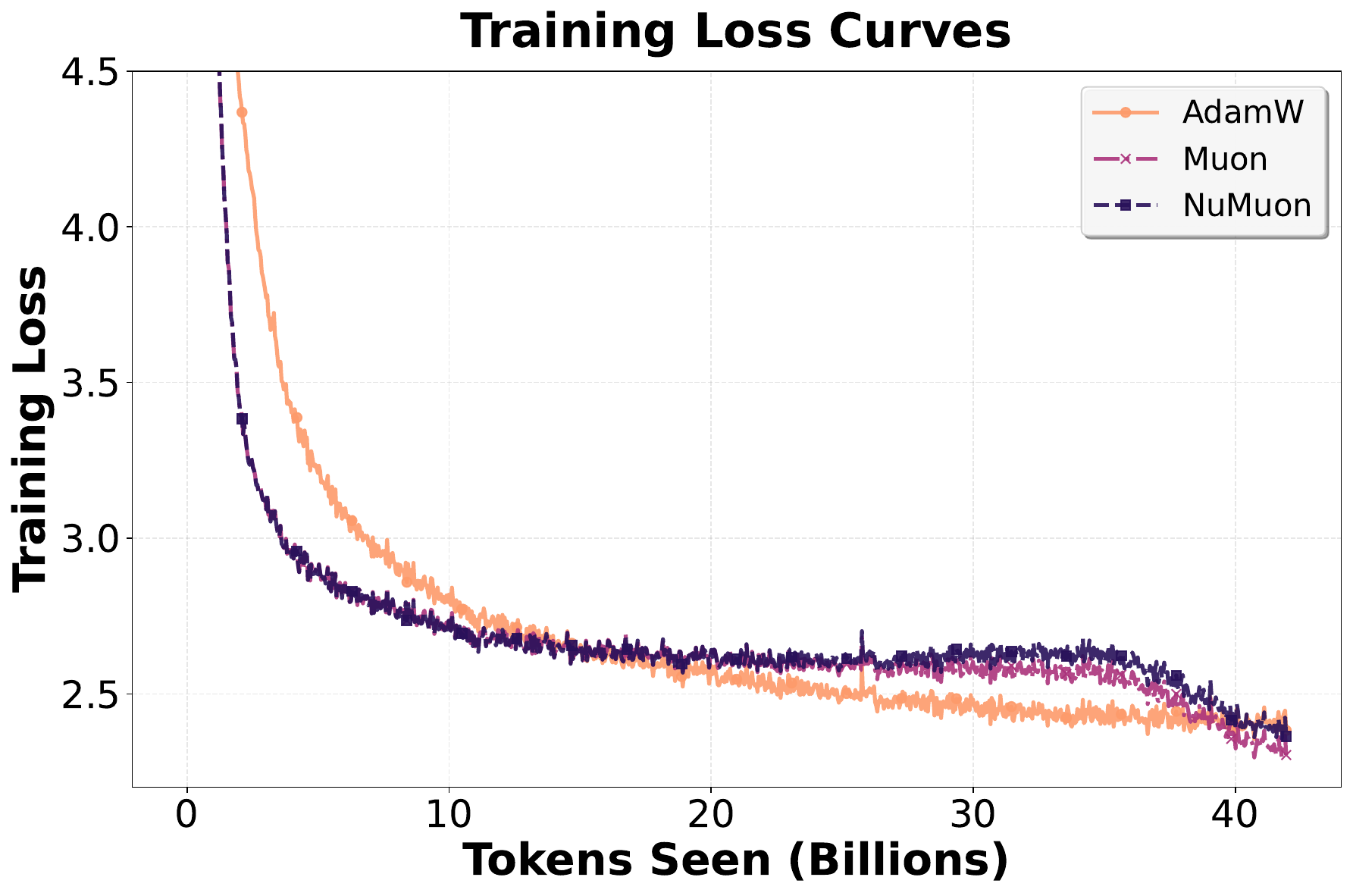}
            \caption{Llama3-1.8B}
            \label{fig:loss_curves:llama3}
        \end{subfigure}
    \end{center}
    \caption{Training loss convergence for language models of size 0.6B-1.8B parameter count. For each model family, we use AdamW, Muon, and NuMuon to train the model. For more details, please see~\Cref{app:extended_results:details}.}
    \label{fig:loss_curves}
\end{figure*}
}

\newcommand{\figLRSchedulesLarge}{
\begin{figure*}[t!]
    \centering
    \begin{subfigure}[b]{0.4\textwidth}
        \centering
        \includegraphics[width=\textwidth]{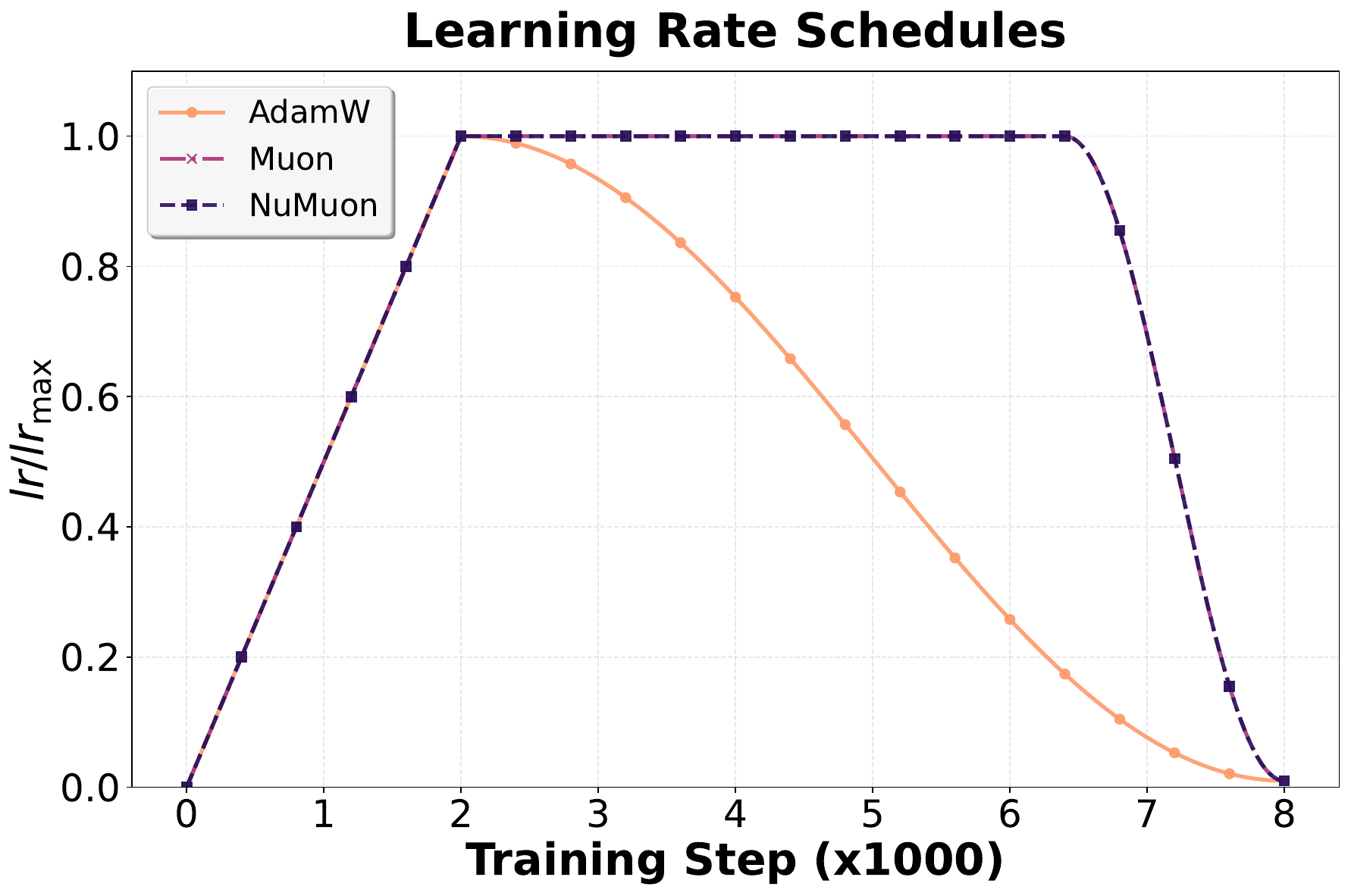}
        \caption{Qwen3-0.6B}
        \label{fig:lr_scheduler:qwen3}
    \end{subfigure}
    \hspace{-0.5em}
    \begin{subfigure}[b]{0.4\textwidth}
        \centering
        \includegraphics[width=\textwidth]{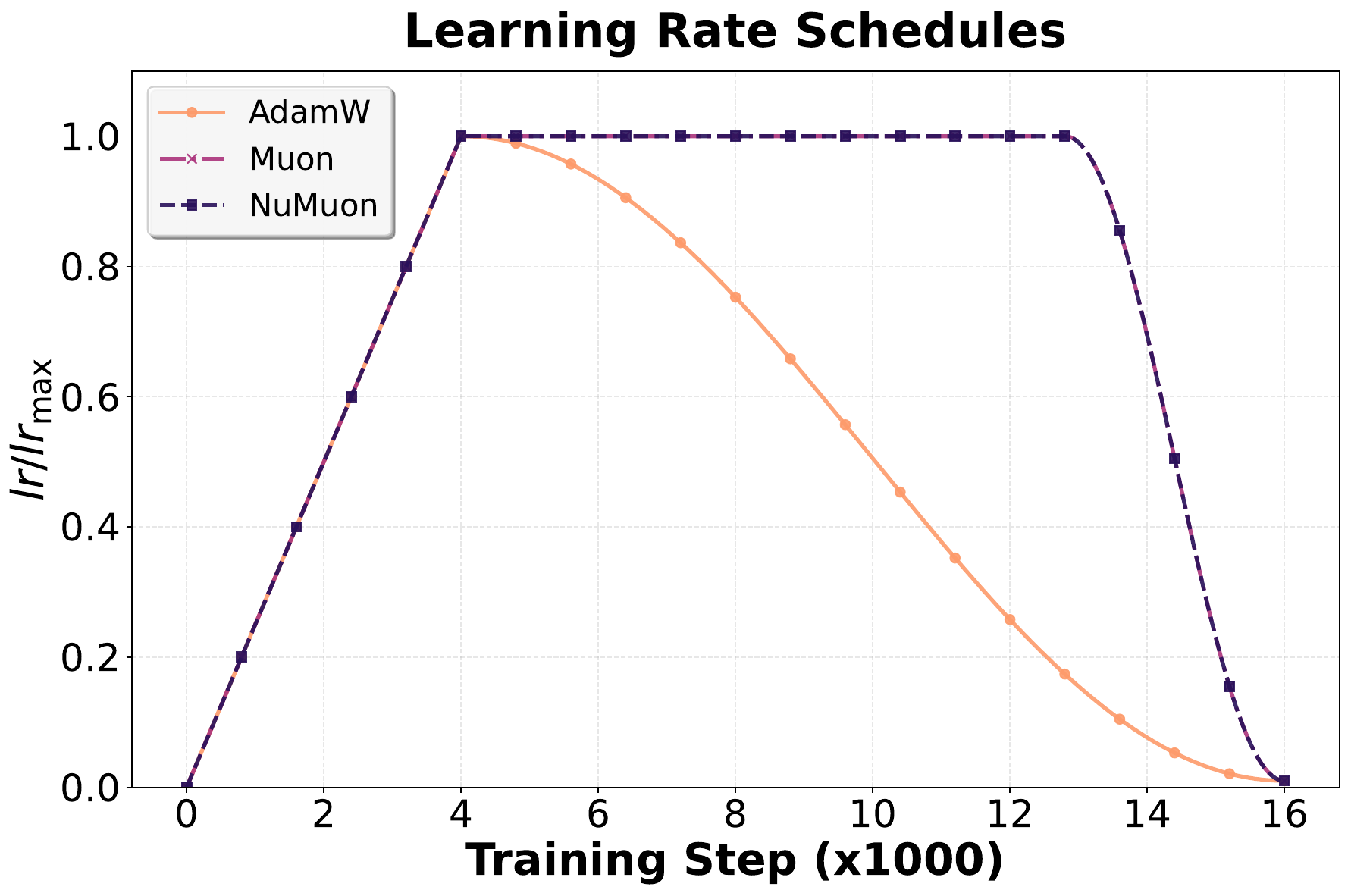}
        \caption{Olmo2-1.4B}
        \label{fig:lr_scheduler:olmo2}
    \end{subfigure}
    \\\vspace{1em}
    \begin{center}
        \begin{subfigure}[b]{0.4\textwidth}
            \centering
            \includegraphics[width=\textwidth]{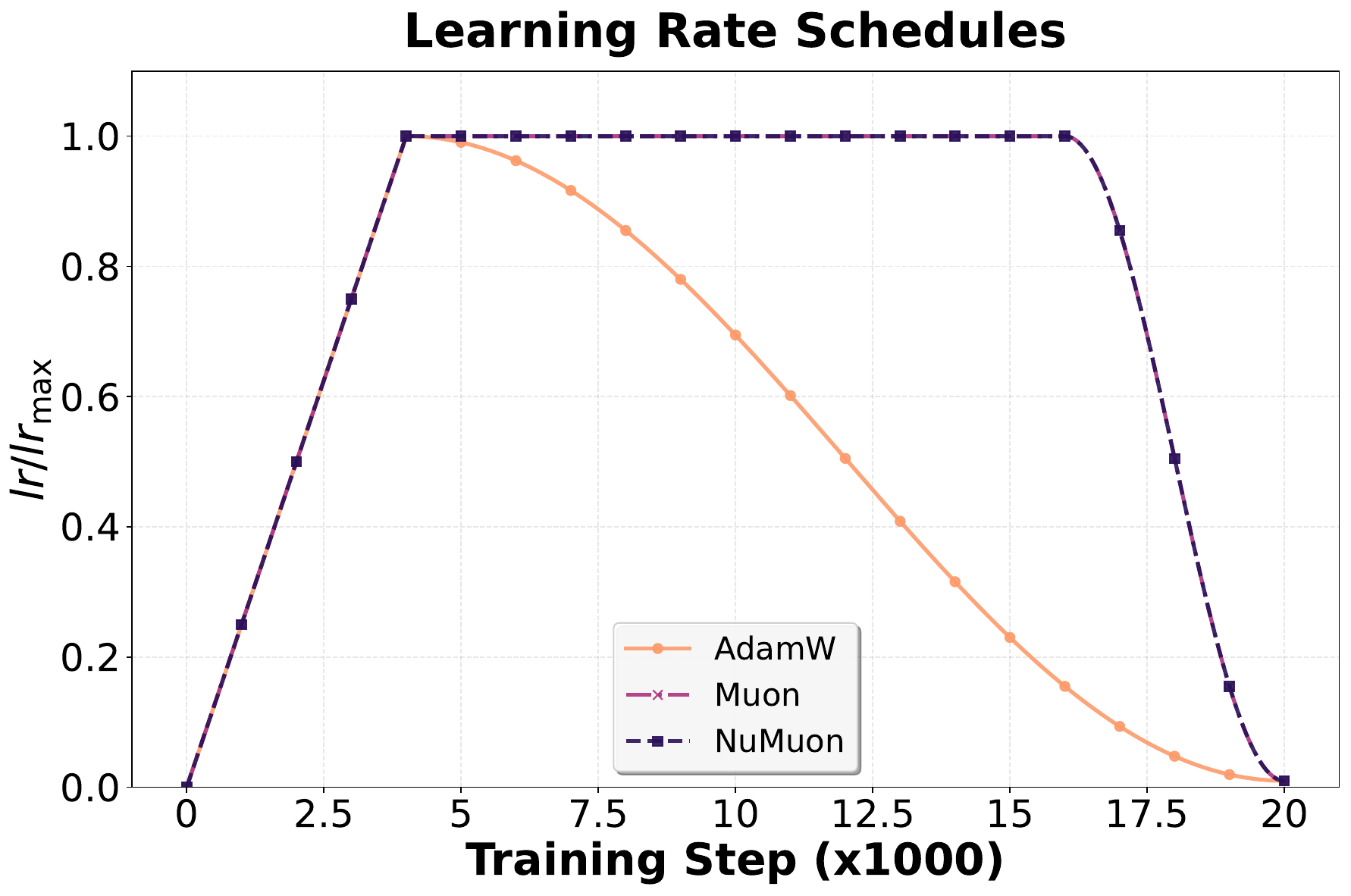}
            \caption{Llama3-1.8B}
            \label{fig:lr_scheduler:llama3}
        \end{subfigure}
    \end{center}
    \caption{Normalized learning rate scheduler for training language models of size 0.6B-1.8B parameter count. For each model family, we use AdamW, Muon, and NuMuon to train the model. Following~\citet{semenov2025benchmarking} extensive benchmarks, we use a WSD~\citep{hu2024minicpm} schedule for training Muon and NuMuon, while we use a cosine decay for AdamW. For more details, please see~\Cref{app:extended_results:details}.}
    \label{fig:lr_scheduler}
\end{figure*}
}

\newcommand{\figLossVsRankAblation}{
    \centering
    \includegraphics[width=\linewidth]{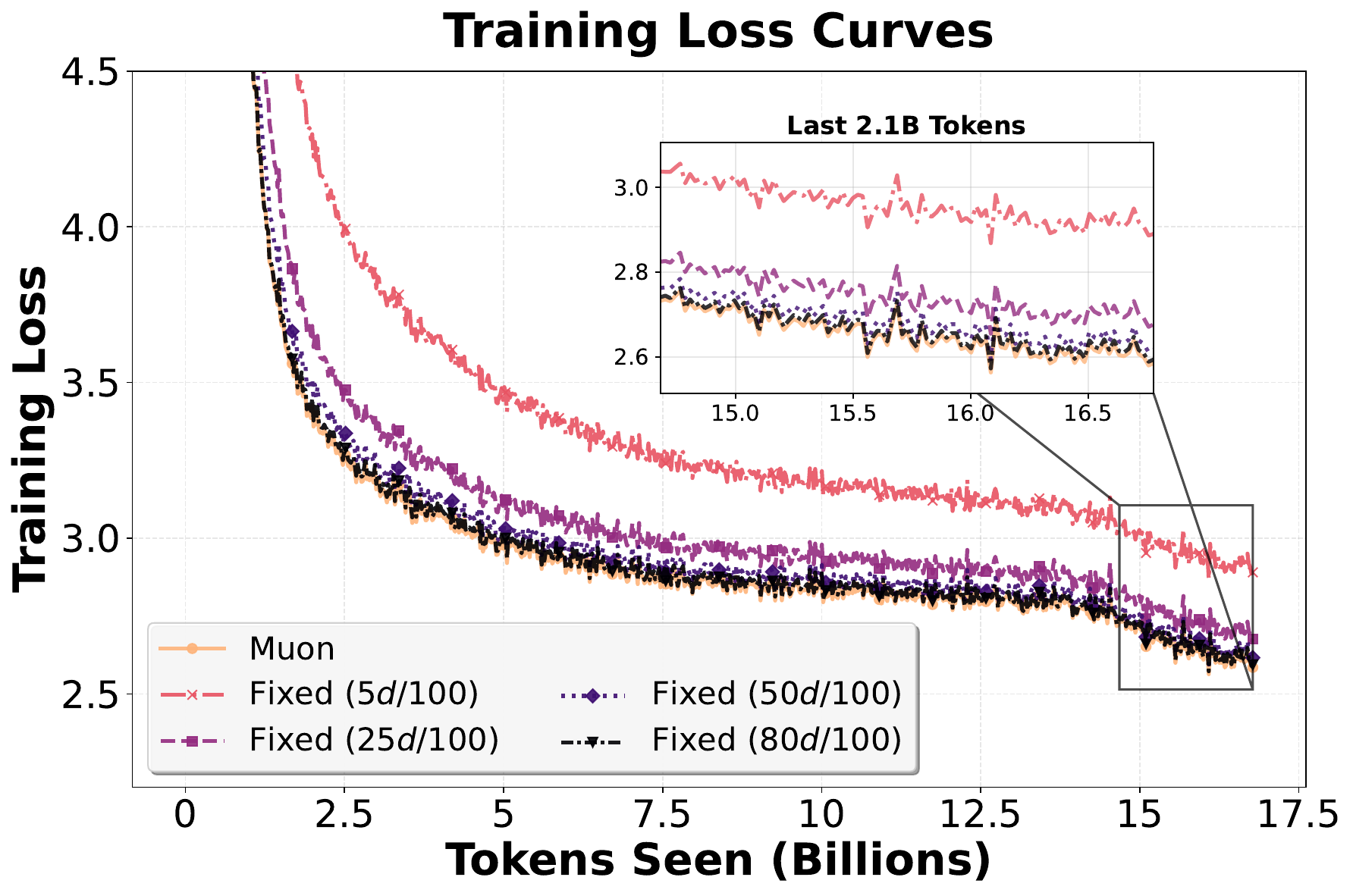}
    \captionof{figure}{Convergence behavior of various rank budgets ($\tau$ in our nuclear norm formulation) used for training Qwen3-0.6B models with NuMuon. Muon training loss is also shown for comparison. As seen, NuMuon gets closer to Muon behavior as we increase the rank budget. However, this rank increase provides diminishing returns while hurting the final model's compressibility.}
    \label{fig:loss_vs_rank_budget}
}

\newcommand{\figLossVsSchedAblation}{
    \centering
    \includegraphics[width=\textwidth]{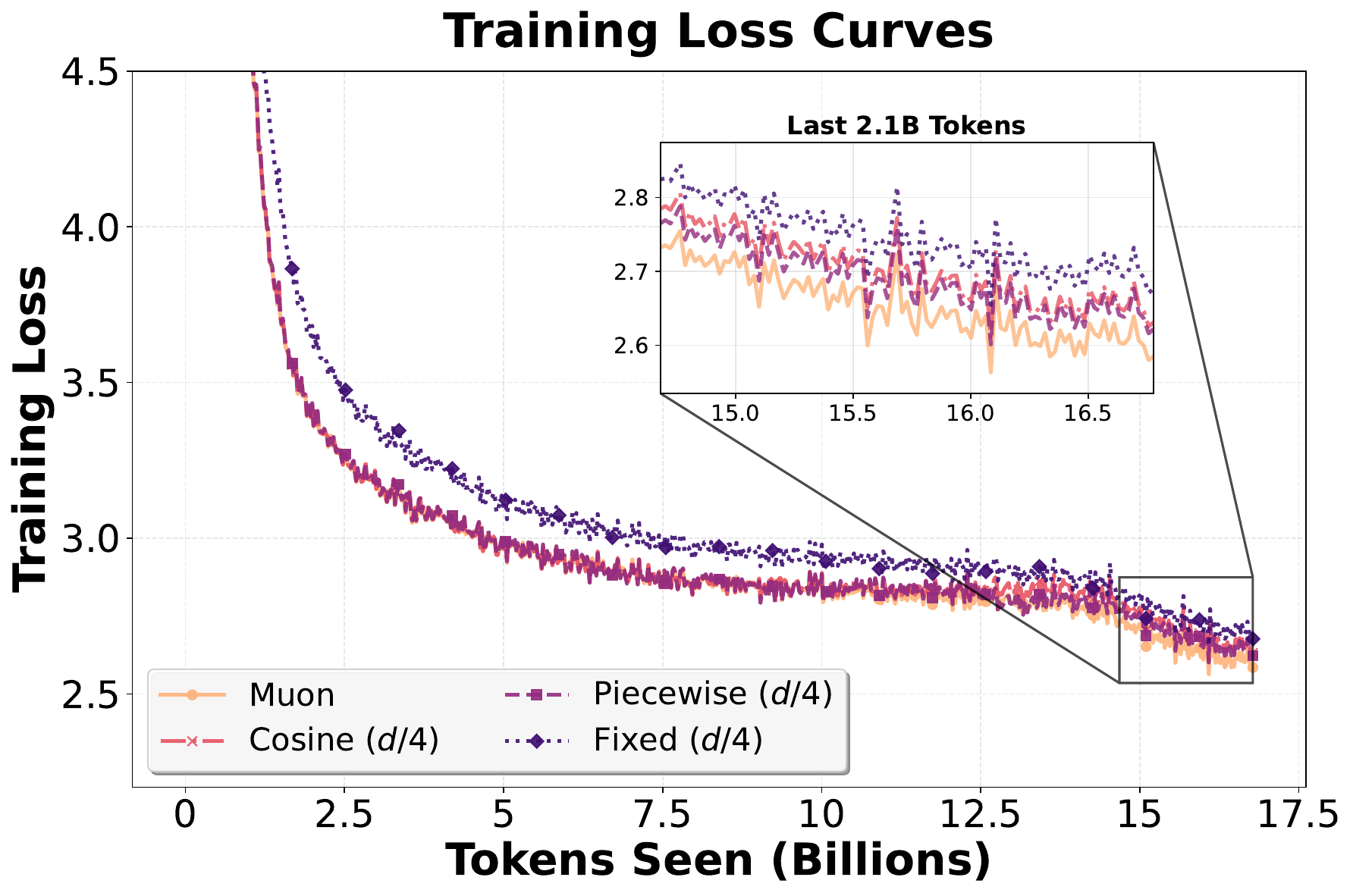}
    \captionof{figure}{Convergence behavior of rank schedulers used for training Qwen3-0.6B.}
    \label{fig:loss_vs_rank_sched}
}

\newcommand{\figGrassmannQwenSmall}{
    \centering
    \includegraphics[width=\textwidth]{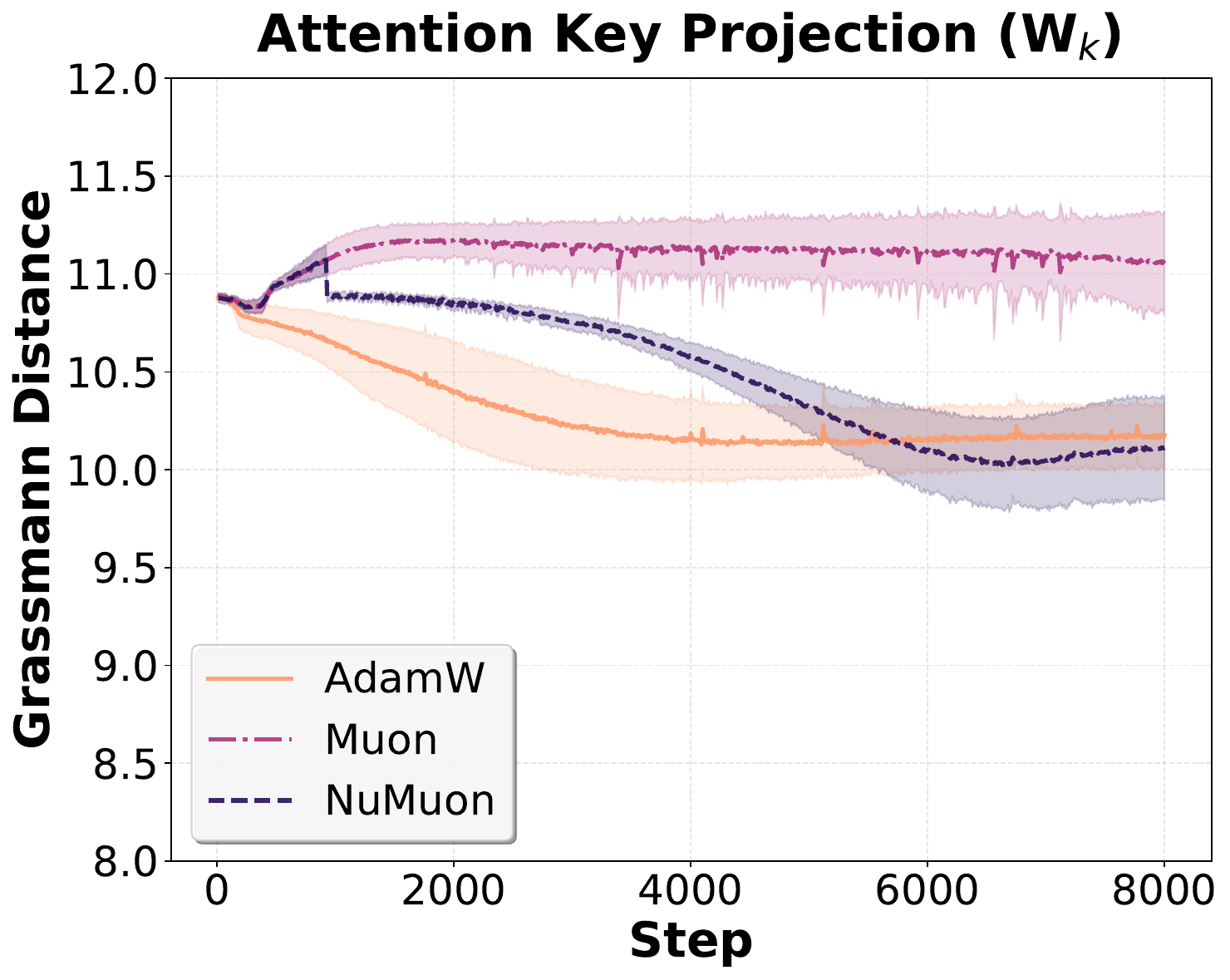}
    \captionof{figure}{$d_G(\boldsymbol{U_W},\boldsymbol{U_{\Delta W}})$. For other weight matrices, please see~\Cref{fig:grassman_distance}.}
    \label{fig:grassman_distance_wk}
}

\newcommand{\figGrassmannQwenLarge}{
\begin{figure*}[t!]
    \centering
    \begin{subfigure}[b]{0.32\textwidth}
        \centering
        \includegraphics[width=\textwidth]{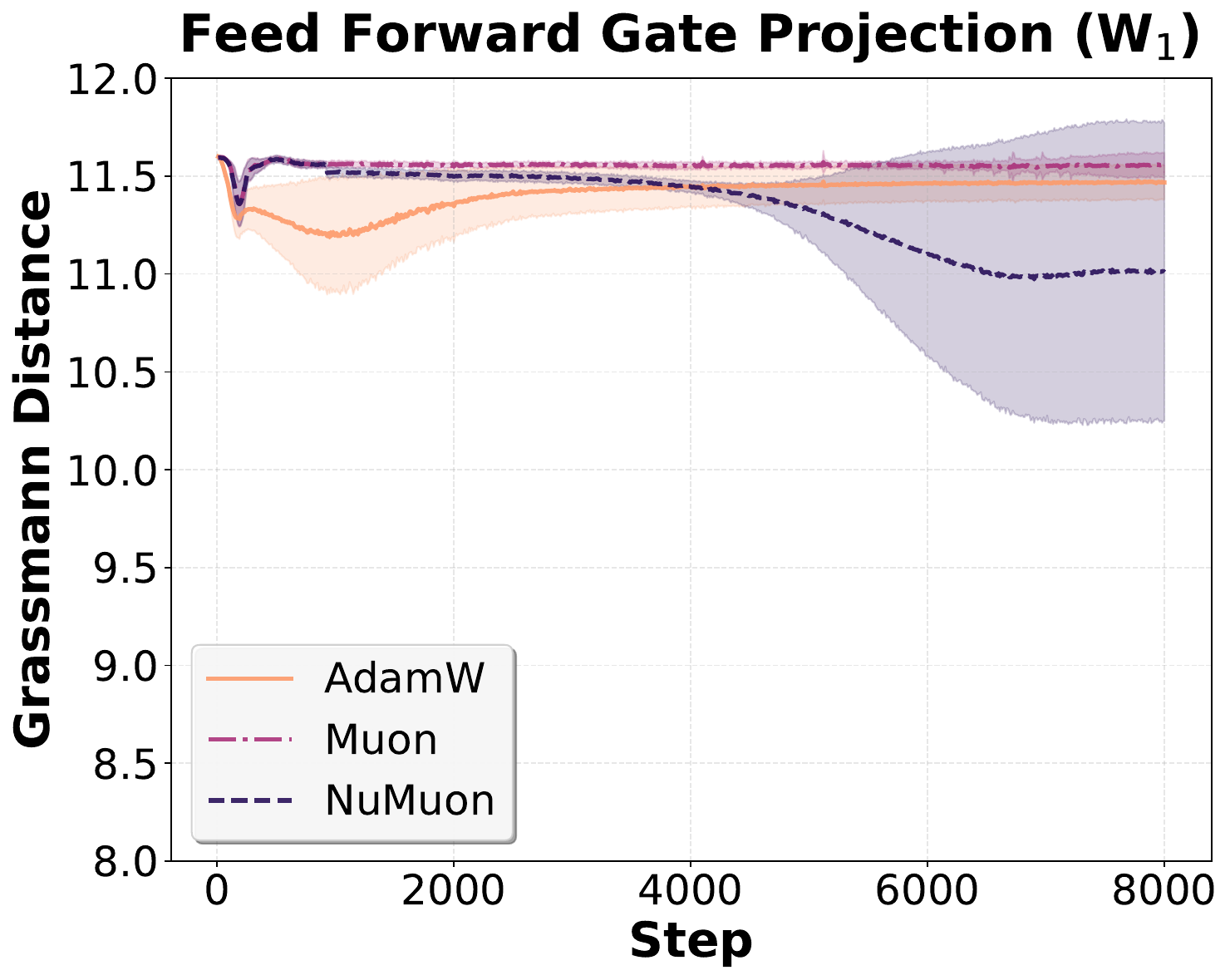}
        \caption{FFN Gate Projection}
        \label{fig:grassman_distance:w1}
    \end{subfigure}
    \hspace{-0.5em}
    \begin{subfigure}[b]{0.32\textwidth}
        \centering
        \includegraphics[width=\textwidth]{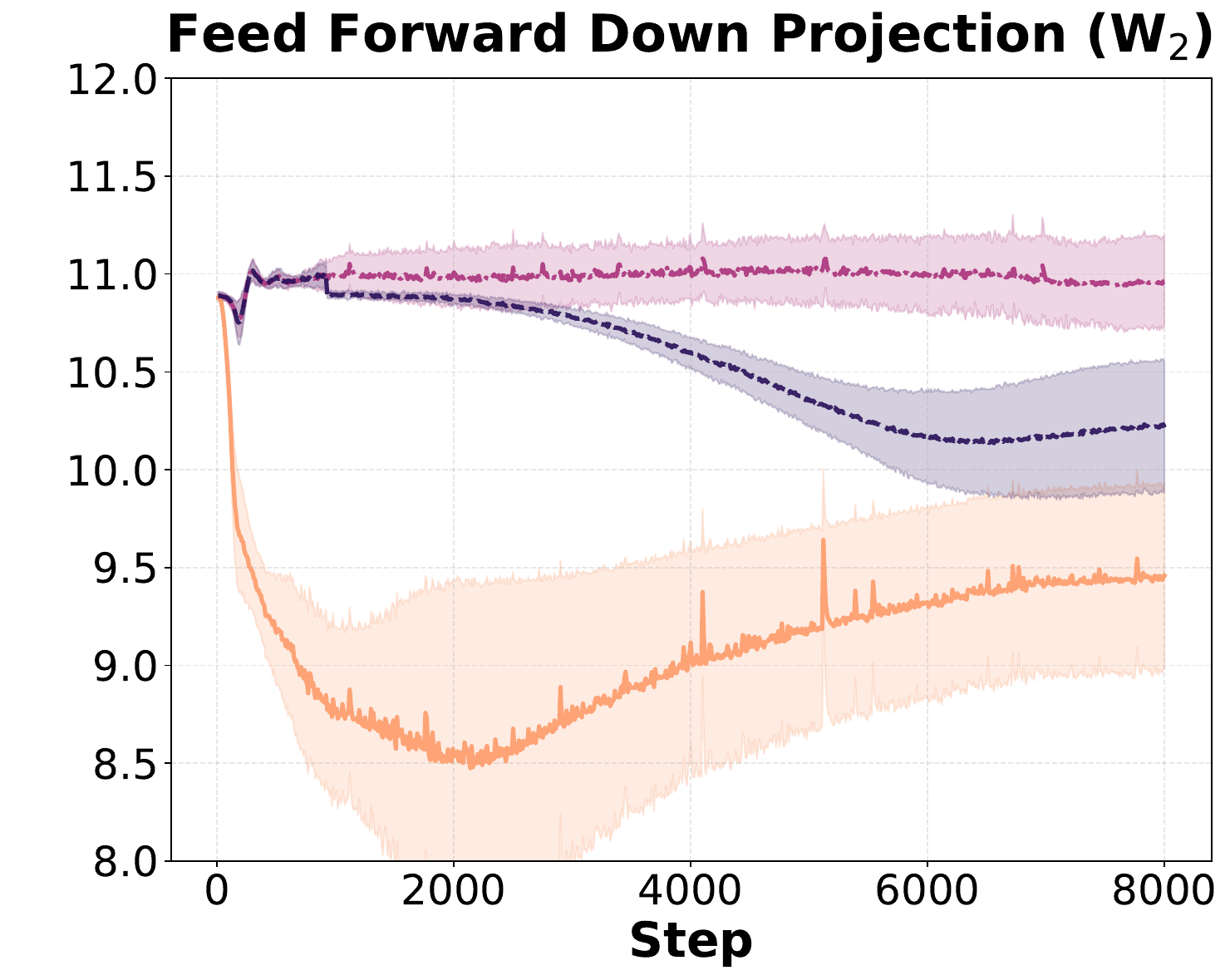}
        \caption{FFN Down Projection}
        \label{fig:grassman_distance:w2}
    \end{subfigure}
    \hspace{-0.5em}
    \begin{subfigure}[b]{0.32\textwidth}
        \centering
        \includegraphics[width=\textwidth]{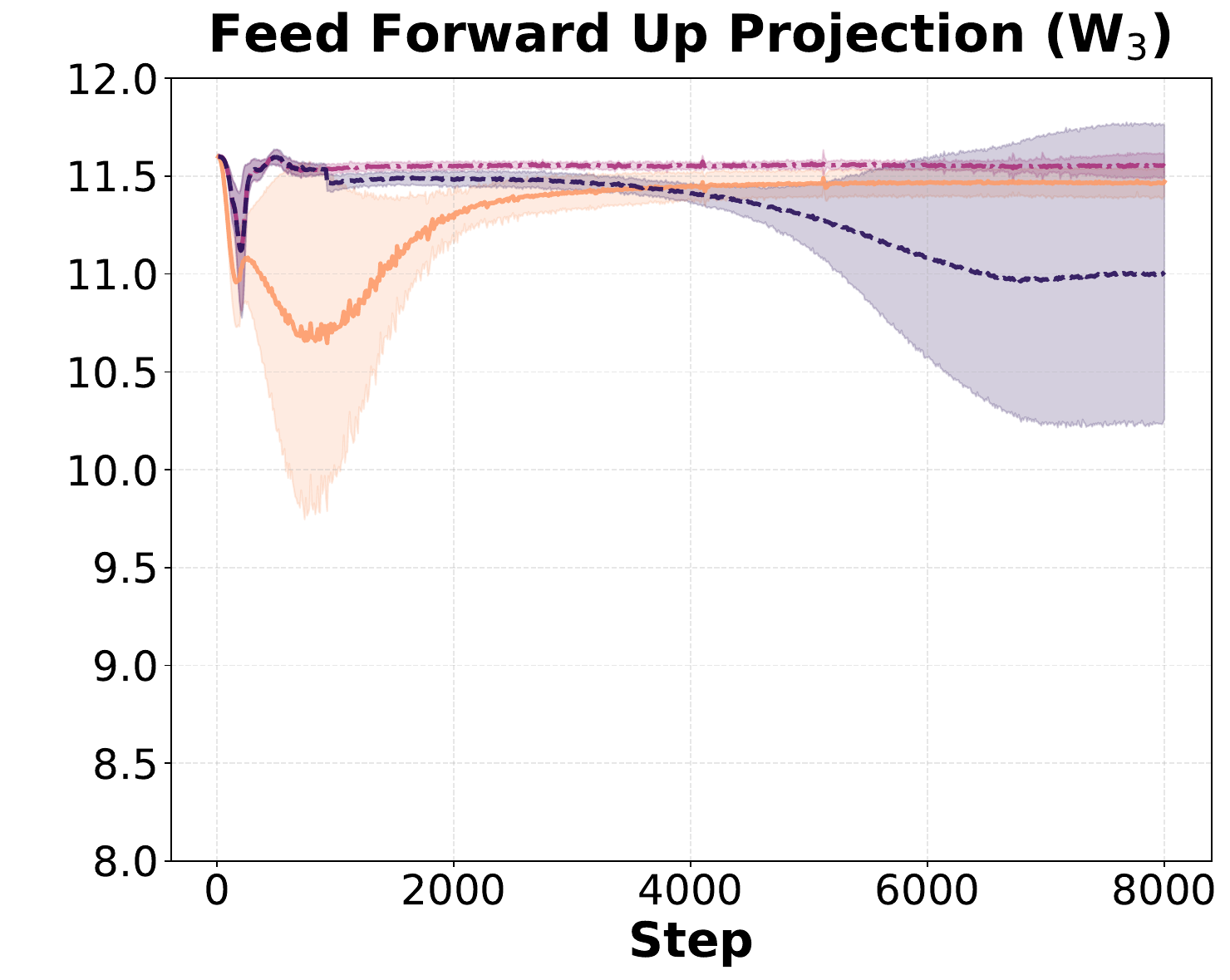}
        \caption{FFN Up Projection}
        \label{fig:grassman_distance:w3}
    \end{subfigure} 
    \\\vspace{1em}
    \begin{subfigure}[b]{0.32\textwidth}
        \centering
        \includegraphics[width=\textwidth]{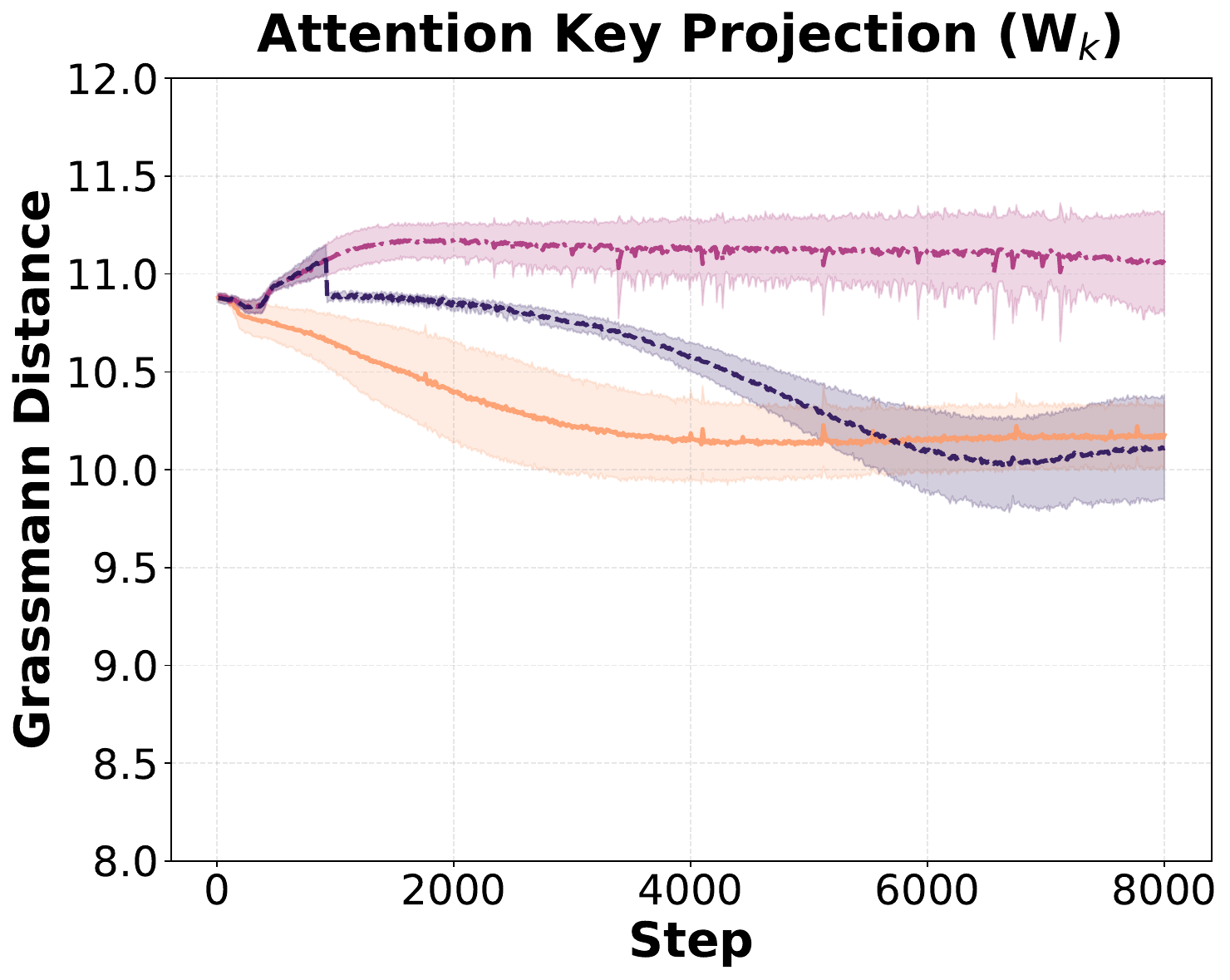}
        \caption{Attn Key Projection}
        \label{fig:grassman_distance:wk}
    \end{subfigure}
    \hspace{-0.5em}
    \begin{subfigure}[b]{0.32\textwidth}
        \centering
        \includegraphics[width=\textwidth]{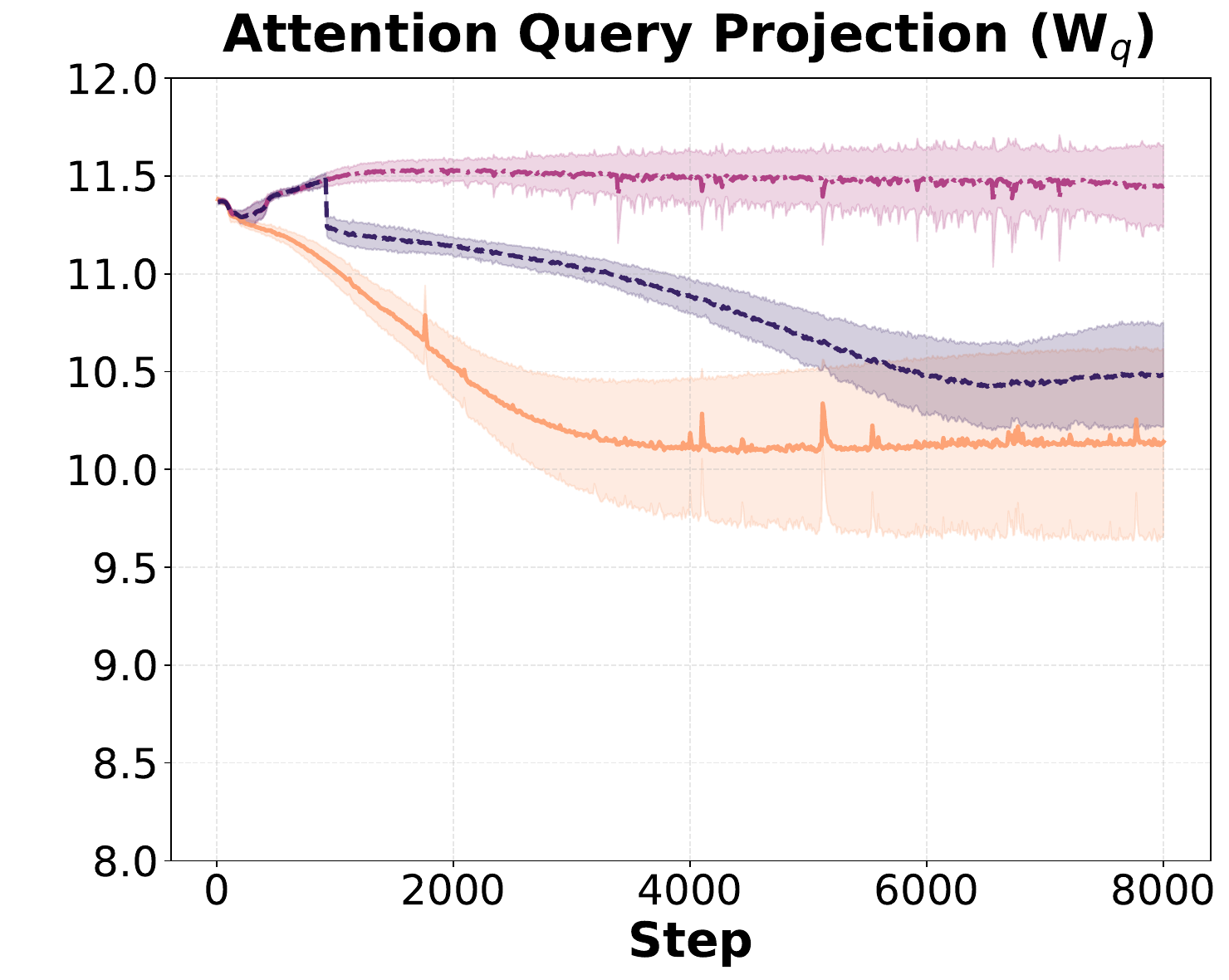}
        \caption{Attn Query Projection}
        \label{fig:grassman_distance:wq}
    \end{subfigure}
    \hspace{-0.5em}
    \begin{subfigure}[b]{0.32\textwidth}
        \centering
        \includegraphics[width=\textwidth]{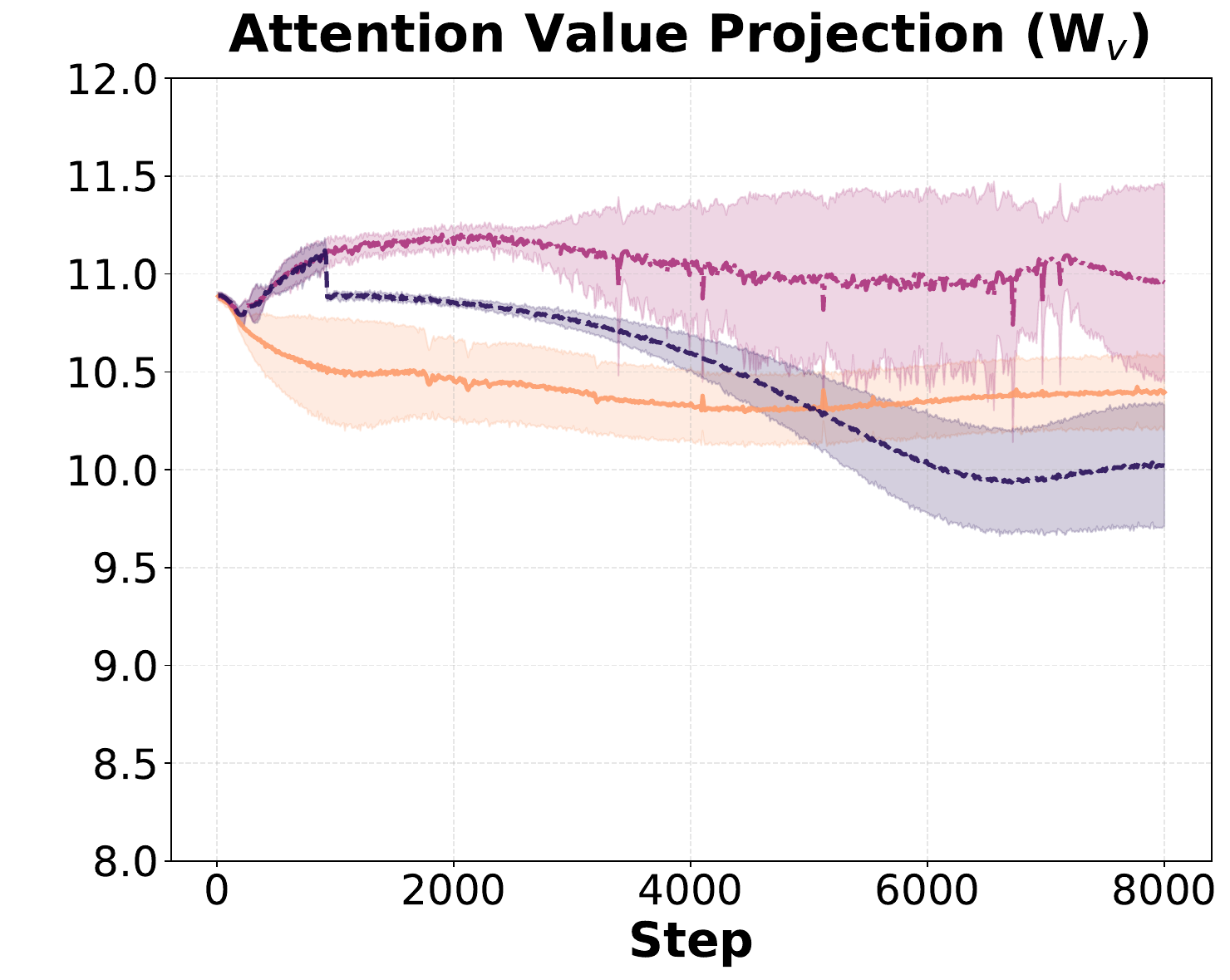}
        \caption{Attn Value Projection}
        \label{fig:grassman_distance:wv}
    \end{subfigure}
    \\\vspace{1em}
    \begin{subfigure}[b]{0.32\textwidth}
    \end{subfigure}
    \begin{subfigure}[b]{0.32\textwidth}
        \centering
        \includegraphics[width=\textwidth]{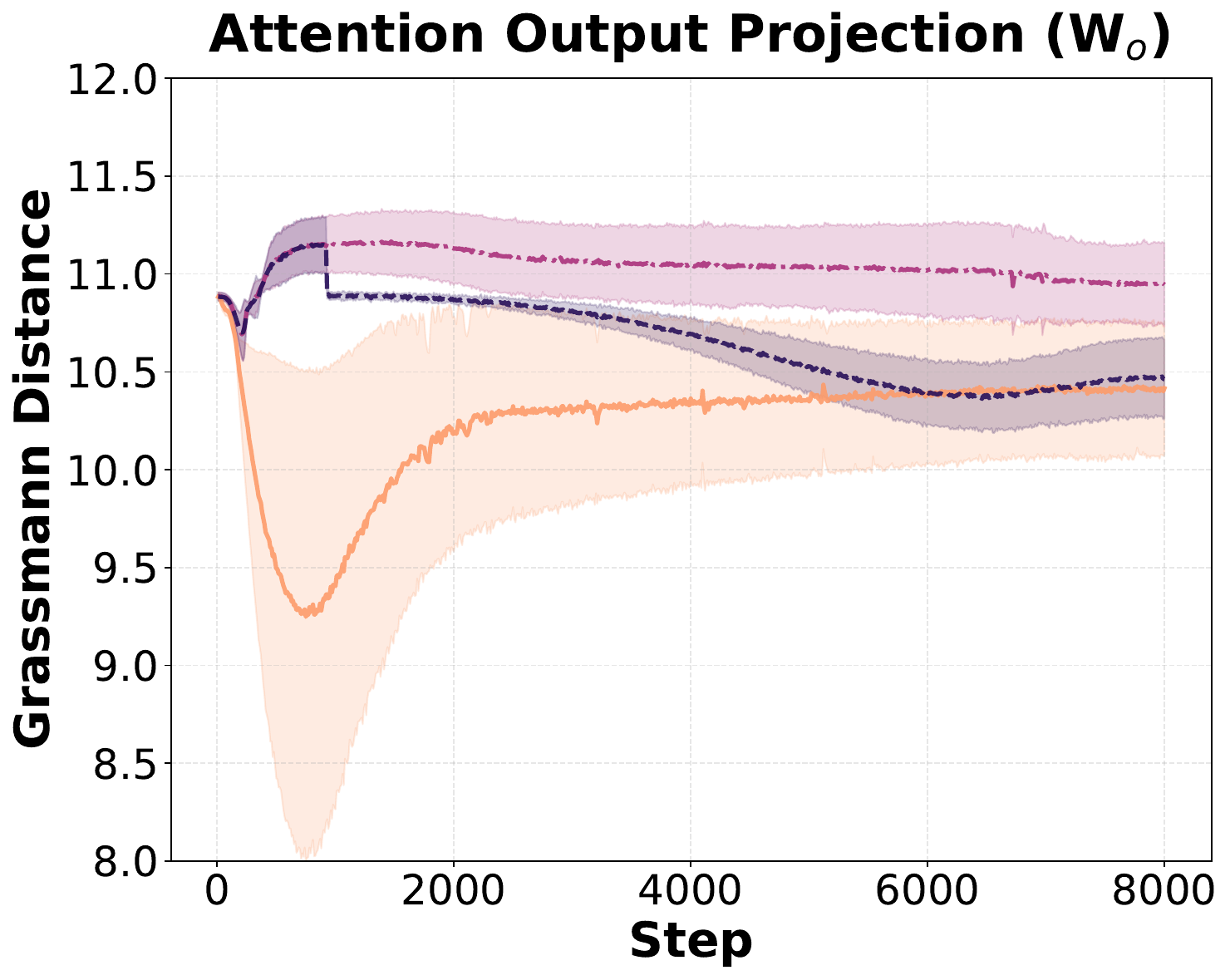}
        \caption{Attn Output Projection}
        \label{fig:grassman_distance:wo}
    \end{subfigure}
    \begin{subfigure}[b]{0.32\textwidth}
    \end{subfigure}
    \caption{Grassmann distance between the top-$64$ left singular vectors of the weight space $\boldsymbol{W}$ and the optimizer update $\boldsymbol{\Delta W}$. For NuMuon, we use the cosine rank scheduler. As demonstrated, while Muon ignores the structure of the weight space in its updates (high, fixed Grassmann distance), NuMuon gradually concentrates the update's energy around the top subspaces of the weight space (lower Grassmann distance).}
    \label{fig:grassman_distance}
\end{figure*}
}

\newcommand{\figEffLlama}{
\begin{figure}[t!]
    \centering
    \includegraphics[width=0.5\textwidth]{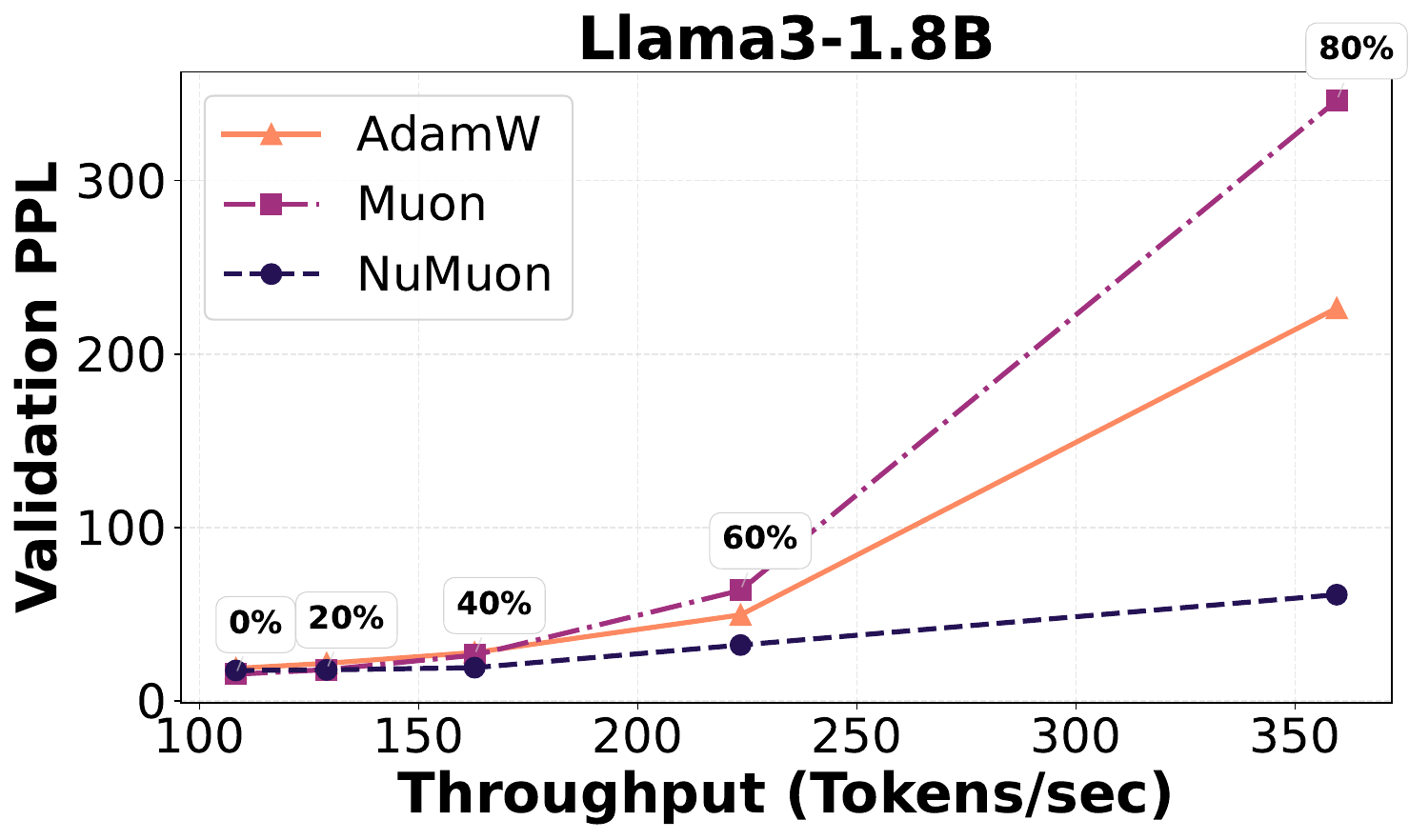}
    \caption{Validation perplexity on WikiText2 against generation inference throughput for Llama3-1.8B models compressed via SVD-LLM~\citep{wang2025svdllm}. As seen, for a given perplexity, NuMuon-trained models provide the fastest inference for moderate to extreme compression rates (40-80\%). Our results for other models can be found in~\Cref{fig:svdllm_efficiency}.}
    \label{fig:efficiency}
    
\end{figure}
}

\newcommand{\figDeltaQwen}{
\begin{figure}[t!]
    \centering
    \begin{subfigure}[b]{0.33\textwidth}
        \centering
        \includegraphics[width=\textwidth]{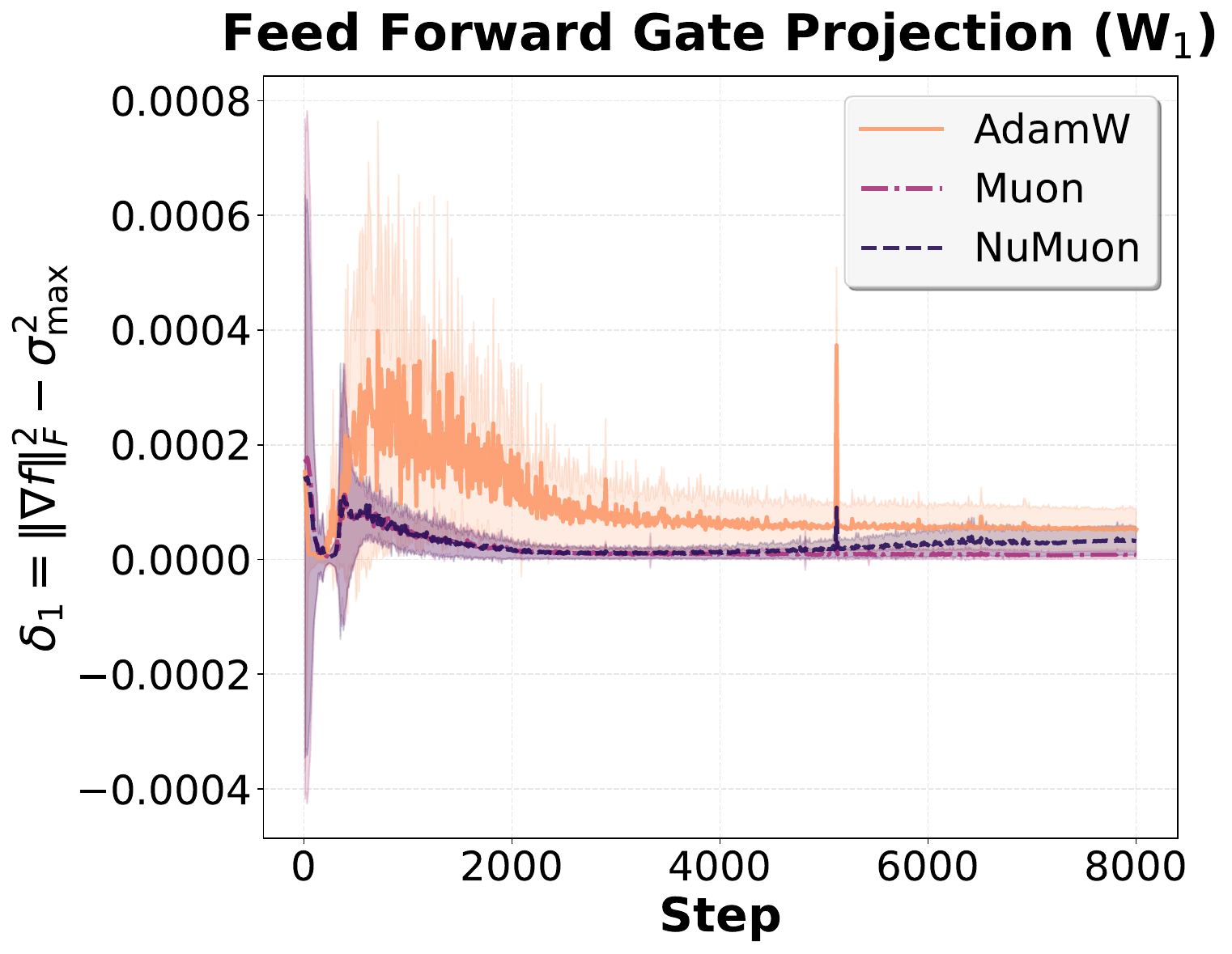}
        \caption{FFN Gate Projection}
        \label{fig:delta_1_ffn:w1}
    \end{subfigure}
    \hspace{-0.5em}
    \begin{subfigure}[b]{0.33\textwidth}
        \centering
        \includegraphics[width=\textwidth]{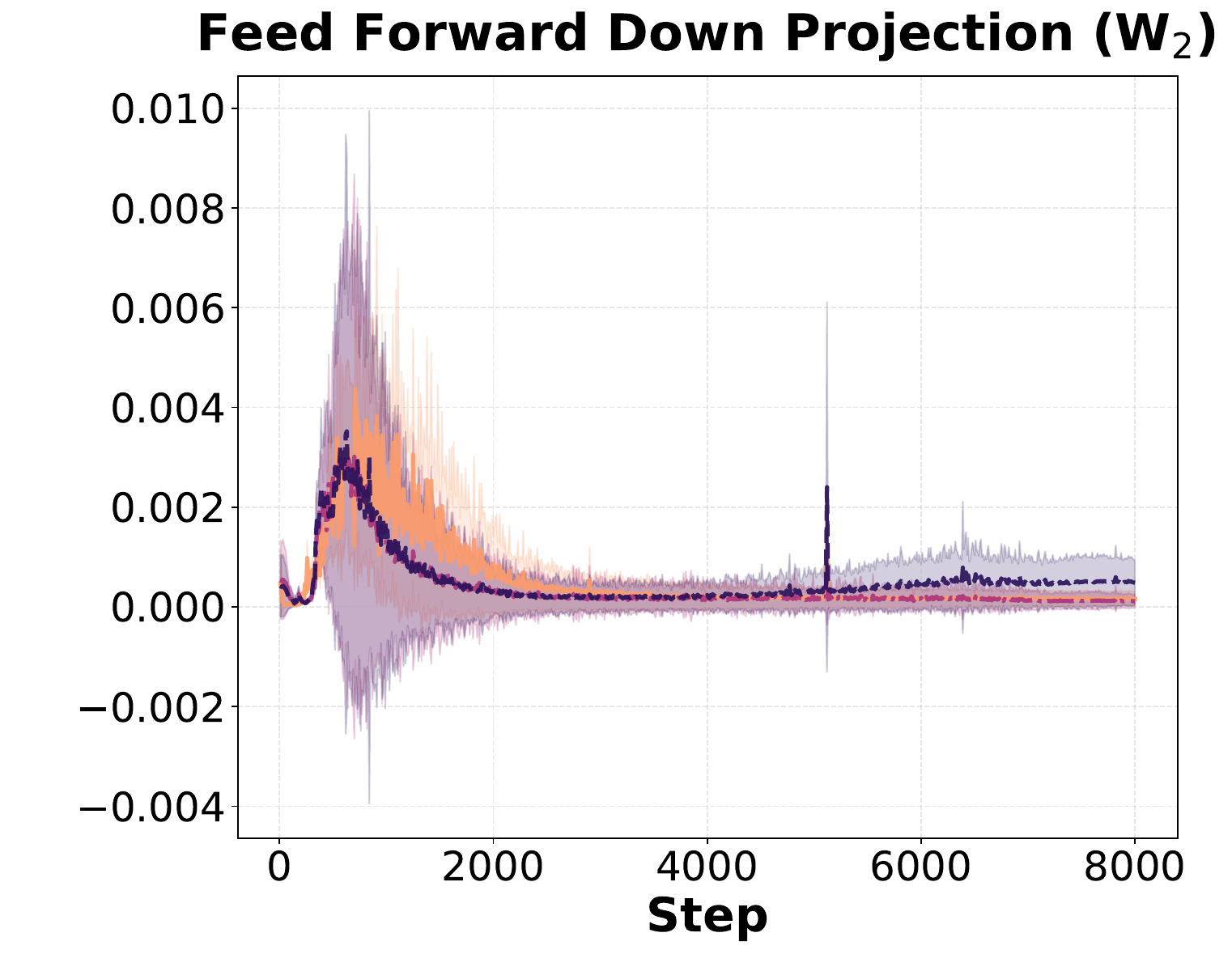}
        \caption{FFN Down Projection}
        \label{fig:delta_1_ffn:w2}
    \end{subfigure}
    \hspace{-0.5em}
    \begin{subfigure}[b]{0.33\textwidth}
        \centering
        \includegraphics[width=\textwidth]{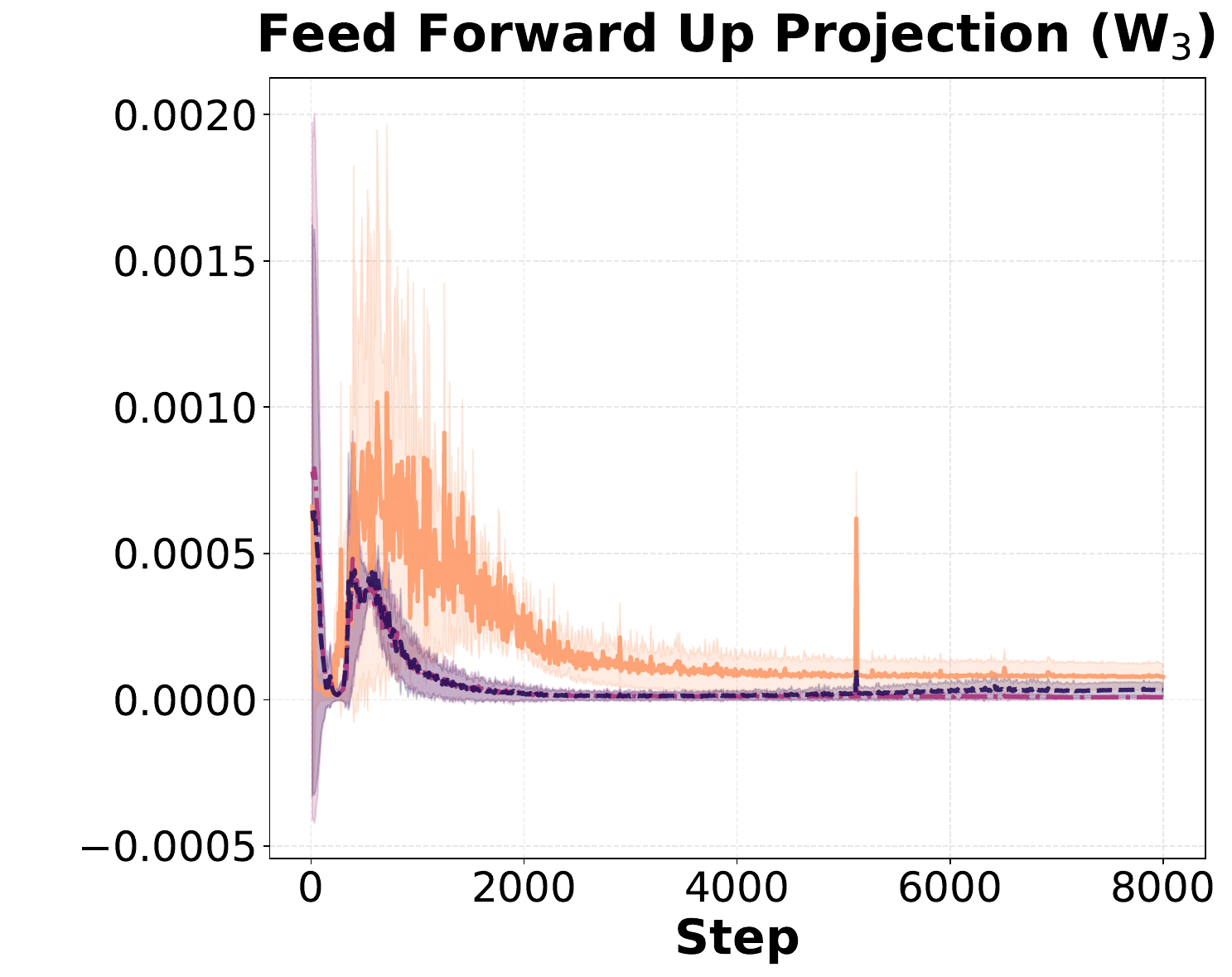}
        \caption{FFN Up Projection}
        \label{fig:delta_1_ffn:w3}
    \end{subfigure}
    \caption{$\delta_1^{(\F)}$ as a proxy for the tail bound in~\Cref{ass:tail_control} for the feedforward projection matrices. As we see, this quantity is bounded and close to zero for NuMuon, supporting this assumption. For other parameters, please see~\Cref{fig:delta_1}.}
    \label{fig:delta_1_ffn}
    \vspace{-1em}
\end{figure}
}

\newcommand{\figDeltaQwenLarge}{
\begin{figure*}[t!]
    \centering
    \begin{subfigure}[b]{0.32\textwidth}
        \centering
        \includegraphics[width=\textwidth]{delta_1/feed_forward_w1.pdf}
        \caption{FFN Gate Projection}
        \label{fig:delta_1:w1}
    \end{subfigure}
    \hspace{-0.5em}
    \begin{subfigure}[b]{0.32\textwidth}
        \centering
        \includegraphics[width=\textwidth]{delta_1/feed_forward_w2.pdf}
        \caption{FFN Down Projection}
        \label{fig:delta_1:w2}
    \end{subfigure}
    \hspace{-0.5em}
    \begin{subfigure}[b]{0.32\textwidth}
        \centering
        \includegraphics[width=\textwidth]{delta_1/feed_forward_w3.pdf}
        \caption{FFN Up Projection}
        \label{fig:delta_1:w3}
    \end{subfigure} 
    \\\vspace{1em}
    \begin{subfigure}[b]{0.32\textwidth}
        \centering
        \includegraphics[width=\textwidth]{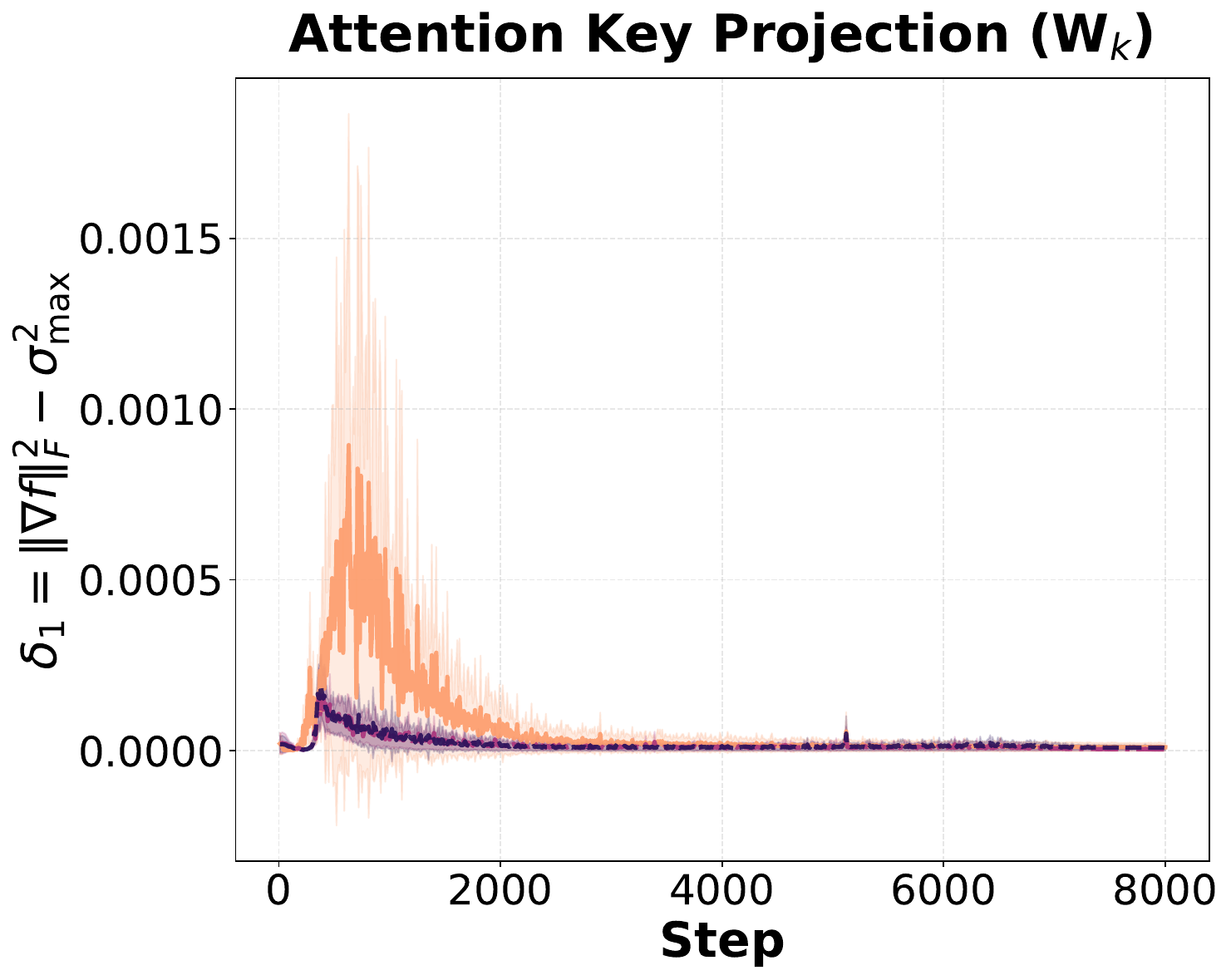}
        \caption{Attn Key Projection}
        \label{fig:delta_1:wk}
    \end{subfigure}
    \hspace{-0.5em}
    \begin{subfigure}[b]{0.32\textwidth}
        \centering
        \includegraphics[width=\textwidth]{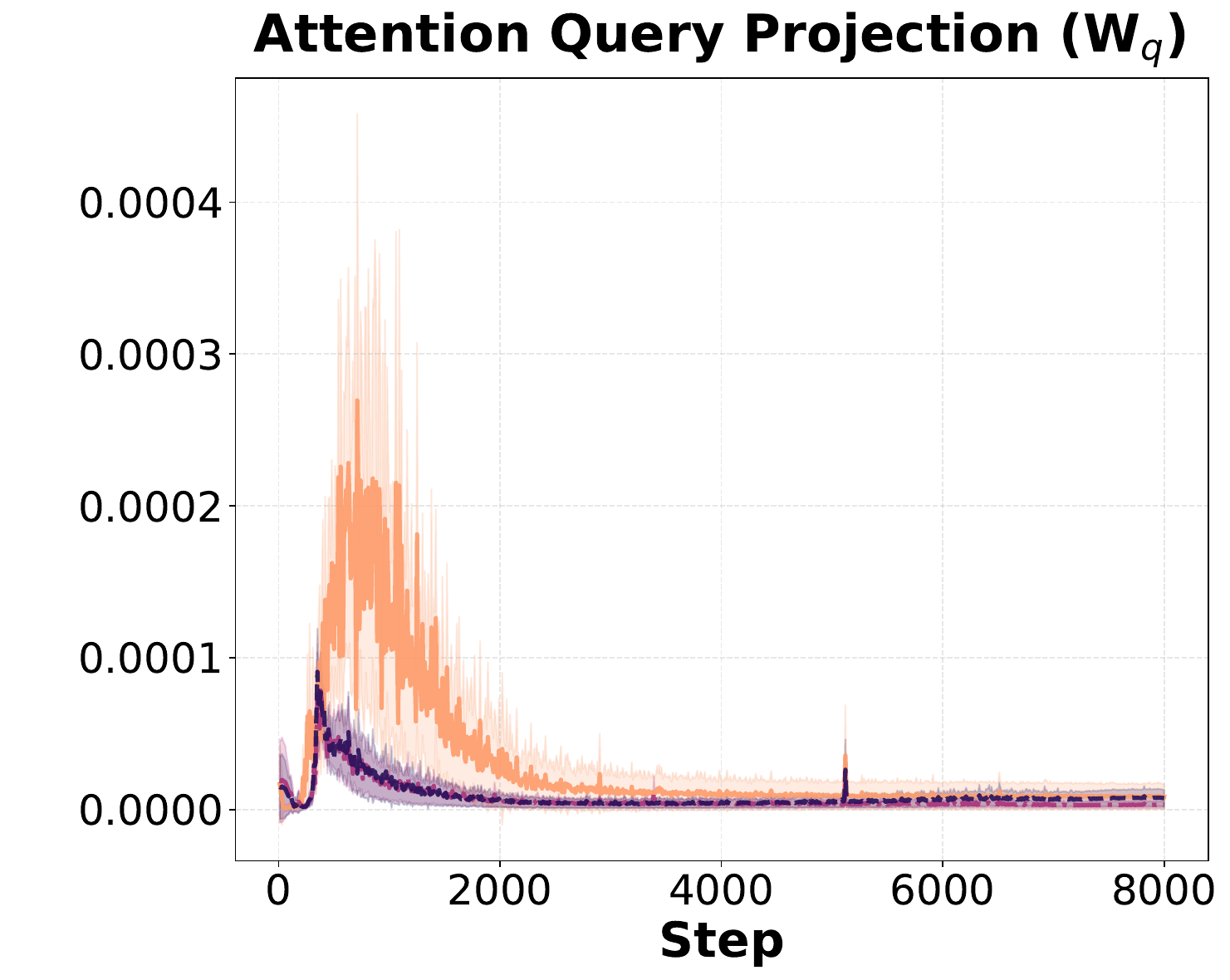}
        \caption{Attn Query Projection}
        \label{fig:delta_1:wq}
    \end{subfigure}
    \hspace{-0.5em}
    \begin{subfigure}[b]{0.32\textwidth}
        \centering
        \includegraphics[width=\textwidth]{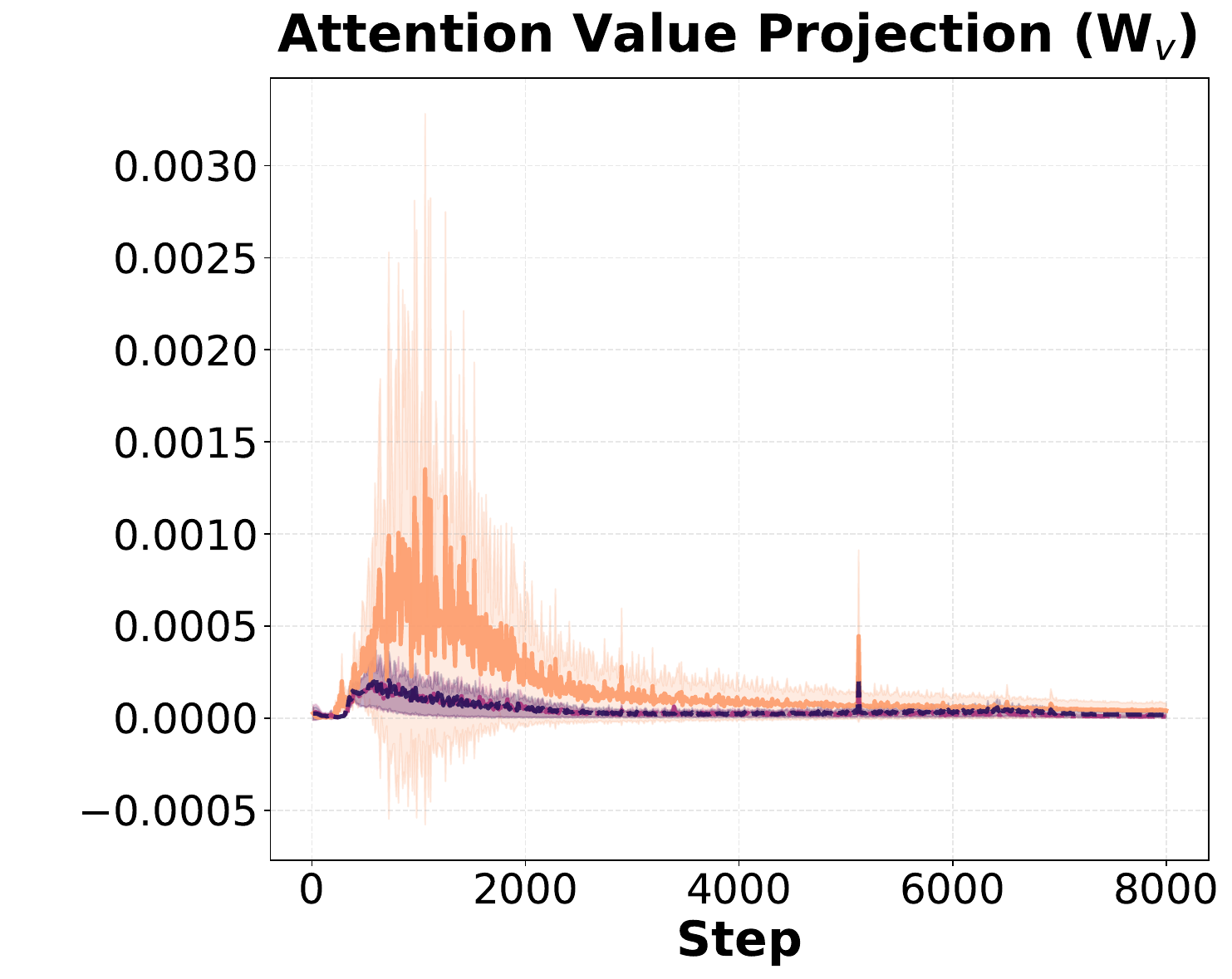}
        \caption{Attn Value Projection}
        \label{fig:delta_1:wv}
    \end{subfigure}
    \\\vspace{1em}
    \begin{subfigure}[b]{0.32\textwidth}
    \end{subfigure}
    \begin{subfigure}[b]{0.32\textwidth}
        \centering
        \includegraphics[width=\textwidth]{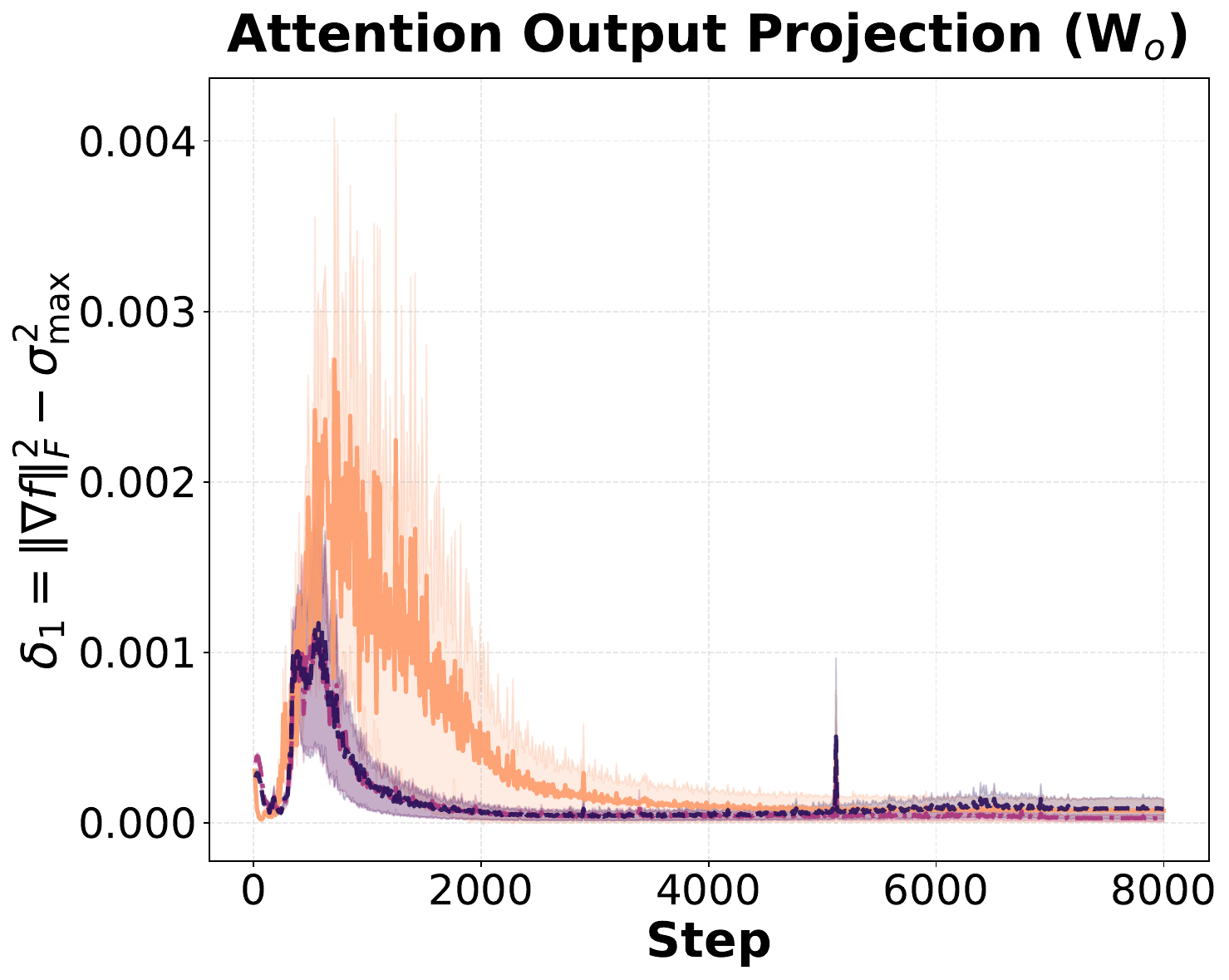}
        \caption{Attn Output Projection}
        \label{fig:delta_1:wo}
    \end{subfigure}
    \begin{subfigure}[b]{0.32\textwidth}
    \end{subfigure}
    \caption{$\delta_1^{(\F)}$ as a proxy for the tail bound in~\Cref{ass:tail_control}. As seen, with NuMuon this quantity gets very close to $0$ as training progresses, indicating that our Ky Fan $k$-norm stationarity convergence gets very close to the standard nuclear-norm stationarity convergence guarantee of \citet{shen2025convergence} for Muon. Among all weight matrices, the feedforward output projection $\boldsymbol{W}_2$ usually exhibits a larger $\delta_1^{(\F)}$ which could potentially require a higher nuclear norm budget. This is interesting research direction which we leave for future work.}
    \label{fig:delta_1}
\end{figure*}
}

\newcommand{\figPPLCompAllMethods}{
\begin{figure*}[t!]
    \centering
    \begin{subfigure}[b]{0.32\textwidth}
        \centering
        \includegraphics[width=\textwidth]{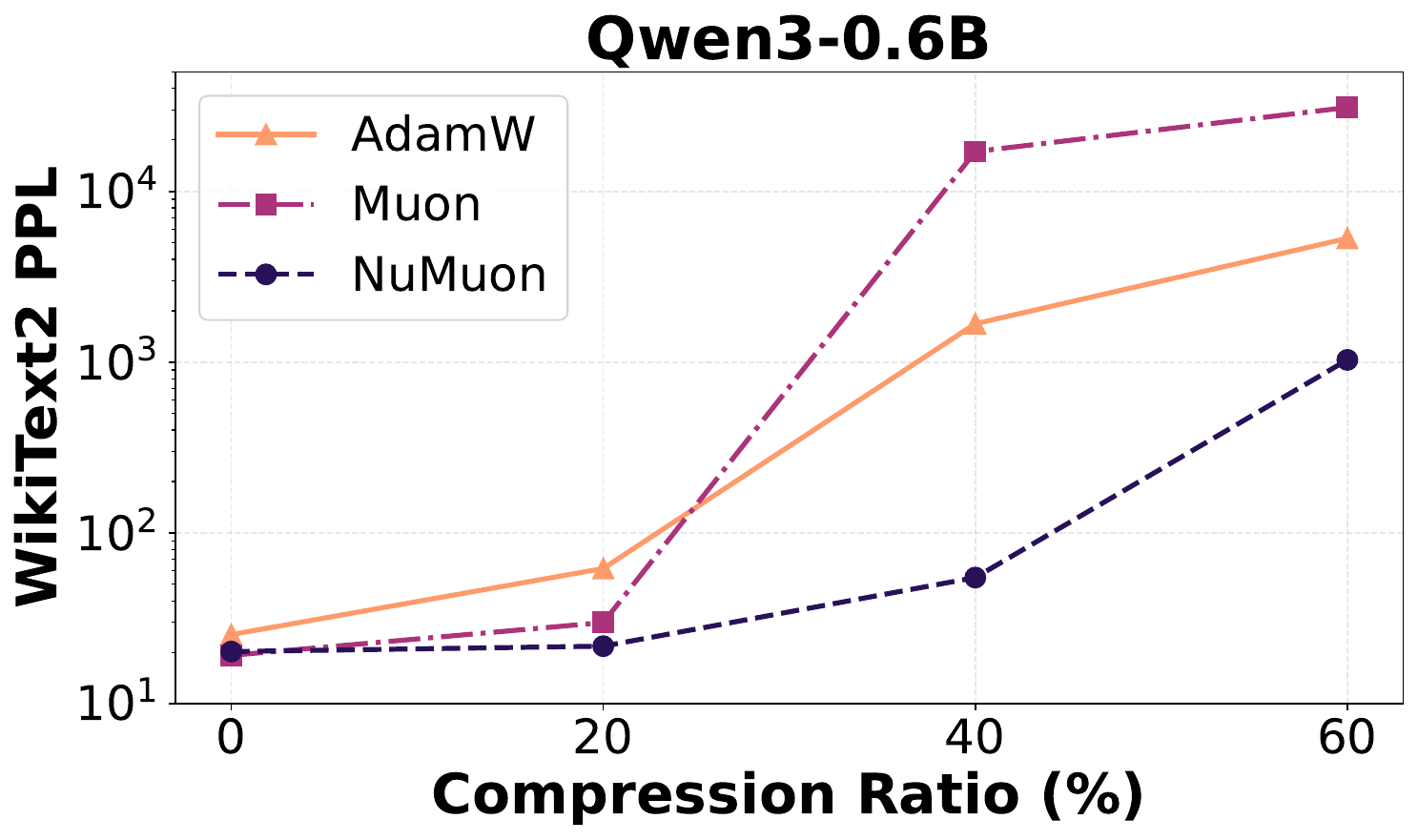}
        \caption*{}
        \label{fig:compression_ppl_comparison:asvd:qwen3}
    \end{subfigure}
    \hspace{-0.5em}
    \begin{subfigure}[b]{0.32\textwidth}
        \centering
        \includegraphics[width=\textwidth]{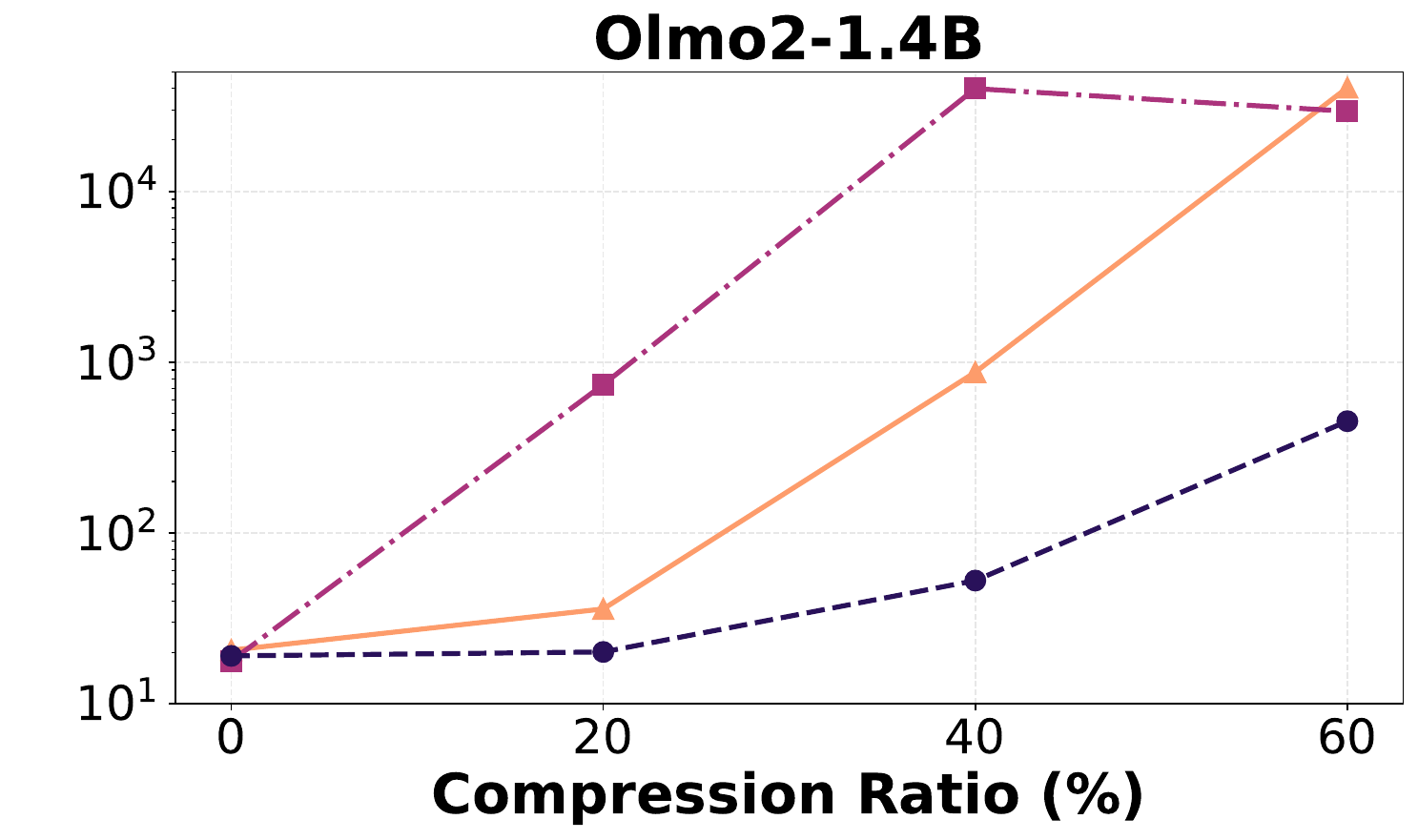}
        \caption{ASVD}
        \label{fig:compression_ppl_comparison:asvd:olmo2}
    \end{subfigure}
    \hspace{-0.5em}
    \begin{subfigure}[b]{0.32\textwidth}
        \centering
        \includegraphics[width=\textwidth]{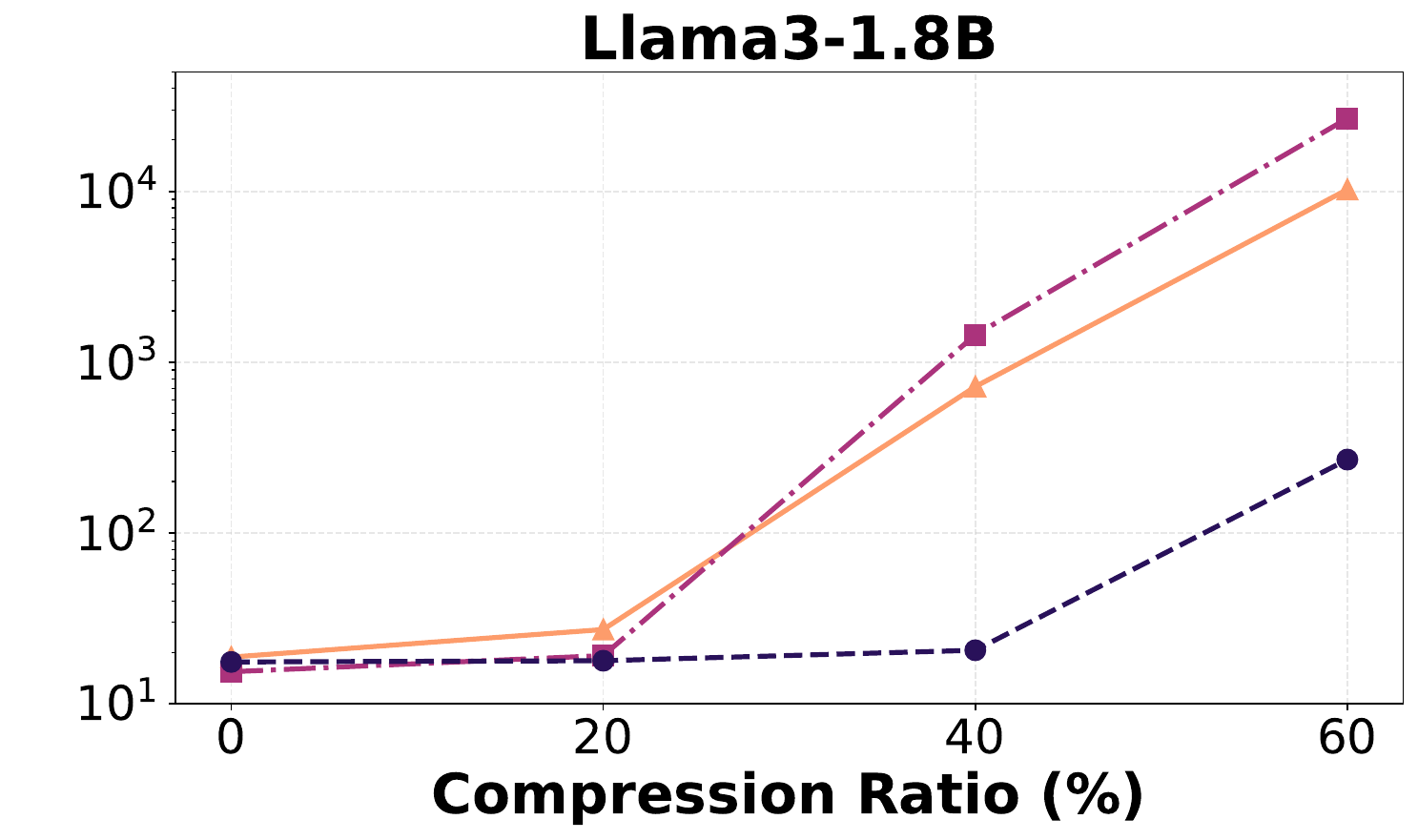}
        \caption*{}
        \label{fig:compression_ppl_comparison:asvd:llama3}
    \end{subfigure} 
    \\\vspace{1em}
    \begin{subfigure}[b]{0.32\textwidth}
        \centering
        \includegraphics[width=\textwidth]{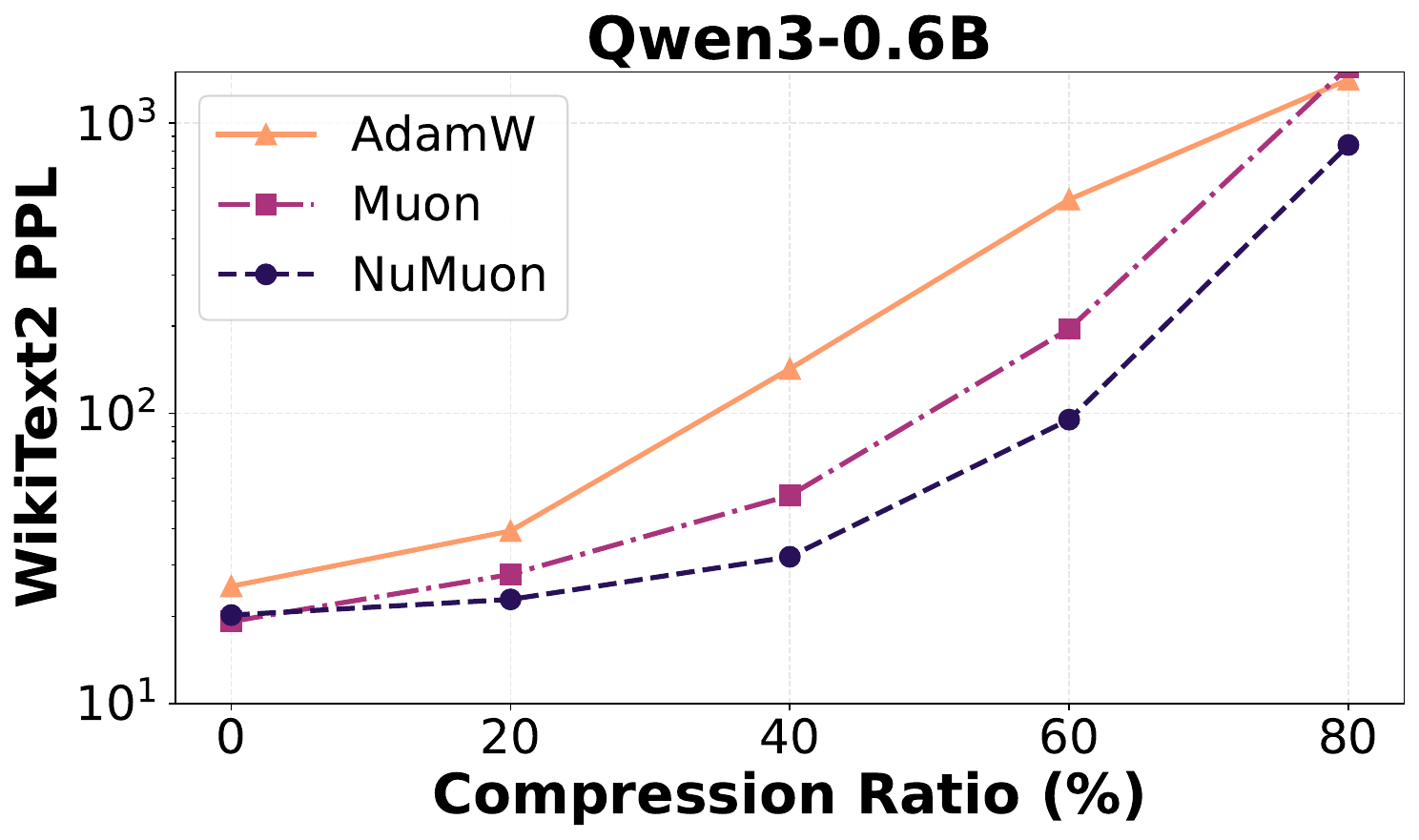}
        \caption*{}
        \label{fig:compression_ppl_comparison:svdllm_whiten:qwen3}
    \end{subfigure}
    \hspace{-0.5em}
    \begin{subfigure}[b]{0.32\textwidth}
        \centering
        \includegraphics[width=\textwidth]{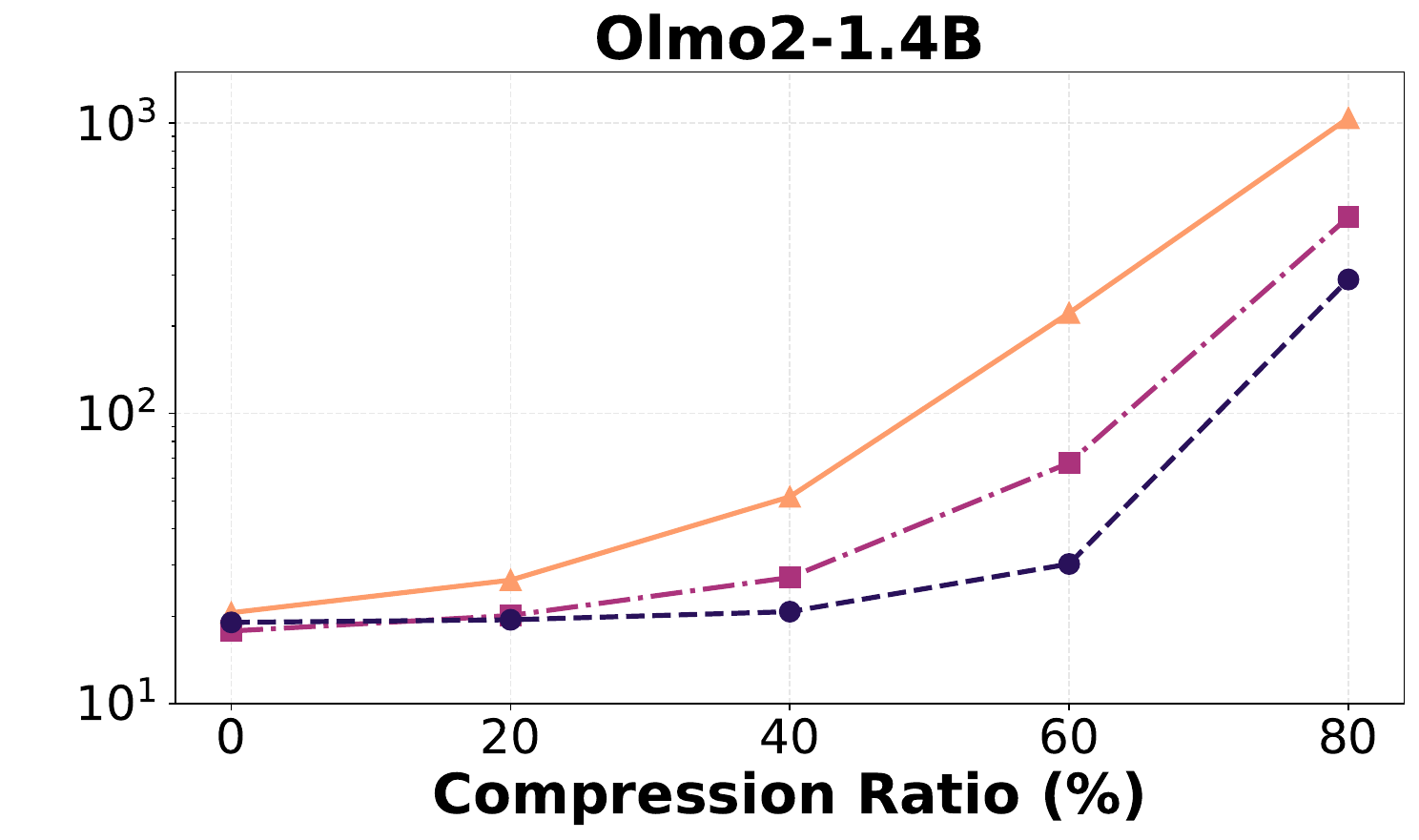}
        \caption{SVD-LLM (Whitening)}
        \label{fig:compression_ppl_comparison:svdllm_whiten:olmo2}
    \end{subfigure}
    \hspace{-0.5em}
    \begin{subfigure}[b]{0.32\textwidth}
        \centering
        \includegraphics[width=\textwidth]{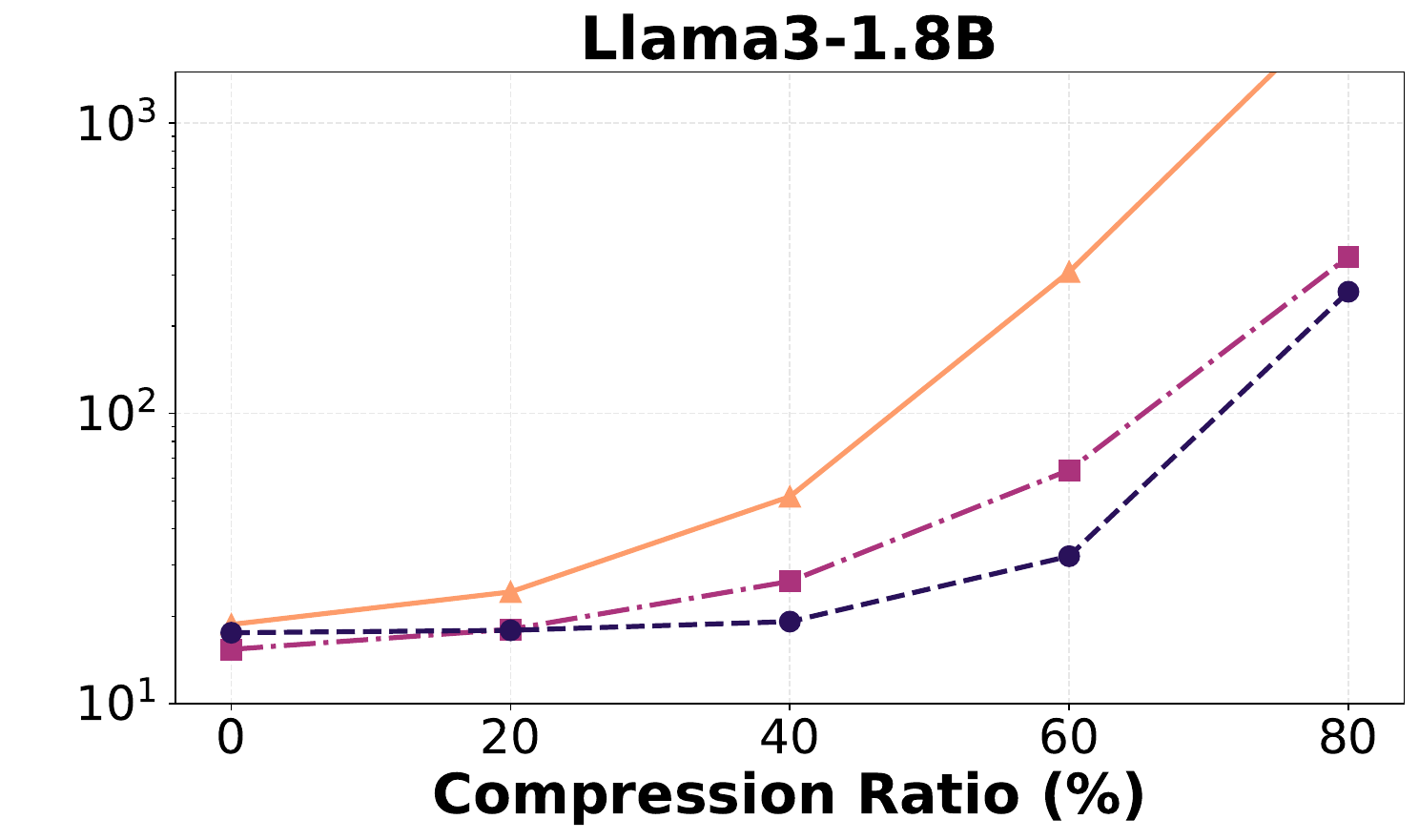}
        \caption*{}
        \label{fig:compression_ppl_comparison:svdllm_whiten:llama3}
    \end{subfigure} 
    \\\vspace{1em}
    \begin{subfigure}[b]{0.32\textwidth}
        \centering
        \includegraphics[width=\textwidth]{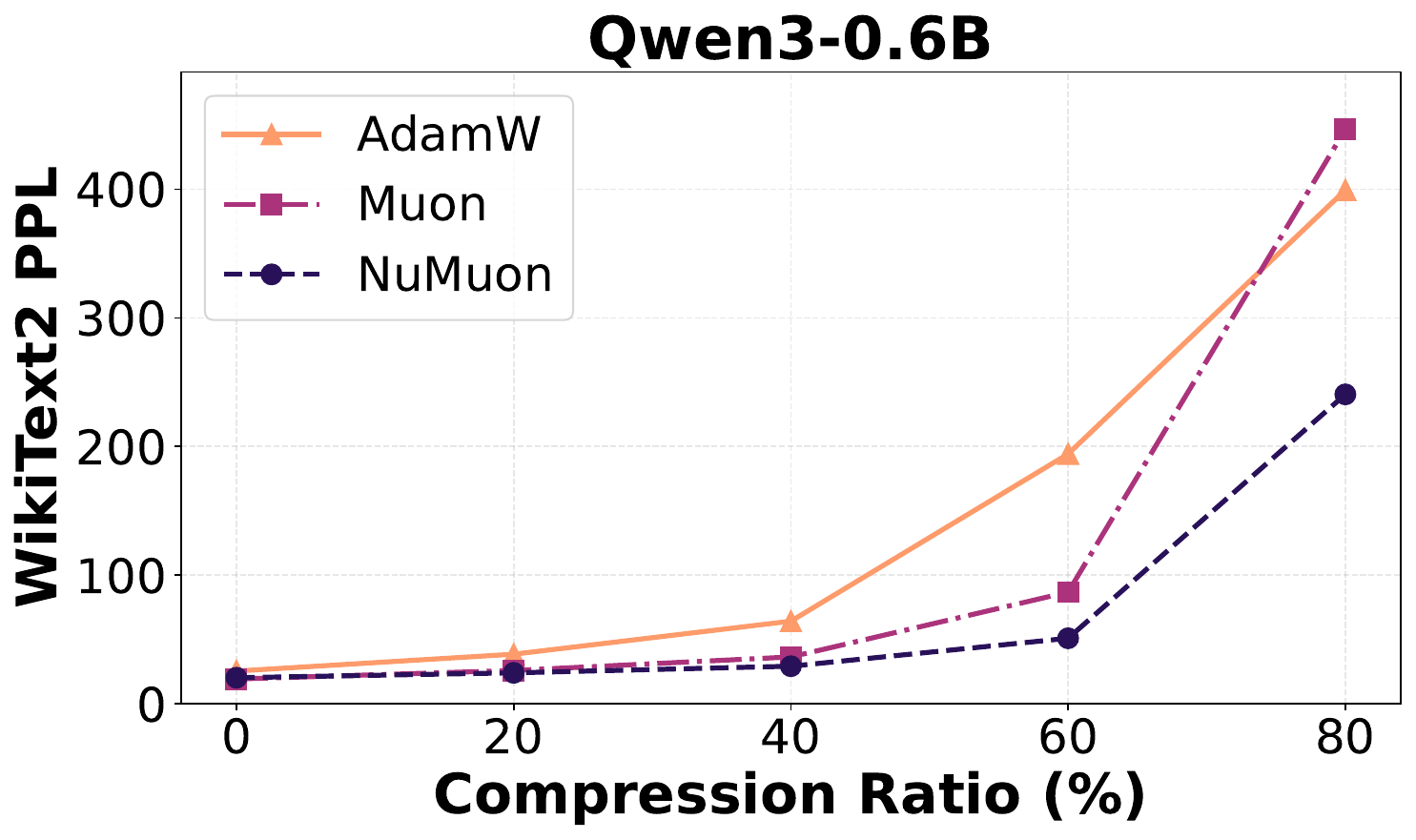}
        \caption*{}
        \label{fig:compression_ppl_comparison:svdllm_full:qwen3}
    \end{subfigure}
    \hspace{-0.5em}
    \begin{subfigure}[b]{0.32\textwidth}
        \centering
        \includegraphics[width=\textwidth]{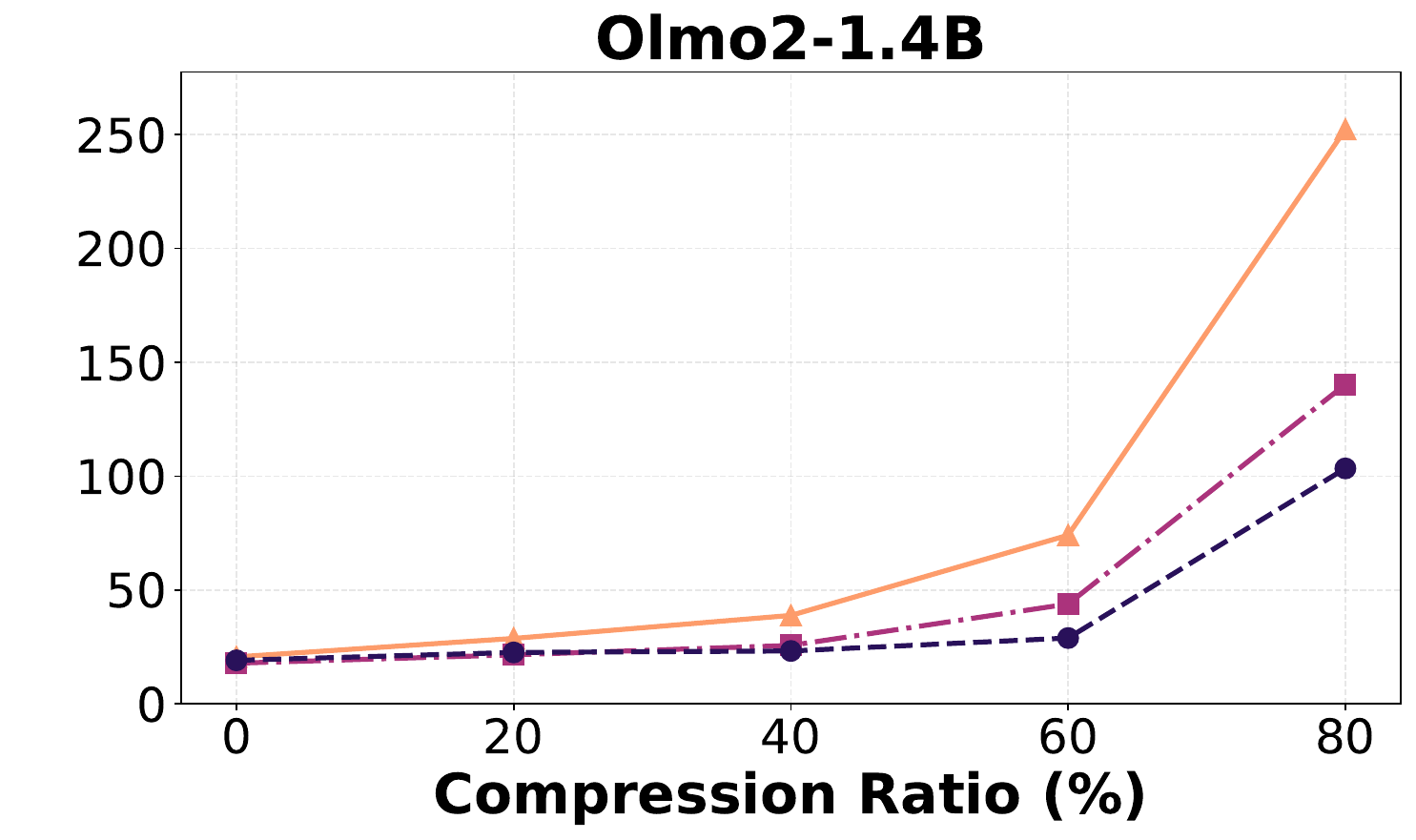}
        \caption{SVD-LLM (Whitening + LoRA)}
        \label{fig:compression_ppl_comparison:svdllm_full:olmo2}
    \end{subfigure}
    \hspace{-0.5em}
    \begin{subfigure}[b]{0.32\textwidth}
        \centering
        \includegraphics[width=\textwidth]{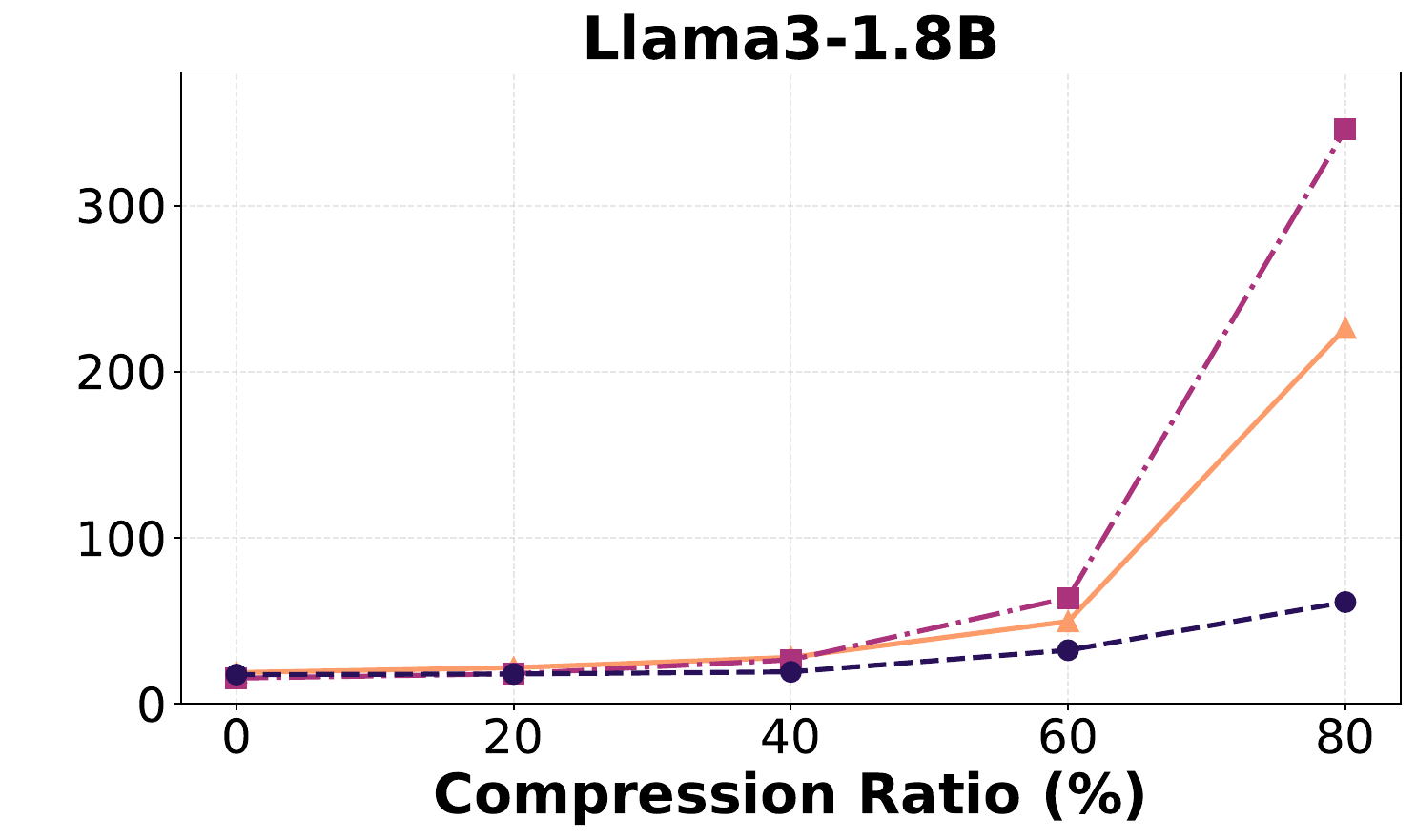}
        \caption*{}
        \label{fig:compression_ppl_comparison:svdllm_full:llama3}
    \end{subfigure} 
    \\\vspace{1em}
    \begin{subfigure}[b]{0.32\textwidth}
        \centering
        \includegraphics[width=\textwidth]{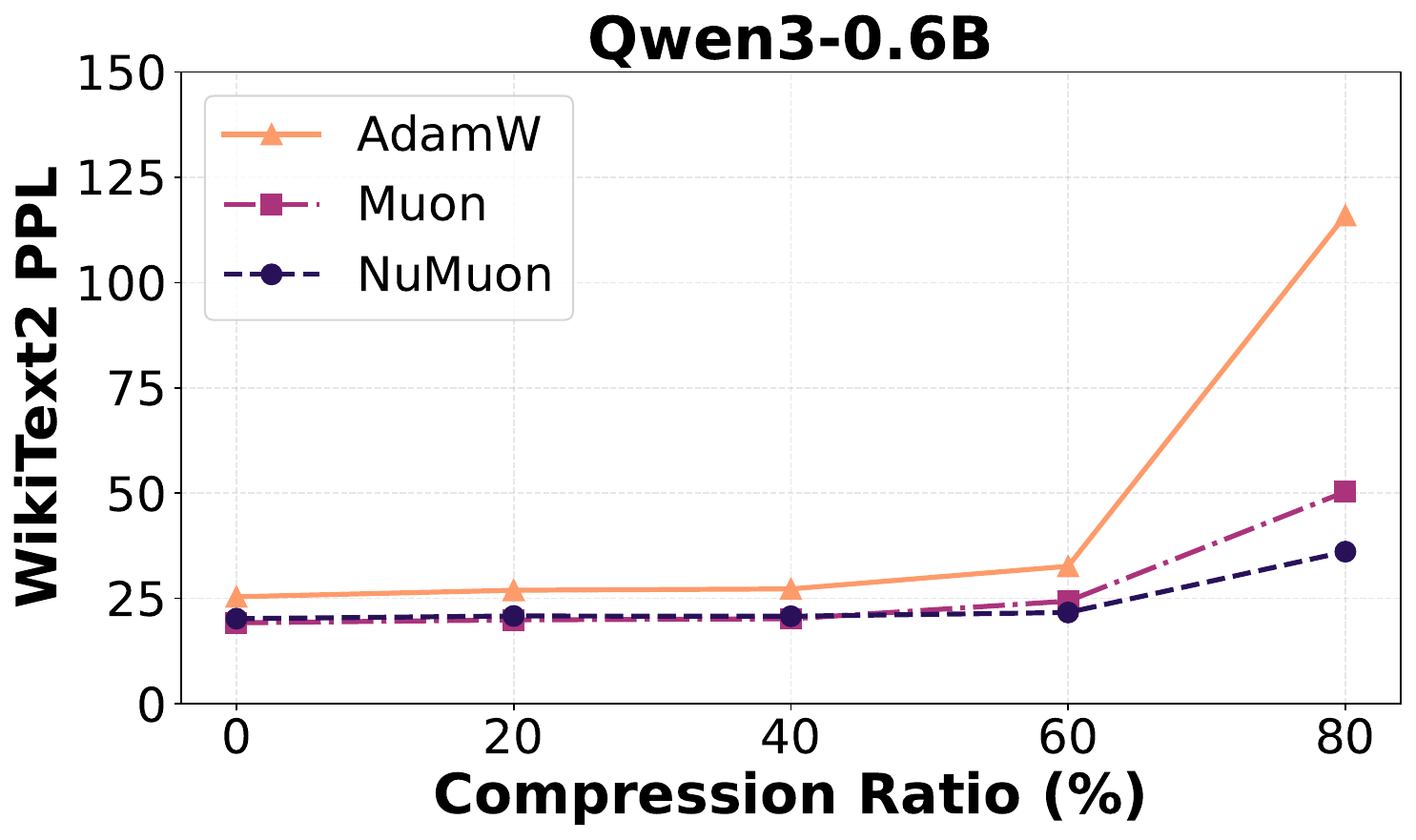}
        \caption*{}
        \label{fig:compression_ppl_comparison:dobisvd:qwen3}
    \end{subfigure}
    \hspace{-0.5em}
    \begin{subfigure}[b]{0.32\textwidth}
        \centering
        \includegraphics[width=\textwidth]{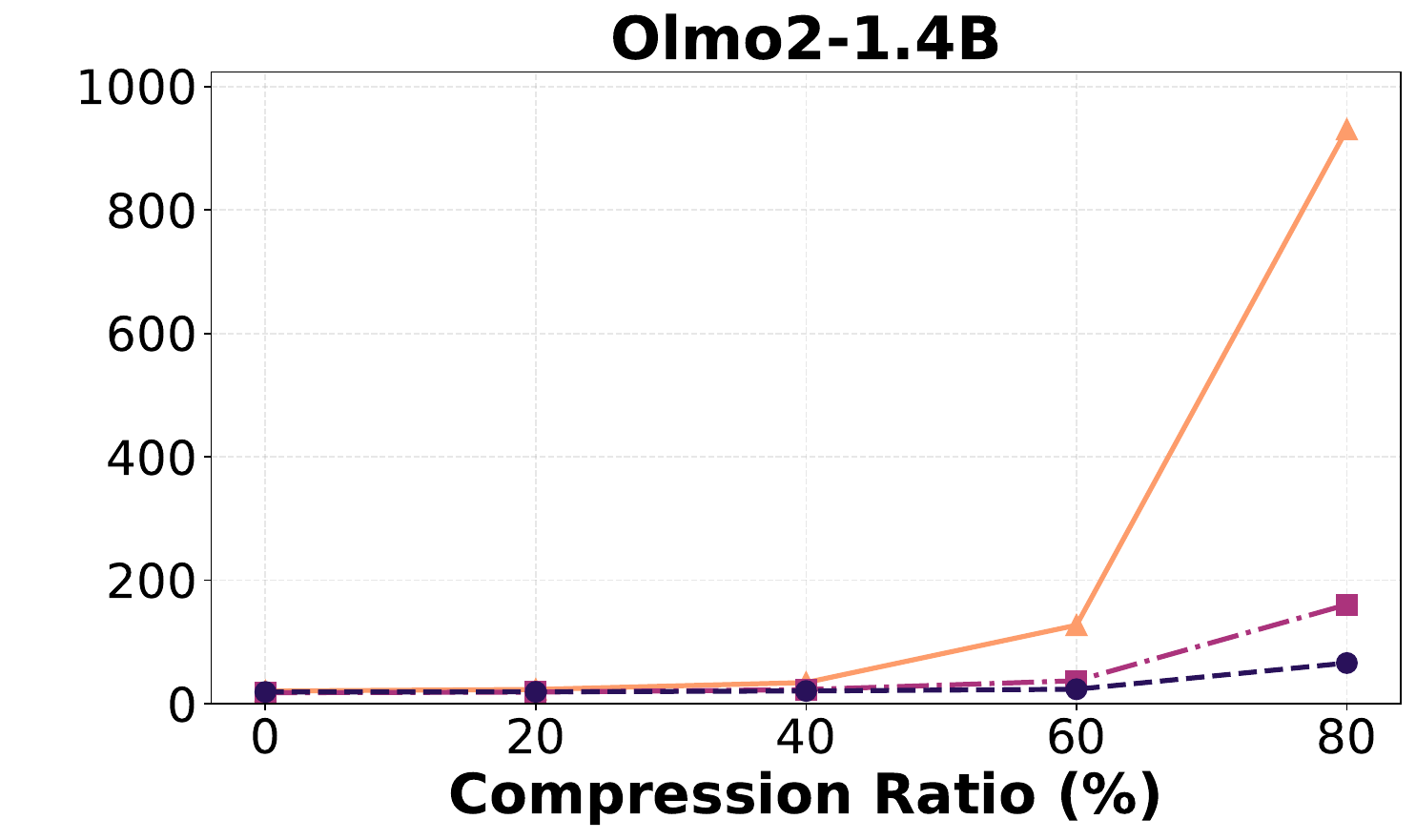}
        \caption{Dobi-SVD}
        \label{fig:compression_ppl_comparison:dobisvd:olmo2}
    \end{subfigure}
    \hspace{-0.5em}
    \begin{subfigure}[b]{0.32\textwidth}
        \centering
        \includegraphics[width=\textwidth]{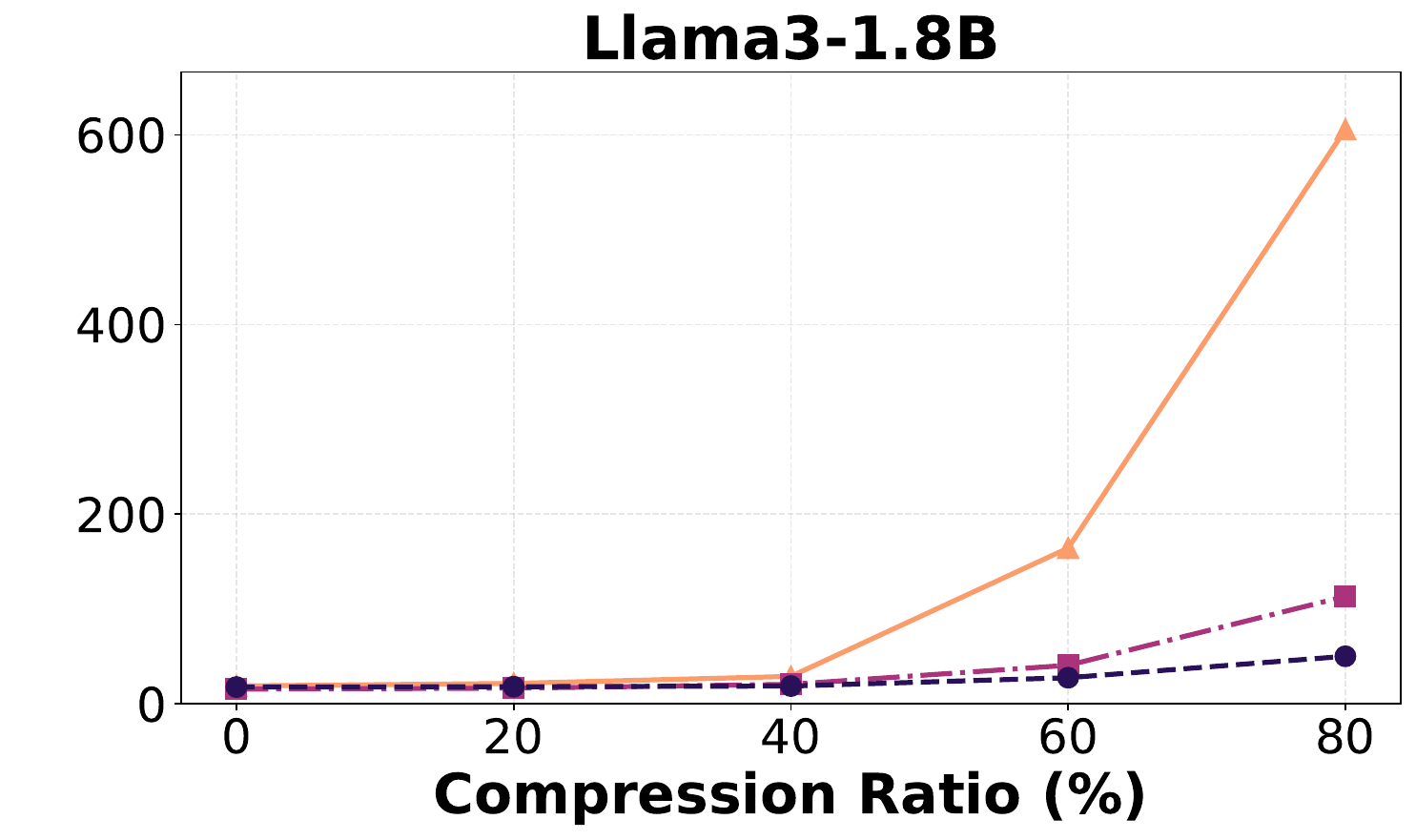}
        \caption*{}
        \label{fig:compression_ppl_comparison:dobisvd:llama3}
    \end{subfigure} 
    \\\vspace{1em}
    \caption{Validation perplexity on WikiText2 dataset for compressed LLMs~(Qwen3-0.6B, Olmo2-1.4B, and Llama3-1.8B) using different compression methods. As seen, models pretrained via NuMuon consistently achieve the lowest validation perplexity, yielding better performance at extreme compression ratios.}
    \label{fig:compression_ppl_comparison}
\end{figure*}
}

\newcommand{\figStableRankWellKnown}{
\begin{figure*}[t!]
    \centering
    \begin{subfigure}[b]{0.45\textwidth}
        \centering
        \includegraphics[width=\textwidth]{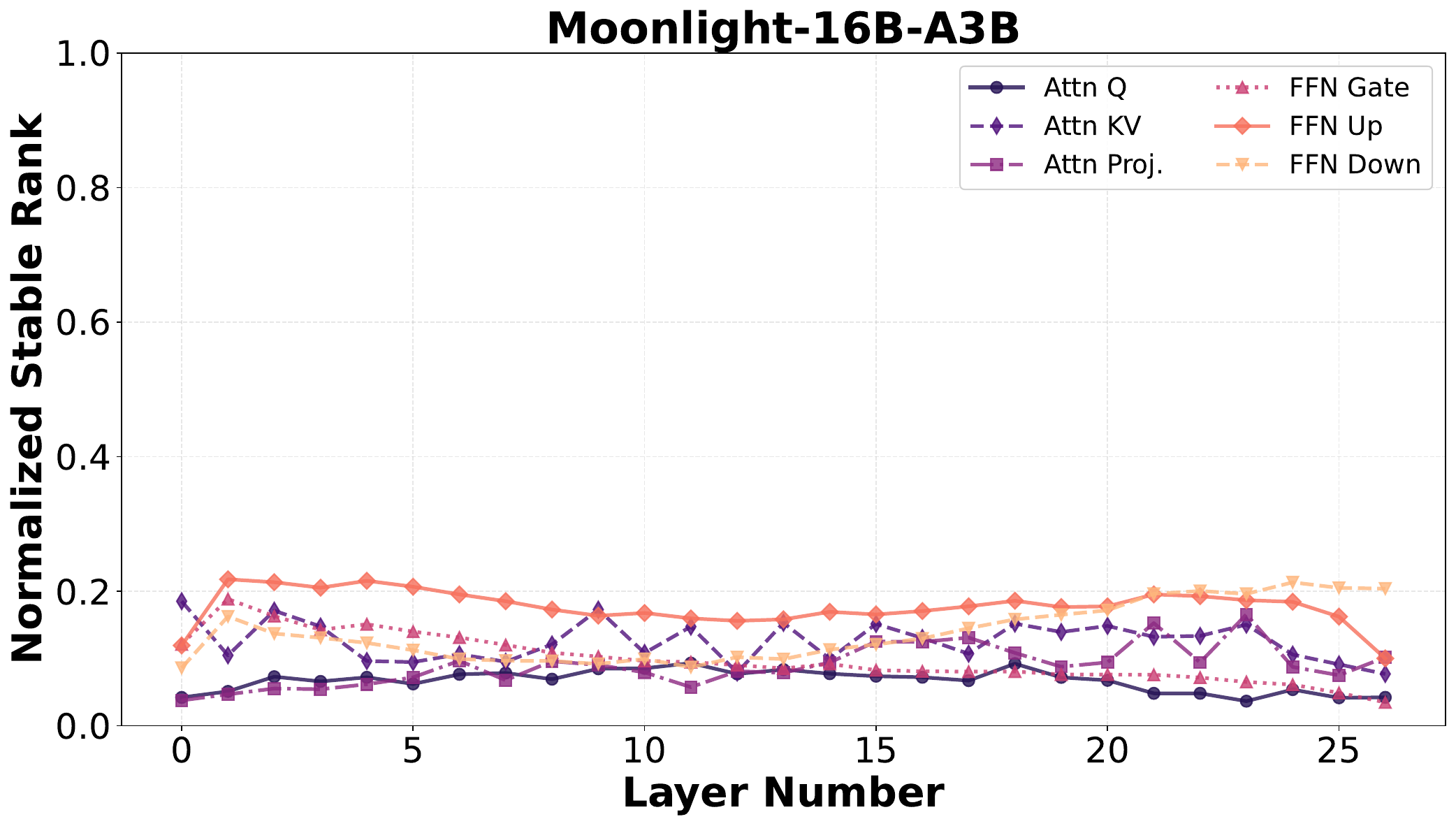}
        \caption{Moonlight-16B-A3B (stable rank per layer)}
        \label{fig:stable_rank_large_scale:moonlight:dist}
    \end{subfigure}
    \hspace{0.5em}
    \begin{subfigure}[b]{0.45\textwidth}
        \centering
        \includegraphics[width=\textwidth]{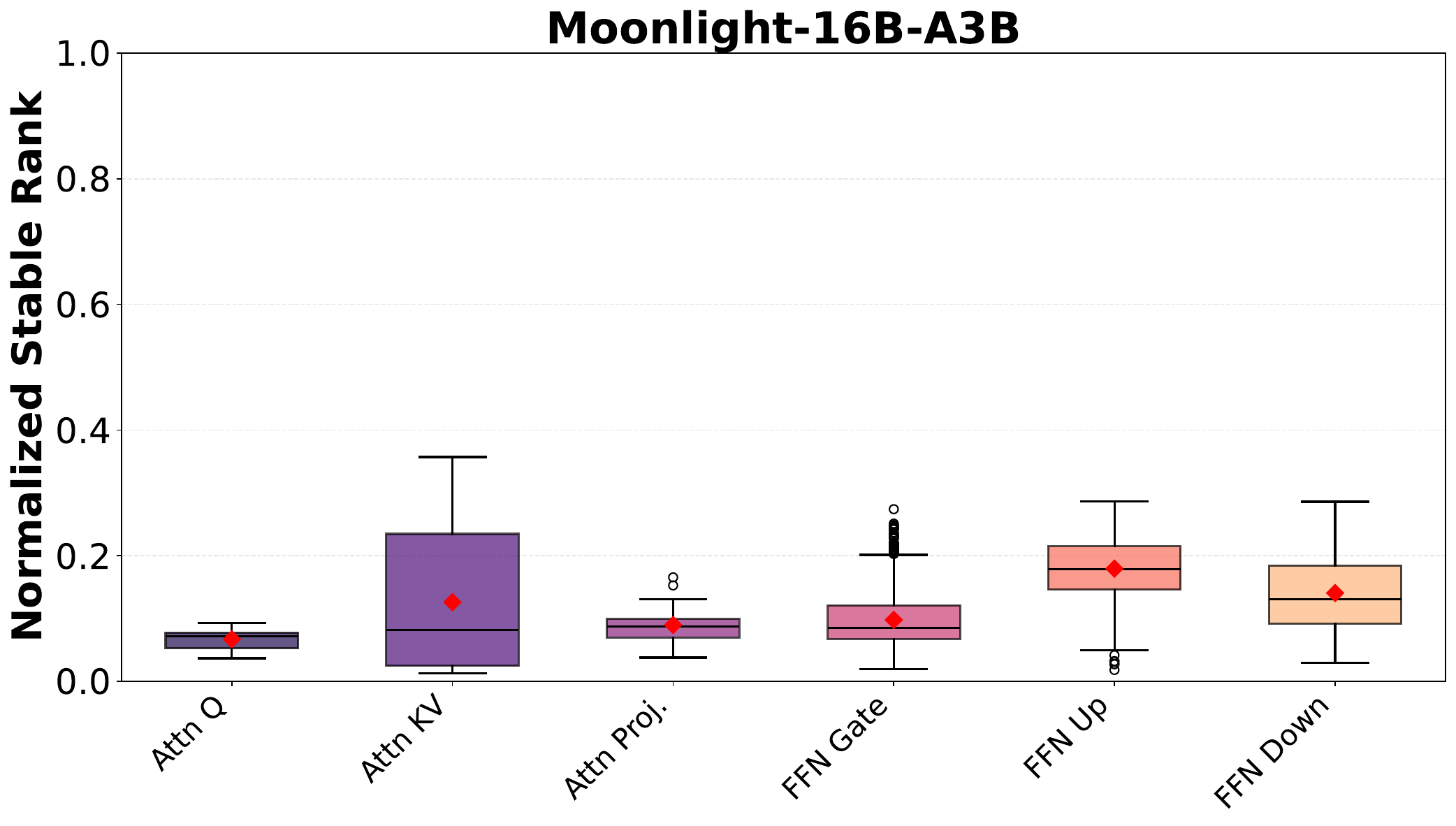}
        \caption{Moonlight-16B-A3B (stable rank distribution)}
        \label{fig:stable_rank_large_scale:moonlight:layer}
    \end{subfigure}
    \\\vspace{1em}
    \begin{subfigure}[b]{0.45\textwidth}
        \centering
        \includegraphics[width=\textwidth]{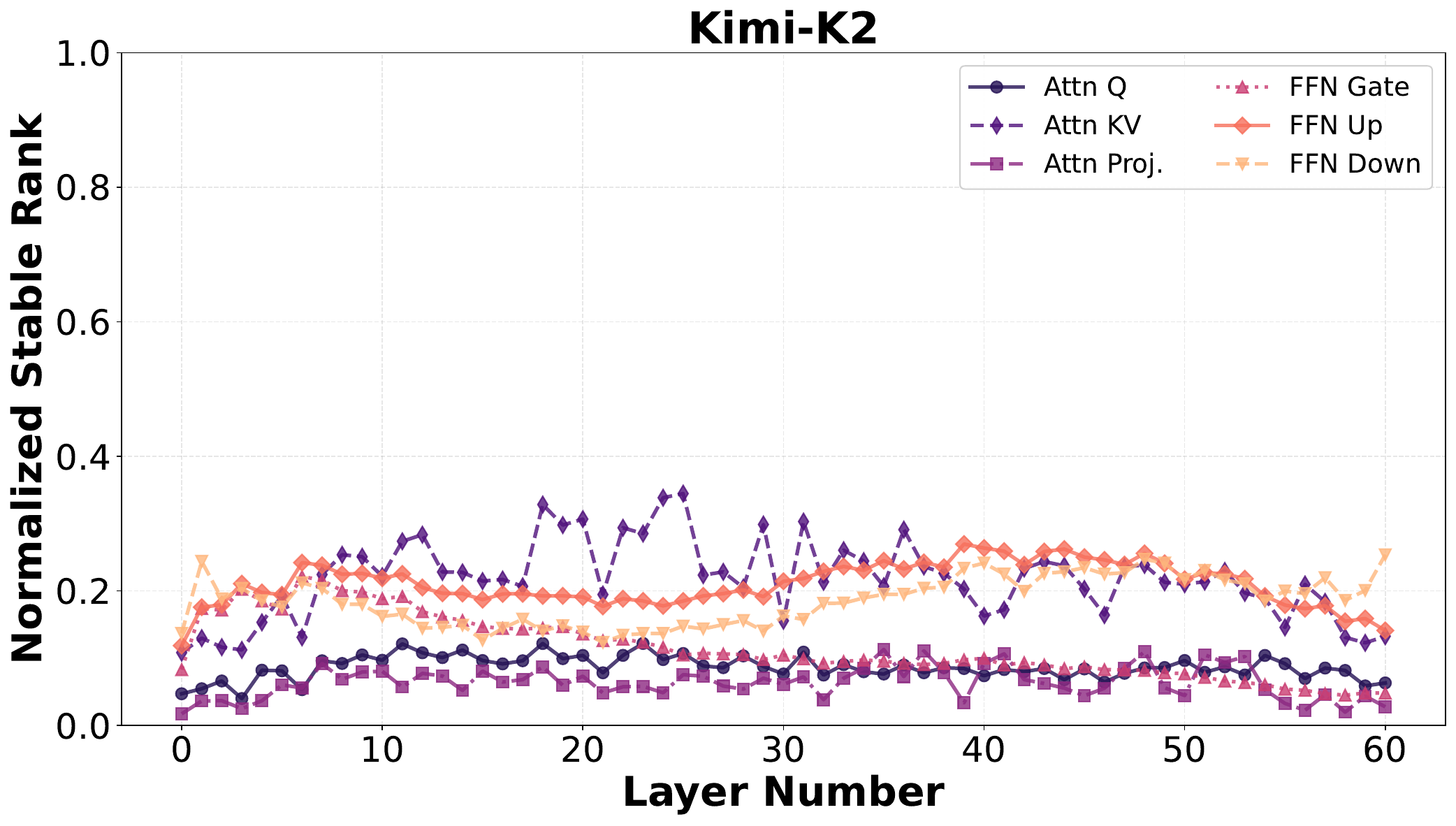}
        \caption{Kimi-K2 (stable rank per layer)}
        \label{fig:stable_rank_large_scale:kimik2:dist}
    \end{subfigure}
    \hspace{0.5em}
    \begin{subfigure}[b]{0.45\textwidth}
        \centering
        \includegraphics[width=\textwidth]{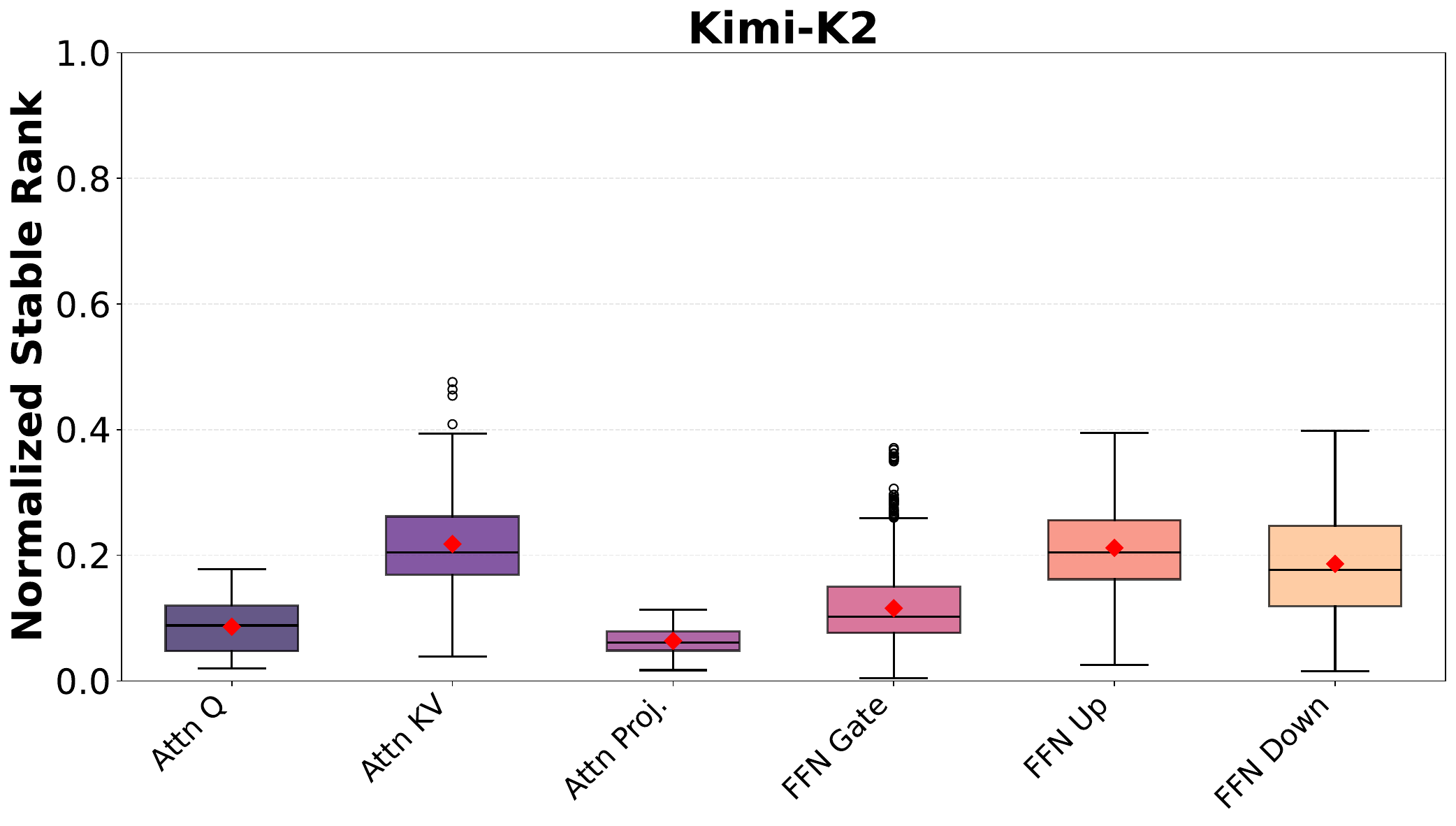}
        \caption{Kimi-K2 (stable rank distribution)}
        \label{fig:stable_rank_large_scale:kimik2:layer}
    \end{subfigure}
    \caption{Normalized stable rank of well-known, large-scale models trained via Muon. The plot shows Moonlight-16B-A3B~\citep{moonlight2025} (top) as well as Kimi-K2-1T~\citep{kimik2025} (bottom). For each model, we show the distribution of the normalized stable rank (left) as well as its evolution by the layer index (right). }
    \label{fig:stable_rank_large_scale}
\end{figure*}
}

\newcommand{\figSVDLLMEfficiencyGains}{
\begin{figure*}[t!]
    \centering
    \begin{subfigure}[b]{0.4\textwidth}
        \centering
        \includegraphics[width=\textwidth]{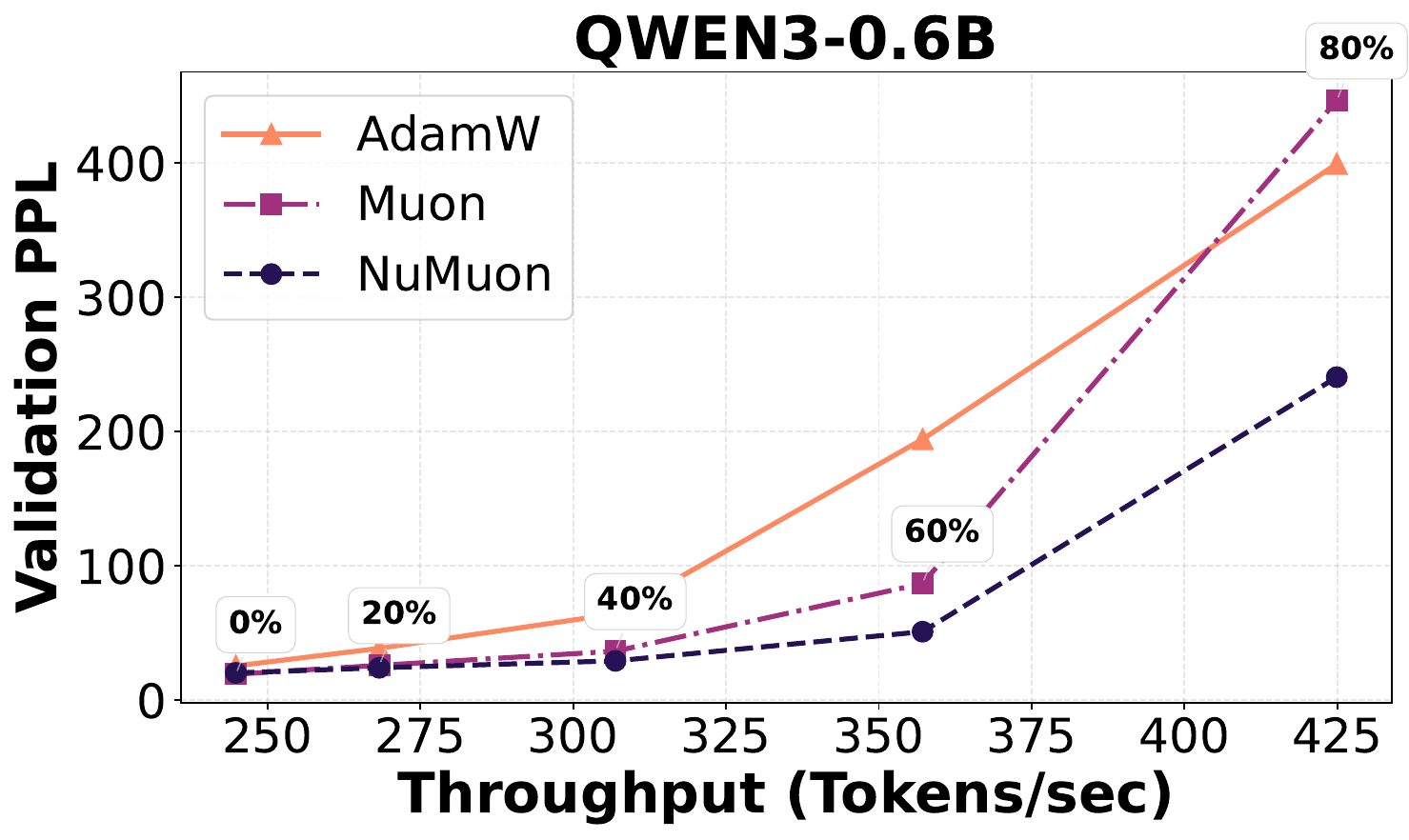}
        \caption{Qwen3-0.6B}
        \label{fig:svdllm_efficiency:qwen3}
    \end{subfigure}
    \hspace{0.5em}
    \begin{subfigure}[b]{0.4\textwidth}
        \centering
        \includegraphics[width=\textwidth]{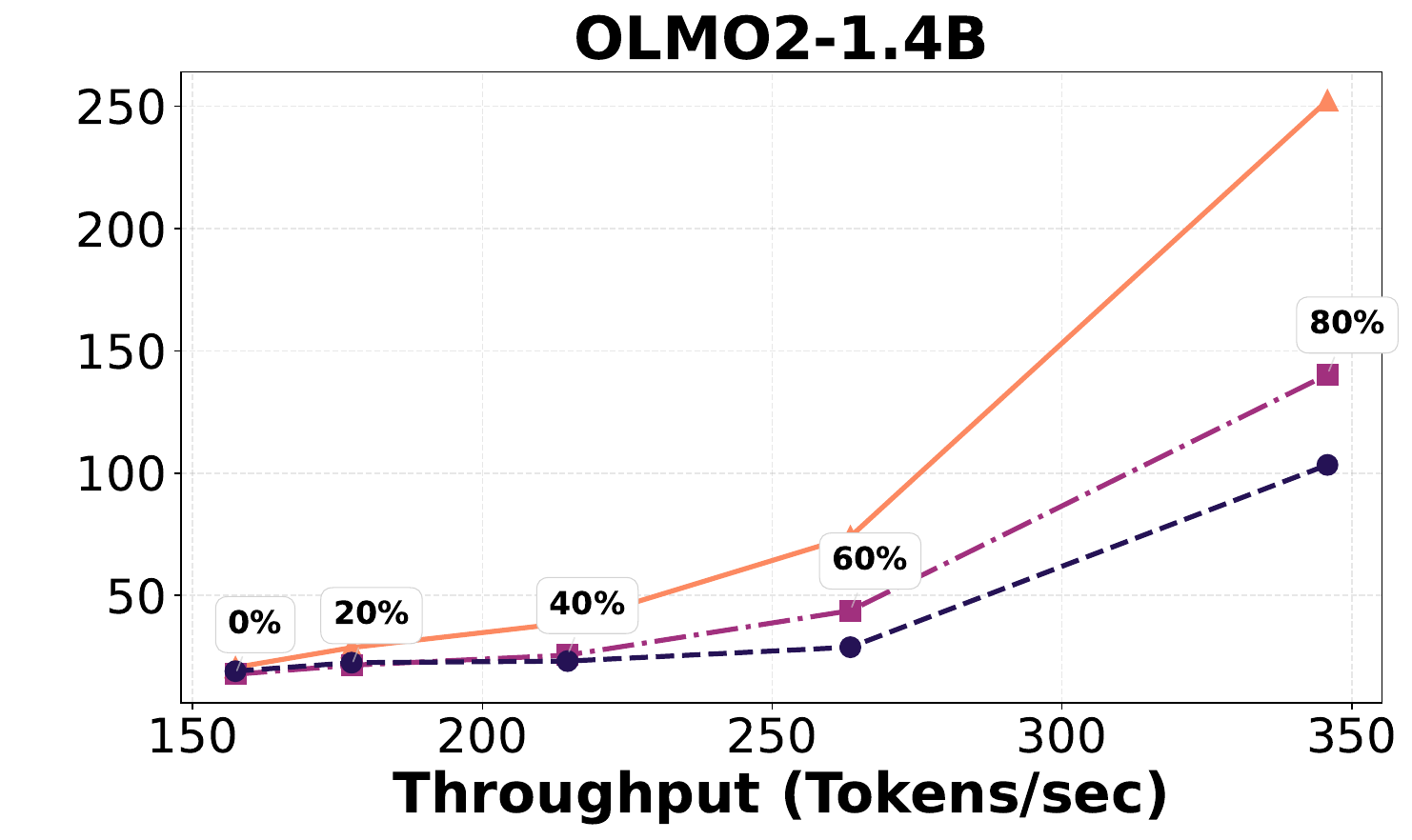}
        \caption{Olmo2-1.4B}
        \label{fig:svdllm_efficiency:olmo2}
    \end{subfigure}
    \begin{subfigure}[b]{0.4\textwidth}
        \centering
        \includegraphics[width=\textwidth]{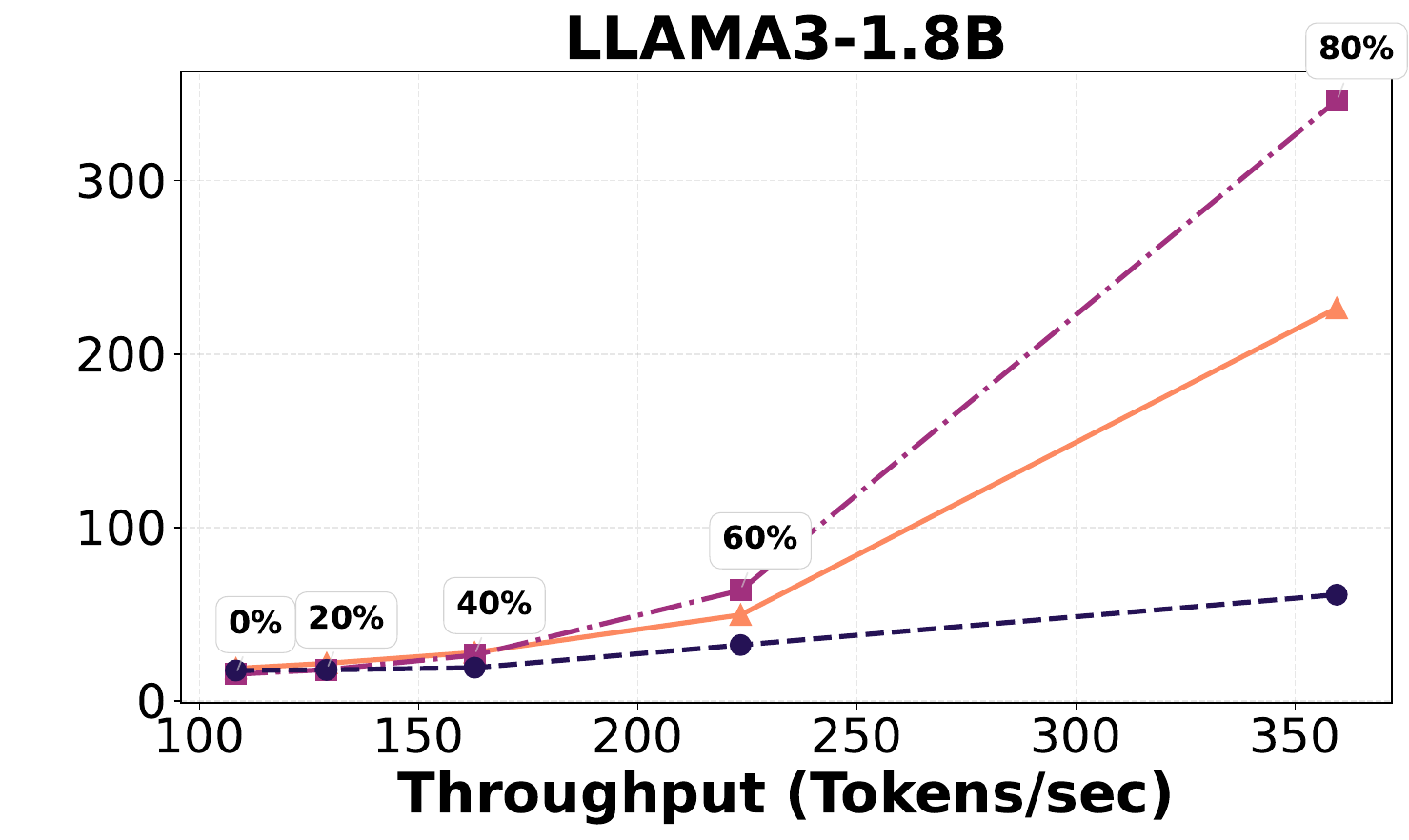}
        \caption{Llama3-1.8B}
        \label{fig:svdllm_efficiency:llama3}
    \end{subfigure}
    \caption{Validation perplexity on WikiText2 against generation inference throughput for Qwen3-0.6B, Olmo2-1.4B, and Llama3-1.8B models compressed via SVD-LLM. NuMuon trained models deliver the lowest perplexity given a throughput, and as such, are far more efficienct that AdamW- or Muon-trained models at moderate to extreme compression rates. Also, we observe that as the model size grows, NuMuon-trained models become more amenable to compression.}
    \label{fig:svdllm_efficiency}
\end{figure*}
}

%% file: tables.tex
\newcommand{\tabLlamaConvergence}{
\begin{table}[t!]
\setlength{\tabcolsep}{1pt}%
\renewcommand{\arraystretch}{0.85}%
\centering
\scriptsize
\caption{Training/validation perplexity (PPL) and efficiency metrics across optimizers for training Llama3-1.8B models. Validation PPL is computed on WikiText2 dataset.}
\vspace{-0.5em}
\label{tab:llama_convergence}
\begin{tabular}{@{}lcccc@{}}
\toprule
\textbf{\textsc{Optim.}} & \textbf{\textsc{Train PPL}} \textdownarrow & \textbf{\textsc{Val. PPL}} \textdownarrow & \textbf{\textsc{Time/Step~(s)}} \textdownarrow & \textbf{\textsc{GPU Mem.~(GB)}} \textdownarrow \\
\midrule
\midrule
\textsc{AdamW}  & 10.83 & 18.75 & 25.12 & 37.62 \\
\textsc{Muon}   & 10.01 & 15.39 & 25.39 & 32.51 \\
\textsc{NuMuon} & 10.59 & 17.56 & 30.03 & 33.88 \\
\bottomrule
\end{tabular}
\end{table}
}

\newcommand{\tabModelConfig}{%
\begin{table}[htbp]
\setlength{\tabcolsep}{5pt}%
\renewcommand{\arraystretch}{0.95}%
\centering
\footnotesize
\caption{Model architectures and training configuration.}
\vspace{-0.5em}
\label{tab:model_config}
\begin{tabular}{@{}lccc@{}}
    \toprule
    \textsc{\textbf{Model Configuration}} & \textbf{\textsc{Qwen3-0.6B}} & \textbf{\textsc{Olmo2-1.4B}} & \textbf{\textsc{Llama3-1.8B}} \\
    \midrule
    \midrule
    \multicolumn{4}{l}{\textit{\textsc{Model Architecture}}} \\
    \midrule
    \midrule
    Hidden Dimensions & 1024 & 2048 & 2048 \\
    Number of Layers & 28 & 16 & 30 \\
    Attention Heads & 16 & 32 & 32 \\
    Key-Value Heads & 8 & 8 & 8 \\
    Head Dimension & 128 & 64 & 64 \\
    FFN Dimension Multiplier & -- & 1.3 & 1.3 \\
    Multiple Of & -- & 1024 & 1024 \\
    RoPE Theta & 1000000 & 500000 & 500000 \\
    Vocabulary Size & 151936 & 128256 & 50257 \\
    QK Norm & Yes & Yes & No \\
    \midrule
    \midrule
    \multicolumn{4}{l}{\textit{\textsc{Distributed Setup}}} \\
    \midrule
    \midrule
    Data Parallel Replicate Degree & 8 & 8 & 8 \\
    Pipeline Parallel Degree & 1 & 1 & 1 \\
    \midrule
    \midrule
    \multicolumn{4}{l}{\textit{\textsc{Training}}} \\
    \midrule
    \midrule
    Local Batch Size & 16 & 8 & 2 \\
    Global Batch Size & 1024 & 1024 & 1024 \\
    Sequence Length & 2048 & 2048 & 2048 \\
    Total Steps & 8000 & 16000 & 20000 \\
    Dataset & Fineweb-EDU & Fineweb-EDU & Fineweb-EDU \\
    Gradient Clipping (Max Norm) & 1.0 & 1.0 & 1.0 \\
    \bottomrule
\end{tabular}
\end{table}
}

\newcommand{\tabOptimizerConfig}{%
\begin{table}[htbp]
\setlength{\tabcolsep}{5pt}%
\renewcommand{\arraystretch}{0.95}%
\centering
\footnotesize
\caption{Optimizer configuration and learning rate scheduler settings.}
\vspace{-0.5em}
\label{tab:optimizer_config}
\begin{tabular}{@{}lccc@{}}
    \toprule
    \textsc{\textbf{Configuration}} & \textbf{\textsc{AdamW}} & \textbf{\textsc{Muon}} & \textbf{\textsc{NuMuon}} \\
    \midrule
    \midrule
    \multicolumn{4}{l}{\textit{\textsc{Optimizer}}} \\
    \midrule
    \midrule
    Learning Rate & 3e-4 & 1e-2 & 1e-2 \\
    Weight Decay & -- & 0.1 & 0.1 \\
    Beta1 & -- & 0.9 & 0.9 \\
    Beta2 & -- & 0.95 & 0.95 \\
    Epsilon & 1e-8 & 1e-8 & 1e-8 \\
    \midrule
    \midrule
    \multicolumn{4}{l}{\textit{\textsc{Muon/NuMuon-Specific Parameters}}} \\
    \midrule
    \midrule
    Scalar Optimizer & -- & Lion & Lion \\
    Embedding Optimizer & -- & AdamW & AdamW \\
    Head Optimizer & -- & AdamW & AdamW \\
    Scalar LR Factor & -- & 1.0 & 1.0 \\
    Embedding LR Factor & -- & 0.5 & 0.5 \\
    Head LR Factor & -- & 1.0 & 1.0 \\
    \midrule
    \midrule
    \multicolumn{4}{l}{\textit{\textsc{NuMuon-Specific Parameters}}} \\
    \midrule
    \midrule
    Krylov Iterations & -- & -- & 2 \\
    Oversample & -- & -- & 8 \\
    Rank Scheduler Type & -- & -- & Cosine \\
    \midrule
    \midrule
    \multicolumn{4}{l}{\textit{\textsc{Learning Rate Scheduler}}} \\
    \midrule
    \midrule
    Scheduler Type & Cosine & WSD & WSD \\
    Warmup Steps & 25\% & 25\% & 25\% \\
    Minimum LR Factor & 0.01 & 0.01 & 0.01 \\
    Decay Type & Cosine & Cosine & Cosine \\
    Warmdown Ratio & 75\% & 20\% & 20\% \\
    \bottomrule
\end{tabular}
\end{table}
}

\newcommand{\tabQwenASVDFull}{
\begin{table}[htbp]
\setlength{\tabcolsep}{2pt}%
\renewcommand{\arraystretch}{0.8}%
\centering
\scriptsize
\caption{WikiText2 validation perplexity and downstream task performance across compressed Qwen3-0.6B models using ASVD.}
\vspace{-0.5em}
\label{tab:qwen3_asvd}
\begin{tabular}{@{}lccccccccccc@{}}
\toprule
\multirow{2}{*}{\rotatebox{90}{\tiny{\textbf{\textsc{Comp.}}}}} & \multirow{2}{*}{\textsc{\textbf{Optimizer}}} & \multirow{2}{*}{\textbf{\textsc{Val. PPL}}} & \multicolumn{7}{c}{\textbf{\textsc{Downstream Tasks}}} & \multirow{2}{*}{\textbf{\textsc{Avg}}} \\
\cmidrule(lr){4-10}
 & & & \textbf{\textsc{ARC-C}} & \textbf{\textsc{ARC-E}} & \textbf{\textsc{H'SWag}} & \textbf{\textsc{LAM'DA}} & \textbf{\textsc{OpenBkQA}} & \textbf{\textsc{PIQA}} & \textbf{\textsc{W'Grande}} & \\
\midrule
\midrule
\multirow{3}{*}{\rotatebox{90}{0\%}} & \textsc{AdamW} & 25.39 & 27.30 & 49.66 & 36.04 & 27.77 & 31.20 & 65.02 & 49.64 & 40.95 \\
 & \textsc{Muon} & 19.20 & \textbf{30.72} & \textbf{57.95} & \textbf{45.56} & \textbf{37.69} & 33.60 & \textbf{67.74} & \textbf{54.38} & \textbf{46.81} \\
 & \textsc{NuMuon} & 20.20 \textcolor{red}{(+5.2\%)} & 29.95 & 55.68 & 43.51 & 35.40 & \textbf{34.40} & 67.41 & 53.51 & 45.69 \textcolor{red}{(-2.4\%)} \\
\midrule
\multirow{3}{*}{\rotatebox{90}{20\%}} & \textsc{AdamW} & 61.91 & 22.53 & 37.21 & 30.83 & 12.13 & 28.20 & 54.68 & 51.70 & 33.90 \\
 & \textsc{Muon} & 29.87 & \textbf{28.67} & 52.06 & 40.80 & 30.49 & 33.80 & 65.94 & \textbf{52.96} & 43.53 \\
 & \textsc{NuMuon} & 21.73 \textcolor{green!40!black}{(-27.3\%)} & 28.33 & \textbf{52.27} & \textbf{42.47} & \textbf{37.55} & \textbf{34.20} & \textbf{66.32} & 52.41 & \textbf{44.79} \textcolor{green!40!black}{(+2.9\%)} \\
\midrule
\multirow{3}{*}{\rotatebox{90}{40\%}} & \textsc{AdamW} & 1677.81 & 24.49 & 28.83 & 26.68 & 0.10 & 28.40 & 51.85 & 47.59 & 29.71 \\
 & \textsc{Muon} & 17097.41 & \textbf{24.57} & 27.31 & 27.57 & 1.07 & 26.20 & 52.12 & 50.12 & 29.85 \\
 & \textsc{NuMuon} & 54.79 \textcolor{green!40!black}{(-99.7\%)} & 23.89 & \textbf{40.49} & \textbf{35.79} & \textbf{24.92} & \textbf{30.60} & \textbf{61.48} & \textbf{50.67} & \textbf{38.26} \textcolor{green!40!black}{(+28.2\%)} \\
\midrule
\multirow{3}{*}{\rotatebox{90}{60\%}} & \textsc{AdamW} & 5306.46 & 25.85 & 27.78 & 26.51 & 0.06 & \textbf{30.60} & 52.45 & 50.12 & 30.48 \\
 & \textsc{Muon} & 30980.97 & 25.17 & 27.06 & 27.08 & 0.25 & 29.20 & 51.58 & 47.99 & 29.76 \\
 & \textsc{NuMuon} & 1030.72 \textcolor{green!40!black}{(-96.7\%)} & \textbf{27.13} & \textbf{33.08} & \textbf{30.36} & \textbf{12.98} & 28.20 & \textbf{57.07} & \textbf{51.14} & \textbf{34.28} \textcolor{green!40!black}{(+15.2\%)} \\
\bottomrule
\end{tabular}
\end{table}
}

\newcommand{\tabOlmoASVDFull}{
\begin{table}[htbp]
\setlength{\tabcolsep}{2pt}%
\renewcommand{\arraystretch}{0.8}%
\centering
\scriptsize
\caption{WikiText2 validation perplexity and downstream task performance across compressed Olmo2-1.4B models using ASVD.}
\vspace{-0.5em}
\label{tab:olmo2_asvd}
\begin{tabular}{@{}lccccccccccc@{}}
\toprule
\multirow{2}{*}{\rotatebox{90}{\tiny{\textbf{\textsc{Comp.}}}}} & \multirow{2}{*}{\textsc{\textbf{Optimizer}}} & \multirow{2}{*}{\textbf{\textsc{Val. PPL}}} & \multicolumn{7}{c}{\textbf{\textsc{Downstream Tasks}}} & \multirow{2}{*}{\textbf{\textsc{Avg}}} \\
\cmidrule(lr){4-10}
 & & & \textbf{\textsc{ARC-C}} & \textbf{\textsc{ARC-E}} & \textbf{\textsc{H'SWag}} & \textbf{\textsc{LAM'DA}} & \textbf{\textsc{OpenBkQA}} & \textbf{\textsc{PIQA}} & \textbf{\textsc{W'Grande}} & \\
\midrule
\midrule
\multirow{3}{*}{\rotatebox{90}{0\%}} & \textsc{AdamW} & 20.58 & 30.20 & 55.47 & 44.96 & 34.76 & 35.80 & 68.93 & 50.67 & 45.83 \\
 & \textsc{Muon} & 17.80 & \textbf{34.04} & \textbf{60.10} & \textbf{50.94} & \textbf{39.28} & \textbf{37.80} & \textbf{70.46} & 54.85 & \textbf{49.64} \\
 & \textsc{NuMuon} & 19.06 \textcolor{red}{(+7.1\%)} & 31.06 & 59.55 & 48.40 & 37.80 & 37.00 & 68.99 & \textbf{55.41} & 48.32 \textcolor{red}{(-2.7\%)} \\
\midrule
\multirow{3}{*}{\rotatebox{90}{20\%}} & \textsc{AdamW} & 35.89 & 28.07 & 49.12 & 40.12 & 23.09 & 30.20 & 64.47 & 50.67 & 40.82 \\
 & \textsc{Muon} & 738.86 & 22.78 & 40.19 & 28.98 & 2.87 & 27.40 & 58.43 & 50.51 & 33.02 \\
 & \textsc{NuMuon} & 20.09 \textcolor{green!40!black}{(-97.3\%)} & \textbf{30.38} & \textbf{57.24} & \textbf{47.22} & \textbf{36.44} & \textbf{35.20} & \textbf{69.10} & \textbf{55.80} & \textbf{47.34} \textcolor{green!40!black}{(+43.4\%)} \\
\midrule
\multirow{3}{*}{\rotatebox{90}{40\%}} & \textsc{AdamW} & 873.00 & 22.18 & 32.37 & 28.62 & 2.76 & 27.40 & 51.80 & 49.64 & 30.68 \\
 & \textsc{Muon} & 39946.67 & \textbf{24.32} & 25.67 & 26.06 & 0.00 & 27.40 & 51.36 & \textbf{52.64} & 29.64 \\
 & \textsc{NuMuon} & 52.61 \textcolor{green!40!black}{(-99.9\%)} & 24.23 & \textbf{50.59} & \textbf{37.02} & \textbf{17.33} & \textbf{30.80} & \textbf{63.60} & 51.46 & \textbf{39.29} \textcolor{green!40!black}{(+32.6\%)} \\
\midrule
\multirow{3}{*}{\rotatebox{90}{60\%}} & \textsc{AdamW} & 40436.60 & 25.51 & 26.22 & 26.26 & 0.00 & 26.60 & 52.29 & 49.33 & 29.46 \\
 & \textsc{Muon} & 29425.49 & 25.09 & 25.25 & 26.74 & 0.00 & \textbf{27.80} & 52.61 & 50.59 & 29.73 \\
 & \textsc{NuMuon} & 451.06 \textcolor{green!40!black}{(-98.5\%)} & \textbf{25.94} & \textbf{39.23} & \textbf{30.71} & \textbf{9.06} & \textbf{27.80} & \textbf{58.76} & \textbf{51.85} & \textbf{34.76} \textcolor{green!40!black}{(+16.9\%)} \\
\bottomrule
\end{tabular}
\end{table}
}

\newcommand{\tabLlamaASVDFull}{
\begin{table}[htbp]
\setlength{\tabcolsep}{2pt}%
\renewcommand{\arraystretch}{0.8}%
\centering
\scriptsize
\caption{WikiText2 validation perplexity and downstream task performance across compressed Llama3-1.8B models using ASVD.}
\vspace{-0.5em}
\label{tab:llama3_asvd}
\begin{tabular}{@{}lccccccccccc@{}}
\toprule
\multirow{2}{*}{\rotatebox{90}{\tiny{\textbf{\textsc{Comp.}}}}} & \multirow{2}{*}{\textsc{\textbf{Optimizer}}} & \multirow{2}{*}{\textbf{\textsc{Val. PPL}}} & \multicolumn{7}{c}{\textbf{\textsc{Downstream Tasks}}} & \multirow{2}{*}{\textbf{\textsc{Avg}}} \\
\cmidrule(lr){4-10}
 & & & \textbf{\textsc{ARC-C}} & \textbf{\textsc{ARC-E}} & \textbf{\textsc{H'SWag}} & \textbf{\textsc{LAM'DA}} & \textbf{\textsc{OpenBkQA}} & \textbf{\textsc{PIQA}} & \textbf{\textsc{W'Grande}} & \\
\midrule
\midrule
\multirow{3}{*}{\rotatebox{90}{0\%}} & \textsc{AdamW} & 18.75 & 33.79 & 62.67 & 51.28 & 40.46 & 37.00 & 71.82 & 55.33 & 50.34 \\
 & \textsc{Muon} & 15.39 & \textbf{38.82} & \textbf{67.80} & \textbf{58.35} & \textbf{47.47} & \textbf{41.80} & \textbf{74.37} & \textbf{59.27} & \textbf{55.41} \\
 & \textsc{NuMuon} & 17.56 \textcolor{red}{(+14.1\%)} & 35.49 & 65.19 & 54.96 & 44.38 & 38.20 & 72.91 & 58.56 & 52.81 \textcolor{red}{(-4.7\%)} \\
\midrule
\multirow{3}{*}{\rotatebox{90}{20\%}} & \textsc{AdamW} & 27.19 & 31.57 & 57.53 & 46.29 & 32.25 & 36.20 & 68.06 & 52.01 & 46.27 \\
 & \textsc{Muon} & 19.13 & 34.47 & 62.71 & 52.62 & 42.60 & 37.60 & 71.38 & 56.27 & 51.09 \\
 & \textsc{NuMuon} & 17.85 \textcolor{green!40!black}{(-6.7\%)} & \textbf{35.41} & \textbf{64.18} & \textbf{53.98} & \textbf{44.89} & \textbf{38.40} & \textbf{71.98} & \textbf{57.30} & \textbf{52.31} \textcolor{green!40!black}{(+2.4\%)} \\
\midrule
\multirow{3}{*}{\rotatebox{90}{40\%}} & \textsc{AdamW} & 718.71 & 22.61 & 37.25 & 30.15 & 6.00 & 26.40 & 54.52 & 49.88 & 32.40 \\
 & \textsc{Muon} & 1443.55 & 24.66 & 31.90 & 29.28 & 7.08 & 27.80 & 55.93 & 50.12 & 32.40 \\
 & \textsc{NuMuon} & 20.56 \textcolor{green!40!black}{(-98.6\%)} & \textbf{32.76} & \textbf{62.88} & \textbf{50.27} & \textbf{42.64} & \textbf{37.20} & \textbf{71.38} & \textbf{56.20} & \textbf{50.48} \textcolor{green!40!black}{(+55.8\%)} \\
\midrule
\multirow{3}{*}{\rotatebox{90}{60\%}} & \textsc{AdamW} & 10251.84 & 24.49 & 29.38 & 26.50 & 0.19 & 30.40 & 51.85 & 49.64 & 30.35 \\
 & \textsc{Muon} & 26731.09 & 25.94 & 26.26 & 25.86 & 0.02 & 28.20 & 48.69 & 50.28 & 29.32 \\
 & \textsc{NuMuon} & 268.86 \textcolor{green!40!black}{(-99.0\%)} & \textbf{26.11} & \textbf{39.73} & \textbf{29.66} & \textbf{19.81} & \textbf{33.00} & \textbf{58.43} & \textbf{50.43} & \textbf{36.74} \textcolor{green!40!black}{(+25.3\%)} \\
\bottomrule
\end{tabular}
\end{table}
}

\newcommand{\tabQwenWhitenFull}{
\begin{table}[htbp]
\setlength{\tabcolsep}{2pt}%
\renewcommand{\arraystretch}{0.8}%
\centering
\scriptsize
\caption{WikiText2 validation perplexity and downstream task performance across compressed Qwen3-0.6B models using SVD-LLM (Whitening).}
\vspace{-0.5em}
\label{tab:qwen3_whiten}
\begin{tabular}{@{}lccccccccccc@{}}
\toprule
\multirow{2}{*}{\rotatebox{90}{\tiny{\textbf{\textsc{Comp.}}}}} & \multirow{2}{*}{\textsc{\textbf{Optimizer}}} & \multirow{2}{*}{\textbf{\textsc{Val. PPL}}} & \multicolumn{7}{c}{\textbf{\textsc{Downstream Tasks}}} & \multirow{2}{*}{\textbf{\textsc{Avg}}} \\
\cmidrule(lr){4-10}
 & & & \textbf{\textsc{ARC-C}} & \textbf{\textsc{ARC-E}} & \textbf{\textsc{H'SWag}} & \textbf{\textsc{LAM'DA}} & \textbf{\textsc{OpenBkQA}} & \textbf{\textsc{PIQA}} & \textbf{\textsc{W'Grande}} & \\
\midrule
\midrule
\multirow{3}{*}{\rotatebox{90}{0\%}} & \textsc{AdamW} & 25.39 & 27.30 & 49.66 & 36.04 & 27.77 & 31.20 & 65.02 & 49.64 & 40.95 \\
 & \textsc{Muon} & 19.20 & \textbf{30.72} & \textbf{57.95} & \textbf{45.56} & \textbf{37.69} & 33.60 & \textbf{67.74} & \textbf{54.38} & \textbf{46.81} \\
 & \textsc{NuMuon} & 20.20 \textcolor{red}{(+5.2\%)} & 29.95 & 55.68 & 43.51 & 35.40 & \textbf{34.40} & 67.41 & 53.51 & 45.69 \textcolor{red}{(-2.4\%)} \\
\midrule
\multirow{3}{*}{\rotatebox{90}{20\%}} & \textsc{AdamW} & 39.36 & 25.17 & 42.30 & 31.10 & 14.30 & 27.40 & 56.86 & 49.96 & 35.30 \\
 & \textsc{Muon} & 27.89 & 26.62 & 45.33 & 37.70 & 28.04 & 31.40 & 62.46 & \textbf{53.83} & 40.77 \\
 & \textsc{NuMuon} & 22.87 \textcolor{green!40!black}{(-18.0\%)} & \textbf{27.47} & \textbf{48.65} & \textbf{40.80} & \textbf{31.17} & \textbf{34.60} & \textbf{65.02} & 51.62 & \textbf{42.76} \textcolor{green!40!black}{(+4.9\%)} \\
\midrule
\multirow{3}{*}{\rotatebox{90}{40\%}} & \textsc{AdamW} & 142.39 & 22.61 & 31.90 & 26.65 & 2.17 & 27.40 & 51.41 & \textbf{50.75} & 30.41 \\
 & \textsc{Muon} & 52.14 & 23.12 & 34.68 & 32.34 & 18.16 & 28.00 & 56.75 & 49.72 & 34.68 \\
 & \textsc{NuMuon} & 32.07 \textcolor{green!40!black}{(-38.5\%)} & \textbf{24.32} & \textbf{41.29} & \textbf{35.88} & \textbf{23.52} & \textbf{29.60} & \textbf{61.53} & \textbf{50.75} & \textbf{38.13} \textcolor{green!40!black}{(+9.9\%)} \\
\midrule
\multirow{3}{*}{\rotatebox{90}{60\%}} & \textsc{AdamW} & 545.52 & 23.89 & 29.92 & 26.00 & 0.21 & 27.80 & 51.47 & 50.43 & 29.96 \\
 & \textsc{Muon} & 195.35 & 22.95 & 28.11 & 28.06 & 3.10 & \textbf{28.20} & 51.74 & 50.43 & 30.37 \\
 & \textsc{NuMuon} & 95.18 \textcolor{green!40!black}{(-51.3\%)} & \textbf{25.09} & \textbf{30.77} & \textbf{29.88} & \textbf{12.03} & 26.80 & \textbf{55.77} & \textbf{51.54} & \textbf{33.13} \textcolor{green!40!black}{(+9.1\%)} \\
\midrule
\multirow{3}{*}{\rotatebox{90}{80\%}} & \textsc{AdamW} & 1412.22 & 23.63 & 27.23 & 26.59 & 0.00 & 28.20 & 51.03 & 49.41 & 29.44 \\
 & \textsc{Muon} & 1560.59 & \textbf{25.00} & \textbf{27.61} & 26.31 & 0.00 & \textbf{29.20} & 51.58 & \textbf{52.41} & \textbf{30.30} \\
 & \textsc{NuMuon} & 840.76 \textcolor{green!40!black}{(-46.1\%)} & \textbf{25.00} & 26.64 & \textbf{26.95} & \textbf{0.10} & 27.40 & \textbf{51.96} & 50.59 & 29.81 \textcolor{red}{(-1.6\%)} \\
\bottomrule
\end{tabular}
\end{table}
}

\newcommand{\tabOlmoWhitenFull}{
\begin{table}[htbp]
\setlength{\tabcolsep}{2pt}%
\renewcommand{\arraystretch}{0.8}%
\centering
\scriptsize
\caption{WikiText2 validation perplexity and downstream task performance across compressed Olmo2-1.4B models using SVD-LLM (Whitening).}
\vspace{-0.5em}
\label{tab:olmo2_whiten}
\begin{tabular}{@{}lccccccccccc@{}}
\toprule
\multirow{2}{*}{\rotatebox{90}{\tiny{\textbf{\textsc{Comp.}}}}} & \multirow{2}{*}{\textsc{\textbf{Optimizer}}} & \multirow{2}{*}{\textbf{\textsc{Val. PPL}}} & \multicolumn{7}{c}{\textbf{\textsc{Downstream Tasks}}} & \multirow{2}{*}{\textbf{\textsc{Avg}}} \\
\cmidrule(lr){4-10}
 & & & \textbf{\textsc{ARC-C}} & \textbf{\textsc{ARC-E}} & \textbf{\textsc{H'SWag}} & \textbf{\textsc{LAM'DA}} & \textbf{\textsc{OpenBkQA}} & \textbf{\textsc{PIQA}} & \textbf{\textsc{W'Grande}} & \\
\midrule
\midrule
\multirow{3}{*}{\rotatebox{90}{0\%}} & \textsc{AdamW} & 20.58 & 30.20 & 55.47 & 44.96 & 34.76 & 35.80 & 68.93 & 50.67 & 45.83 \\
 & \textsc{Muon} & 17.80 & \textbf{34.04} & \textbf{60.10} & \textbf{50.94} & \textbf{39.28} & \textbf{37.80} & \textbf{70.46} & 54.85 & \textbf{49.64} \\
 & \textsc{NuMuon} & 19.06 \textcolor{red}{(+7.1\%)} & 31.06 & 59.55 & 48.40 & 37.80 & 37.00 & 68.99 & \textbf{55.41} & 48.32 \textcolor{red}{(-2.7\%)} \\
\midrule
\multirow{3}{*}{\rotatebox{90}{20\%}} & \textsc{AdamW} & 26.65 & 26.11 & 50.38 & 39.95 & 27.60 & 32.00 & 65.23 & 51.85 & 41.87 \\
 & \textsc{Muon} & 20.13 & \textbf{30.72} & 56.31 & 43.98 & \textbf{37.94} & 33.80 & 65.72 & \textbf{56.67} & 46.45 \\
 & \textsc{NuMuon} & 19.46 \textcolor{green!40!black}{(-3.3\%)} & 29.95 & \textbf{58.84} & \textbf{47.38} & 36.44 & \textbf{35.20} & \textbf{68.82} & 56.35 & \textbf{47.57} \textcolor{green!40!black}{(+2.4\%)} \\
\midrule
\multirow{3}{*}{\rotatebox{90}{40\%}} & \textsc{AdamW} & 51.62 & 24.57 & 39.44 & 32.81 & 14.07 & 27.60 & 56.58 & 51.46 & 35.22 \\
 & \textsc{Muon} & 27.22 & 25.26 & 43.98 & 36.77 & 30.93 & 29.20 & 60.01 & 53.51 & 39.95 \\
 & \textsc{NuMuon} & 20.75 \textcolor{green!40!black}{(-23.8\%)} & \textbf{28.84} & \textbf{56.94} & \textbf{44.95} & \textbf{35.38} & \textbf{35.40} & \textbf{67.03} & \textbf{55.41} & \textbf{46.28} \textcolor{green!40!black}{(+15.8\%)} \\
\midrule
\multirow{3}{*}{\rotatebox{90}{60\%}} & \textsc{AdamW} & 221.80 & 23.63 & 30.64 & 27.12 & 2.31 & 26.20 & 52.56 & 50.75 & 30.46 \\
 & \textsc{Muon} & 67.48 & 23.21 & 28.87 & 29.30 & 15.43 & 24.80 & 52.61 & 52.09 & 32.33 \\
 & \textsc{NuMuon} & 30.29 \textcolor{green!40!black}{(-55.1\%)} & \textbf{24.83} & \textbf{42.47} & \textbf{37.10} & \textbf{26.53} & \textbf{30.40} & \textbf{59.58} & \textbf{52.25} & \textbf{39.02} \textcolor{green!40!black}{(+20.7\%)} \\
\midrule
\multirow{3}{*}{\rotatebox{90}{80\%}} & \textsc{AdamW} & 1040.87 & 24.66 & 24.79 & 25.62 & 0.08 & 25.20 & \textbf{51.63} & 48.54 & 28.65 \\
 & \textsc{Muon} & 474.07 & 24.83 & \textbf{28.11} & 26.46 & 0.70 & \textbf{27.00} & 50.49 & \textbf{50.36} & \textbf{29.71} \\
 & \textsc{NuMuon} & 289.28 \textcolor{green!40!black}{(-39.0\%)} & \textbf{26.02} & 27.02 & \textbf{26.74} & \textbf{1.11} & 26.00 & 50.60 & \textbf{50.36} & 29.69 \\
\bottomrule
\end{tabular}
\end{table}
}

\newcommand{\tabLlamaWhitenFull}{
\begin{table}[htbp]
\setlength{\tabcolsep}{2pt}%
\renewcommand{\arraystretch}{0.8}%
\centering
\scriptsize
\caption{WikiText2 validation perplexity and downstream task performance across compressed Llama3-1.8B models using SVD-LLM (Whitening).}
\vspace{-0.5em}
\label{tab:llama3_whiten}
\begin{tabular}{@{}lccccccccccc@{}}
\toprule
\multirow{2}{*}{\rotatebox{90}{\tiny{\textbf{\textsc{Comp.}}}}} & \multirow{2}{*}{\textsc{\textbf{Optimizer}}} & \multirow{2}{*}{\textbf{\textsc{Val. PPL}}} & \multicolumn{7}{c}{\textbf{\textsc{Downstream Tasks}}} & \multirow{2}{*}{\textbf{\textsc{Avg}}} \\
\cmidrule(lr){4-10}
 & & & \textbf{\textsc{ARC-C}} & \textbf{\textsc{ARC-E}} & \textbf{\textsc{H'SWag}} & \textbf{\textsc{LAM'DA}} & \textbf{\textsc{OpenBkQA}} & \textbf{\textsc{PIQA}} & \textbf{\textsc{W'Grande}} & \\
\midrule
\midrule
\multirow{3}{*}{\rotatebox{90}{0\%}} & \textsc{AdamW} & 18.75 & 33.79 & 62.67 & 51.28 & 40.46 & 37.00 & 71.82 & 55.33 & 50.34 \\
 & \textsc{Muon} & 15.39 & \textbf{38.82} & \textbf{67.80} & \textbf{58.35} & \textbf{47.47} & \textbf{41.80} & \textbf{74.37} & \textbf{59.27} & \textbf{55.41} \\
 & \textsc{NuMuon} & 17.56 \textcolor{red}{(+14.1\%)} & 35.49 & 65.19 & 54.96 & 44.38 & 38.20 & 72.91 & 58.56 & 52.81 \textcolor{red}{(-4.7\%)} \\
\midrule
\multirow{3}{*}{\rotatebox{90}{20\%}} & \textsc{AdamW} & 24.27 & 29.78 & 56.19 & 43.82 & 35.77 & 33.40 & 67.03 & 53.67 & 45.67 \\
 & \textsc{Muon} & 18.04 & 30.20 & 58.38 & 47.86 & 40.69 & 34.40 & 66.54 & \textbf{58.33} & 48.06 \\
 & \textsc{NuMuon} & 17.90 \textcolor{green!40!black}{(-0.8\%)} & \textbf{34.39} & \textbf{63.80} & \textbf{53.32} & \textbf{42.29} & \textbf{37.00} & \textbf{71.71} & 58.09 & \textbf{51.51} \textcolor{green!40!black}{(+7.2\%)} \\
\midrule
\multirow{3}{*}{\rotatebox{90}{40\%}} & \textsc{AdamW} & 51.68 & 23.38 & 39.02 & 33.24 & 14.85 & 28.00 & 57.40 & 52.25 & 35.45 \\
 & \textsc{Muon} & 26.39 & 26.19 & 43.14 & 36.74 & 28.86 & 28.20 & 59.52 & 53.83 & 39.50 \\
 & \textsc{NuMuon} & 19.18 \textcolor{green!40!black}{(-27.3\%)} & \textbf{31.66} & \textbf{60.77} & \textbf{49.49} & \textbf{38.87} & \textbf{34.80} & \textbf{69.37} & \textbf{57.38} & \textbf{48.91} \textcolor{green!40!black}{(+23.8\%)} \\
\midrule
\multirow{3}{*}{\rotatebox{90}{60\%}} & \textsc{AdamW} & 307.36 & 23.72 & 29.21 & 26.87 & 0.74 & 26.60 & 51.20 & 49.88 & 29.75 \\
 & \textsc{Muon} & 63.75 & 22.44 & 31.48 & 29.69 & 10.63 & 24.80 & 52.99 & 52.17 & 32.03 \\
 & \textsc{NuMuon} & 32.22 \textcolor{green!40!black}{(-49.5\%)} & \textbf{23.81} & \textbf{45.41} & \textbf{37.54} & \textbf{23.48} & \textbf{29.00} & \textbf{61.15} & \textbf{53.67} & \textbf{39.15} \textcolor{green!40!black}{(+22.2\%)} \\
\midrule
\multirow{3}{*}{\rotatebox{90}{80\%}} & \textsc{AdamW} & 2672.33 & \textbf{25.00} & 25.67 & 26.22 & 0.00 & 27.00 & 50.11 & 50.43 & 29.20 \\
 & \textsc{Muon} & 346.10 & 24.32 & 27.53 & 26.80 & 0.21 & \textbf{27.20} & 50.76 & 50.36 & 29.60 \\
 & \textsc{NuMuon} & 262.48 \textcolor{green!40!black}{(-24.2\%)} & 24.15 & \textbf{30.09} & \textbf{27.52} & \textbf{1.47} & 25.60 & \textbf{51.58} & \textbf{51.62} & \textbf{30.29} \textcolor{green!40!black}{(+2.3\%)} \\
\bottomrule
\end{tabular}
\end{table}
}

\newcommand{\tabQwenSVDLLMFull}{
\begin{table}[htbp]
\setlength{\tabcolsep}{2pt}%
\renewcommand{\arraystretch}{0.8}%
\centering
\scriptsize
\caption{WikiText2 validation perplexity and downstream task performance across compressed Qwen3-0.6B models using SVD-LLM (Whitening + LoRA).}
\vspace{-0.5em}
\label{tab:qwen3_svdllm}
\begin{tabular}{@{}lccccccccccc@{}}
\toprule
\multirow{2}{*}{\rotatebox{90}{\tiny{\textbf{\textsc{Comp.}}}}} & \multirow{2}{*}{\textsc{\textbf{Optimizer}}} & \multirow{2}{*}{\textbf{\textsc{Val. PPL}}} & \multicolumn{7}{c}{\textbf{\textsc{Downstream Tasks}}} & \multirow{2}{*}{\textbf{\textsc{Avg}}} \\
\cmidrule(lr){4-10}
 & & & \textbf{\textsc{ARC-C}} & \textbf{\textsc{ARC-E}} & \textbf{\textsc{H'SWag}} & \textbf{\textsc{LAM'DA}} & \textbf{\textsc{OpenBkQA}} & \textbf{\textsc{PIQA}} & \textbf{\textsc{W'Grande}} & \\
\midrule
\midrule
\multirow{3}{*}{\rotatebox{90}{0\%}} & \textsc{AdamW} & 25.39 & 27.30 & 49.66 & 36.04 & 27.77 & 31.20 & 65.02 & 49.64 & 40.95 \\
 & \textsc{Muon} & 19.20 & \textbf{30.72} & \textbf{57.95} & \textbf{45.56} & \textbf{37.69} & 33.60 & \textbf{67.74} & \textbf{54.38} & \textbf{46.81} \\
 & \textsc{NuMuon} & 20.20 \textcolor{red}{(+5.2\%)} & 29.95 & 55.68 & 43.51 & 35.40 & \textbf{34.40} & 67.41 & 53.51 & 45.69 \textcolor{red}{(-2.4\%)} \\
\midrule
\multirow{3}{*}{\rotatebox{90}{20\%}} & \textsc{AdamW} & 38.49 & 25.51 & 46.21 & 32.10 & 19.66 & 30.60 & 61.21 & 50.59 & 37.98 \\
 & \textsc{Muon} & 25.84 & \textbf{29.52} & \textbf{51.09} & 40.22 & 30.43 & 32.40 & \textbf{66.87} & \textbf{53.04} & 43.37 \\
 & \textsc{NuMuon} & 24.01 \textcolor{green!40!black}{(-7.1\%)} & 28.41 & 49.83 & \textbf{41.40} & \textbf{32.60} & \textbf{35.40} & 65.18 & 52.41 & \textbf{43.60} \textcolor{green!40!black}{(+0.5\%)} \\
\midrule
\multirow{3}{*}{\rotatebox{90}{40\%}} & \textsc{AdamW} & 64.17 & 22.53 & 39.18 & 28.39 & 9.86 & 27.60 & 57.18 & 51.70 & 33.78 \\
 & \textsc{Muon} & 36.36 & \textbf{27.65} & 45.54 & 36.33 & 25.73 & 31.80 & 62.02 & 50.20 & 39.90 \\
 & \textsc{NuMuon} & 29.16 \textcolor{green!40!black}{(-19.8\%)} & 26.62 & \textbf{47.47} & \textbf{38.10} & \textbf{28.60} & \textbf{32.60} & \textbf{63.71} & \textbf{53.91} & \textbf{41.57} \textcolor{green!40!black}{(+4.2\%)} \\
\midrule
\multirow{3}{*}{\rotatebox{90}{60\%}} & \textsc{AdamW} & 194.18 & 22.10 & 33.50 & 26.33 & 2.74 & 26.00 & 53.54 & 49.57 & 30.54 \\
 & \textsc{Muon} & 86.75 & 23.63 & 37.12 & 31.25 & 18.61 & 27.60 & 56.80 & 50.36 & 35.05 \\
 & \textsc{NuMuon} & 50.88 \textcolor{green!40!black}{(-41.3\%)} & \textbf{24.57} & \textbf{41.04} & \textbf{32.60} & \textbf{20.09} & \textbf{30.00} & \textbf{60.55} & \textbf{50.59} & \textbf{37.06} \textcolor{green!40!black}{(+5.7\%)} \\
\midrule
\multirow{3}{*}{\rotatebox{90}{80\%}} & \textsc{AdamW} & 399.33 & 21.67 & 30.09 & 25.86 & 0.56 & \textbf{27.60} & 52.29 & 49.49 & 29.65 \\
 & \textsc{Muon} & 446.48 & \textbf{23.89} & 31.78 & 27.33 & 5.59 & 26.40 & 52.77 & 50.91 & 31.24 \\
 & \textsc{NuMuon} & 240.48 \textcolor{green!40!black}{(-46.1\%)} & 23.21 & \textbf{33.67} & \textbf{27.45} & \textbf{9.82} & 27.40 & \textbf{55.71} & \textbf{51.78} & \textbf{32.72} \textcolor{green!40!black}{(+4.7\%)} \\
\bottomrule
\end{tabular}
\end{table}
}

\newcommand{\tabOlmoSVDLLMFull}{
\begin{table}[htbp]
\setlength{\tabcolsep}{2pt}%
\renewcommand{\arraystretch}{0.8}%
\centering
\scriptsize
\caption{WikiText2 validation perplexity and downstream task performance across compressed Olmo2-1.4B models using SVD-LLM (Whitening + LoRA).}
\vspace{-0.5em}
\label{tab:olmo2_svdllm}
\begin{tabular}{@{}lccccccccccc@{}}
\toprule
\multirow{2}{*}{\rotatebox{90}{\tiny{\textbf{\textsc{Comp.}}}}} & \multirow{2}{*}{\textsc{\textbf{Optimizer}}} & \multirow{2}{*}{\textbf{\textsc{Val. PPL}}} & \multicolumn{7}{c}{\textbf{\textsc{Downstream Tasks}}} & \multirow{2}{*}{\textbf{\textsc{Avg}}} \\
\cmidrule(lr){4-10}
 & & & \textbf{\textsc{ARC-C}} & \textbf{\textsc{ARC-E}} & \textbf{\textsc{H'SWag}} & \textbf{\textsc{LAM'DA}} & \textbf{\textsc{OpenBkQA}} & \textbf{\textsc{PIQA}} & \textbf{\textsc{W'Grande}} & \\
\midrule
\midrule
\multirow{3}{*}{\rotatebox{90}{0\%}} & \textsc{AdamW} & 20.58 & 30.20 & 55.47 & 44.96 & 34.76 & 35.80 & 68.93 & 50.67 & 45.83 \\
 & \textsc{Muon} & 17.80 & \textbf{34.04} & \textbf{60.10} & \textbf{50.94} & \textbf{39.28} & \textbf{37.80} & \textbf{70.46} & 54.85 & \textbf{49.64} \\
 & \textsc{NuMuon} & 19.06 \textcolor{red}{(+7.1\%)} & 31.06 & 59.55 & 48.40 & 37.80 & 37.00 & 68.99 & \textbf{55.41} & 48.32 \textcolor{red}{(-2.7\%)} \\
\midrule
\multirow{3}{*}{\rotatebox{90}{20\%}} & \textsc{AdamW} & 28.69 & 26.96 & 52.10 & 41.15 & 30.70 & 32.80 & 66.21 & 52.09 & 43.14 \\
 & \textsc{Muon} & 21.48 & \textbf{32.76} & 57.74 & 46.80 & \textbf{38.19} & 35.60 & \textbf{68.93} & \textbf{54.70} & \textbf{47.82} \\
 & \textsc{NuMuon} & 22.53 \textcolor{red}{(+4.9\%)} & 31.66 & \textbf{58.54} & \textbf{47.63} & 36.72 & \textbf{36.20} & \textbf{68.93} & 54.62 & 47.76 \textcolor{red}{(-0.1\%)} \\
\midrule
\multirow{3}{*}{\rotatebox{90}{40\%}} & \textsc{AdamW} & 38.80 & 26.11 & 45.79 & 37.09 & 20.47 & 29.00 & 62.57 & 52.01 & 39.01 \\
 & \textsc{Muon} & 25.61 & 29.35 & 51.94 & 42.51 & 32.23 & 32.40 & 64.96 & 53.91 & 43.90 \\
 & \textsc{NuMuon} & 23.17 \textcolor{green!40!black}{(-9.5\%)} & \textbf{30.55} & \textbf{55.72} & \textbf{46.51} & \textbf{38.23} & \textbf{35.80} & \textbf{68.23} & \textbf{55.41} & \textbf{47.21} \textcolor{green!40!black}{(+7.5\%)} \\
\midrule
\multirow{3}{*}{\rotatebox{90}{60\%}} & \textsc{AdamW} & 74.04 & 23.55 & 39.18 & 31.17 & 9.86 & 28.00 & 58.32 & 47.51 & 33.94 \\
 & \textsc{Muon} & 43.80 & 26.19 & 36.95 & 34.47 & 16.81 & 28.00 & 58.11 & 52.49 & 36.15 \\
 & \textsc{NuMuon} & 28.85 \textcolor{green!40!black}{(-34.1\%)} & \textbf{28.50} & \textbf{50.17} & \textbf{41.67} & \textbf{31.54} & \textbf{31.20} & \textbf{66.32} & \textbf{53.75} & \textbf{43.31} \textcolor{green!40!black}{(+19.8\%)} \\
\midrule
\multirow{3}{*}{\rotatebox{90}{80\%}} & \textsc{AdamW} & 252.34 & 22.95 & \textbf{33.00} & 26.04 & 2.19 & \textbf{28.80} & 54.35 & 50.43 & 31.11 \\
 & \textsc{Muon} & 140.29 & 23.21 & 29.88 & 27.48 & 6.04 & 26.00 & 54.24 & 49.88 & 30.96 \\
 & \textsc{NuMuon} & 103.38 \textcolor{green!40!black}{(-26.3\%)} & \textbf{23.89} & 32.07 & \textbf{28.16} & \textbf{6.73} & 27.00 & \textbf{54.95} & \textbf{51.22} & \textbf{32.00} \textcolor{green!40!black}{(+3.4\%)} \\
\bottomrule
\end{tabular}
\end{table}
}

\newcommand{\tabLlamaSVDLLMFull}{
\begin{table}[htbp]
\setlength{\tabcolsep}{2pt}%
\renewcommand{\arraystretch}{0.8}%
\centering
\scriptsize
\caption{WikiText2 validation perplexity and downstream task performance across compressed Llama3-1.8B models using SVD-LLM (Whitening + LoRA).}
\vspace{-0.5em}
\label{tab:llama3_svdllm}
\begin{tabular}{@{}lccccccccccc@{}}
\toprule
\multirow{2}{*}{\rotatebox{90}{\tiny{\textbf{\textsc{Comp.}}}}} & \multirow{2}{*}{\textsc{\textbf{Optimizer}}} & \multirow{2}{*}{\textbf{\textsc{Val. PPL}}} & \multicolumn{7}{c}{\textbf{\textsc{Downstream Tasks}}} & \multirow{2}{*}{\textbf{\textsc{Avg}}} \\
\cmidrule(lr){4-10}
 & & & \textbf{\textsc{ARC-C}} & \textbf{\textsc{ARC-E}} & \textbf{\textsc{H'SWag}} & \textbf{\textsc{LAM'DA}} & \textbf{\textsc{OpenBkQA}} & \textbf{\textsc{PIQA}} & \textbf{\textsc{W'Grande}} & \\
\midrule
\midrule
\multirow{3}{*}{\rotatebox{90}{0\%}} & \textsc{AdamW} & 18.75 & 33.79 & 62.67 & 51.28 & 40.46 & 37.00 & 71.82 & 55.33 & 50.34 \\
 & \textsc{Muon} & 15.39 & \textbf{38.82} & \textbf{67.80} & \textbf{58.35} & \textbf{47.47} & \textbf{41.80} & \textbf{74.37} & \textbf{59.27} & \textbf{55.41} \\
 & \textsc{NuMuon} & 17.56 \textcolor{red}{(+14.1\%)} & 35.49 & 65.19 & 54.96 & 44.38 & 38.20 & 72.91 & 58.56 & 52.81 \textcolor{red}{(-4.7\%)} \\
\midrule
\multirow{3}{*}{\rotatebox{90}{20\%}} & \textsc{AdamW} & 21.69 & 30.03 & 55.01 & 46.22 & 37.69 & 34.80 & 68.55 & 53.83 & 46.59 \\
 & \textsc{Muon} & 18.04 & 30.20 & 58.38 & 47.86 & 40.69 & 34.40 & 66.54 & \textbf{58.33} & 48.06 \\
 & \textsc{NuMuon} & 17.90 \textcolor{green!40!black}{(-0.8\%)} & \textbf{34.39} & \textbf{63.80} & \textbf{53.32} & \textbf{42.29} & \textbf{37.00} & \textbf{71.71} & 58.09 & \textbf{51.51} \textcolor{green!40!black}{(+7.2\%)} \\
\midrule
\multirow{3}{*}{\rotatebox{90}{40\%}} & \textsc{AdamW} & 27.94 & 28.41 & 49.37 & 40.69 & 29.52 & 30.00 & 65.13 & 51.22 & 42.05 \\
 & \textsc{Muon} & 26.39 & 26.19 & 43.14 & 36.74 & 28.86 & 28.20 & 59.52 & 53.83 & 39.50 \\
 & \textsc{NuMuon} & 19.18 \textcolor{green!40!black}{(-27.3\%)} & \textbf{31.66} & \textbf{60.77} & \textbf{49.49} & \textbf{38.87} & \textbf{34.80} & \textbf{69.37} & \textbf{57.38} & \textbf{48.91} \textcolor{green!40!black}{(+23.8\%)} \\
\midrule
\multirow{3}{*}{\rotatebox{90}{60\%}} & \textsc{AdamW} & 49.59 & \textbf{25.09} & 40.24 & 33.08 & 15.04 & 27.00 & 58.27 & 49.72 & 35.49 \\
 & \textsc{Muon} & 63.75 & 22.44 & 31.48 & 29.69 & 10.63 & 24.80 & 52.99 & 52.17 & 32.03 \\
 & \textsc{NuMuon} & 32.22 \textcolor{green!40!black}{(-49.5\%)} & 23.81 & \textbf{45.41} & \textbf{37.54} & \textbf{23.48} & \textbf{29.00} & \textbf{61.15} & \textbf{53.67} & \textbf{39.15} \textcolor{green!40!black}{(+22.2\%)} \\
\midrule
\multirow{3}{*}{\rotatebox{90}{80\%}} & \textsc{AdamW} & 226.44 & 22.61 & 30.85 & 26.52 & 2.23 & 26.20 & 53.75 & \textbf{50.83} & 30.43 \\
 & \textsc{Muon} & 346.10 & \textbf{24.32} & 27.53 & 26.80 & 0.21 & 27.20 & 50.76 & 50.36 & 29.60 \\
 & \textsc{NuMuon} & 61.25 \textcolor{green!40!black}{(-82.3\%)} & 23.89 & \textbf{39.35} & \textbf{31.10} & \textbf{15.37} & \textbf{29.60} & \textbf{58.43} & 50.04 & \textbf{35.40} \textcolor{green!40!black}{(+19.6\%)} \\
\bottomrule
\end{tabular}
\end{table}
}

\newcommand{\tabQwenDobiSVDFull}{
\begin{table}[htbp]
\setlength{\tabcolsep}{2pt}%
\renewcommand{\arraystretch}{0.8}%
\centering
\scriptsize
\caption{WikiText2 validation perplexity and downstream task performance across compressed Qwen3-0.6B models using Dobi-SVD.}
\vspace{-0.5em}
\label{tab:qwen3_dobisvd}
\begin{tabular}{@{}lccccccccccc@{}}
\toprule
\multirow{2}{*}{\rotatebox{90}{\tiny{\textbf{\textsc{Comp.}}}}} & \multirow{2}{*}{\textsc{\textbf{Optimizer}}} & \multirow{2}{*}{\textbf{\textsc{Val. PPL}}} & \multicolumn{7}{c}{\textbf{\textsc{Downstream Tasks}}} & \multirow{2}{*}{\textbf{\textsc{Avg}}} \\
\cmidrule(lr){4-10}
 & & & \textbf{\textsc{ARC-C}} & \textbf{\textsc{ARC-E}} & \textbf{\textsc{H'SWag}} & \textbf{\textsc{LAM'DA}} & \textbf{\textsc{OpenBkQA}} & \textbf{\textsc{PIQA}} & \textbf{\textsc{W'Grande}} & \\
\midrule
\midrule
\multirow{3}{*}{\rotatebox{90}{0\%}} & \textsc{AdamW} & 25.39 & 27.30 & 49.66 & 36.04 & 27.77 & 31.20 & 65.02 & 49.64 & 40.95 \\
 & \textsc{Muon} & 19.20 & \textbf{30.72} & \textbf{57.95} & \textbf{45.56} & \textbf{37.69} & 33.60 & \textbf{67.74} & \textbf{54.38} & \textbf{46.81} \\
 & \textsc{NuMuon} & 20.20 \textcolor{red}{(+5.2\%)} & 29.95 & 55.68 & 43.51 & 35.40 & \textbf{34.40} & 67.41 & 53.51 & 45.69 \textcolor{red}{(-2.4\%)} \\
\midrule
\multirow{3}{*}{\rotatebox{90}{20\%}} & \textsc{AdamW} & 26.94 & 26.19 & 48.95 & 35.48 & 28.47 & 31.00 & 65.02 & 49.01 & 40.59 \\
 & \textsc{Muon} & 19.90 & 30.03 & \textbf{57.15} & \textbf{45.58} & \textbf{37.51} & 33.60 & \textbf{68.93} & \textbf{54.22} & \textbf{46.72} \\
 & \textsc{NuMuon} & 20.83 \textcolor{red}{(+4.7\%)} & \textbf{30.46} & 54.42 & 42.95 & 33.26 & \textbf{34.00} & 66.87 & 53.43 & 45.06 \textcolor{red}{(-3.6\%)} \\
\midrule
\multirow{3}{*}{\rotatebox{90}{40\%}} & \textsc{AdamW} & 27.25 & 27.05 & 49.28 & 35.57 & 31.05 & 30.60 & 64.09 & 49.72 & 41.05 \\
 & \textsc{Muon} & 20.16 & \textbf{30.72} & \textbf{56.06} & \textbf{45.54} & \textbf{33.73} & 33.20 & \textbf{68.01} & 53.12 & \textbf{45.77} \\
 & \textsc{NuMuon} & 20.77 \textcolor{red}{(+3.0\%)} & 29.18 & 53.58 & 43.31 & 32.52 & \textbf{33.80} & 67.36 & \textbf{54.30} & 44.86 \textcolor{red}{(-2.0\%)} \\
\midrule
\multirow{3}{*}{\rotatebox{90}{60\%}} & \textsc{AdamW} & 32.64 & 25.68 & 42.63 & 34.45 & 16.77 & 32.40 & 60.83 & 50.28 & 37.58 \\
 & \textsc{Muon} & 24.36 & \textbf{29.01} & 49.28 & 41.71 & 24.94 & 31.40 & 64.47 & \textbf{53.12} & 41.99 \\
 & \textsc{NuMuon} & 21.66 \textcolor{green!40!black}{(-11.1\%)} & 27.65 & \textbf{51.73} & \textbf{42.38} & \textbf{28.95} & \textbf{33.20} & \textbf{65.67} & 51.78 & \textbf{43.05} \textcolor{green!40!black}{(+2.5\%)} \\
\midrule
\multirow{3}{*}{\rotatebox{90}{80\%}} & \textsc{AdamW} & 115.94 & 23.21 & 31.02 & 27.38 & 2.31 & 26.40 & 51.20 & 50.36 & 30.27 \\
 & \textsc{Muon} & 50.31 & 22.95 & 33.08 & 31.47 & 11.00 & 26.40 & 56.15 & \textbf{52.17} & 33.32 \\
 & \textsc{NuMuon} & 36.10 \textcolor{green!40!black}{(-28.2\%)} & \textbf{26.19} & \textbf{36.41} & \textbf{35.40} & \textbf{13.62} & \textbf{27.40} & \textbf{60.45} & 51.78 & \textbf{35.89} \textcolor{green!40!black}{(+7.7\%)} \\
\bottomrule
\end{tabular}
\end{table}
}

\newcommand{\tabOlmoDobiSVDFull}{
\begin{table}[htbp]
\setlength{\tabcolsep}{2pt}%
\renewcommand{\arraystretch}{0.8}%
\centering
\scriptsize
\caption{WikiText2 validation perplexity and downstream task performance across compressed Olmo2-1.4B models using Dobi-SVD.}
\vspace{-0.5em}
\label{tab:olmo2_dobisvd}
\begin{tabular}{@{}lccccccccccc@{}}
\toprule
\multirow{2}{*}{\rotatebox{90}{\tiny{\textbf{\textsc{Comp.}}}}} & \multirow{2}{*}{\textsc{\textbf{Optimizer}}} & \multirow{2}{*}{\textbf{\textsc{Val. PPL}}} & \multicolumn{7}{c}{\textbf{\textsc{Downstream Tasks}}} & \multirow{2}{*}{\textbf{\textsc{Avg}}} \\
\cmidrule(lr){4-10}
 & & & \textbf{\textsc{ARC-C}} & \textbf{\textsc{ARC-E}} & \textbf{\textsc{H'SWag}} & \textbf{\textsc{LAM'DA}} & \textbf{\textsc{OpenBkQA}} & \textbf{\textsc{PIQA}} & \textbf{\textsc{W'Grande}} & \\
\midrule
\midrule
\multirow{3}{*}{\rotatebox{90}{0\%}} & \textsc{AdamW} & 20.58 & 30.20 & 55.47 & 44.96 & 34.76 & 35.80 & 68.93 & 50.67 & 45.83 \\
 & \textsc{Muon} & 17.80 & \textbf{34.04} & \textbf{60.10} & \textbf{50.94} & \textbf{39.28} & \textbf{37.80} & \textbf{70.46} & 54.85 & \textbf{49.64} \\
 & \textsc{NuMuon} & 19.06 \textcolor{red}{(+7.1\%)} & 31.06 & 59.55 & 48.40 & 37.80 & 37.00 & 68.99 & \textbf{55.41} & 48.32 \textcolor{red}{(-2.7\%)} \\
\midrule
\multirow{3}{*}{\rotatebox{90}{20\%}} & \textsc{AdamW} & 23.12 & 29.18 & 53.45 & 43.09 & 28.45 & 33.80 & 67.36 & 51.14 & 43.78 \\
 & \textsc{Muon} & 18.81 & \textbf{31.91} & 58.59 & \textbf{48.39} & \textbf{35.57} & 35.20 & 68.88 & \textbf{56.67} & 47.89 \\
 & \textsc{NuMuon} & 19.23 \textcolor{red}{(+2.2\%)} & 31.23 & \textbf{59.97} & 48.19 & 35.38 & \textbf{36.20} & \textbf{69.21} & 55.56 & \textbf{47.96} \textcolor{green!40!black}{(+0.1\%)} \\
\midrule
\multirow{3}{*}{\rotatebox{90}{40\%}} & \textsc{AdamW} & 34.30 & 26.28 & 45.66 & 37.74 & 17.48 & 28.80 & 62.13 & 50.51 & 38.37 \\
 & \textsc{Muon} & 22.90 & 27.13 & 49.37 & 42.45 & \textbf{32.19} & 30.80 & 63.33 & \textbf{55.41} & 42.95 \\
 & \textsc{NuMuon} & 20.66 \textcolor{green!40!black}{(-9.8\%)} & \textbf{27.99} & \textbf{55.01} & \textbf{46.21} & 29.56 & \textbf{33.60} & \textbf{67.68} & 54.93 & \textbf{45.00} \textcolor{green!40!black}{(+4.8\%)} \\
\midrule
\multirow{3}{*}{\rotatebox{90}{60\%}} & \textsc{AdamW} & 127.21 & 24.49 & 32.91 & 29.37 & 4.17 & 26.60 & 54.46 & 50.04 & 31.72 \\
 & \textsc{Muon} & 37.23 & 24.06 & 36.95 & 34.08 & 22.98 & 27.60 & 58.00 & \textbf{54.46} & 36.88 \\
 & \textsc{NuMuon} & 23.49 \textcolor{green!40!black}{(-36.9\%)} & \textbf{27.65} & \textbf{52.27} & \textbf{43.40} & \textbf{23.33} & \textbf{32.40} & \textbf{65.40} & 54.30 & \textbf{42.68} \textcolor{green!40!black}{(+15.7\%)} \\
\midrule
\multirow{3}{*}{\rotatebox{90}{80\%}} & \textsc{AdamW} & 930.61 & \textbf{24.40} & 27.74 & 25.85 & 0.19 & 25.00 & 52.88 & 50.20 & 29.47 \\
 & \textsc{Muon} & 160.56 & 24.32 & 27.40 & 28.29 & 3.07 & 28.20 & 51.20 & 51.14 & 30.52 \\
 & \textsc{NuMuon} & 65.93 \textcolor{green!40!black}{(-58.9\%)} & 23.81 & \textbf{32.32} & \textbf{31.25} & \textbf{8.69} & \textbf{29.20} & \textbf{55.33} & \textbf{52.72} & \textbf{33.33} \textcolor{green!40!black}{(+9.2\%)} \\
\bottomrule
\end{tabular}
\end{table}
}

\newcommand{\tabLlamaDobiSVDFull}{
\begin{table}[htbp]
\setlength{\tabcolsep}{2pt}%
\renewcommand{\arraystretch}{0.8}%
\centering
\scriptsize
\caption{WikiText2 validation perplexity and downstream task performance across compressed Llama3-1.8B models using Dobi-SVD.}
\vspace{-0.5em}
\label{tab:llama3_dobisvd}
\begin{tabular}{@{}lccccccccccc@{}}
\toprule
\multirow{2}{*}{\rotatebox{90}{\tiny{\textbf{\textsc{Comp.}}}}} & \multirow{2}{*}{\textsc{\textbf{Optimizer}}} & \multirow{2}{*}{\textbf{\textsc{Val. PPL}}} & \multicolumn{7}{c}{\textbf{\textsc{Downstream Tasks}}} & \multirow{2}{*}{\textbf{\textsc{Avg}}} \\
\cmidrule(lr){4-10}
 & & & \textbf{\textsc{ARC-C}} & \textbf{\textsc{ARC-E}} & \textbf{\textsc{H'SWag}} & \textbf{\textsc{LAM'DA}} & \textbf{\textsc{OpenBkQA}} & \textbf{\textsc{PIQA}} & \textbf{\textsc{W'Grande}} & \\
\midrule
\midrule
\multirow{3}{*}{\rotatebox{90}{0\%}} & \textsc{AdamW} & 18.75 & 33.79 & 62.67 & 51.28 & 40.46 & 37.00 & 71.82 & 55.33 & 50.34 \\
 & \textsc{Muon} & 15.39 & \textbf{38.82} & \textbf{67.80} & \textbf{58.35} & \textbf{47.47} & \textbf{41.80} & \textbf{74.37} & \textbf{59.27} & \textbf{55.41} \\
 & \textsc{NuMuon} & 17.56 \textcolor{red}{(+14.1\%)} & 35.49 & 65.19 & 54.96 & 44.38 & 38.20 & 72.91 & 58.56 & 52.81 \textcolor{red}{(-4.7\%)} \\
\midrule
\multirow{3}{*}{\rotatebox{90}{20\%}} & \textsc{AdamW} & 21.42 & 34.04 & 56.52 & 49.41 & 31.85 & 35.00 & 70.13 & 55.09 & 47.43 \\
 & \textsc{Muon} & 16.37 & \textbf{34.56} & \textbf{63.59} & \textbf{54.99} & \textbf{45.97} & \textbf{38.80} & 71.33 & 58.01 & \textbf{52.46} \\
 & \textsc{NuMuon} & 17.96 \textcolor{red}{(+9.7\%)} & 33.96 & 62.29 & 53.64 & 43.10 & 35.40 & \textbf{71.38} & \textbf{58.17} & 51.13 \textcolor{red}{(-2.5\%)} \\
\midrule
\multirow{3}{*}{\rotatebox{90}{40\%}} & \textsc{AdamW} & 28.83 & 30.72 & 49.79 & 43.15 & 31.30 & 33.20 & 64.69 & 53.67 & 43.79 \\
 & \textsc{Muon} & 20.44 & 28.16 & 52.27 & 46.22 & \textbf{40.29} & 29.80 & 63.98 & 56.35 & 45.30 \\
 & \textsc{NuMuon} & 18.63 \textcolor{green!40!black}{(-8.9\%)} & \textbf{31.31} & \textbf{60.23} & \textbf{51.98} & 39.59 & \textbf{34.80} & \textbf{70.18} & \textbf{57.54} & \textbf{49.38} \textcolor{green!40!black}{(+9.0\%)} \\
\midrule
\multirow{3}{*}{\rotatebox{90}{60\%}} & \textsc{AdamW} & 164.19 & 23.72 & 32.53 & 29.14 & 2.95 & 25.80 & 54.57 & 50.43 & 31.31 \\
 & \textsc{Muon} & 40.55 & 23.81 & 34.09 & 34.13 & 15.56 & 25.80 & 55.77 & 53.35 & 34.64 \\
 & \textsc{NuMuon} & 27.42 \textcolor{green!40!black}{(-32.4\%)} & \textbf{26.71} & \textbf{45.16} & \textbf{43.86} & \textbf{17.16} & \textbf{31.80} & \textbf{62.89} & \textbf{53.91} & \textbf{40.21} \textcolor{green!40!black}{(+16.1\%)} \\
\midrule
\multirow{3}{*}{\rotatebox{90}{80\%}} & \textsc{AdamW} & 605.85 & \textbf{24.83} & 28.03 & 26.59 & 0.12 & 26.80 & 50.54 & 48.54 & 29.35 \\
 & \textsc{Muon} & 113.19 & 24.57 & 28.96 & 28.83 & 2.50 & 26.20 & 51.63 & \textbf{52.88} & 30.80 \\
 & \textsc{NuMuon} & 50.05 \textcolor{green!40!black}{(-55.8\%)} & 23.63 & \textbf{35.98} & \textbf{33.45} & \textbf{8.99} & \textbf{27.20} & \textbf{57.24} & 52.64 & \textbf{34.16} \textcolor{green!40!black}{(+10.9\%)} \\
\bottomrule
\end{tabular}
\end{table}
}

\newcommand{\tabLlamaLLMComp}{
\begin{table*}[t!]
\setlength{\tabcolsep}{2pt}%
\renewcommand{\arraystretch}{0.8}%
\centering
\scriptsize
\caption{WikiText2 validation perplexity and downstream task performance across compressed Llama3-1.8B models using various LLM compression methods \@ 40\% compression. Full results for all models and methods can be found in~\Cref{app:extended_results:results}.}
\vspace{-1em}
\label{tab:llama_comp_sum}
\begin{tabular}{@{}lccccccccccc@{}}
    \toprule
    \multirow{2}{*}{\rotatebox{0}{\textbf{\textsc{Method}}}} & \multirow{2}{*}{\textsc{\textbf{Optimizer}}} & \multirow{2}{*}{\textbf{\textsc{Val. PPL}} \textdownarrow} & \multicolumn{7}{c}{\textbf{\textsc{Downstream Tasks}}} & \multirow{2}{*}{\textbf{\textsc{Avg}} \textuparrow} \\
    \cmidrule(lr){4-10}
     & & & \textbf{\textsc{ARC-C}} & \textbf{\textsc{ARC-E}} & \textbf{\textsc{H'SWag}} & \textbf{\textsc{LAM'DA}} & \textbf{\textsc{OpenBkQA}} & \textbf{\textsc{PIQA}} & \textbf{\textsc{W'Grande}} & \\
    \midrule
    \midrule
    \multirow{3}{*}{\rotatebox{0}{ASVD}} & \textsc{AdamW} & 718.71 & 22.61 & 37.25 & 30.15 & 6.00 & 26.40 & 54.52 & 49.88 & 32.40 \\
     & \textsc{Muon} & 1443.55 & 24.66 & 31.90 & 29.28 & 7.08 & 27.80 & 55.93 & 50.12 & 32.40 \\
     & \textsc{NuMuon} & 20.56 \textcolor{green!40!black}{(-98.6\%)} & \textbf{32.76} & \textbf{62.88} & \textbf{50.27} & \textbf{42.64} & \textbf{37.20} & \textbf{71.38} & \textbf{56.20} & \textbf{50.48} \textcolor{green!40!black}{(+55.8\%)} \\
    \midrule
    \multirow{3}{*}{\rotatebox{0}{SVD-LLM}} & \textsc{AdamW} & 27.94 & 28.41 & 49.37 & 40.69 & 29.52 & 30.00 & 65.13 & 51.22 & 42.05 \\
     & \textsc{Muon} & 26.39 & 26.19 & 43.14 & 36.74 & 28.86 & 28.20 & 59.52 & 53.83 & 39.50 \\
     & \textsc{NuMuon} & 19.18 \textcolor{green!40!black}{(-27.3\%)} & \textbf{31.66} & \textbf{60.77} & \textbf{49.49} & \textbf{38.87} & \textbf{34.80} & \textbf{69.37} & \textbf{57.38} & \textbf{48.91} \textcolor{green!40!black}{(+23.8\%)} \\
    \midrule
    \multirow{3}{*}{\rotatebox{0}{Dobi-SVD}} & \textsc{AdamW} & 28.83 & 30.72 & 49.79 & 43.15 & 31.30 & 33.20 & 64.69 & 53.67 & 43.79 \\
     & \textsc{Muon} & 20.44 & 28.16 & 52.27 & 46.22 & \textbf{40.29} & 29.80 & 63.98 & 56.35 & 45.30 \\
     & \textsc{NuMuon} & 18.63 \textcolor{green!40!black}{(-8.9\%)} & \textbf{31.31} & \textbf{60.23} & \textbf{51.98} & 39.59 & \textbf{34.80} & \textbf{70.18} & \textbf{57.54} & \textbf{49.38} \textcolor{green!40!black}{(+9.0\%)} \\
    \bottomrule
\end{tabular}
\vspace{-0.135in}
\end{table*}
}

\newcommand{\tabAblationRankSched}{
\begin{table}[htbp]
\setlength{\tabcolsep}{2pt}%
\renewcommand{\arraystretch}{0.8}%
\centering
\scriptsize
\caption{WikiText2 validation perplexity and downstream task performance for various rank schedules.}
\vspace{-0.5em}
\label{tab:ablation}
\begin{tabular}{@{}lccccccccccc@{}}
\toprule
\multirow{2}{*}{\rotatebox{90}{\tiny{\textbf{\textsc{Comp.}}}}} & \multirow{2}{*}{\textsc{\textbf{Optim.}}} & \multirow{2}{*}{\textsc{\textbf{Rank Schedule}}} & \multirow{2}{*}{\textbf{\textsc{Val. PPL}} \textdownarrow} & \multicolumn{7}{c}{\textbf{\textsc{Downstream Tasks}}} & \multirow{2}{*}{\textbf{\textsc{Avg}} \textuparrow} \\
\cmidrule(lr){5-11}
 & & & & \textbf{\textsc{ARC-C}} & \textbf{\textsc{ARC-E}} & \textbf{\textsc{H'SWag}} & \textbf{\textsc{LAM'DA}} & \textbf{\textsc{OpenBkQA}} & \textbf{\textsc{PIQA}} & \textbf{\textsc{W'Grande}} & \\
\midrule
\midrule
\multirow{8}{*}{\rotatebox{90}{0\%}} & \textsc{AdamW} & - & 25.39 & 27.30 & 49.66 & 36.04 & 27.77 & 31.20 & 65.02 & 49.64 & 40.95 \\
 & \textsc{Muon} & - & 19.20 & 30.72 & 57.95 & 45.56 & 37.69 & 33.60 & 67.74 & 54.38 & 46.81 \\
 \cmidrule{2-12} & \multirow{6}{*}{\rotatebox{0}{\textsc{NuMuon}}}
 & \textsc{Piecewise ($25d/100$)} & 20.01 & 29.69 & 56.44 & 43.78 & 35.51 & 35.00 & 67.79 & 52.88 & 45.87 \\
 && \textsc{Cosine ($25d/100$)} & 20.20 & 29.95 & 55.68 & 43.51 & 35.40 & 34.40 & 67.41 & 53.51 & 45.69 \\
 && \textsc{Fixed ($5d/100$)}  & 29.13 & 25.60 & 48.06 & 33.68 & 25.13 & 31.20 & 63.98 & 49.80 & 39.64 \\
 && \textsc{Fixed ($25d/100$)} & 21.85 & 29.01 & 53.03 & 41.37 & 33.30 & 33.40 & 66.70 & 53.28 & 44.30 \\
 && \textsc{Fixed ($50d/100$)} & 20.27 & 29.52 & 54.63 & 43.79 & 36.64 & 33.80 & 67.52 & 50.28 & 45.17 \\
 && \textsc{Fixed ($80d/100$)} & 19.31 & 30.80 & 55.68 & 44.54 & 36.95 & 36.00 & 66.97 & 53.35 & 46.33 \\
\midrule
\multirow{8}{*}{\rotatebox{90}{80\%}} & \textsc{AdamW} & - & 115.94 & 23.21 & 31.02 & 27.38 & 2.31 & 26.40 & 51.20 & 50.36 & 30.27 \\
 & \textsc{Muon} & - & 50.31 & 22.95 & 33.08 & 31.47 & 11.00 & 26.40 & 56.15 & 52.17 & 33.32 \\
 \cmidrule{2-12} & \multirow{6}{*}{\rotatebox{0}{\textsc{NuMuon}}}
 & \textsc{Piecewise ($25d/100$)}  & 33.39 & 24.91 & 43.90 & 32.95 & 20.20 & 31.00 & 62.35 & 50.12 & 37.92 \\
 && \textsc{Cosine ($25d/100$)}    & 36.10 & 26.19 & 36.41 & 35.40 & 13.62 & 27.40 & 60.45 & 51.78 & 35.89 \\
 && \textsc{Fixed ($5d/100$)}      & 31.93 & 26.96 & 42.13 & 36.36 & 22.39 & 29.00 & 62.08 & 52.25 & 38.74 \\
 && \textsc{Fixed ($25d/100$)}     & 44.69 & 25.85 & 34.72 & 33.84 & 9.57  & 27.00 & 57.40 & 52.17 & 34.36 \\
 && \textsc{Fixed ($50d/100$)}     & 46.66 & 24.91 & 32.28 & 32.36 & 13.55 & 29.80 & 56.53 & 51.22 & 34.38 \\
 && \textsc{Fixed ($80d/100$)}     & 43.70 & 23.98 & 35.82 & 33.33 & 14.03 & 26.20 & 56.91 & 50.67 & 34.42 \\
\bottomrule
\end{tabular}
\end{table}
}

\newcommand{\tabAblationRankSchedCompact}{
\begin{table}
\setlength{\tabcolsep}{8pt}%
\renewcommand{\arraystretch}{0.95}%
\centering
\scriptsize
\captionof{table}{80\% Dobi-SVD LLM compression performance for Qwen3-0.6B models compared to non-compressed base models under various rank schedulers for NuMuon.}
\vspace{-0.5em}
\label{tab:ablation_compact}
\begin{tabular}{@{}lccccc@{}}
\toprule
\multirow{2}{*}{\textsc{\textbf{Scheduler}}} & \multirow{2}{*}{\textbf{\textsc{Time/Step (s)}}} & \multicolumn{2}{c}{\textbf{\textsc{Val. PPL}} \textdownarrow} & \multicolumn{2}{c}{\textbf{\textsc{Task Acc}} \textuparrow} \\
\cmidrule(lr){3-4} \cmidrule(lr){5-6}
& & \textbf{\textsc{0\%}} & \textbf{\textsc{80\%}} & \textbf{\textsc{0\%}} & \textbf{\textsc{80\%}} \\
\midrule
\midrule
\textsc{Piecewise ($0.25$)} & 9.94 & 20.01 & 33.39 & 45.87 & 37.92 \\
\textsc{Cosine ($0.25$)} & 10.49 & 20.20 & 36.10 & 45.69 & 35.89 \\
\textsc{Fixed ($0.05$)} & 8.96 & 29.13 & 31.93 & 39.64 & 38.74 \\
\textsc{Fixed ($0.25$)} & 9.81 & 21.85 & 44.69 & 44.30 & 34.36 \\
\textsc{Fixed ($0.50$)} & 11.33 & 20.27 & 46.66 & 45.17 & 34.38 \\
\textsc{Fixed ($0.80$)} & 11.49 & 19.31 & 43.70 & 46.33 & 34.42 \\
\midrule
\textsc{Muon} & 8.86 & 19.20 & 50.31 & 46.81 & 33.32 \\
\bottomrule
\end{tabular}
\end{table}
}